\documentclass[pdflatex,sn-basic,iicol]{sn-jnl}% Math and Physical Sciences Numbered Reference Style 
%%\documentclass[pdflatex,sn-mathphys-ay]{sn-jnl}% Math and Physical Sciences Author Year Reference Style
%%\documentclass[pdflatex,sn-aps]{sn-jnl}% American Physical Society (APS) Reference Style
%%\documentclass[pdflatex,sn-vancouver,Numbered]{sn-jnl}% Vancouver Reference Style
%%\documentclass[pdflatex,sn-apa]{sn-jnl}% APA Reference Style 
%%\documentclass[pdflatex,sn-chicago]{sn-jnl}% Chicago-based Humanities Reference Style

%%%% Standard Packages
%%<additional latex packages if required can be included here>

\usepackage{graphicx}%
\usepackage{multirow}%
\usepackage{amsmath,amssymb,amsfonts}%
\usepackage{amsthm}%
\usepackage{mathrsfs}%
\usepackage[title]{appendix}%
\usepackage{textcomp}%
\usepackage{manyfoot}%
\usepackage{booktabs}%
\usepackage{algorithm}%
\usepackage{algorithmicx}%
\usepackage{algpseudocode}%
\usepackage{listings}%
%%%%
\usepackage[9pt]{extsizes}
\hypersetup{
  colorlinks=true,
  linkcolor=red,
  citecolor=[rgb]{0.05098039, 0.27843137, 0.63137255},
  urlcolor=[rgb]{0.2, 0.41176471, 0.11764706}
}

\usepackage[table]{xcolor}
% \usepackage{booktabs} % professional-quality tables

% \usepackage{nicefrac}       % compact symbols for 1/2, etc.
% \usepackage{microtype}      % microtypography
% % \usepackage[dvipsnames]{xcolor}         % colors
% \usepackage{minitoc}
% \usepackage[toc,page,header]{appendix}
% % \usepackage[showframe]{geometry}
% % \usepackage{showframe}

% %\usepackage[utf8]{inputenc} % allow utf-8 input
\usepackage[T1]{fontenc}    % use 8-bit T1 fonts
% \usepackage[colorlinks]{hyperref}       % hyperlinks
% % \usepackage{url}            % simple URL typesetting
% \usepackage{booktabs}       % professional-quality tables
% \usepackage{amsfonts}       % blackboard math symbols
% \usepackage{microtype}      % microtypography
% \usepackage{MnSymbol}
% \usepackage{nicefrac}       % compact symbols for 1/2, etc.
% \usepackage[style=ieee,backend=biber]{biblatex}
\usepackage{outlines}
\usepackage{bm}
\usepackage{siunitx}
% %\usepackage{empheq}
% % \usepackage{amsmath}
% % \usepackage{mathtools}
% \usepackage{amsthm}
% % \usepackage{amssymb}

% % \usepackage[disable]{todonotes}
% \usepackage{todonotes}
% \usepackage{ulem}
\usepackage{accents}
% % \usepackage{algpseudocode}
% % \usepackage{algorithm}
% \usepackage[ruled,linesnumbered]{algorithm2e}
\usepackage{multicol}
% % \usepackage{mathalfa}
% % \usepackage{dutchcal}
\usepackage{optidef}
\usepackage{subcaption}
\usepackage[export]{adjustbox}
\usepackage{comment}
% \usepackage{float}
% \usepackage[font=small,labelfont=bf]{caption}
% \usepackage{footmisc}

% \usepackage{pifont}

% \usepackage[document]{ragged2e}

% Enviorments for commenting out certain parts of the latex documents
\newif\ifshownextpaper
\shownextpaperfalse
% \shownextpapertrue

\newif\ifshowcomments
\showcommentsfalse
% \showcommentstrue

\ifshownextpaper
    \newenvironment{fornextpaper}{
    \color{red} \it
        \todo[color=red,inline]{\bf The bellow content is for the next paper!!}
    }{\par}
\else
    \excludecomment{fornextpaper}
\fi

\ifshowcomments
    \newenvironment{toberemoved}{\color{ProcessBlue} \it
        \todo[color=blue!30,inline]{\bf The bellow content is to be removed!!}
    }{\par}
\else
    \excludecomment{toberemoved}
\fi

\usepackage[createShortEnv]{proof-at-the-end}

\DeclareMathAlphabet{\mathdutchcal}{U}{dutchcal}{m}{n}
\SetMathAlphabet{\mathdutchcal}{bold}{U}{dutchcal}{b}{n}
\DeclareMathAlphabet{\mathdutchbcal}{U}{dutchcal}{b}{n}
\DeclareMathAlphabet\mathzapf       {T1}{pzc} {b} {n}

% MINITOC: Make the "Part I" text invisible

% \usepackage{fontspec}
% \usepackage{minted}
% \usepackage{caption}
% \usepackage{mathalfa}
% \usepackage{stix2}

% \newenvironment{code}{\captionsetup{type=listing}}{}
% \SetupFloatingEnvironment{listing}{name=Source Code}

\sisetup{
text-series-to-math = true ,
propagate-math-font = true
}

% \makeatletter
% \newcommand*\@dblLabelI {}
% \newcommand*\@dblLabelII {}
% \newcommand*\@dblequationAux {}

% \def\@dblequationAux #1,#2,%
%     {\def\@dblLabelI{\label{#1}}\def\@dblLabelII{\label{#2}}}

% \newcommand*{\doubleequation}[3][]{%
%     \par\vskip\abovedisplayskip\noindent
%     \if\relax\detokenize{#1}\relax
%        \let\@dblLabelI\@empty
%        \let\@dblLabelII\@empty
%     \else % we assume here that the optional argument
%           % has the required shape A,B
%        \@dblequationAux #1,%
%     \fi
%     \makebox[0.5\linewidth-1.5em]{%
%      \hspace{\stretch2}%
%      \makebox[0pt]{$\displaystyle #2$}%
%      \hspace{\stretch1}%
%     }%
%     \makebox[0.5\linewidth-1.5em]{%
%      \hspace{\stretch1}%
%      \makebox[0pt]{$\displaystyle #3$}%
%      \hspace{\stretch2}%
%     }%
%     \makebox[3em][r]{(%
%   \refstepcounter{equation}\theequation\@dblLabelI, 
%   \refstepcounter{equation}\theequation\@dblLabelII)}%
%   \par\vskip\belowdisplayskip
% }

\newcommand*{\dequations}[3]{
\begin{subequations}
\begin{equation*}
\refstepcounter{equation}\latexlabel{#3:1}
\refstepcounter{equation}\latexlabel{#3:2}
#1,\ #2
\tag{\ref*{#3:1}, \ref*{#3:2}}
\end{equation*}
\label{#3}
\end{subequations}
}

\newcommand*{\doubleequation}[2][]{%
\begin{subequations}
\begin{align}
#1\\
#2
\end{align}
\end{subequations}
}
% \makeatother

\AtBeginDocument{\let\latexlabel\label}

\numberwithin{equation}{section}

\newcommand\munderbar[1]{%
  \underaccent{\bar}{#1}}

% \small
% \addbibresource{bibfiles/quadrics.bib}
% \addbibresource{bibfiles/fundamentals.bib}
% \addbibresource{bibfiles/GA_point_clouds.bib}

\newtheorem{definition}{Definition}
\newtheorem{theorem}{Theorem}[section]
\newtheorem{lemma}{Lemma}[section]

\raggedbottom

\begin{document}

\title[Correspondence Free Multivector Cloud Registration using Conformal Geometric Algebra]{Correspondence Free Multivector Cloud Registration using Conformal Geometric Algebra }
%%=============================================================%%
%% GivenName	-> \fnm{Joergen W.}
%% Particle	-> \spfx{van der} -> surname prefix
%% FamilyName	-> \sur{Ploeg}
%% Suffix	-> \sfx{IV}
%% \author*[1,2]{\fnm{Joergen W.} \spfx{van der} \sur{Ploeg} 
%%  \sfx{IV}}\email{iauthor@gmail.com}
%%=============================================================%%

% \author[\dag ]{\fnm{Francisco} \sur{Vasconcelos}}\email{francisco.vasconcelos.99@tecnico.ulisboa.pt}
\author[1]{\fnm{Francisco} \sur{Vasconcelos}}\email{francisco.vasconcelos.99@tecnico.ulisboa.pt}

% \author[$\dagger$]{\fnm{Jacinto} \sur{C. Nascimento}}\email{jan@isr.tecnico.ulisboa.pt}
\author[1]{\fnm{Jacinto} \sur{C. Nascimento}}\email{jan@isr.tecnico.ulisboa.pt}
% \equalcont{These authors contributed equally to this work.}

% \author[1,2]{\fnm{Third} \sur{Author}}\email{iiiauthor@gmail.com}
% \equalcont{These authors contributed equally to this work.}

% \affil[\dag ]{\orgdiv{ISR-IST University of Lisbon} \orgname{}, \orgaddress{\street{Av. Rovisco Pais, 1}, \city{Lisbon}, \postcode{1049-001}, \country{Portugal}}}
\affil[1]{\orgdiv{ISR-IST University of Lisbon} \orgname{}, \orgaddress{\street{Av. Rovisco Pais, 1}, \city{Lisbon}, \postcode{1049-001}, \country{Portugal}}}

%Computer and Electronics Engineering 

%%==================================%%
%% Sample for unstructured abstract %%
%%==================================%%

\abstract{We present, for the first time, a novel theoretical approach to address the problem of correspondence free multivector cloud registration in conformal geometric algebra. Such formalism achieves several favorable properties. Primarily, it forms an orthogonal automorphism that extends beyond the typical vector space to the entire conformal geometric algebra
while respecting the multivector grading. Concretely, the registration can be viewed as an orthogonal transformation ({\it i.e.}, scale, translation, rotation) belonging to $SO(4,1)$ - group of special orthogonal transformations in conformal geometric algebra.
We will show that such formalism is able to: $(i)$ 
perform the registration without directly accessing the input multivectors.  
Instead, we use primitives or geometric objects provided by the conformal model - the multivectors, 
$(ii)$ the geometric objects are obtained by solving a multilinear eigenvalue problem to find sets of eigenmultivectors. In this way, we can explicitly avoid solving the correspondences in the registration process. 
Most importantly, this offers rotation and translation equivariant properties between the input multivectors and the eigenmultivectors.
Experimental evaluation is conducted in datasets commonly used in point cloud registration,  to testify the usefulness of the approach with emphasis to ambiguities arising from high levels of noise. The code is available at \href{https://github.com/Numerical-Geometric-Algebra/RegistrationGA}{Registration-GA}.\textbf{ { This work was submitted to the International Journal of Computer Vision and is currently under review.}}}

\keywords{Conformal Geometric Algebra, Point Cloud Registration, Multivector Cloud Registration, Computer Vision, Correspondence Free Registration}

%%\pacs[JEL Classification]{D8, H51}

%%\pacs[MSC Classification]{35A01, 65L10, 65L12, 65L20, 65L70}

\maketitle

\section{Introduction}

Point cloud registration is one of the most visited topics in computer vision, being a fundamental but challenging task. Many applications have witnessed the success of using point cloud registration, including 3D scene reconstruction ~\cite{agarwal2011building,schonberger2016structure}, object pose 
estimation~\cite{dang2022learning,wong2017segicp}, and Lidar SLAM ~\cite{deschaud2018imls,zhang2014loam,lu2021hregnet}. In short, the registration aims to align two partially overlapping point clouds by estimating their relative rigid transformation ({\it  i.e.}, 3D rotation and translation). A popular venue to address the large-scale registration problem consists of a two-stage pipeline comprising the extraction of point descriptors 
~\cite{choy2019fully,deng2018ppf, frome2004recognizing,rusu2009fast,salti2014shot,zeng20173dmatch} 
followed by a correspondence stage  between the two point clouds, from which the transformation is obtained geometrically. 
Designing point description, thus becomes a crucial step to provide robustness for the pipeline above. Much effort has been dedicated mainly using traditional and deep learning-based descriptor approaches ({\it i.e.},~\cite{bai2020d3feat,huang2021predator,wang2022you}).
However, the resulting correspondences may still suffer from erroneous matchings, particularly in challenging cases, such as low-overlap, repetitive structures, or noisy point sets, leading to a degradation in the registration process.  
To face these challenges, many outlier filtering strategies have
been proposed to circumvent wrong matches. These include traditional rejection methods using random sample
consensus~\cite{fischler1981random}, point-wise descriptor similarity ~\cite{fischler1981random,lowe2004distinctive} or group-wise spatial consistency~\cite{yang2019performance}. 

In this paper we discuss a hitherto untouched aspect of conformal geometric algebra (CGA) - its application to {\it point cloud registration without correspondences}. CGA is emerging as a new approach to address computer vision problems ~\cite{pillai2020detection,ruhe2023clifford}, having the advantage of offering a simple yet efficient representation of geometric objects and transformations. Particularly, we design a new CGA algorithm that is able to determine rigid transformations between general multivectors. Our contribution is twofold: $(i)$ we introduce a novel CGA algorithm that estimates rigid transformations between two multivector clouds, and $(ii)$  a correspondence free algorithm that solves the registration by using CGA. 
% \todo[fancyline,author=Francisco]{solves the registration using conformal geometric algebra (VGA ja esta incluido dentro de CGA). A estrategia que se revela distinta e quando se determina so a rotacao. Sec. \ref{eq:rot:only:approach} } 
% { problem either by $(ii\!-\!a)$ Vanilla Geometric Algebra (VGA) or $(ii\!-\!b)$ Conformal Geometric Algebra }
The correspondence free algorithm comprises the two following steps: $(i)$ first, we solve an eigenvalue problem for each multivector cloud, $(ii)$ we then use the eigenmultivectors of each multivector cloud to estimate the rigid transformation. Finally, it will be shown that the proposed algorithm does not impose any constraints regarding the multivectors that are being registered.
% \iffalse
To summarize, our contributions comprise the following novel algorithms:
\begin{outline}[enumerate]
    %\1 To summarize, our key Contributions are as follows
        \1 An algorithm for extracting eigenmultivectors that is equivariant to both 3D rotations and translations. It is general since it can be used for any type of objects in CGA (Sec. \ref{sec:gen:approach}).
        \1 An algorithm, using a multivector coefficients approach, to estimate the rotation and translation between objects in CGA, being the objects of any type {\it e.g.}, mixed grade multivectors (Sec. \ref{sec:COEF:approach}).
\end{outline}

The following key advantages characterize the distinctiveness of our approach: $(i)$ no need to define a cost function for the multivector clouds (this is specially important for pseudo-Euclidean spaces where norms do not represent the usual notion of a `physical quantity'), $(ii)$ nor to find multivector correspondences between multivector clouds, 
$(iii)$ it is robust to high levels of noise, 
$(iv)$ the eigenmultivectors act as the principal components of the multivector cloud, which best explains the data statistics (which are $SO(p,q)$ equivariant to the multivector clouds), and $(v)$ the eigenvalues provide $SO(p,q)$ invariant quantities.

\section{Related work}

Traditional class of approaches can be framed in PCA based statistical methods~\cite{celik2008fast,gonzalez2009digital,rehman2018automatic} or {\it purely geometric} methods ~\cite{bustos2017guaranteed,chen2022sc2,leordeanu2005spectral,yang2020teaser,zhou2016fast}. However, with the advances of deep learning in the 3D vision field, other classes have rapidly emerged. {\it End-to-end registration methods} have  achieved increasing attention.
One of the pioneering works is the DCP~\cite{wang2019deep} that exploits feature similarity to establish pseudo correspondences for SVD-based transformation estimation.
But other end-to-end methods, {\it e.g.} \cite{choy2020deep,fu2021robust,li2020iterative,li2019pc,li2020unsupervised,qin2022geometric,zhu2020reference}, contributed for the relevance of this class of approaches.
Other class adopts {\it learning-based} strategy where feature descriptors for 3D matching are exploited constituting an advance when compared to the hand-crafted descriptors, {\it e.g.}~\cite{frome2004recognizing,rusu2009fast,salti2014shot}. 3DMAtch~\cite{zeng20173dmatch} is known as one of the representative learning-based methods using  Siamese 3D CNN to learn the local geometric feature via contrastive loss. Other approaches are also available, exploring fully convolutional network for dense feature extraction
~\cite{choy2019fully,bai2020d3feat,huang2021predator,wang2022you,qin2022geometric,li2022lepard}.
Despite significant progress in deep-based feature descriptor, generating mismatched correspondences (outliers) remains unavoidable. Outlier rejection  thus becomes as a natural step to be accounted for, often using RANSAC~\cite{fischler1981random} or its variants~\cite{barath2018graph,le2019sdrsac,li2020gesac} that use repeated sampling and verification for outlier rejection. However, these methods tend to have a high time cost, particularly in scenes with a significant outlier ratio.
{\it Correspondences-free Registration} methods, are however, able to surpass the above rejection step as they directly estimate the rigid transformation, usually achieved by establishing an end-to-end differentiable network. This can be accomplished by either using {\it soft correspondence}  methods~\cite{fu2021robust,lu2021hregnet,lu2019deepvcp,wang2019deep,yew2020rpm} or  {\it direct regression} methods. The latter class of methods was  pioneered by~\cite{aoki2019pointnetlk} that uses PointNet~\cite{qi2017pointnet} as to extract global features and introducing differentiable Lucas-Kanade algorithm~\cite{lucas1981iterative} to minimize
the feature distance. Several other works~\cite{huang2020feature,li2021pointnetlk,yuan2020deepgmr} followed this line of research. Although encouraging results have been reached, some difficulties still persist in both classes of methodologies. Concretely, soft correspondence struggles to generalize to unseen setups with several sensors, and the direct regression methods still face difficulties in large-scale scenes.

This paper proposes a different strategy, where the registration is viewed as a {\it geometric transformation} of the input data that is  performed using multivectors in CGA. Indeed, CGA~\cite{clifford1871preliminary,hestenes2015space,dorst2002geometric,artin2016geometric} has several favorable properties. CGA can easily avoid the PCA problem of $\it 180^\circ$ when computing the eigenvectors of the covariance matrix to estimate the rotation~\cite{rehman2018automatic}. Also, $(i)$ it
efficiently encodes the transformations and the invariant
elements of classic geometries, $(ii)$ it is intuitive to manipulate geometric objects and treat them as operators, meaning that, we can easily define transformations via the geometric product of objects, being the objects vectors, points, lines, and planes, $(iii)$ it generalizes over dimensions in the sense that transformations and objects are built independently of the space dimensionality, and $(iv)$ it allows to unify a large of mathematical systems like vector algebra, projective geometry, quaternions and Pl$\ddot{\rm u}$cker coordinates.
CGA is an emerging topic in computer vision and some works exist. In~\cite{ruhe2023geometric}  GCANs is proposed as a novel method to incorporate geometry-guided transformations into neural networks. CGAPoseNet+GCAN~\cite{pepe2024cgaposenet+} enhances CGAPoseNet~\cite{pepe2023cga} using Clifford Geometric Algebra to unify quaternions and translation vectors into a single mathematical object, the motor, which can be used to uniquely describe camera poses. 

In this paper, we develop a novel registration framework using CGA, with the following structure. In Sec.~\ref{sec:Bkg-Mot} we specifically address our contributions. Sec.~\ref{sec:EXP} thoroughly validates our proposal. The conclusions are addressed in Sec.~\ref{sec:CONCLUSIONS}. In the Appendix we provide all the proofs of the formulated theorems and review the quintessence of modern Geometric Algebra.

\section{Background and Motivation}\label{sec:Bkg-Mot}

This section outlines our contributions for multivector cloud registration using CGA and the exposition is as follows. In Sec.~\ref{sec:gen:approach}, the {\it eigenmultivector extraction} approach is proposed to extract eigenmultivectors from multivector clouds that will be used to estimate the rigid transformation.  We notice that our proposal deals with any type of multivectors. The rationale behind the proposal is that it takes as the input each multivector cloud and solves an eigenvalue problem for each cloud, retrieving unique eigenmultivectors, forming  a basis for the CGA. In the case where the eigenmultivectors are of unique grade, they can be interpreted as primitives objects such as spheres, point pairs, and circles. Otherwise, only form a basis for the entire conformal geometric algebra, not necessarily being blades. The obtained eigenmultivectors from Sec.~\ref{sec:gen:approach} are then used by the proposed {\it coefficients} method detailed in Sec.~\ref{sec:COEF:approach}, that  will estimate the translations and rotation. The coefficients method deals with multivectors which are, at least, bivector plus trivector. 
We provide solutions with and without correspondences assumption. 

\subsection{Why CGA for point cloud registration}

While traditional approaches, {\it e.g.} PCA based methods, only offer  rotation equivariant and invariant properties through the decomposition of a covariance matrix, with CGA, the eigenvalues and eigenmultivectors provide both rotation and translation invariant and equivariant properties. This offers an attractive property when dealing with point cloud registration. In Tab.~\ref{tab:LA-CGA} (left), we show how the decomposition of the covariance matrix ({\it i.e.} the linear function $f$) is affected by a rotation.  
In Tab.~\ref{tab:LA-CGA} (right), we show how the decomposition of a multilinear function $F$ is affected by a rotation and a translation. Note that the eigenvalues of $f$  are only unaffected by the rotation, while the eigenvalues of $F$ remain unchanged with both the translation and rotation.

\begin{table*}
\small
\caption{\small Difference between Vanilla Geometric Algebra and Conformal Geometric Algebra when performing registration. The vectors $\bm{p}_i$ are eigenvectors of $f$ with corresponding real eigenvalues $\lambda_i$ (left). The multivectors $\bm{P}_i$ are eigenmultivectors of $F$ with corresponding real eigenvalues $\nu_i$ (right). The $\bm{P}^j$'s are the reciprocal multivectors such that $\bm{P}_i*\bm{P}^j=\delta_{ij}$ ($*$ is the scalar product operator). The vectors $\bm{z},\bm{x}_i\in\mathcal{A}_3$ and the multivectors $\bm{Z},\bm{X}_i\in\mathcal{G}_{4,1}$.  }\label{tab:LA-CGA}
% \vspace{1mm}
\centering
\begin{tabular}{c|c}
      %\toprule
      \arrayrulecolor{black!100}\specialrule{2pt}{2\jot}{1pc}
      {\scshape {\bf V}anilla {\bf G}eometric {\bf A}lgebra} &  {\scshape {\bf C}onformal {\bf G}eometric {\bf A}lgebra} \\
      \arrayrulecolor{black!50}\specialrule{2pt}{2\jot}{0pc}
      \\
    {$\!\begin{aligned}
               f(\bm{z})&=\sum_i  \bm{z}\cdot \bm{x}_i \bm{x}_i = \sum_i \lambda_i \bm{z}\cdot \bm{p}_i\bm{p}_i^{-1}  \\
               &\quad \quad \Downarrow \ \bm{x}_i\rightarrow R(\bm{x}_i)\ \Downarrow\\
               g(\bm{z}) &= \sum_i \bm{z}\cdot R(\bm{x}_i) R(\bm{x}_i)  \\
               &=\sum_i \lambda_i \bm{z}\cdot R(\bm{p}_i)R(\bm{p}_i^{-1})\\
               \end{aligned}$}    &   
    {$\!\begin{aligned}
               F(\bm{Z})&=\sum_i  \bm{X}_i\bm{ZX}_i = \sum_i \nu_i \bm{Z}* \bm{P}^i\bm{P}_i \\
&\quad \quad \Downarrow\ \bm{X}_i \rightarrow \munderbar{T}\munderbar{R}(\bm{X}_i)\ \Downarrow\\
G(\bm{Z})&= \sum_i  \munderbar{T}\munderbar{R}(\bm{X}_i)\bm{Z}\munderbar{T}\munderbar{R}(\bm{X}_i) \\
&= \sum_i \nu_i \bm{Z}* \munderbar{T}\munderbar{R}(\bm{P}^i)\munderbar{T}\munderbar{R}(\bm{P}_i)\\
               \end{aligned}$}\\ 
               %\bottomrule
               \arrayrulecolor{black!30}\midrule
\end{tabular}
\end{table*}

\subsection{Eigenmultivector Extraction} 
\label{sec:gen:approach}
In this section, we describe a novel methodology that takes as the inputs the multivectors from each cloud, and retrieves the corresponding eigenmultivectors. We further show that, under noise-free assumptions,  the eigenmultivectors of two multivector clouds are related via  rotation and translation.

Formally, let us consider two multivector clouds $\bm{X}_i\in\mathcal{G}_{4,1}$ and $\bm{Y}_i\in\mathcal{G}_{4,1}$, for $i=1,2,\dots,\ell$, where  $\mathcal{G}_{p,q}$  denotes the geometric algebra of a $2^{p+q}$ dimensional real linear vector space. We aim to find the rotator $\bm{R}\in\mathcal{G}_3^+$, such that $\bm{RR}^\dagger=1$, and a translation vector $\bm{t}\in\mathcal{A}_3$ which best aligns the multivector clouds. Here, $\mathcal{G}_3^+$ denotes the even subalgebra of $\mathcal{G}_3$ and $\mathcal{A}_{p,q}$ is a $(p+q)$-dimensional vector space over the field of real numbers $\mathbb{R}$. We start by assuming that 
\begin{equation}
 \bm{Y}_i = \munderbar{T}\munderbar{R}(\bm{X}_{j_i}) + \bm{N}_i\label{eq:model:Y=TR(X)+N}
\end{equation}
where $\bm{N}_i$ is zero mean Gaussian noise, 
and where we define ${T(\bm{z}) \equiv \bm{TzT}^\dagger}$, ${{R}(\bm{z}) \equiv \bm{RzR}^\dagger}$ for $\bm{z}\in\mathcal{A}_{4,1}$, with ${\bm{T} = 1 + \bm{e}_\infty\bm{t}/2}$. In~\eqref{eq:model:Y=TR(X)+N}, the notation $j_i$ stands for the unique assignment $\bm{X}_{j_i} \leftrightarrow \bm{Y}_{i}$. 

The notation $\munderbar{T}$,$\munderbar{R}$ (used in~\eqref{eq:model:Y=TR(X)+N}) and $\bar{T}$,$\bar{R}$, are used to denote the differential and the adjoint outermorphisms of a linear function $T$, $R$, respectively (for a detailed description of linear transformation  and outermorphisms see~\cite{hestenes_clifford_1984}).
Our proposal starts by defining the following multilinear functions for each of the multivector clouds
% \begin{subequations}
% \begin{align}
% F(\bm{Z}) &\equiv \sum_{i=1}^\ell \bm{X}_i\bm{Z}\bm{X}_i,\\
% G(\bm{Z}) &\equiv \sum_{i=1}^\ell \bm{Y}_i\bm{Z}\bm{Y}_i
% \end{align}
% \label{eq:covariance:functions:general}
% \end{subequations}
\dequations{
F(\bm{Z}) \equiv \sum_{i=1}^\ell \bm{X}_i\bm{Z}\bm{X}_i
}{\ G(\bm{Z}) \equiv \sum_{i=1}^\ell \bm{Y}_i\bm{Z}\bm{Y}_i}
{eq:covariance:functions:general}

Given the equalities in \eqref{eq:model:Y=TR(X)+N} and \eqref{eq:covariance:functions:general} a relationship between $F$ and $G$ is given by
\begin{subequations}
\begin{equation}
G(\bm{Z}) = \munderbar{U}F\bar{U}(\bm{Z}) + N(\bm{Z})\label{eq:relation:multilinear:equations}
\end{equation}
% \todo[color=red]{Refer to outermorphism of linear transformations. Para multivectors genericos usar o underbar e o overbar. Para vector nao e necessario o underbar... VOLTAR A FALAR }
where $\munderbar{U}$ is the differential outermorphism of the linear function
\begin{equation}
U(\bm{z}) \equiv \bm{UzU}^\dagger = \bm{TRzR}^\dagger\bm{T}^\dagger = TR(\bm{z})
\label{eq:U_linear}
\end{equation}
where $\bm{z}\in\mathcal{A}_{4,1}$ and with the motor $\bm{U}\equiv\bm{TR}$. The differential outermorphism $\munderbar{U}$ of $U$, extends through the multivector space $\mathcal{G}_{4,1}$ and takes the form ${\munderbar{U}(\bm{Z}) = \bm{UZU}^\dagger}$, with $\bm{Z}\in\mathcal{G}_{4,1}$. $N$ is the noisy multilinear function given by 
\begin{equation}
N(\bm{Z}) = \sum_{i=1}^\ell \bm{N}_i\bm{Z}\bm{N}_i + \bm{N}_i\bm{Z}\bm{X}_i + \bm{X}_i\bm{Z}\bm{N}_i
\end{equation}
\end{subequations}

Notice that the relation (\ref{eq:relation:multilinear:equations}) is invariant under any permutation of the points $\bm{Y}_{j_i}$. An eigenmultivector can be viewed as the solution to an eigenvalue problem $F(\bm{Z})=\lambda\bm{Z}$ in the space of multivectors. Specifically, in the context of multilinear transformations the concept of eigenvector can be extended to eigenmultivectors (see {\it  Definition}~\ref{def:eigenmultivector} and Appendix \ref{sec:multi:trans}, for a detailed description of Eigenmultivectors).

In the following theorem, we explain how  the eigenmultivectors of $F$ and $G$ are related. These eigenmultivectors are equivariant  with respect to the multivector clouds $\bm{X}_i$ and $\bm{Y}_i$, for $i=1,2,\dots,\ell$, and used to estimate orthogonal transformations.

\begin{theoremEnd}[category=theo:eigmvs:noise:free,text link section]{theorem}
\label{theo:eigmvs:noise:free}
  Let ${\bm{P}_1,\bm{P}_2,\dots, \bm{P}_m\in\mathcal{G}_{4,1}}$ be unique eigenmultivectors of $F$, and let ${\bm{Q}_1,\bm{Q}_2,\dots, \bm{Q}_m\in \mathcal{G}_{4,1}}$ be unique eigenmultivectors of $G$. Assume that (\ref{eq:model:Y=TR(X)+N}) is noise-free. Then, the eigenvectors of $F$ and $G$ are related via a scaled rigid transformation. That is, the following equality holds
\begin{equation}
\bm{Q}_i = s_i \munderbar{U}(\bm{P}_i)
\label{eq:sign}
\end{equation}
where the scalar $s_i\in\mathbb{R}$. Furthermore the eigenvalues of $F$ and $G$ are equal.
\end{theoremEnd}

The proof follows directly from Lemma \ref{lemma:eigendecomp:G=UFU} in the {Appendix} \ref{sec:multi:trans}. The eigenmultivectors of the multilinear transformation $F$ have the important property making them equivariant to the input multivector cloud, {\it i.e.} $\bm{X}_1,\bm{X}_2,\dots,\bm{X}_\ell$. In particular, when the $\bm{X}_i$ suffers an orthogonal transformation $U$, then the eigenmultivectors will also suffer the same transformation (with the scaling ambiguity). Another important property is that if they have an associated unique eigenvalue then by ordering the eigenmultivectors by their eigenvalues the equality \eqref{eq:sign} holds. 
To estimate the scalar $s_i$ in (\ref{eq:sign}) we consider the following {\it Corollary}.
% {\color{blue}  Este lemma tem de ser puxado para o inicio do Appendix !}
% \begin{corollary}
\begin{theoremEnd}[category=cor:scaled:relation,text link section,restate]{corollary}
\label{cor:scaled:relation}
Under the assumptions of Theorem \ref{theo:eigmvs:noise:free} the scalar
in (\ref{eq:sign}) can be determined as follows
\begin{equation}
    s_i = {\langle \bm{Q}_i \bm{Q}_{\mathrm{ref}}\rangle}{\langle \bm{P}_i \bm{P}_{\mathrm{ref}}\rangle}^{-1} \label{eq:signal:estimation}
\end{equation}
where $\bm{P}_{\mathrm{ref}}$ and $\bm{Q}_{\mathrm{ref}}$ are both multivectors in~$\mathcal{G}_{4,1}$ that are related as $\bm{Q}_{\mathrm{ref}} = \munderbar{U}(\bm{P}_{\mathrm{ref}})$.
%\begin{equation}
%\bm{Q}_{\mathrm{ref}} = \munderbar{U}(\bm{P}_{\mathrm{ref}})\label{eq:relation:quatities}
%\end{equation}
\end{theoremEnd}
\begin{proofEnd}
Assuming $\bm{Q}_i = s_i \munderbar{U}(\bm{P}_i)$ and $\bm{Q}_{\refe} = \munderbar{U}(\bm{P}_{\refe})$ then we have
\begin{equation}
\begin{split}
\langle \bm{Q}_i\bm{Q}_{\refe}\rangle &= \langle s_i \munderbar{U}(\bm{P}_i) \munderbar{U}(\bm{P}_{\refe})\rangle \\
&= s_i \langle \bm{UP}_i\bm{U}^\dagger \bm{UP}_\refe\bm{U}^\dagger\rangle = s_i\langle \bm{P}_i\bm{P}_\refe\rangle
\label{eq:scalar:prod:scaled:relation}
\end{split}
\end{equation}
where we used (\ref{eq:commutation:property}) to reverse the order of the multivectors and also noting that $\bm{UU}^\dagger = 1$. Then by solving (\ref{eq:scalar:prod:scaled:relation}) for $s_i$, gives $s_i = {\langle \bm{Q}_i\bm{Q}_\refe\rangle}{\langle \bm{P}_i\bm{P}_\refe\rangle^{-1}}$, as intended.
\end{proofEnd}\! In the particular case where $\bm{X}_i$ and $\bm{Y}_i$ are grade one multivectors, we can readily provide simple, yet effective quantities which are used to estimate the scalars $s_i$. Particularly, the multivectors  in~\eqref{eq:signal:estimation} are chosen to be ${\bm{P}_{\mathrm{ref}} = (1+\bm{i})({\bm{e}}_\infty + \bar{\bm{X}}\wedge{\bm{e}}_\infty)}$ and ${\bm{Q}_{\mathrm{ref}} = (1+\bm{i})({\bm{e}}_\infty + \bar{\bm{Y}}\wedge{\bm{e}}_\infty})$, with ${\bar{\bm{X}} = \tfrac{1}{\ell} \sum_{i=1}^\ell\bm{X}_i}$ and ${\bar{\bm{Y}} = \tfrac{1}{\ell} \sum_{i=1}^\ell\bm{Y}_i}$ (see {\it Theorem} \ref{theo:sign:est:vectors}). 
\begin{toberemoved}
Note that $\bm{P}_{\mathrm{ref}}$ and $\bm{Q}_{\mathrm{ref}}$ span all the grades except scalar and pseudoscalar. These elements are invariant to special orthogonal transformations, thus, we do not need to define such quantities for those grades.
\end{toberemoved}
{\it Theorem}~\ref{theo:eigmvs:noise:free} and {\it Collorary}~\ref{cor:scaled:relation} above, suggest that the registration can be accomplished {\it without} correspondences as the eigenmultivectors are ordered by their respective (and unique) eigenvalues. The synopsis of {\it registration with eigenmultivector extraction} approach is shown in Alg.~\ref{alg:gen:app}. We notice that for the step 3 of the Alg.~\ref{alg:gen:app}, we introduce a new algorithm based on {\it multivector coefficients} that is described in Sec.~\ref{sec:COEF:approach}. Note that the registration algorithm computes the normalized eigenmultivectors using Alg.~\ref{alg:eigmv:extraction}, which in turn, takes as input data multiple multivectors and extracts equivariant multivectors by taking the eigendecomposition of the covariance matrix (the multilinear function $F$ and $G$) of the multivector clouds. 
% For more details regarding the estimation of the eigenmultivectors we refer to Sec.~\ref{sec:eig:decomp:alg}.

%two approaches are possible to be used, concretely the {\it coefficients} and {\em optimal motor}, as we will describe in Secs.~\ref{sec:COEF:approach} and~\ref{sec:MOTOR}, respectively.
\begin{algorithm}[H]
\small
\caption{Rigid Transformation Estimation with Eignemultivector Extraction}\label{alg:gen:app}
$\bf 1)$ {\bf Input Data:} $\bm{X}_1,\bm{X}_2,\dots,\bm{X}_\ell, \bm{Y}_1,\bm{Y}_2,\dots,\bm{Y}_\ell$\; \\
$\bf 2)$  Use Alg.~\ref{alg:eigmv:extraction} to compute the normalized eigenmultivectors  
$\bm{P}_i$ and $\bm{Q}_i$, for $i=1,\dots,m$ of the multivector clouds $\bm{X}_i$ and $\bm{Y}_i$ for $i=1,\dots,\ell$\; \\
$\bf 3)$ Take $\bm{P}_i$ and $\bm{Q}_i$ to estimate the rigid transformation $\widehat{\bm{U}}=\widehat{\bm{T}}\widehat{\bm{R}}$, using Alg.~\ref{alg:rbm:coeffs:approach} (Sec.~\ref{sec:COEF:approach}) 
%or Alg. \ref{alg:rbm:opt:motor} (Sec. ~\ref{sec:MOTOR})
\end{algorithm}

\subsection{The Multivector Coefficients Approach}

\label{sec:COEF:approach}

In this section we provide a novel algorithm that can be used to estimate a rigid transformation from a set of multivectors. Concretely, we provide a strategy that receives as the input any two multivector clouds in $\mathcal{G}_{4,1}$ and estimates the rigid transformation between them. The method 
described herein works with any known correpondences, in particular the  correspondences provided by the eigenmultivector extraction method introduced in Sec.~\ref{sec:gen:approach}.

\begin{algorithm}[H]
\small
\caption{Eigenmultivector Extraction}\label{alg:eigmv:extraction}
$\bf 1)$ {\bf Input Data:} $\bm{X}_1,\bm{X}_2,\dots,\bm{X}_\ell$\;\\
$\bf 2)$ Define the multilinear function $F(\bm{Z}) = \sum_{i=1}^\ell \bm{X}_i\bm{Z}\bm{X}_i$,  (eq. \eqref{eq:covariance:functions:general});\\
$\bf 3)$ Compute the eigenmultivectors  $\bm{P}_1,\bm{P}_1,\dots,\bm{P}_m$ and eigenvalues $\lambda_1,\lambda_2,\dots,\lambda_m$ of $F$  (Def.~\eqref{def:eigenmultivector})\;\\ 
$\bf 4)$ Order the eigenmultivectors by their respective eigenvalues\;
Let $\bm{P}_{\mathrm{ref}}$ be some equivariant function of the points $\bm{X}_i$\;\\
$\bf 5)$ Scale the eigenmultivectors as $\bm{P}_i\leftarrow \bm{P}_i/\langle \bm{P}_i\bm{P}_{\mathrm{ref}}\rangle$, \  (eq.~\eqref{eq:signal:estimation}); \\
$\bf 6)$ \textbf{Return} the eigenmultivectors $\bm{P}_1,\bm{P}_2,\dots,\bm{P}_m$
\end{algorithm}

Let  $\bm{P}_1,\bm{P}_2,\dots,\bm{P}_m$ and $\bm{Q}_1,\bm{Q}_2,\dots,\bm{Q}_m$ be two sets of multivectors that are related via ${\bm{Q}_i = \munderbar{U}(\bm{P}_i) = \munderbar{T}\munderbar{R}(\bm{P}_i)}$, that is, we assume they relate via a rigid transformations and  where the {\it correspondences are known}. Our approach starts by breaking down the multivectors $\bm{P}_i$ and $\bm{Q}_i$ in $\mathcal{G}_{4,1}$ into its constituent parts in $\mathcal{G}_3$. Considering a single multivector $\bm{P}\in\mathcal{G}_{4,1}$,  we can write 
\begin{equation}
\begin{split}
\bm{P} &\equiv \bm{e}_o\mathdutchbcal{P}_1 + \bm{e}_\infty\mathdutchbcal{P}_2 + \bm{e}_o\wedge \bm{e}_{\infty} \mathdutchbcal{P}_3 + \mathdutchbcal{P}_4  \\
\end{split}
\label{eq:coefficients:of:P:main:body}
\end{equation}
where $\mathdutchbcal{P}_1,\mathdutchbcal{P}_2,\dots,\mathdutchbcal{P}_4\in\mathcal{G}_{3}$ are the coefficients of the multivector $\bm{P}\in\mathcal{G}_{4,1}$, and $\bm{e}_o$ and $\bm{e}_\infty$ are defined in \eqref{eq:eo:einf:conf:mapping:CGA}. To determine the coefficients from a multivector we define a set of functions $\mathdutchcal{C}_i(\cdot)$, for $i=1,...,4$, which provide a mapping from a multivector $\bm{P}$ to its coefficients, this mapping is defined in \eqref{The:coefficients:functions}.

Next, in Sec.~\ref{sec:REL-COEF} we address how the coefficients $\mathdutchbcal{P}_1,\mathdutchbcal{P}_1,\ldots,\mathdutchbcal{P}_4$ of $\bm{P}$ are affected under a rigid transformations 
({\it Theorem} \ref{theo:coefficients}). In Sec.~\ref{sec:EST-R-T}, we  propose 
a methodology to separately estimate the rotation and translation by using {\it Theorems} \ref{theo:rot:opt} and \ref{theo:opt:trans}. In summary, we will first estimate the rotation using {\it Theorem} \ref{theo:rot:opt} followed by the translation using {\it Theorem} \ref{theo:opt:trans}.

\vspace{-2mm}
\subsubsection{Relation between the coefficients under the rigid transformation}
\vspace{-2mm}
\label{sec:REL-COEF}

The following theorem states how the coefficients are related if the two multivector clouds are related under a rigid transformation.

\begin{theoremEnd}[category=theo:coefficients,text link section]{theorem}
\label{theo:coefficients}

Let the multivectors $\bm{P}\in\mathcal{G}_{4,1}$ and $\bm{Q}\in\mathcal{G}_{4,1}$ be related via the composition of a translation $T$ and a rotation $R$, that is, $\bm{Q} = \munderbar{T}\munderbar{R}(\bm{P})$. Then the coefficients $\mathdutchbcal{P}_i = \mathdutchcal{C}_i(\bm{P})$, $\mathdutchbcal{Q}_i = \mathdutchcal{C}_i(\bm{Q})$ are related as
\begin{subequations}
\begin{equation}
\begin{split}
\mathdutchbcal{Q}_1 &= \munderbar{R}(\mathdutchbcal{P}_1)\\ 
\mathdutchbcal{Q}_2 &= \tfrac{1}{2}\bm{t}^2 \munderbar{R}(\mathdutchbcal{P}_1) - \bm{t}\wedge(\bm{t}\cdot \munderbar{R}(\mathdutchbcal{P}_1)) \\
&+ \munderbar{R}(\mathdutchbcal{P}_2) - \bm{t}\wedge \munderbar{R}(\mathdutchbcal{P}_3) + \bm{t}\cdot \munderbar{R}(\mathdutchbcal{P}_4)  \\
\mathdutchbcal{Q}_3 &= \bm{t}\cdot \munderbar{R}(\mathdutchbcal{P}_1) + \munderbar{R}(\mathdutchbcal{P}_3)\\
\mathdutchbcal{Q}_4 &= \bm{t}\wedge \munderbar{R}(\mathdutchbcal{P}_1) + \munderbar{R}(\mathdutchbcal{P}_4)
\end{split}
\label{eq:sys:rot:trans:CGA}
\end{equation}

Furthermore, we can also show how the coefficients are affected by the inverse transformation ${\bm{P} = \bar{R}\bar{T}(\bm{Q})}$, having
\vspace{-1mm}

\begin{equation}
\begin{split}
\mathdutchbcal{P}_1 &= \bar{R}(\mathdutchbcal{Q}_1)\\ 
\mathdutchbcal{P}_2 &= \tfrac{1}{2}\bm{t}^2 \bar{R}(\mathdutchbcal{Q}_1) + \bar{R}(\bm{t}\wedge(\bm{t}\cdot \mathdutchbcal{Q}_1)) \\
&+ \bar{R}(\mathdutchbcal{Q}_2) + \bar{R}(\bm{t}\wedge \mathdutchbcal{Q}_3) - \bar{R}(\bm{t}\cdot \mathdutchbcal{Q}_4)  \\
\mathdutchbcal{P}_3 &= -\bar{R}(\bm{t}\cdot \mathdutchbcal{Q}_1) + \bar{R}(\mathdutchbcal{Q}_3)\\
\mathdutchbcal{P}_4 &= -\bar{R}(\bm{t}\wedge \mathdutchbcal{Q}_1) + \bar{R}(\mathdutchbcal{Q}_4)
\end{split}
\label{eq:sys:rot:trans:CGA:Q}
\end{equation}

% \begin{equation}
% \begin{cases}
% \mathdutchbcal{P}_1 = \bar{R}(\mathdutchbcal{Q}_1)\\ 
% \mathdutchbcal{P}_2 = \tfrac{1}{2}\bm{t}^2 \bar{R}(\mathdutchbcal{Q}_1) + \bar{R}(\bm{t}\wedge(\bm{t}\cdot \mathdutchbcal{Q}_1)) + \bar{R}(\mathdutchbcal{Q}_2) + \bar{R}(\bm{t}\wedge \mathdutchbcal{Q}_3) - \bar{R}(\bm{t}\cdot \mathdutchbcal{Q}_4)  \\
% \mathdutchbcal{P}_3 = -\bar{R}(\bm{t}\cdot \mathdutchbcal{Q}_1) + \bar{R}(\mathdutchbcal{Q}_3)\\
% \mathdutchbcal{P}_4 = -\bar{R}(\bm{t}\wedge \mathdutchbcal{Q}_1) + \bar{R}(\mathdutchbcal{Q}_4)
% \end{cases}
% \label{eq:sys:rot:trans:CGA:Q}
% \end{equation}
\end{subequations}
% \todo[inline,color=red]{The Inverse relation $\bm{Q} = \bar{R}\bar{T}(\bm{P})$ can be directly obtained from this theorem by } 
\end{theoremEnd}
\begin{proofEnd}
% $\mathcal{G}_{p+1,q+1}$ is the extension, via the basis vectors $\bm{e}_o$ and $\bm{e}_{\infty}$, of the geometric algebra $\mathcal{G}_{p,q}$, which is described in Sec. \ref{sec:CGA}.

We start by demonstrating how different components are affected by translations $T$ defined via (\ref{eq:def:translation}). Consider $\bm{A}\in\mathcal{G}_{p,q}$ then the following holds
\begin{subequations}
\begin{align}
\begin{split}
T(\bm{e}_o) {}&= (1+\tfrac{1}{2}\bm{e}_\infty\bm{t})\bm{e}_o(1-\tfrac{1}{2}\bm{e}_\infty\bm{t}) = \bm{e}_o - \tfrac{1}{4}\bm{e}_{\infty}\bm{te}_o\bm{e}_{\infty}\bm{t} + \tfrac{1}{2}\bm{e}_\infty\bm{te}_o -\tfrac{1}{2}\bm{e}_o\bm{e}_\infty\bm{t}\\
    &= \bm{e}_o - \tfrac{1}{4} \bm{e}_\infty\bm{e}_o\bm{e}_\infty\bm{t}^2 -\tfrac{1}{2}(\bm{e}_\infty\bm{e}_o + \bm{e}_o\bm{e}_\infty)\bm{t}\\
    &=\bm{e}_o + \tfrac{1}{2}\bm{t}^2\bm{e}_\infty - \bm{e}_\infty\cdot\bm{e}_o\bm{t}\\
    &=\bm{e}_o + \bm{t} + \tfrac{1}{2}\bm{t}^2\bm{e}_\infty
\end{split}\\
% \end{equation}
% \begin{equation}
\begin{split}
T(\bm{e}_\infty) {}&= \bm{e}_\infty + \tfrac{1}{2}\bm{e}_\infty\bm{te}_\infty - \tfrac{1}{2}\bm{e}_\infty\bm{e}_\infty\bm{t} - \tfrac{1}{4}\bm{e}_\infty\bm{t}\bm{e}_\infty\bm{e}_\infty\bm{t}\\
& = \bm{e}_\infty - \tfrac{1}{2}\bm{e}_\infty^2\bm{t} - \tfrac{1}{2}\bm{e}_\infty^2 -\tfrac{1}{4}\bm{e}_\infty\bm{t}^2\bm{e}_\infty^2 = \bm{e}_\infty
\end{split}\\
% \end{equation}
% \begin{equation}
\begin{split}
T(\bm{A}) {}&= \bm{A} - \tfrac{1}{4}\bm{e}_\infty\bm{tAe}_\infty\bm{t} + \tfrac{1}{2}\bm{e}_\infty\bm{tA}- \tfrac{1}{2}\bm{Ae}_\infty\bm{t}\\
&= \bm{A} + \tfrac{1}{2}\sum_k \left( \bm{e}_\infty\bm{t}\langle\bm{A}\rangle_k - \langle\bm{A}\rangle_k\bm{e}_\infty\bm{t} \right)\\
&= \bm{A} + \tfrac{1}{2}\sum_k \left( \bm{e}_\infty\bm{t}\langle\bm{A}\rangle_k - (-1)^k\bm{e}_\infty\langle\bm{A}\rangle_k\bm{t} \right)\\
&= \bm{A} + \bm{e}_\infty \sum_k \tfrac{1}{2}\left(\bm{t}\langle\bm{A}\rangle_k - (-1)^k\langle\bm{A}\rangle_k\bm{t} \right)\\
&= \bm{A} + \bm{e}_\infty \sum_k \bm{t}\cdot\langle\bm{A}\rangle_k = \bm{A} + \bm{e}_\infty  \bm{t}\cdot\bm{A}
\end{split}\label{eq:trans:T(A):with:proof}
\end{align}
\end{subequations}
where we used $\bm{e}_\infty\bm{e}_o\bm{e}_\infty = -2\bm{e}_\infty$ and that $\bm{e}_\infty^2=0$ and the reordering rule for the wedge product (\ref{eq:reorder:rules:wedge}) to find that $\bm{e}_\infty\langle\bm{A}\rangle_k = (-1)^k \langle\bm{A}\rangle_k\bm{e}_\infty$. In the last two steps of (\ref{eq:trans:T(A):with:proof}) we used (\ref{eq:inner:geo:prod}) and (\ref{eq:sum:grades}) to recover the inner product of $\bm{t}$ with $\bm{A}$.

Now, applying a translation to (\ref{eq:coefficients:of:P}) gives
\begin{equation}
\begin{split}
T(\bm{P}) &= T(\bm{e}_o)\wedge T(\mathdutchbcal{P}_1) + T(\bm{e}_\infty)\wedge T(\mathdutchbcal{P}_2) + T(\bm{e}_o)\wedge T(\bm{e}_{\infty})\wedge T(\mathdutchbcal{P}_3) +  T(\mathdutchbcal{P}_4)\\
          &=  \left(\bm{e}_o + \bm{t} + \tfrac{1}{2}\bm{t}^2\bm{e}_\infty\right)\wedge \left( \mathdutchbcal{P}_1 + \bm{e}_\infty\bm{t}\cdot\mathdutchbcal{P}_1 \right) + \bm{e}_\infty\wedge (\mathdutchbcal{P}_2 + \bm{e}_\infty\bm{t}\cdot\mathdutchbcal{P}_2) + \\
          &+\left(\bm{e}_o + \bm{t} +\tfrac{1}{2}\bm{t}^2\bm{e}_\infty\right)\wedge\bm{e}_\infty\wedge(\mathdutchbcal{P}_3 + \bm{e}_\infty\bm{t}\cdot\mathdutchbcal{P}_3) + \mathdutchbcal{P}_4 + \bm{e}_\infty\bm{t}\cdot\mathdutchbcal{P}_4\\
          & = \left(\bm{e}_o + \bm{t} + \tfrac{1}{2}\bm{t}^2\bm{e}_\infty\right)\wedge \left( \mathdutchbcal{P}_1 + \bm{e}_\infty\bm{t}\cdot\mathdutchbcal{P}_1 \right) + 
          \bm{e}_\infty\wedge \mathdutchbcal{P}_2 + \left(\bm{e}_o + \bm{t}\right)\wedge\bm{e}_\infty\wedge\mathdutchbcal{P}_3 + \mathdutchbcal{P}_4 + \bm{e}_\infty\bm{t}\cdot\mathdutchbcal{P}_4
\end{split}
\end{equation}
which putting in the same form as (\ref{eq:coefficients:of:P}) we have
\begin{subequations}
\begin{equation}
T(\bm{P}) = \bm{e}_o\mathdutchbcal{P}_1 + \bm{e}_\infty\left( \tfrac{1}{2}\bm{t}^2 \mathdutchbcal{P}_1 - \bm{t}\wedge(\bm{t}\cdot\mathdutchbcal{P}_1) + \mathdutchbcal{P}_2 + \bm{t}\cdot\mathdutchbcal{P}_4 - \bm{t}\wedge\mathdutchbcal{P}_3 \right) + \bm{e}_o\wedge\bm{e}_{\infty}\left( \bm{t}\cdot\mathdutchbcal{P}_1 + \mathdutchbcal{P}_3 \right) + \bm{t}\wedge\mathdutchbcal{P}_1 + \mathdutchbcal{P}_4 \label{eq:translation:of:coeffs:of:P}
\end{equation}
The point at infinity $\bm{e}_\infty$ and the point at the origin $\bm{e}_o$ commute with rotors in $\mathcal{G}_{p,q}$,that is, $R(\bm{e}_o) = \bm{e}_o$ and $R(\bm{e}_\infty) = \bm{e}_\infty$. Thus, applying a rotation $R$ to (\ref{eq:coefficients:of:P}) gives
\begin{equation}
\begin{split}
R(\bm{P}) &= R(\bm{e}_o)\wedge R(\mathdutchbcal{P}_1) + R(\bm{e}_\infty)\wedge R(\mathdutchbcal{P}_2) + R(\bm{e}_o)\wedge R(\bm{e}_{\infty})\wedge R(\mathdutchbcal{P}_3) +  R(\mathdutchbcal{P}_4)\\
&=\bm{e}_o\wedge R(\mathdutchbcal{P}_1) + \bm{e}_\infty\wedge R(\mathdutchbcal{P}_2) + \bm{e}_o \wedge \bm{e}_{\infty}\wedge R(\mathdutchbcal{P}_3) +  R(\mathdutchbcal{P}_4)
\end{split}\label{eq:rotation:pf:coeffs:of:P}
\end{equation}
Thus, for the coefficients of rotations we have that 
\begin{equation}
\mathdutchcal{C}_i(R(\bm{P})) = R(\mathdutchcal{C}_i(\bm{P})) = R(\mathdutchbcal{P}_i),\ \text{for}\ i=1,2,3,4
\end{equation}
\end{subequations}
Since $R(\mathdutchbcal{P}_i)\in\mathcal{G}_{p,q}$ we can also use (\ref{eq:translation:of:coeffs:of:P}) to compose a rotation with a translation. Replace $\bm{P}$ by $R(\bm{P})$ and  $\mathdutchbcal{P}_i$ by $R(\mathdutchbcal{P}_i)$ in (\ref{eq:translation:of:coeffs:of:P}). Then use (\ref{The:coefficients:functions}) to extract the coefficients to arrive at (\ref{eq:sys:rot:trans:CGA}). For the inverse transformation $\bar{R}\bar{T}$ we replace $\bm{t}$ by $-\bm{t}$, $T$ by $\bar{T}$ and $\bm{P}$ by $\bm{Q}$ in (\ref{eq:translation:of:coeffs:of:P}) then apply $\bar{R}$ to that result using (\ref{eq:rotation:pf:coeffs:of:P}) to finally obtain (\ref{eq:sys:rot:trans:CGA:Q}).
\end{proofEnd} Note that from (\ref{The:coefficients:functions}) we can also write 
% \begin{subequations}
\dequations{\mathdutchbcal{Q}_i = \mathdutchcal{C}_i(\munderbar{T}\munderbar{R}(\bm{P}))}{\mathdutchbcal{P}_i = \mathdutchcal{C}_i(\bar{R}\bar{T}(\bm{Q})), \;{\rm for}\; i=1,...,4.}
{eq:coefficients:CiTRP:CiRTQ}
% \end{subequations}

% \todo[author=Prof. Jacinto]{meter a proof da relação dos coefficientes coef. E,F,G, H e A,B,C,D aqui ou no Appendix - nota escrever uam nota acerca da equivariância}{\color{blue} (Ver coim mais cuidado ...)}
% {\color{blue} Dizer que $i=1,...,4$}
% {\color{blue} Acho que percebi a prova ...}

% \todo[author=Prof. Jacinto]{meter prova. Se P e Q se relacionarem por R e t (na ausenacia de ruisdo), a solução é  otimas e dada por esgta expsessão}
% \todo[author=Prof. Jacinto]{Falta aqui os expressões de R e t no caso geral, ou seja, com ruido ! Acho q isto tem a ver com o Langragiano}

\subsubsection{Estimating R and t from multivector coefficients}\label{sec:EST-R-T}

Notice that the first coefficients of $\bm{P}$ and $\bm{Q}$ relate via a rotation (see first equation of~\eqref{eq:sys:rot:trans:CGA}). Now, let $\bm{A}_i = \mathdutchcal{C}_1(\bm{P}_i)\in\mathcal{G}_3$ and $\bm{B}_i = \mathdutchcal{C}_1(\bm{Q}_i)\in\mathcal{G}_3$. It is straightforward to obtain  $\bm{B}_i = \munderbar{R}(\bm{A}_i)$. To estimate the rotation between the multivectors $\bm{A}_i$ and $\bm{B}_i$ we state the following  theorem.
% The following theorem states how to estimate the rotation in VGA {\color{blue} (ir com mais calma - um revisor menos atento não percebe ...) } also assuming that the correspondences are known.
% % {\color{blue}  Eu retirava este teorema, aparece um pouco do nada, e repara que nada é dito antes sobre isto - ver secção 3 ! } 
\begin{theoremEnd}[category=theo:opt:rot:trans,text link section]{theorem}
\label{theo:rot:opt}
% \todo[fancyline,color=red,author=Francisco]{Explica como estimar a rotacao optima em {\bf{VGA}} sabendo as correspondencias.}
Let $\bm{A}_i,\bm{B}_i\in\mathcal{G}_3$, with $i = 1, \ldots, \ell$, be two sets of $\ell$ multivectors and  assume that they are related by
\begin{subequations}
\begin{equation}
\bm{B}_i = \bm{RA}_i\bm{R}^\dagger + \bm{N}_i\label{eq:noisy:relation:Bi:Ai}
\end{equation}
where $\bm{RR}^\dagger = 1$ and $\bm{N}_i\in\mathcal{G}_3$ is Gaussian noise. Then, the optimal rotor $\bm{R}$ is determined by minimizing the following Lagrangian
\begin{equation}
\mathcal{L}(\bm{R})  = \sum_{i=1}^\ell \|\bm{B}_i - \bm{RA}_i\bm{R}^\dagger\|^2 + \lambda \langle \bm{RR}^\dagger -1\rangle
\end{equation}
where $\lambda$ is the Lagrange multiplier associated with the constraint $\bm{RR}^\dagger =1$. Since $\bm{R}$ is a rotor, then it has to be of grade zero and grade two, that is, $\bm{R} \equiv \langle \bm{R}\rangle + \langle \bm{R}\rangle_2$. Let 
\begin{equation}
L(\bm{R}) = \sum_{i=1}^\ell \bm{B}_i\bm{RA}_i^\dagger + \bm{B}_i^\dagger \bm{R}\bm{A}_i \label{eq:multilinear:function:opt:rot}
\end{equation}
\end{subequations}
then the optimal rotor is the eigenrotator of $\langle L(\bm{R})\rangle_{0,2}$ with the largest eigenvalue. Note that the multiple grade projection operation $\langle \cdot\rangle_{0,2}$ is defined in \eqref{eq:multiple:grade:proj}.
\end{theoremEnd}
\begin{proofEnd}
Given the relation (\ref{eq:noisy:relation:Bi:Ai}) we choose the Euclidean distance to define the cost function
\begin{equation}
J(\bm{R}) = \sum_{i=1}^\ell \|\bm{B}_i - \bm{RA}_i\bm{R}^\dagger\|^2
\end{equation}
Then by {\it Theorem} \ref{theo:versor:R3} we can express the Lagrangian associated to the problem of minimizing $J$ under the constraint that $\bm{R}$ is a rotator as
\begin{equation}
\mathcal{L}(\lambda,\bm{R}) = J(\bm{R}) + \lambda \langle \bm{RR}^\dagger - 1\rangle
\end{equation}
which by expanding we find 
\begin{equation}
    \mathcal{L}(\bm{R},\lambda) = \sum_{i=1}^\ell |\bm{B}_i|^2 + |\bm{A}_i|^2 - 2\langle \bm{B}_i^\dagger \bm{RA}_i\bm{R}^\dagger\rangle + \lambda \langle\bm{RR}^\dagger - 1\rangle
\end{equation}
taking the derivative while using (\ref{eq:derivatives:properties}), (\ref{eq:commutation:property}) and (\ref{eq:A:dagger:star:B}) we can show that 
\begin{equation}
    \begin{split}
    \partial_{\bm{R}^\dagger}\mathcal{L}(\bm{R},\lambda) &= \partial_{\bm{R}}^\dagger \mathcal{L}(\bm{R}) = \left(\partial_{\bm{R}} \mathcal{L}(\bm{R},\lambda) \right)^\dagger \\
&= -\tfrac{1}{2}\sum_{i=1}^\ell\left( \dot{\partial}_{\bm{R}} \langle \bm{B}_i^\dagger \dot{\bm{R}}\bm{A}_i\bm{R}^\dagger\rangle + \dot{\partial}_{\bm{R}}\langle \bm{B}_i^\dagger \bm{R}\bm{A}_i\dot{\bm{R}}^\dagger\rangle\right)^\dagger + \lambda\left(\partial_{\bm{R}} \langle\bm{RR}^\dagger - 1\rangle\right)^\dagger\\
&= -\tfrac{1}{2}\sum_{i=1}^\ell\left( \dot{\partial}_{\bm{R}} \dot{\bm{R}}*(\bm{A}_i\bm{R}^\dagger\bm{B}_i^\dagger) + \dot{\partial}_{\bm{R}}\dot{\bm{R}}* (\bm{A}_i^\dagger\bm{R}^\dagger\bm{B}_i)\right)^\dagger + \lambda\left(\partial_{\bm{R}} \bm{R}*\bm{R}^\dagger \right)^\dagger\\
&= -\tfrac{1}{2}\sum_{i=1}^\ell\left( \bm{A}_i\bm{R}^\dagger\bm{B}_i^\dagger + \bm{A}_i^\dagger\bm{R}^\dagger\bm{B}_i\right)^\dagger + \lambda (\bm{R}^\dagger)^\dagger\\
&= -\tfrac{1}{2}\sum_{i=1}^\ell \bm{B}_i\bm{R}\bm{A}_i^\dagger + \bm{B}_i^\dagger\bm{R}\bm{A}_i + \lambda \bm{R}
    \end{split}
\end{equation}
Taking $\langle\partial_{\bm{R}}^\dagger\rangle_{0,2} \mathcal{L} = 0$ and taking the isolated $\bm{R}$ to the other side of the equation we find
\begin{subequations}
\begin{equation}
\tfrac{1}{2}\langle L(\bm{R})\rangle_{0,2} = \lambda \bm{R}
\end{equation}
where 
\begin{equation}
L(\bm{R}) =\sum_{i=1}^\ell \bm{B}_i\bm{R}\bm{A}_i^\dagger + \bm{B}_i^\dagger\bm{R}\bm{A}_i.
\end{equation}
\end{subequations}
Then to show that the optimal solution is when $\lambda$ is the largest, we multiply both sides of $\bm{R}^\dagger$ and since $\bm{RR}^\dagger = 1$ thus 
\begin{equation}
\lambda = \tfrac{1}{2}L(\bm{R})\bm{R}^\dagger \label{eq:lambda:F(R)R}
\end{equation}
since the left hand side is a scalar so must also be the right hand side thus we have
\begin{equation}
\begin{split}
\tfrac{1}{2}L(\bm{R})\bm{R}^\dagger &= \tfrac{1}{2}L(\bm{R})*\bm{R}^\dagger = \tfrac{1}{2}\sum_{i=1}^\ell \langle \bm{B}_i^\dagger\bm{RA}_i\bm{R}^\dagger + \bm{B}_i\bm{RA}_i^\dagger\bm{R}^\dagger\rangle \\
& =\sum_{i=1}^\ell \langle \bm{B}_i^\dagger\bm{RA}_i\bm{R}^\dagger\rangle
\end{split}
\label{eq:F(R)R:cost:function}
\end{equation}
where we used (\ref{eq:dagger:scalar:product}) and (\ref{eq:dagger:prop:1}) in order find that 
\begin{equation}
\langle \bm{B}_i\bm{RA}_i^\dagger\bm{R}^\dagger\rangle = \langle \bm{B}_i^\dagger \left(\bm{RA}_i^\dagger\bm{R}^\dagger\right)^\dagger\rangle = \langle \bm{B}_i^\dagger\bm{RA}_i\bm{R}^\dagger\rangle.
\end{equation}
Comparing the last expression of (\ref{eq:F(R)R:cost:function}) with the cost function $J$, we immediately see that this is the component of $J$ which varies with $\bm{R}$. Thus, to minimize $J$, we must maximize this quantity. Also, because of (\ref{eq:lambda:F(R)R}) we can readily see that the optimal solution is obtained when $\lambda$ is the largest. 
We note that a real eigendecomposition must exist since under the product $\langle AB^\dagger \rangle$ (which is definite in $\mathcal{G}_{3}$), the multilinear function $F$ is symmetric (see Theorem~\ref{theo:symmetric:multi}), which means that a real decomposition of the matrix of $F$ must exist.  
\end{proofEnd}

Assume that the rotation $R$ is given, then we can set $\bm{S}_i = \munderbar{R}(\bm{P}_i)$ and consider the following cost function with respect to the translator $\bm{T}$
\begin{equation}
J(\bm{T}) = \sum_{i=1}^N d^2(\bm{TS}_i\bm{T}^\dagger,\bm{Q}_i) + d^2(\bm{S}_i,\bm{T}^\dagger\bm{Q}_i\bm{T})\label{eq:cost:function:opt:trans}
\end{equation}
where $d^2$ is given by (\ref{eq:def:dist:squared:CGA:translation}). The minimum of the above cost measure, under certain conditions, has a closed form solution given by the following theorem.

% To estimate the translation we consider solving the 

% The following theorem states how to estimate the translation.
% \begin{theorem}
\begin{theoremEnd}[category=theo:opt:rot:trans,text link section]{theorem}
\label{theo:opt:trans}
% \todo[color=red,author=Francisco]{Explica como estimar a translacao sub-optima em {\bf{CGA}} sabendo as correspondencias.}
Let $\bm{Q}_i\in\mathcal{G}_{4,1}^{k_i}$ and $\bm{S}_i\in\mathcal{G}_{4,1}^{k_i}$ be multivectors of the same grade $k_i>0$. After the minimization of the cost function \eqref{eq:cost:function:opt:trans},  the optimal translation $\bm{t}$ obtained 
under the constraint ${\bm{T} = 1 + \tfrac{1}{2}\bm{e}_{\infty}\bm{t}}$ is  given as 
\begin{subequations}
\begin{equation}
\begin{split}
\bm{t} &= s_N^{-1} \sum_{i=1}^N \Big\langle \left(\mathdutchcal{C}_1(\bm{S}_i)+\mathdutchcal{C}_1(\bm{Q}_i)\right)\Big(\mathdutchcal{C}_3(\bm{Q}_i) + \mathdutchcal{C}_4(\bm{Q}_i)\\
&\ \ - \mathdutchcal{C}_3(\bm{S}_i) - \mathdutchcal{C}_4(\bm{S}_i)\Big)^\dagger\Big\rangle_1\label{eq:t:opt:opt:trans}
\end{split}
\end{equation}
with 
\begin{equation}
s_N = \sum_{i=1}^N \|\mathdutchcal{C}_1(\bm{S}_i)\|^2 + \|\mathdutchcal{C}_1(\bm{Q}_i)\|^2.\label{eq:sum:squares:opt:trans}
\end{equation}
\label{eq:t:opt:opt:trans:sum:squares}
\end{subequations}
and where $\mathdutchcal{C}_i$ is given by (\ref{The:coefficients:functions}).
\end{theoremEnd}
\begin{proofEnd}
Let $\mathdutchbcal{Q}_{ij} = \mathdutchcal{C}_j(\bm{Q}_i)$, $\mathdutchbcal{S}_{ij} = \mathdutchcal{C}_j(\bm{S}_i)$ and let $\bm{S}_i = R(\bm{P}_i)$ then considering (\ref{eq:coefficients:CiTRP:CiRTQ}) and (\ref{eq:sys:rot:trans:CGA}) we find that 
\begin{subequations}
\begin{align}
\mathdutchcal{C}_3(\bm{TS}_i\bm{T}^\dagger) &= \bm{t}\cdot \mathdutchcal{C}_1(\bm{S}_i) + \mathdutchcal{C}_3(\bm{S}_i) = \bm{t}\cdot \mathdutchbcal{S}_{i1} + \mathdutchbcal{S}_{i3} \\
\mathdutchcal{C}_4(\bm{TS}_i\bm{T}^\dagger) &= \bm{t}\wedge \mathdutchcal{C}_1(\bm{S}_i) + \mathdutchcal{C}_4(\bm{S}_i) = \bm{t}\wedge \mathdutchbcal{S}_{i1} + \mathdutchbcal{S}_{i4} 
\end{align}
\label{eq:CTST}
\end{subequations}
Letting $R$ be the identity transformation in  (\ref{eq:sys:rot:trans:CGA:Q}) we may find that 
\begin{subequations}
\begin{align}
\mathdutchcal{C}_3(\bm{T}^\dagger\bm{Q}_i\bm{T}) &= -\bm{t}\cdot \mathdutchcal{C}_1(\bm{Q}_i) + \mathdutchcal{C}_3(\bm{Q}_i) = -\bm{t}\cdot \mathdutchbcal{Q}_{i1} + \mathdutchbcal{Q}_{i3} \\
\mathdutchcal{C}_4(\bm{T}^\dagger\bm{Q}_i\bm{T}) &= -\bm{t}\wedge \mathdutchcal{C}_1(\bm{Q}_i) + \mathdutchcal{C}_4(\bm{Q}_i) = -\bm{t}\wedge \mathdutchbcal{Q}_{i1} + \mathdutchbcal{Q}_{i4} 
\end{align}
\label{eq:CTQT}
\end{subequations}

Using the defining equations (\ref{eq:def:dist:squared:CGA:translation}) for $d^2$ we readily have that 
\begin{subequations}
\begin{align}
d^2(\bm{TS}_i\bm{T}^\dagger,\bm{Q}_i) &= \| \mathdutchcal{C}_3(\bm{TS}_i\bm{T}^\dagger) - \mathdutchcal{C}_3(\bm{Q}_i)\|^2 + \| \mathdutchcal{C}_4(\bm{TS}_i\bm{T}^\dagger) - \mathdutchcal{C}_4(\bm{Q}_i)\|^2 \\
d^2(\bm{S}_i,\bm{T}^\dagger\bm{Q}_i\bm{T}) &= \| \mathdutchcal{C}_3(\bm{S}_i) - \mathdutchcal{C}_3(\bm{T}^\dagger\bm{Q}_i\bm{T})\|^2 + \| \mathdutchcal{C}_4(\bm{S}_i) - \mathdutchcal{C}_4(\bm{T}^\dagger\bm{Q}_i\bm{T})\|^2
\end{align}
\end{subequations}
Using (\ref{eq:CTST}) and (\ref{eq:CTQT}) in the above equations and considering the cost function (\ref{eq:cost:function:opt:trans}) we may write
\begin{equation}
\begin{split}
J &= \|\bm{t}\cdot\mathdutchbcal{S}_{i1} + \mathdutchbcal{S}_{i3} - \mathdutchbcal{Q}_{i3}\|^2 + \|\bm{t}\wedge\mathdutchbcal{S}_{i1} + \mathdutchbcal{S}_{i4} - \mathdutchbcal{Q}_{i4}\|^2 + \\
&+\|\bm{t}\cdot\mathdutchbcal{Q}_{i1} + \mathdutchbcal{S}_{i3} - \mathdutchbcal{Q}_{i3}\|^2 + \|\bm{t}\wedge\mathdutchbcal{Q}_{i1} + \mathdutchbcal{S}_{i4} - \mathdutchbcal{Q}_{i4}\|^2\\
&= \|\bm{t}\cdot\mathdutchbcal{S}_{i1}\|^2 + \|\bm{t}\wedge\mathdutchbcal{S}_{i1}\|^2 + \|\bm{t}\cdot\mathdutchbcal{Q}_{i1}\|^2 + \|\bm{t}\wedge\mathdutchbcal{Q}_{i1}\|^2 + \\
&+\|\mathdutchbcal{S}_{i3} - \mathdutchbcal{Q}_{i3}\|^2 + \|\mathdutchbcal{S}_{i4} - \mathdutchbcal{Q}_{i4}\|^2 + 
\|\mathdutchbcal{S}_{i3} - \mathdutchbcal{Q}_{i3}\|^2 + \|\mathdutchbcal{S}_{i4} - \mathdutchbcal{Q}_{i4}\|^2\\
&+ 2\langle \bm{t}\cdot(\mathdutchbcal{S}_{i1} + \mathdutchbcal{Q}_{i1})(\mathdutchbcal{S}_{i3} - \mathdutchbcal{Q}_{i3})^\dagger\rangle + 2\langle \bm{t}\wedge(\mathdutchbcal{S}_{i1} + \mathdutchbcal{Q}_{i1})(\mathdutchbcal{S}_{i4} - \mathdutchbcal{Q}_{i4})^\dagger\rangle
\end{split}
\label{eq:cost:expanded:opt:trans}
\end{equation}
Where we omitted the sum $\sum_{i=1}^N$. Recall that we are summing over the index $i=1,2,\dots,N$.
Then using (\ref{eq:sum:square:inner:outer:geo}) and (\ref{eq:scalar:wedge:1:k:k+1/k-1}) to simplify the above expression and taking the derivative with respect to the vector $\bm{t}$ we have
\begin{equation}
\begin{split}
\partial_{\bm{t}} J &=  \partial_{\bm{t}} \sum_{i=1}^N  \bm{t}^2\left(\|\mathdutchbcal{S}_{i1}\|^2 + \|\mathdutchbcal{Q}_{i1}\|^2\right) +\\
&+ 2\partial_{\bm{t}}\sum_{i=1}^N\left(\bm{t}\cdot\langle (\mathdutchbcal{S}_{i1} + \mathdutchbcal{Q}_{i1})(\mathdutchbcal{S}_{i3} - \mathdutchbcal{Q}_{i3})^\dagger \rangle_1 + 2\bm{t}\cdot\langle (\mathdutchbcal{S}_{i1} + \mathdutchbcal{Q}_{i1})(\mathdutchbcal{S}_{i4} - \mathdutchbcal{Q}_{i4})^\dagger \rangle_1 \right)\\
&= 2\bm{t} \left(\sum_{i=1}^N \|\mathdutchbcal{S}_{i1}\|^2 + \|\mathdutchbcal{Q}_{i1}\|^2\right) + 2 \sum_{i=1}^N \langle (\mathdutchbcal{S}_{i1} + \mathdutchbcal{Q}_{i1})(\mathdutchbcal{S}_{i4} + \mathdutchbcal{S}_{i3} - \mathdutchbcal{Q}_{i4} - \mathdutchbcal{Q}_{i3})^\dagger \rangle_1
\end{split}\label{eq:t:derivative:of:J:opt:trans}
\end{equation}
which by solving for $\bm{t}$ we get (\ref{eq:t:opt:opt:trans}).

Note that, the use of (\ref{eq:scalar:wedge:1:k:k+1/k-1}) in (\ref{eq:t:derivative:of:J:opt:trans}) is valid since we are considering that $\bm{Q}_i$ and $\bm{P}_i$ are multivectors of unique grade equal to $k_i$. When $k_i>1$, then the coefficients $\mathdutchbcal{S}_{i1}$ and $\mathdutchbcal{Q}_{i1}$ are of grade $k_i-1$, $\mathdutchbcal{S}_{i3}$ and $\mathdutchbcal{Q}_{i3}$ are of grade $k_i-2$ and $\mathdutchbcal{S}_{i4}$ and $\mathdutchbcal{Q}_{i4}$ are of grade $k_i$. And in the particular case when we consider $k_i=1$ then $\mathdutchbcal{S}_{i3}$ and $\mathdutchbcal{Q}_{i3}$ are going to be equal to zero. 
\end{proofEnd}

Another approach can be used to estimate the translation, in particular, without noise. Knowing the rotation $R$, we can express the solution to the translation via one of the correspondences. Let $\bm{P}\equiv \bm{P}_k$ and $\bm{Q}\equiv\bm{Q}_k$ for some integer $k$, also note that ${\mathdutchbcal{P}_i = \mathdutchcal{C}_i(\bm{P}) = \mathdutchcal{C}_i(\bm{P}_k)}$, ${\mathdutchbcal{Q}_i = \mathdutchcal{C}_i(\bm{Q})=\mathdutchcal{C}_i(\bm{Q}_k)}$, then we can determine the translation exactly by the following theorem.
\begin{theoremEnd}[restate,category=theo:exact:trans,text link section]{theorem}
The translation vector $\bm{t}$ can be determined exactly from~\eqref{eq:sys:rot:trans:CGA} as 
\begin{equation}
\begin{split}
\bm{t} &= (\mathdutchbcal{Q}_3+\mathdutchbcal{Q}_4 - {\munderbar{R}}(\mathdutchbcal{P}_3+\mathdutchbcal{P}_4))\munderbar{R}(\mathdutchbcal{P}_1^{-1}) \\&= (\mathdutchbcal{Q}_3+\mathdutchbcal{Q}_4 - {\munderbar{R}}(\mathdutchbcal{P}_3+\mathdutchbcal{P}_4))\mathdutchbcal{Q}_1^{-1}\label{eq:coeffs:exact:translation}
\end{split}
\end{equation}
\end{theoremEnd}
\begin{proofEnd}
Take the bottom two equations of~\eqref{eq:sys:rot:trans:CGA} and add them together, then
\begin{equation*}
\mathdutchbcal{Q}_3 + \mathdutchbcal{Q}_4 = \bm{t}\cdot \munderbar{R}(\mathdutchbcal{P}_1) + \munderbar{R}(\mathdutchbcal{P}_3) + \bm{t}\wedge \munderbar{R}(\mathdutchbcal{P}_1) + \munderbar{R}(\mathdutchbcal{P}_4)
\end{equation*}
Using~\eqref{eq:geo:inner:outer:prods} we have ${\bm{t}\cdot \munderbar{R}(\mathdutchbcal{P}_1) + \bm{t}\wedge \munderbar{R}(\mathdutchbcal{P}_1) = \bm{t}\munderbar{R}(\mathdutchbcal{P}_1)}$ and considering that $\mathdutchbcal{P}_1$ is a simple multivector, then solving for $\bm{t}$ we find that 
\begin{equation*}
\bm{t} = \left( \mathdutchbcal{Q}_3 + \mathdutchbcal{Q}_4 - \munderbar{R}(\mathdutchbcal{P}_3 + \mathdutchbcal{P}_4)\right)\munderbar{R}(\mathdutchbcal{P}_1^{-1})
\end{equation*}
then using the first equation of~\eqref{eq:sys:rot:trans:CGA} we can put this in the form
\begin{equation*}
\bm{t} = \left( \mathdutchbcal{Q}_3 + \mathdutchbcal{Q}_4 - \munderbar{R}(\mathdutchbcal{P}_3 + \mathdutchbcal{P}_4)\right)\mathdutchbcal{Q}_1^{-1}
\end{equation*}
\end{proofEnd}
{ In summary, our proposal takes the first coefficient of each multivectors {\it i.e.},  ${\bm{A}_i= \mathdutchcal{C}_1(\bm{P}_i)}$ and ${\bm{B}_i = \mathdutchcal{C}_1(\bm{Q}_i)}$, to estimate the rotor $\widehat{\bm{R}}$, by solving the eigenvalue problem ${\langle L(\bm{R})\rangle_{0,2} = \lambda \bm{R}}$. Then we take ${\bm{S}_i = {\widehat{\bm{R}}}\bm{P}_i{\widehat{\bm{R}}}^\dagger}$ to determine the translation $\hat{\bm{ t}}$ as in (\ref{eq:t:opt:opt:trans:sum:squares}). The general algorithm is described in Alg.~ \ref{alg:rbm:coeffs:approach}.}

\begin{algorithm}[H]
\caption{Rigid Transformation Estimation}
\small
\label{alg:rbm:coeffs:approach}
$\bf 1)$ {\bf Input Data:} $\bm{P}_1,\bm{P}_2,\dots,\bm{P}_m, \bm{Q}_1,\bm{Q}_2,\dots,\bm{Q}_m$\;\\
$\bf 2)$ Extract the coefficients $\mathdutchbcal{P}_{ij} = \mathdutchcal{C}_j(\bm{P}_i)$ and $\mathdutchbcal{Q}_{ij} = \mathdutchcal{C}_j(\bm{Q}_i)$  (eq. ~\eqref{eq:coefficients:CiTRP:CiRTQ})\; \\
$\bf 3)$ Define the multilinear function $L(\bm{R}) = \sum_{i=1}^m \mathdutchbcal{Q}_{i1}^\dagger \bm{R}\mathdutchbcal{P}_{i1} + \mathdutchbcal{Q}_{i1} \bm{R}\mathdutchbcal{P}_{i1}^\dagger$  (eq. ~\eqref{eq:multilinear:function:opt:rot})\; \\
$\bf 4)$ Compute the eigenrotators $\bm{R}_i$ of $\langle L(\bm{R})\rangle_{0,2}$,  with $i=1,2,\dots, 4$\; \\
$\bf 5)$ Chose the eigenrotator $\widehat{\bm{R}} = \bm{R}_{i^*}$ with the largest eigenvalue\; \\
$\bf 6)$ Apply the rotation $\widehat{\bm{R}}$ to $\mathdutchbcal{P}_{ij}$ by taking $\mathdutchbcal{S}_{ij} = \widehat{\bm{R}}\mathdutchbcal{P}_{ij}\widehat{\bm{R}}^\dagger$\; \\
$\bf 7)$ Either estimate the translation as \\$\hat{\bm{t}} = s_N^{-1} \sum_{i=1}^N \langle \left(\mathdutchbcal{S}_{i1}+\mathdutchbcal{Q}_{i1}\right)\left(\mathdutchbcal{Q}_{i3} + \mathdutchbcal{Q}_{i4} - \mathdutchbcal{S}_{i3} - \mathdutchbcal{S}_{i4}\right)^\dagger\rangle_1$\ (eq. (\ref{eq:t:opt:opt:trans:sum:squares})),\ with $s_N=\sum_{i=1}^N \|\mathdutchbcal{S}_{i1}\|^2 + \|\mathdutchbcal{Q}_{i1}\|^2$\; \\
or as \\
$\bf 8)$  $\hat{\bm{t}} = (\mathdutchbcal{Q}_{k3} + \mathdutchbcal{Q}_{k4} - \mathdutchbcal{S}_{k3} - \mathdutchbcal{S}_{k4})\mathdutchbcal{Q}_{k1}^{-1}$, for some $k=1,2,\dots,m$ (eq. \eqref{eq:coeffs:exact:translation})\;
\end{algorithm}

\section{Experimental evaluation}
\vspace{-2mm}
\label{sec:EXP}

We apply the multivector extraction algorithm along with the multivector coefficients algorithm to the task of multivector cloud registration. In this section we first describe the datasets and the metrics used for evaluation (Sec.~\ref{sec:DATA-EVAL}). We then discuss the results with emphasis to the presence  of noise and comparing our approach with other related methods (Sec.~\ref{sec:RESULTS} and Sec.~\ref{sec:RESULTS-jacinto}).

\subsection{Datasets and evaluation}
\label{sec:DATA-EVAL}

We used the Stanford 3D Scanning Repository~\cite{Stanford} that contains a variety of data models and includes theirs full and partial overlaps with the corresponding ground truth transformations. Our experimental data mainly contains three models ``Bunny'', ``Armadillo'' and ``Dragon''.
The evaluation is  based on the  following two metrics: $(i)$ Relative Translation Error, ${\rm RTE} = \left \| \hat{\bm{t}} - {\bm{t}} \right \|$, where  $\hat{\bm{t}}$ is the estimated translation and ${\bm{t}}$ is the ground truth, and $(ii)$ Relative Rotation Error  ${\rm RRE} = 2\cos^{-1}\left(\widehat{\bm{R}}\!*\!\bm{R}^\dagger\right)$, where $\widehat{\bm{R}}$ and $\bm{R}$ are the estimated and ground truth rotations, respectively.

\subsection{Experimental Setup}
\label{sec:RESULTS}

% Extending to CGA from VGA
In our experimental setup we considered target and source point clouds described by the two set of vectors,  $\bm{x}_1',\bm{x}_2',\cdots,\bm{x}_\ell'\in\mathcal{A}_3$ and $\bm{y}_1',\bm{y}_2',\cdots,\bm{y}_\ell'\in\mathcal{A}_3$, respectively. We assume the points are related as $\bm{y}_i' = R(\bm{x}_{j_i}') + \bm{t} + \bm{n}_i$ where $\bm{n}_i$ is uncorrelated Gaussian noise with variance $\sigma^2$.
Since we are aiming to solve the registration problem in CGA, we have to extend the points as $\bm{x}_i \equiv c(\bm{x}_i')$ and $\bm{y}_i \equiv c(\bm{y}_i')$ where $c$ is the conformal mapping defined by~\eqref{eq:the:conformal:mapping} (also, for a detailed description of this extension see { Appendix}~\ref{sec:ext:euc:CGA}). We then proceed to use  Alg.~\ref{alg:gen:app}, setting $\bm{X}_k\equiv \bm{x}_k$ and $\bm{Y}_k \equiv \bm{y}_k$, to estimate the rotation and translation in a correspondence free manner. In particular we use Alg.~\ref{alg:eigmv:extraction} to extract the eigenmultivectors associated with the points $\bm{x}_i$ and $\bm{y}_i$ and then the coefficients algorithm described in Alg.~\ref{alg:rbm:coeffs:approach}, where the input are the eigenmultivectors associated with $\bm{x}_i$ and $\bm{y}_i$. We will term this algorithm as \textbf{\textit{CGA-EVD}}, (CGA - EigenValueDecomposition).

The first methods to be compared, are based on PCA algorithms~\cite{celik2008fast,gonzalez2009digital,rehman2018automatic}. In PCA based approaches the translation is estimated using the center of mass, and the rotation is estimated from the decomposition of ${f(\bm{z}) = \sum_{k=1}^\ell \bm{z}\cdot(\bm{x}_k' - \bar{\bm{x}}')(\bm{x}_k' - \bar{\bm{x}}')}$ and ${g(\bm{z}) = \sum_{k=1}^\ell \bm{z}\cdot(\bm{y}_k' - \bar{\bm{y}}')(\bm{y}_k' - \bar{\bm{y}}')}$ where ${\bar{\bm{x}}' = \tfrac{1}{\ell}\sum_{k=1}^\ell \bm{x}_k'}$ and ${\bar{\bm{y}}' = \tfrac{1}{\ell}\sum_{k=1}^\ell \bm{y}_k'}$ are the centers of mass. 
% (For a detailed description of this approach using Geometric Algebra tools we refer to {Appendix}~\ref{eq:rot:eig:rot:only:PCA}, where we set $\bm{X}_k\equiv\bm{x}_k'$ and $\bm{Y}_k\equiv \bm{y}_k'$). 
Since we are using Vanilla Geometric Algebra to implement the PCA algorithm. We wil term this algorithm as \textit{\textbf{VGA-EVD}}. 

To benchmark the performance of our algorithm we provide results with increasing levels of noise. Specifically, Figs.~\ref{fig:armadillo:sigma:0},~\ref{fig:armadillo:sigma:0:001},~\ref{fig:armadillo:sigma:0:005},~\ref{fig:armadillo:sigma:0:01} and~\ref{fig:armadillo:sigma:0:02} show the Stanford Armadillo dataset corrupted with Gaussian noise with 
$\sigma\in\{0.001,0.002,0.005,0.01,0.02\}$, respectively. These figures helped us to find appropriate levels of noise to be applied to the datasets. We choose  $\sigma \leqslant 0.01$, because for larger values the clouds become very distorted, losing their shape (Fig.~\ref{fig:armadillo:sigma:0:01}).

We extend our experimental evaluation including the following five algorithms: $(i)$ ICP~\cite{besl1992method}, $(ii)$
DCP~\cite{wang2019deep}, $(iii)$ Go-ICP~\cite{yang2015go}, $(iv)$ TEASER++\cite{yang2020teaser}, and $(v)$ PASTA~\cite{marchi2023sharp}. 
The first three algorithms are very well known in the literature, while  TEASER++\cite{yang2020teaser} 
 uses graduated non-convexity (GNC) ~\cite{yang2020graduated} to estimate the rotation without solving a large semidefinite programming (SDP). It 
is proposed to deal with large amounts of outlier correspondences. PASTA~\cite{marchi2023sharp}, is originally proposed for estimating the rigid transformation that relate the robot’s current pose to the fixed reference frame from the LiDAR measurements.
See Appendix \ref{sec:code:details} for the implementation  details used in the experiments.

\subsection{Results}

\label{sec:RESULTS-jacinto}

Table~\ref{tab:All-jan} shows a comparison evaluation for the seven methods mentioned in Sec.~\ref{sec:RESULTS} and for all the datasets. The results are obtained for the highest noise level, {\it i.e.} $\sigma=0.01$, considering a rotation of $\theta=5^\circ$ and translation of $\|\bm{t}\|=0.01$. A detailed assessment, comprising multiple levels of noise are shown in the Tab.~\ref{tab:Bunny-jan},~\ref{tab:Armadillo-jan} and~\ref{tab:Dragon-jan} ($\theta=5^\circ$, $\|\bm{t}\|=0.01$). Tab.~\ref{tab:Bunny-jan1},~\ref{tab:Armadillo-jan1} and~\ref{tab:Dragon-jan1}, report the results in a more difficult scenario, {\it i.e.} $\theta\in[0^\circ,360^\circ]$, $\|\bm{t}\|=1$. From the results it is seen that the proposed \textit{\textbf{CGA-EVD}} is on par with \textit{\textbf{VGA-EVD}}. Notice that, the marginal superiority of the \textit{\textbf{VGA-EVD}} is due to the fact that the \textit{\textbf{CGA-EVD}} mapping 
distorts the noise, causing it to have a generalized chi-squared distribution (see {\it Theorem}~\ref{theo:noise:cga:embedding}). Note that both the  \textit{\textbf{CGA-EVD}} and \textit{\textbf{VGA-EVD}} algorithms are insensitive to the initial pose of the point clouds as their performances have negligible variation  within setups.
% {\color{blue} completar ... o nosso é insensivel aos 2 setups, os outros não ...}
 
Figures~\ref{fig:benchmark:small:trans} and~\ref{fig:benchmark:big:trans} compare all methods showing the registration performance in terms of the ${\rm RTE}$ and ${\rm RRE}$ metrics, for increasing values of noise. Fig.~\ref{fig:benchmark:small:trans} considers the setup $\theta=5^\circ$, $\|\bm{t}\|=0.01$, while Fig.~\ref{fig:benchmark:big:trans} considers the setup $\theta\in[0^\circ,360^\circ]$, $\|\bm{t}\|=1$. It is clear that our method is robust to all values in the  rotation range while local methods, {\it i.e.} ICP and Go-ICP, only perform well for small rotation angles and translation magnitudes. Notice, however, that even for a small transformation setup ($\theta=5^\circ$, $\|\bm{t}\|=0.01$), the performance of these local methods degrade with the  increase of the noise, since the erroneous correspondences will increase as well.
Concerning the  \textit{\textbf{VGA-EVD}}, it is shown that it achieves comparable results. 
% {\color{red} Our approach exhibits only marginal superiority regarding the PASTA~\cite{marchi2023sharp}, for all studied cases.}
The TEASER++\cite{yang2020teaser} approach decreases its performance for higher levels of noise (see Fig.~\ref{fig:benchmark:small:trans} for the Armadillo and Dragon datasets), as this approach gives more emphasis on the presence of outliers and not for high levels of noise. Concerning DCP, it is shown that this method  provides less resilience to noise (see Figs.~\ref{fig:benchmark:small:trans},~\ref{fig:benchmark:big:trans}), this might be due to the fact that the 3D-Match dataset~\cite{zeng20173dmatch} was used in the pre-training stage.  

The inference time (in seconds) is reported in Tab.~\ref{tab:times}. Although our proposal is competitive, the PASTA~\cite{marchi2023sharp} exhibits the overall best results.

%\definecolor{intnull}{RGB}{213,229,255}
%\cellcolor{intnull}

\begin{table*}
\small
\caption{Performance analysis for all methodologies using three datasets and for the highest level of noise $\sigma=0.01$. Benchmark results for rotation angle of $\theta=5^\circ$ and translation magnitude $\|\bm{t}\| = 0.01$. The  values refer to the \textit{mean} over $10$ iterations for the same level of noise.}
\label{tab:All-jan}
\centering
\renewcommand{\arraystretch}{1.2}
\scalebox{0.83}{
% \begin{adjustbox}{width=\columnwidth,center}
\begin{tabular}{|l|l|l|l|l|l|l|}
\hline

{Algorithm} & \multicolumn{2}{|c|}{Bunny} & \multicolumn{2}{|c|}{Armadillo} & \multicolumn{2}{|c|}{Dragon} \\ \hline
 & RRE (\textdegree) & RTE (m) & RRE (\textdegree) & RTE (m) & RRE (\textdegree) & RTE (m)\\ \hline
\textbf{\textit{VGA-EVD}}  & \num{7.620e-01} & \num{9.245e-04} & \num{2.889e-01} & \num{3.923e-04}    & {\num{1.485e-01}} & {\bf\num{4.319e-04}}  \\ \hline
\textbf{\textit{CGA-EVD}}  & {\bf\num{7.381e-01}}  & {\bf\num{ 9.018e-04}} & {\bf \num{2.747e-01}} & {\bf \num{3.537e-04}} & {\bf\num{1.462e-01}} & \num{4.356e-04}  \\ \hline
\textbf{\textit{ICP}}~\cite{besl1992method}     &\num{2.994e+00} & \num{4.829e-03} & \num{1.697e+00} & \num{4.779e-03}     &  \num{2.919e+00} & \num{7.262e-03}   \\ \hline
\textbf{\textit{PASTA-3D}}~\cite{marchi2023sharp} & \num{8.021e+00} & \num{1.031e-02} & \num{1.228e+01} & \num{2.580e-02}    &  \num{3.085e+00} & \num{7.459e-03}   \\ \hline
\textbf{\textit{DCP}}~\cite{wang2019deep}      & \num{2.374e+01} & \num{2.882e-02} & \num{6.820e+01} & \num{8.742e-02}    &  \num{5.306e+01} & \num{1.009e-01}   \\ \hline
\textbf{\textit{Go-ICP}}~\cite{yang2015go}   & \num{3.553e+00} & \num{6.434e-03} & \num{2.897e+00} & \num{7.133e-03}    &  \num{3.911e+00} & \num{9.215e-03}   \\ \hline
\textbf{\textit{TEASER++}}~\cite{yang2020teaser} & \num{2.163e+01} & \num{3.169e-02} & \num{8.149e+00} & \num{1.385e-02}    &  \num{1.119e+02} & \num{1.605e-01}   \\ \hline
\end{tabular}
}
% \end{adjustbox}
%\hspace{1pt}
\end{table*}

Figures~\ref{fig:armadillo:with:primitives} and~\ref{fig:bunny:with:primitives} show another important information provided by the CGA: the primitives. The primitives shown express the geometrical representation of the eigenmultivectors $\bm{P}_i$, $i=1,...,m$, obtained by the extraction algorithm in Alg.~\ref{alg:eigmv:extraction}. In particular, the circles shown in Figs.~\ref{fig:armadillo:circles} and~\ref{fig:bunny:circles} express the eigenbivectors which are well represented by dual circles and dual imaginary circles. The spheres in Figs.~\ref{fig:armadillo:spheres} and~\ref{fig:bunny:spheres} are the eigenvectors, which can be represented by dual spheres and dual imaginary spheres (for a more detailed description we refer to~\cite[Chapter 14]{dorst_geometric_2007}). Figs.~\ref{fig:armadillo:arrows} and~\ref{fig:bunny:arrows} display arrows which represent the translation invariant components of the circles that are used to estimate the rotation between the point clouds.

% Unregistered point clouds
Figure~\ref{fig:noisy:stanford:with:primitives:unregistered} shows all datasets with non-registered noisy input point clouds, and their respective circle primitives, {\it i.e.} the bivectors $\langle \bm{P}_i\rangle_2$ and $\langle \bm{Q}_i\rangle_2$,  expressed as circles, computed from the eigendecomposition of $F$ and $G$, respectively (see eq.~(\ref{eq:covariance:functions:general})). It also illustrates the equivariant properties of the eigenmultivectors. Specifically, when the points suffer a rotation and a translation,  the corresponding primitives will also suffer the same transformation.
Fig.~\ref{fig:noisy:stanford:with:primitives:registered}  illustrates the resulted alignment after applying our registration algorithm along with the primitives of the respective point clouds. It is seen that, although difficult to visualize the registration due to the high levels of noise, the primitives indicate a good insight of the resulted alignment. Whereby we can better visualize that the registration was successful by looking at the primitives of both point clouds. 

Figures~\ref{fig:bunny:reg},~\ref{fig:armadillo:reg} and 
~\ref{fig:dragon:reg}, show, for a level of noise $\sigma=0.01$, the registration results for Stanford Bunny, Armadillo, Dragon, respectively. It is shown the initial displacement of the two noisy point clouds (in {\bf (a)}), with the registration results (in {\bf (c)}). For better visualization purposes, it is shown the initialization  (in {\bf (b)}) and the registration results (in {\bf (d)}) if the noise were removed. These figures also report the high quantitative performance of the registration as our framework provides low ${\rm RRE}$ and ${\rm RTE}$ scores.

% Benchmarks and plots

\begin{figure*}[htb]
\centering
\includegraphics[width=0.99\linewidth]{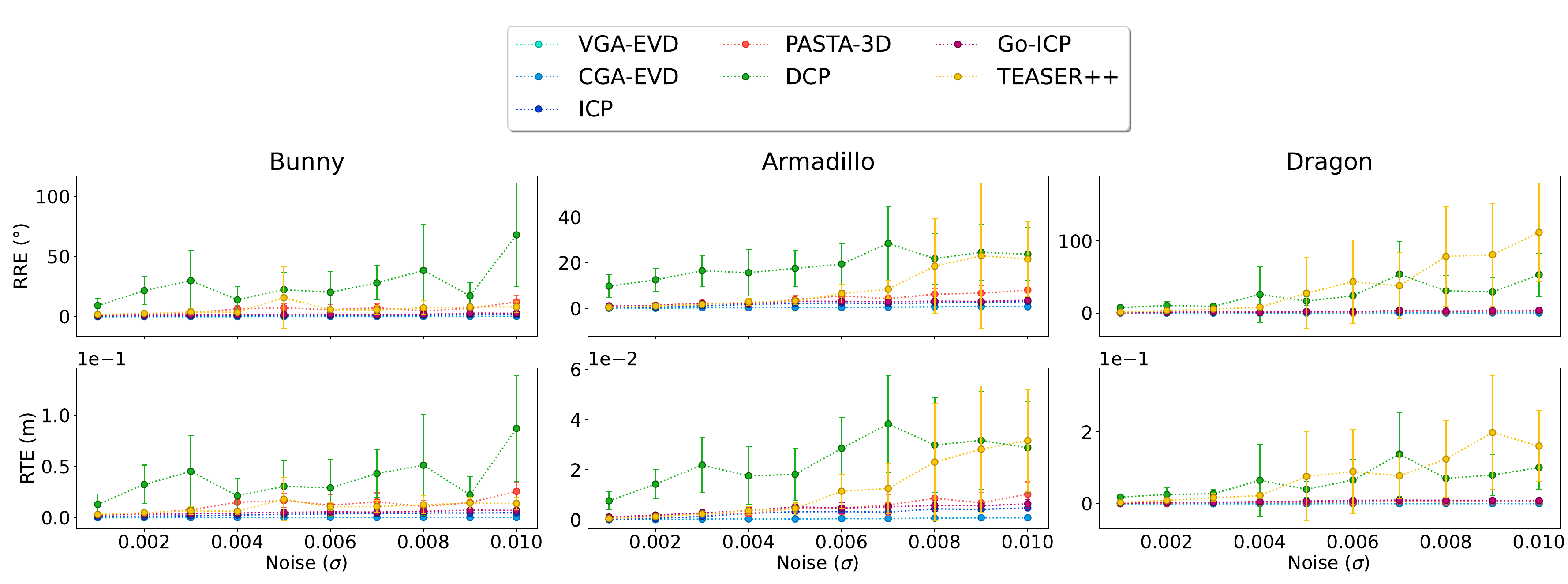}
\caption{\small Benchmark results for rotation angle of $\theta=5^\circ$ and translation magnitude $\|\bm{t}\| = 0.01$ of the \textit{\textbf{VGA-EVD}}, \textit{\textbf {CGA-EVD}}, \textit{\textbf{ICP}}~\protect\cite{besl1992method}, \textit{\textbf{PASTA-3D}}~\protect\cite{marchi2023sharp}, \textit{\textbf{DCP}}~\protect\cite{wang2019deep}, \textit{\textbf{Go-ICP}}~\protect\cite{yang2015go}, \textit{\textbf{TEASER++}}~\protect\cite{yang2020teaser} methods for the Bunny, Armadillo and Dragon datasets. Error in the rotation (top row), and  translation (bottom row).}
\label{fig:benchmark:small:trans}
\end{figure*}

\subsection{Illustration of the \textit{\textbf{CGA-EVD}} results}

In the following figures and tables we illustrate supplementary results (mentioned in the main paper) obtained in the Stanford 3D Scanning Repository~\cite{Stanford}.

\begin{itemize}
\item 
Figure~\ref{fig:armadillo:increased:noise} shows the Armadillo dataset for different values of $\sigma$. It is seen that for $\sigma=0.02$ the shape is highly distorted, making the choice of this parameter set to $\sigma\leq 0.01$.
\item 
Figures~\ref{fig:armadillo:with:primitives} and~\ref{fig:bunny:with:primitives} show the CGA primitives expressing the geometrical representation of the eigenmultivectors obtained by the extraction algorithm in Alg.~\ref{alg:eigmv:extraction}.
In these figures, no distinction is made between dual imaginary spheres and dual real spheres or between dual imaginary circles and dual real circles. Imaginary circles and imaginary spheres are circles and spheres with negative radius, they are drawn as if they would have positive radius.
\item Figure~\ref{fig:noisy:stanford:with:primitives:unregistered} shows all datasets with non-registered noisy input point clouds, and their respective circle primitives, that is, the bivectors, expressed as circles and computed from the eigendecomposition of $F$ and $G$, respectively (see eq.~(\ref{eq:covariance:functions:general})). 
Fig.~\ref{fig:noisy:stanford:with:primitives:registered}  illustrates the resulted alignment after applying our registration algorithm along with the primitives of the respective point clouds. 
\item Figures~\ref{fig:bunny:reg},~\ref{fig:armadillo:reg} and~\ref{fig:dragon:reg}, show, for a level of noise $\sigma=0.01$, the registration results for Stanford Bunny, Armadillo, Dragon, respectively.
\end{itemize}

\begin{figure}[H]
  \centering
     \begin{subfigure}{0.19\linewidth} \centering
       \includegraphics[width=0.95\linewidth,frame]{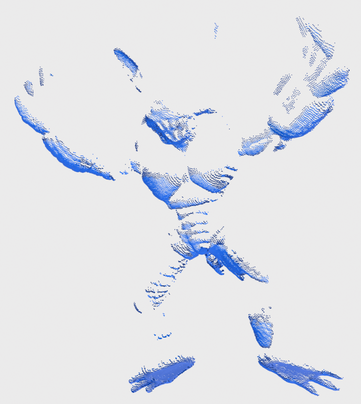}
       \caption{} \label{fig:armadillo:sigma:0}
     \end{subfigure}
     \begin{subfigure}{0.19\linewidth} \centering
     \includegraphics[width=0.95\linewidth,frame]{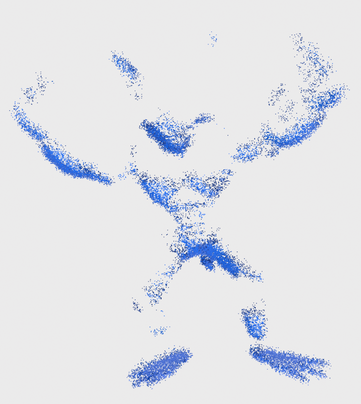}
       \caption{} \label{fig:armadillo:sigma:0:001}
     \end{subfigure}
     \begin{subfigure}{0.19\linewidth} \centering
     \includegraphics[width=0.95\linewidth,frame]{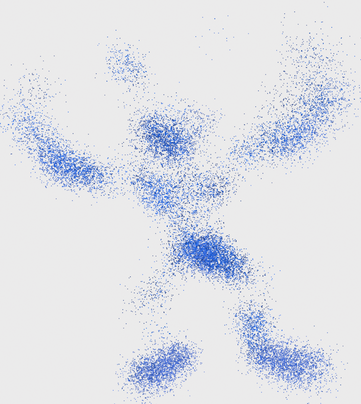}
       \caption{} \label{fig:armadillo:sigma:0:005}
     \end{subfigure}
     \begin{subfigure}{0.19\linewidth} \centering
     \includegraphics[width=0.95\linewidth,frame]{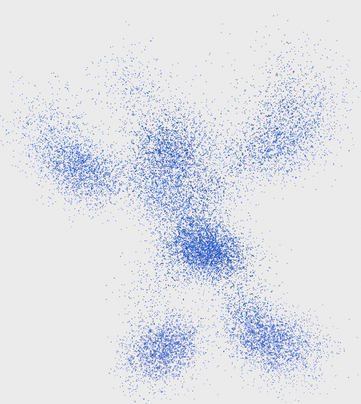}
       \caption{} \label{fig:armadillo:sigma:0:01}
     \end{subfigure}
     \begin{subfigure}{0.19\linewidth} \centering
     \includegraphics[width=0.95\linewidth,frame]{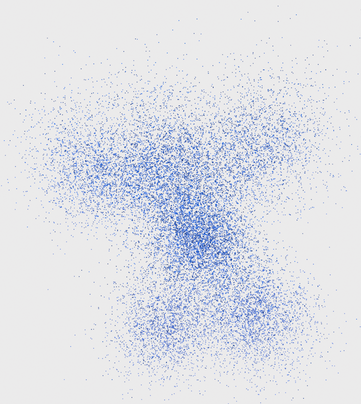}
     \caption{} \label{fig:armadillo:sigma:0:02}
     \end{subfigure}
  \caption{The Stanford Armadillo dataset for increasing  levels of Gaussian Noise, {\it i.e.} $\sigma\in\{0, 0.001, 0.005, 0.01, 0.02\}$} \label{fig:armadillo:increased:noise}
  \end{figure}

\begin{figure}[h]
\centering
   \begin{subfigure}{0.325\linewidth} \centering
     \includegraphics[width=0.95\linewidth,frame]{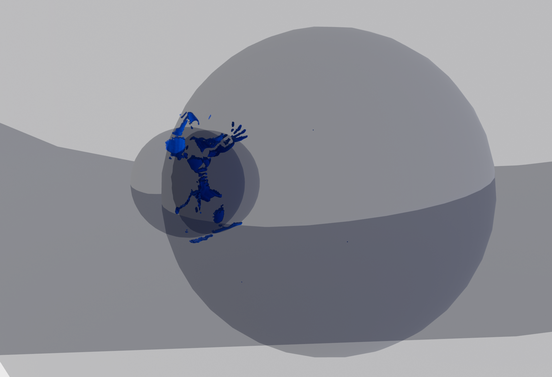}
     \caption{$\langle\bm{P}_i\rangle_1$} \label{fig:armadillo:spheres}
   \end{subfigure}
   \begin{subfigure}{0.325\linewidth} \centering
   \includegraphics[width=0.95\linewidth,frame]{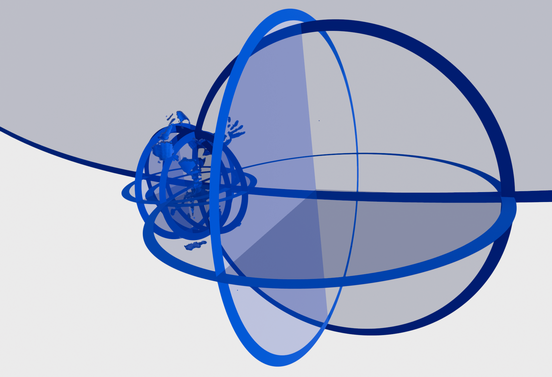}
     \caption{$\langle\bm{P}_i\rangle_2$} \label{fig:armadillo:circles}
   \end{subfigure}
   \begin{subfigure}{0.325\linewidth} \centering
   \includegraphics[width=0.95\linewidth,frame]{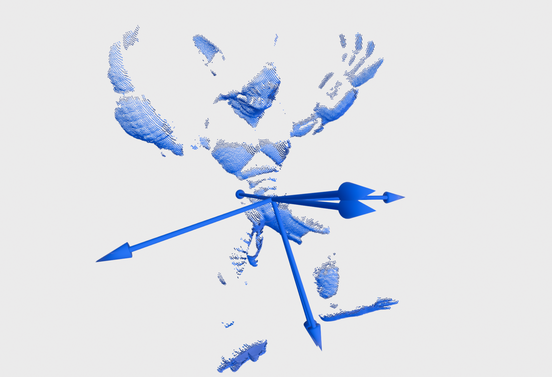}
     \caption{$\langle\mathdutchcal{C}_1(\bm{P}_i)\rangle_1$} \label{fig:armadillo:arrows}
   \end{subfigure}
\caption{The Stanford Armadillo dataset and its respective primitives: (a) eigenvectors, (b) eigenbivectors, (c) translation invariant vectors extracted from the eigenbivectors.} \label{fig:armadillo:with:primitives}
\end{figure}

% Visualization of the primitives for the bunny dataset without noise
\begin{figure}[h]
\centering
   \begin{subfigure}{0.325\linewidth} \centering
     \includegraphics[width=0.95\linewidth,frame]{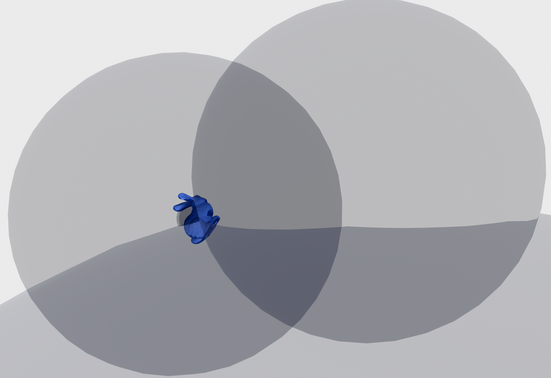}
     \caption{$\langle\bm{P}_i\rangle_1$} \label{fig:bunny:spheres}
   \end{subfigure}
   \begin{subfigure}{0.325\linewidth} \centering
   \includegraphics[width=0.95\linewidth,frame]{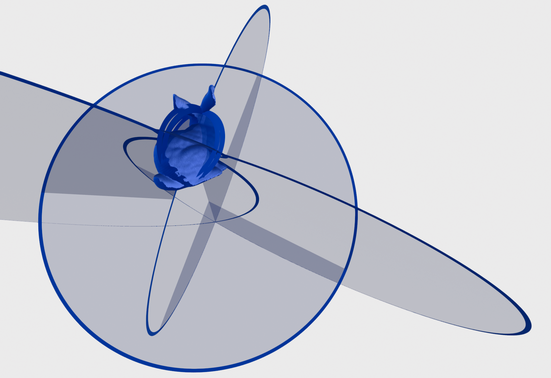}
     \caption{$\langle\bm{P}_i\rangle_2$} \label{fig:bunny:circles}
   \end{subfigure}
   \begin{subfigure}{0.325\linewidth} \centering
   \includegraphics[width=0.95\linewidth,frame]{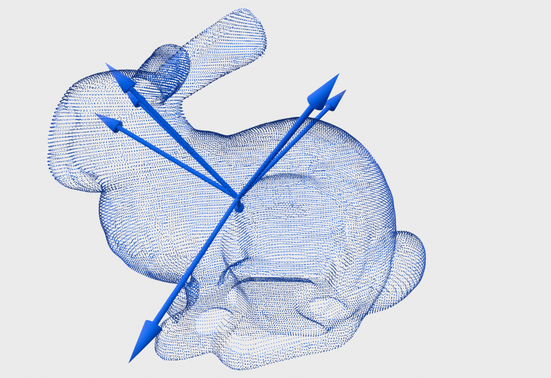}
     \caption{$\langle\mathdutchcal{C}_1(\bm{P}_i)\rangle_1$} \label{fig:bunny:arrows}
   \end{subfigure}
\caption[The Stanford Bunny dataset and its respective primitives]{The Stanford Bunny dataset and its respective primitives: (a) eigenvectors, (b) eigenbivectors, (c) translation invariant vectors extracted from the eigenbivectors.} \label{fig:bunny:with:primitives}
\end{figure}

\begin{figure}[H]
\centering
    \begin{subfigure}{0.325\linewidth} \centering
   \includegraphics[width=0.95\linewidth,frame]{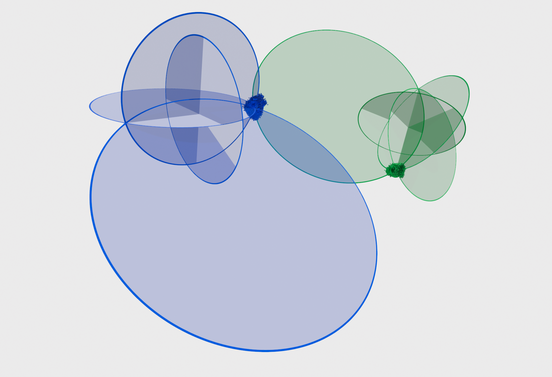}
     \caption{Bunny} \label{fig:bunny:unregistered}
   \end{subfigure}
   \begin{subfigure}{0.325\linewidth} \centering
   \includegraphics[width=0.95\linewidth,frame]{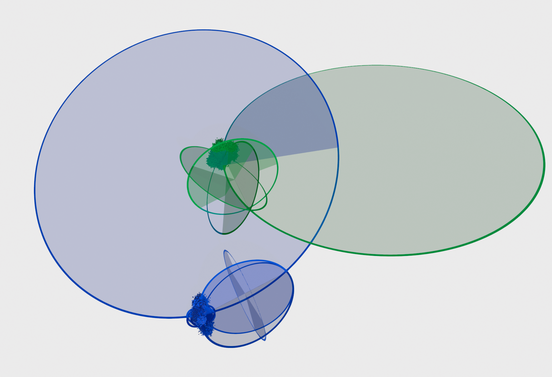}
     \caption{Dragon} \label{fig:dragon:unregistered}
   \end{subfigure}
   \begin{subfigure}{0.325\linewidth} \centering
   \includegraphics[width=0.95\linewidth,frame]{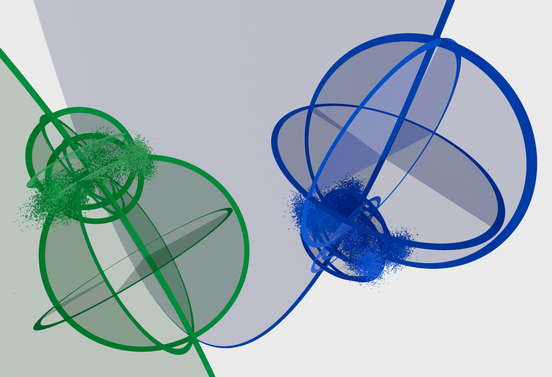}
     \caption{Armadillo} \label{fig:armadillo:unregistered}
   \end{subfigure}
\caption{Multiple non-registered point clouds of the Stanford dataset with added Gaussian noise, $\sigma=0.01$, and their respective circle primitives.} \label{fig:noisy:stanford:with:primitives:unregistered}
\end{figure}

\begin{figure}[H]
\centering
    \begin{subfigure}{0.325\linewidth} \centering
   \includegraphics[width=0.95\linewidth,frame]{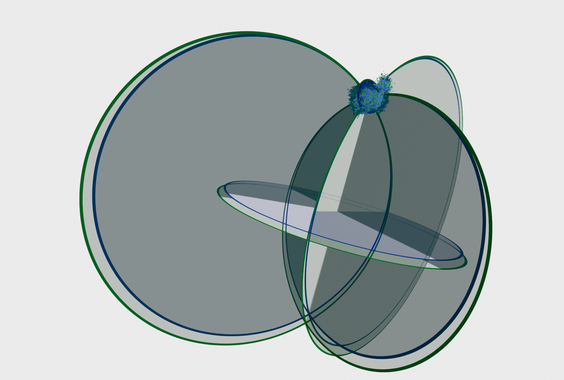}
     \caption{Bunny} \label{fig:bunny:registered}
   \end{subfigure}
   \begin{subfigure}{0.325\linewidth} \centering
   \includegraphics[width=0.95\linewidth,frame]{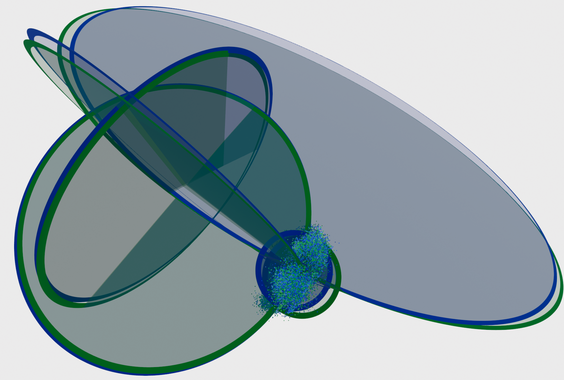}
     \caption{Dragon} \label{fig:dragon:registered}
   \end{subfigure}
   \begin{subfigure}{0.325\linewidth} \centering
   \includegraphics[width=0.95\linewidth,frame]{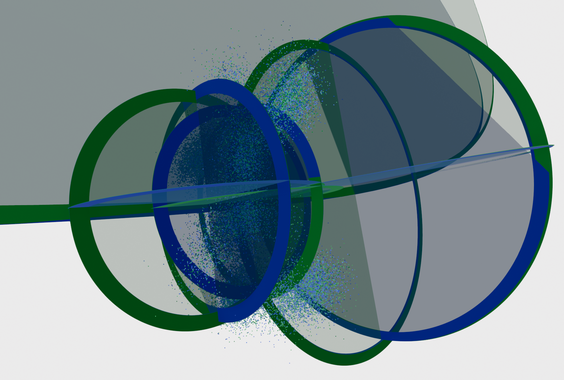}
     \caption{Armadillo} \label{fig:armadillo:registered}
   \end{subfigure}
\caption{Multiple registered point clouds of the Stanford dataset with added Gaussian noise, ${\sigma=0.01}$, and their respective circle primitives.} \label{fig:noisy:stanford:with:primitives:registered}
\end{figure}

\begin{figure}[H]
\centering
   \begin{subfigure}{0.24\linewidth} \centering
     \includegraphics[width=0.95\linewidth,frame]{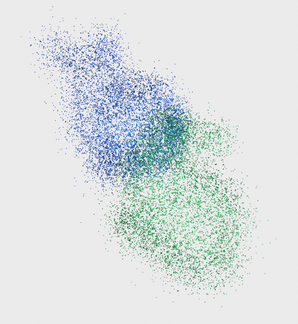}
     \caption{} \label{fig:bunny:noisy:unreg}
   \end{subfigure}
   \begin{subfigure}{0.24\linewidth} \centering
   \includegraphics[width=0.95\linewidth,frame]{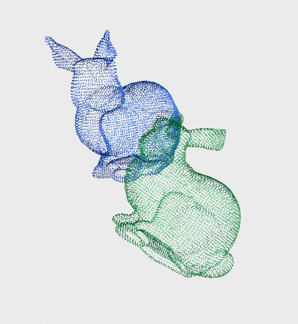}
   \caption{} \label{fig:bunny:not:noisy:unreg}
   \end{subfigure}
   \begin{subfigure}{0.24\linewidth} \centering
   \includegraphics[width=0.95\linewidth,frame]{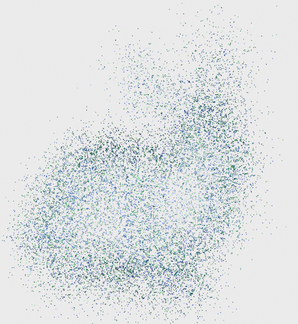}
   \caption{} \label{fig:bunny:noisy:reg}
   \end{subfigure}
   \begin{subfigure}{0.24\linewidth} \centering
   \includegraphics[width=0.95\linewidth,frame]{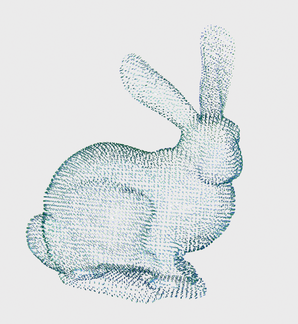}
   \caption{} \label{fig:bunny:not:noisy:reg}
   \end{subfigure}
\caption{Registration of the Stanford Bunny. {\bf (a)} Unregistered noisy point clouds with $\sigma=0.01$. {\bf (b)} Unregistered point clouds with noise removed. {\bf (c)} Registered noisy point clouds with $\sigma=0.01$. {\bf (d)} Registered point clouds with noise removed. The obtained  metric scores were ${\rm RRE}={\num{9.659e-1}}^\circ$, ${\rm RTE}=\num{9.528e-1}m$.} \label{fig:bunny:reg}
\end{figure}

%%%%%%%%%%%%%%%%%%%%%%%
\vspace{-5mm}   %%%% AJUSTAR AQUI !!!!!!!!!!!!!
\begin{figure}[H]
\centering
   \begin{subfigure}{0.24\linewidth} \centering
     \includegraphics[width=0.95\linewidth,frame]{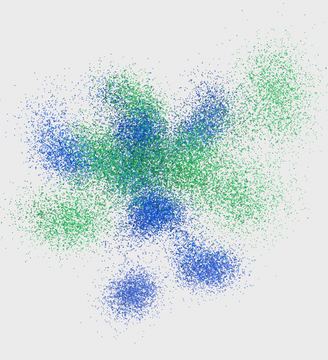}
     \caption{} \label{fig:armadillo:noisy:unreg}
   \end{subfigure}
   \begin{subfigure}{0.24\linewidth} \centering
   \includegraphics[width=0.95\linewidth,frame]{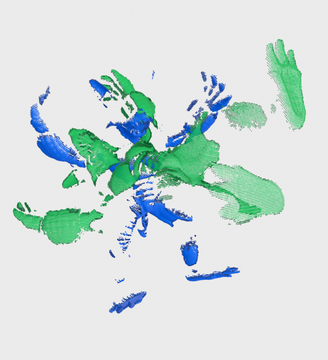}
   \caption{} \label{fig:armadillo:not:noisy:unreg}
   \end{subfigure}
   \begin{subfigure}{0.24\linewidth} \centering
   \includegraphics[width=0.95\linewidth,frame]{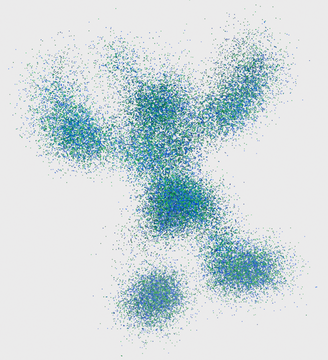}
   \caption{} \label{fig:armadillo:noisy:reg}
   \end{subfigure}
   \begin{subfigure}{0.24\linewidth} \centering
   \includegraphics[width=0.95\linewidth,frame]{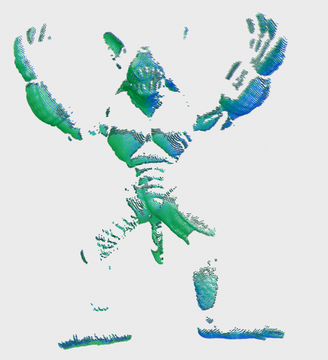}
   \caption{} \label{fig:armadillo:not:noisy:reg}
   \end{subfigure}
\caption{Registration of the Stanford Armadillo. {\bf (a)} Unregistered noisy point clouds with $\sigma=0.01$. {\bf (b)} Unregistered point clouds with noise removed. {\bf (c)} Registered noisy point clouds with $\sigma=0.01$. {\bf (d)} Registered point clouds with noise removed.  The obtained  metric scores were ${\rm RRE}={\num{2.67e-1}}^\circ$, ${\rm RTE}=\num{1.932e-3}m$.} \label{fig:armadillo:reg}
\end{figure}

\begin{figure}[H]
\centering
   \begin{subfigure}{0.24\linewidth} \centering
     \includegraphics[width=0.95\linewidth,frame]{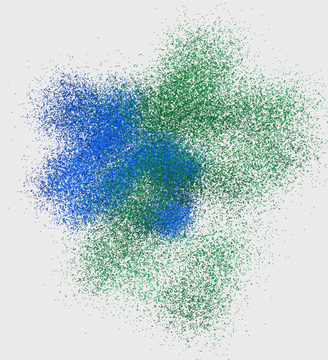}
     \caption{} \label{fig:dragon:noisy:unreg}
   \end{subfigure}
   \begin{subfigure}{0.24\linewidth} \centering
   \includegraphics[width=0.95\linewidth,frame]{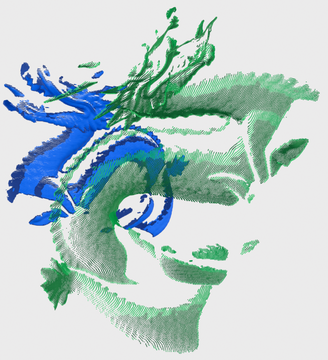}
   \caption{} \label{fig:dragon:not:noisy:unreg}
   \end{subfigure}
   \begin{subfigure}{0.24\linewidth} \centering
   \includegraphics[width=0.95\linewidth,frame]{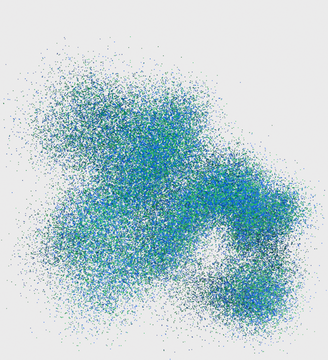}
   \caption{} \label{fig:dragon:noisy:reg}
   \end{subfigure}
   \begin{subfigure}{0.24\linewidth} \centering
   \includegraphics[width=0.95\linewidth,frame]{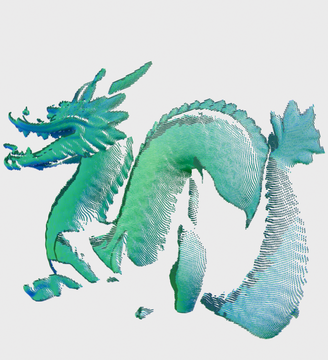}
   \caption{} \label{fig:dragon:not:noisy:reg}
   \end{subfigure}
\caption{Registration of the Stanford Dragon. {\bf (a)} Unregistered noisy point clouds with $\sigma=0.01$. {\bf (b)} Unregistered point clouds with noise removed. {\bf (c)} Registered noisy point clouds with $\sigma=0.01$. {\bf (d)} Registered point clouds with noise removed. The obtained  metric scores were ${\rm RRE}={\num{3.21e-1}}^\circ$, ${\rm RTE}=\num{3.660e-3}m$.} \label{fig:dragon:reg}
\end{figure}

\subsection{Comparison with other related point cloud registration methods}
\label{sec:code:details}

We provide a quantitative comparison of the methodologies and we detail some minor implementation details of the algorithms.  

The comparative results are framed into the following experimental setups:
\begin{itemize}
\item Tables~\ref{tab:Bunny-jan},~\ref{tab:Armadillo-jan},~\ref{tab:Dragon-jan} report the results for the setup with $\theta=5^\circ$, $\|\bm{t}\| = 0.01$
\item Tables~\ref{tab:Bunny-jan1},~\ref{tab:Armadillo-jan1},~\ref{tab:Dragon-jan1} report the results for the setup with $\theta\in[0^\circ,360^\circ]$, $\|\bm{t}\|=1$
\item From the Tab.~\ref{tab:Bunny-jan} to Tab.~\ref{tab:Dragon-jan1}, it is shown that the \textit{\textbf{VGA-EVD}} has a marginal superiority comparing to \textit{\textbf{CGA-EVD}}. This marginal superiority is due to the fact that the \textit{\textbf{CGA-EVD}} mapping distorts the noise, causing it to have a generalized chi squared distribution (see Theorem~\ref{theo:noise:cga:embedding}). It also applies a non linear operation to the input points $\bm{x}_i'$ and $\bm{y}_i'$, which, because of accumulated error due to floating point precision, degrades performance.
\end{itemize}

Concerning the code details of the methodologies used in Sec.~\ref{sec:EXP} and also for the next experiments, they are as follows: 

\begin{itemize}
\item For the ICP~\cite{besl1992method}, we use the Open3D library. Concerning the hyper-parameter, we use a convergence criteria relative fitness of \num{1e-06} and relative RMSE  of \num{1e-06}. The number of iterations is $30$. These hyperparameters are used for all the datasets.

\item For Go-ICP~\cite{yang2015go}, we use the python library available form the authors. The Go-ICP method is
based on a branch-and-bound (BnB) strategy, whose threshold is set to \num{1e-3}$N$, with $N$, the number of data points. 

\item In  PASTA~\cite{marchi2023sharp} although  is originally proposed for 2D scenario, we use a version provided from the authors that is appropriate for the 3D setting. As in our approach there is no need of having hyperparameters. Essentially, the convex hull is estimated  for each point cloud in the first stage. This is  followed by the computation of the first and second order moments and the computation of the eigenvectors for the second moment.

\item For TEASER++\cite{yang2020teaser}, we have used the FPFH algorithm (provided by {\rm Open3D}) to extract features and find correspondences, by which we then feed the point pair correspondences to the TEASER++ algorithm. We have also changed some of the hyperparameters, namely for the Stanford Bunny and Armadillo dataset. Concretely, we set the noise bound to $0.01$ for Bunny and Armadillo datasets and for the Dragon dataset we set it to $0.02$, as these values provided the best accuracy.
For the Graduated Non-Convex-Geman McClure (GNC-MG) we follow~\cite{yang2020graduated}, and used a fix annealing factor of 1.4, for all the datasets.

\item For DCP~\cite{wang2019deep}, we used the DCP-v2 version with the standard architecture. In concrete, we use five Dynamic Graph CNN layers. The number of filters for each layer are respectively given by 
[64, 64, 128, 256, 512]. 
For the  Transformer layer, the number
of heads in multi-head attention is four with the embedding
dimension of 1024. We use LayerNorm (without Dropout) with Adam optimizer and with an initial learning rate of 0.001. A  total of
250 epochs were used for training. The 3D Match dataset was used for the pretraining stage. Also, because of GPU memory issues, we applied an every $k$ points sampling before feeding the points to the algorithm. For the Bunny dataset we find that there was no need to sampling, while for the Dragon and Armadillo we set $k=10$ and $k=5$, respectively. 
\end{itemize}

\begin{table*}[htb]
\caption{Benchmark results for rotation angle of $\theta=5^\circ$ and translation magnitude $\|\bm{t}\| = 0.01$ for the Stanford Bunny dataset for various levels of noise. The values report the \textit{mean} over $10$ iterations for the same level of noise.}
\label{tab:Bunny-jan}
\centering
\renewcommand{\arraystretch}{1.2}
\setlength{\tabcolsep}{4pt}
\scalebox{0.65}{
\begin{tabular}{|l|l|l|l|l|l|l|l|l|}
\hline
Algorithm & \multicolumn{2}{|c|}{$\sigma=0.001$} & \multicolumn{2}{|c|}{$\sigma=0.002$} & \multicolumn{2}{|c|}{$\sigma=0.005$} & \multicolumn{2}{|c|}{$\sigma=0.01$}\\ \hline
 & RRE (\textdegree) & RTE (m) & RRE (\textdegree) & RTE (m) & RRE (\textdegree) & RTE (m) & RRE (\textdegree) & RTE (m)\\ \hline
\textbf{\textit{VGA-EVD}} & \num{9.594e-02} & {\bf\num{1.076e-04}} & \num{1.368e-01} & {\bf\num{1.574e-04}} & \num{4.196e-01} & \num{4.299e-04} & \num{7.620e-01} & \num{9.245e-04}\\ \hline
\textbf{\textit{CGA-EVD}} & {\bf\num{9.391e-02}} & \num{1.113e-04} & {\bf\num{1.288e-01}} & \num{1.600e-04} & {\bf\num{4.147e-01}} & {\bf\num{4.207e-04}} & {\bf\num{7.381e-01}} & {\bf\num{9.018e-04}}\\ \hline
\textbf{\textit{ICP}}~\cite{besl1992method} & \num{2.727e-01} & \num{2.686e-04} & \num{5.247e-01} & \num{6.764e-04} & \num{2.170e+00} & \num{3.297e-03} & \num{2.994e+00} & \num{4.829e-03}\\ \hline
\textbf{\textit{PASTA-3D}}~\cite{marchi2023sharp}& \num{6.178e-01} & \num{6.754e-04} & \num{1.371e+00} & \num{1.603e-03} & \num{3.830e+00} & \num{4.685e-03} & \num{8.021e+00} & \num{1.031e-02}\\ \hline
\textbf{\textit{DCP}}~\cite{wang2019deep} & \num{9.770e+00} & \num{7.657e-03} & \num{1.252e+01} & \num{1.432e-02} & \num{1.756e+01} & \num{1.821e-02} & \num{2.374e+01} & \num{2.882e-02}\\ \hline
\textbf{\textit{Go-ICP}}~\cite{yang2015go} & \num{1.086e+00} & \num{1.201e-03} & \num{1.227e+00} & \num{1.961e-03} & \num{3.110e+00} & \num{5.239e-03} & \num{3.553e+00} & \num{6.434e-03}\\ \hline
\textbf{\textit{TEASER++}}~\cite{yang2020teaser} & \num{5.021e-01} & \num{5.403e-04} & \num{1.045e+00} & \num{1.345e-03} & \num{3.252e+00} & \num{4.526e-03} & \num{2.163e+01} & \num{3.169e-02}\\ \hline
\end{tabular}}
\hspace{1pt}
\end{table*}

\begin{table*}[h]
\caption{Benchmark results for rotation angle of $\theta=5^\circ$ and translation magnitude $\|\bm{t}\| = 0.01$ for the Stanford Armadillo dataset for various levels of noise. The values report the \textit{mean} over $10$ iterations for the same level of noise.}
\label{tab:Armadillo-jan}
\centering
\renewcommand{\arraystretch}{1.2}
\setlength{\tabcolsep}{4pt}
\scalebox{0.65}{
\begin{tabular}{|l|l|l|l|l|l|l|l|l|}
\hline
Algorithm & \multicolumn{2}{|c|}{$\sigma=0.001$} & \multicolumn{2}{|c|}{$\sigma=0.002$} & \multicolumn{2}{|c|}{$\sigma=0.005$} & \multicolumn{2}{|c|}{$\sigma=0.01$}\\ \hline
 & RRE (\textdegree) & RTE (m) & RRE (\textdegree) & RTE (m) & RRE (\textdegree) & RTE (m) & RRE (\textdegree) & RTE (m)\\ \hline
\textbf{\textit{VGA-EVD}} & {\bf\num{3.153e-02}} & {\bf\num{3.944e-05}} & {\bf\num{5.227e-02}} & {\bf\num{7.543e-05}} & {\bf\num{1.410e-01}} & {\bf\num{1.929e-04}} & \num{2.889e-01} & \num{3.923e-04}\\ \hline
\textbf{\textit{CGA-EVD}} & \num{3.252e-02} & \num{4.537e-05} & \num{5.785e-02} & \num{9.648e-05} & \num{1.440e-01} & \num{2.230e-04} & {\bf\num{2.747e-01}} & {\bf\num{3.537e-04}}\\ \hline
\textbf{\textit{ICP}}~\cite{besl1992method} & \num{3.387e-01} & \num{8.484e-04} & \num{5.654e-01} & \num{1.462e-03} & \num{9.450e-01} & \num{3.323e-03} & \num{1.697e+00} & \num{4.779e-03}\\ \hline
\textbf{\textit{PASTA-3D}}~\cite{marchi2023sharp}& \num{1.194e+00} & \num{2.773e-03} & \num{2.231e+00} & \num{4.557e-03} & \num{7.351e+00} & \num{1.679e-02} & \num{1.228e+01} & \num{2.580e-02}\\ \hline
\textbf{\textit{DCP}}~\cite{wang2019deep} & \num{9.250e+00} & \num{1.307e-02} & \num{2.170e+01} & \num{3.255e-02} & \num{2.259e+01} & \num{3.079e-02} & \num{6.820e+01} & \num{8.742e-02}\\ \hline
\textbf{\textit{Go-ICP}}~\cite{yang2015go} & \num{9.617e-01} & \num{2.744e-03} & \num{1.262e+00} & \num{3.299e-03} & \num{1.727e+00} & \num{5.299e-03} & \num{2.897e+00} & \num{7.133e-03}\\ \hline
\textbf{\textit{TEASER++}}~\cite{yang2020teaser} & \num{1.857e+00} & \num{3.242e-03} & \num{2.654e+00} & \num{4.970e-03} & \num{1.584e+01} & \num{1.821e-02} & \num{8.149e+00} & \num{1.385e-02}\\ \hline
\end{tabular}}
\hspace{1pt}
\end{table*}

\begin{table*}[h]
\caption{Benchmark results for rotation angle of $\theta=5^\circ$ and translation magnitude $\|\bm{t}\| = 0.01$ for the Stanford Dragon dataset for various levels of noise. The values report the \textit{mean} over $10$ iterations for the same level of noise.}
\label{tab:Dragon-jan}
\centering
\setlength{\tabcolsep}{4pt}
\renewcommand{\arraystretch}{1.2}
\scalebox{0.65}{
\begin{tabular}{|l|l|l|l|l|l|l|l|l|}
\hline
Algorithm & \multicolumn{2}{|c|}{$\sigma=0.001$} & \multicolumn{2}{|c|}{$\sigma=0.002$} & \multicolumn{2}{|c|}{$\sigma=0.005$} & \multicolumn{2}{|c|}{$\sigma=0.01$}\\ \hline
 & RRE (\textdegree) & RTE (m) & RRE (\textdegree) & RTE (m) & RRE (\textdegree) & RTE (m) & RRE (\textdegree) & RTE (m)\\ \hline
\textbf{\textit{VGA-EVD}} & {\bf\num{2.830e-02}} & {\bf\num{7.308e-05}} & {\bf\num{4.731e-02}} & {\bf\num{1.267e-04}} & {\bf\num{9.216e-02}} & {\bf\num{2.268e-04}} & \num{1.485e-01} & {\bf\num{4.319e-04}}\\ \hline
\textbf{\textit{CGA-EVD}} & \num{2.939e-02} & \num{7.679e-05} & \num{4.895e-02} & \num{1.308e-04} & \num{9.409e-02} & \num{2.306e-04} & {\bf\num{1.462e-01}} & \num{4.356e-04}\\ \hline
\textbf{\textit{ICP}}~\cite{besl1992method} & \num{3.629e-01} & \num{8.018e-04} & \num{6.194e-01} & \num{2.041e-03} & \num{1.715e+00} & \num{6.741e-03} & \num{2.919e+00} & \num{7.262e-03}\\ \hline
\textbf{\textit{PASTA-3D}}~\cite{marchi2023sharp} & \num{3.708e-01} & \num{1.010e-03} & \num{7.604e-01} & \num{1.749e-03} & \num{1.785e+00} & \num{3.967e-03} & \num{3.085e+00} & \num{7.459e-03}\\ \hline
\textbf{\textit{DCP}}~\cite{wang2019deep} & \num{7.601e+00} & \num{1.857e-02} & \num{1.050e+01} & \num{2.585e-02} & \num{1.654e+01} & \num{4.021e-02} & \num{5.306e+01} & \num{1.009e-01}\\ \hline
\textbf{\textit{Go-ICP}}~\cite{yang2015go} & \num{8.019e-01} & \num{1.842e-03} & \num{9.176e-01} & \num{3.354e-03} & \num{2.038e+00} & \num{7.422e-03} & \num{3.911e+00} & \num{9.215e-03}\\ \hline
\textbf{\textit{TEASER++}}~\cite{yang2020teaser} & \num{1.321e+00} & \num{3.640e-03} & \num{3.429e+00} & \num{9.375e-03} & \num{2.761e+01} & \num{7.604e-02} & \num{1.119e+02} & \num{1.605e-01}\\ \hline
\end{tabular}}
\hspace{1pt}
\end{table*}

\begin{table*}[h]
\centering
\renewcommand{\arraystretch}{1.2}
\setlength{\tabcolsep}{4pt}
\caption{Benchmark results for random rotation angle and translation magnitude $\|\bm{t}\| = 1$ for the Stanford Bunny dataset for various levels of noise. The values report the \textit{mean} over $10$ iterations for the same level of noise.}
\label{tab:Bunny-jan1}
\scalebox{0.65}{
\begin{tabular}{|l|l|l|l|l|l|l|l|l|}
\hline
Algorithm & \multicolumn{2}{|c|}{$\sigma=0.001$} & \multicolumn{2}{|c|}{$\sigma=0.002$} & \multicolumn{2}{|c|}{$\sigma=0.005$} & \multicolumn{2}{|c|}{$\sigma=0.01$}\\ \hline
 & RRE (\textdegree) & RTE (m) & RRE (\textdegree) & RTE (m) & RRE (\textdegree) & RTE (m) & RRE (\textdegree) & RTE (m)\\ \hline
\textbf{\textit{VGA-EVD}} & {\bf\num{9.962e-02}} & {\bf\num{1.078e-04}} & \num{1.943e-01} & \num{1.668e-04} & \num{3.250e-01} & \num{4.039e-04} & \num{1.027e+00} & {\bf\num{1.307e-03}}\\ \hline
\textbf{\textit{CGA-EVD}} & \num{1.023e-01} & \num{1.119e-04} & {\bf\num{1.897e-01}} & {\bf\num{1.642e-04}} & {\bf\num{3.242e-01}} & {\bf\num{4.022e-04}} & {\bf\num{1.009e+00}} & \num{1.360e-03}\\ \hline
\textbf{\textit{ICP}}~\cite{besl1992method} & \num{6.580e+01} & \num{6.044e-02} & \num{1.120e+02} & \num{1.101e-01} & \num{1.232e+02} & \num{1.180e-01} & \num{1.176e+02} & \num{1.372e-01}\\ \hline
\textbf{\textit{PASTA-3D}} ~\cite{marchi2023sharp}& \num{7.993e-01} & \num{8.760e-04} & \num{1.452e+00} & \num{2.141e-03} & \num{3.048e+00} & \num{3.846e-03} & \num{9.020e+00} & \num{7.361e-03}\\ \hline
\textbf{\textit{DCP}}~\cite{wang2019deep} & \num{7.747e+01} & \num{4.311e-02} & \num{6.282e+01} & \num{3.198e-02} & \num{8.137e+01} & \num{5.097e-02} & \num{3.400e+01} & \num{5.041e-02}\\ \hline
\textbf{\textit{Go-ICP}}~\cite{yang2015go} & \num{8.885e+01} & \num{8.820e-02} & \num{6.234e+01} & \num{6.903e-02} & \num{1.265e+02} & \num{1.197e-01} & \num{6.797e+01} & \num{8.158e-02}\\ \hline
\textbf{\textit{TEASER++}}~\cite{yang2020teaser} & \num{4.813e+00} & \num{2.185e-03} & \num{2.065e+00} & \num{2.387e-03} & \num{2.420e+01} & \num{3.112e-02} & \num{3.567e+01} & \num{3.894e-02}\\ \hline
\end{tabular}}
\hspace{1pt}
\end{table*}

\begin{table*}
\centering
\renewcommand{\arraystretch}{1.2}
\setlength{\tabcolsep}{4pt}
\caption{Benchmark results for random rotation angle and translation magnitude $\|\bm{t}\| = 1$ for the Stanford Armadillo dataset for various levels of noise. The values report the \textit{mean} over $10$ iterations for the same level of noise.}
\label{tab:Armadillo-jan1}
\scalebox{0.65}{
\begin{tabular}{|l|l|l|l|l|l|l|l|l|}
\hline
Algorithm & \multicolumn{2}{|c|}{$\sigma=0.001$} & \multicolumn{2}{|c|}{$\sigma=0.002$} & \multicolumn{2}{|c|}{$\sigma=0.005$} & \multicolumn{2}{|c|}{$\sigma=0.01$}\\ \hline
 & RRE (\textdegree) & RTE (m) & RRE (\textdegree) & RTE (m) & RRE (\textdegree) & RTE (m) & RRE (\textdegree) & RTE (m)\\ \hline
\textbf{\textit{VGA-EVD}} & \num{1.755e-02} & {\bf\num{2.598e-05}} & {\bf\num{6.899e-02}} & {\bf\num{9.089e-05}} & {\bf\num{9.812e-02}} & {\bf\num{1.553e-04}} & {\bf\num{2.638e-01}} & {\bf\num{3.623e-04}}\\ \hline
\textbf{\textit{CGA-EVD}} & {\bf\num{1.722e-02}} & \num{2.916e-05} & \num{7.280e-02} & \num{1.005e-04} & \num{1.056e-01} & \num{1.785e-04} & \num{3.037e-01} & \num{3.689e-04}\\ \hline
\textbf{\textit{ICP}}~\cite{besl1992method} & \num{8.861e+01} & \num{1.438e-01} & \num{9.736e+01} & \num{1.635e-01} & \num{8.024e+01} & \num{1.218e-01} & \num{1.021e+02} & \num{1.426e-01}\\ \hline
\textbf{\textit{PASTA-3D}}~\cite{marchi2023sharp} & \num{1.318e+00} & \num{3.025e-03} & \num{1.713e+00} & \num{4.057e-03} & \num{6.381e+00} & \num{1.436e-02} & \num{9.846e+00} & \num{2.059e-02}\\ \hline
\textbf{\textit{DCP}}~\cite{wang2019deep} & \num{1.937e+01} & \num{2.830e-02} & \num{5.132e+01} & \num{6.084e-02} & \num{3.634e+01} & \num{5.294e-02} & \num{3.945e+01} & \num{5.552e-02}\\ \hline
\textbf{\textit{Go-ICP}}~\cite{yang2015go} & \num{9.449e+01} & \num{1.778e-01} & \num{7.929e+01} & \num{1.245e-01} & \num{6.117e+01} & \num{1.149e-01} & \num{7.724e+01} & \num{9.612e-02}\\ \hline
\textbf{\textit{TEASER++}}~\cite{yang2020teaser} & \num{2.295e+00} & \num{3.851e-03} & \num{1.430e+01} & \num{1.824e-02} & \num{2.127e+01} & \num{2.299e-02} & \num{2.232e+01} & \num{2.790e-02}\\ \hline
\end{tabular}}
\hspace{1pt}
\end{table*}

\begin{table*}[h]
\centering
\renewcommand{\arraystretch}{1.2}
\setlength{\tabcolsep}{4pt}
\caption{Benchmark results for random rotation angle and translation magnitude $\|\bm{t}\| = 1$ for the Stanford Dragon dataset for various levels of noise. The values report the \textit{mean} over $10$ iterations for the same level of noise.}
\label{tab:Dragon-jan1}
\scalebox{0.65}{
\begin{tabular}{|l|l|l|l|l|l|l|l|l|}
\hline
Algorithm & \multicolumn{2}{|c|}{$\sigma=0.001$} & \multicolumn{2}{|c|}{$\sigma=0.002$} & \multicolumn{2}{|c|}{$\sigma=0.005$} & \multicolumn{2}{|c|}{$\sigma=0.01$}\\ \hline
 & RRE (\textdegree) & RTE (m) & RRE (\textdegree) & RTE (m) & RRE (\textdegree) & RTE (m) & RRE (\textdegree) & RTE (m)\\ \hline
\textbf{\textit{VGA-EVD}} & \num{1.933e-02} & {\bf\num{4.375e-05}} & \num{4.442e-02} & {\bf\num{1.063e-04}} & {\bf\num{1.082e-01}} & {\bf\num{3.268e-04}} & \num{1.729e-01} & {\bf\num{3.529e-04}}\\ \hline
\textbf{\textit{CGA-EVD}} & {\bf\num{1.916e-02}} & \num{4.617e-05} & {\bf\num{4.401e-02}} & \num{1.096e-04} & \num{1.143e-01} & \num{3.474e-04} & {\bf\num{1.689e-01}} & \num{3.706e-04}\\ \hline
\textbf{\textit{ICP}}~\cite{besl1992method} & \num{3.769e+01} & \num{6.513e-02} & \num{6.589e+01} & \num{8.795e-02} & \num{6.080e+01} & \num{7.849e-02} & \num{6.074e+01} & \num{1.244e-01}\\ \hline
\textbf{\textit{PASTA-3D}}~\cite{marchi2023sharp} & \num{4.737e-01} & \num{1.147e-03} & \num{7.842e-01} & \num{2.088e-03} & \num{2.057e+00} & \num{3.693e-03} & \num{4.502e+00} & \num{4.413e-03}\\ \hline
\textbf{\textit{DCP}}~\cite{wang2019deep} & \num{7.939e+00} & \num{2.230e-02} & \num{1.211e+01} & \num{3.634e-02} & \num{3.319e+01} & \num{8.802e-02} & \num{5.787e+01} & \num{1.352e-01}\\ \hline
\textbf{\textit{Go-ICP}}~\cite{yang2015go} & \num{3.764e+01} & \num{5.438e-02} & \num{4.609e+01} & \num{6.590e-02} & \num{8.674e+01} & \num{1.139e-01} & \num{5.424e+01} & \num{9.500e-02}\\ \hline
\textbf{\textit{TEASER++}}~\cite{yang2020teaser} & \num{3.029e+00} & \num{7.468e-03} & \num{2.832e+01} & \num{3.789e-02} & \num{7.092e+01} & \num{1.456e-01} & \num{8.233e+01} & \num{1.760e-01}\\ \hline
\end{tabular}}
\hspace{1pt}
\end{table*}

%\vspace{-35mm}
\begin{table*}[h]
\centering
\renewcommand{\arraystretch}{1.2}
\caption{Mean inference time (in seconds) for the Stanford Bunny, Armadillo and Dragon datasets for the different algorithms.}
\label{tab:times}
\scalebox{0.85}{
\begin{tabular}{|l|l|l|l|}
\hline
Algorithm & Bunny & Armadillo & Dragon\\ \hline
\textbf{\textit{VGA-EVD}} & \num{9.280e-02} s & \num{2.014e-01} s & \num{3.805e-01} s\\ \hline
\textbf{\textit{CGA-EVD}} & \num{1.619e-01} s & \num{3.749e-01} s & \num{6.831e-01} s\\ \hline
\textbf{\textit{ICP}}~\cite{besl1992method} & \num{5.491e-02} s & \num{1.008e-01} s & \num{1.816e-01} s\\ \hline
\textbf{\textit{PASTA-3D}}~\cite{marchi2023sharp} & {\bf\num{2.453e-02}} s & {\bf\num{5.628e-02}} s & {\bf\num{1.022e-01}} s\\ \hline
\textbf{\textit{DCP}}~\cite{wang2019deep} & \num{8.816e+00} s & \num{1.751e+00} s & \num{1.493e+00} s\\ \hline
\textbf{\textit{Go-ICP}}~\cite{yang2015go} & \num{5.794e+00} s & \num{5.797e+00} s & \num{6.040e+00} s\\ \hline
\textbf{\textit{TEASER++}}~\cite{yang2020teaser} & \num{1.617e+00} s & \num{6.148e+00} s & \num{9.264e+00} s\\ \hline
\end{tabular}}
\hspace{1pt}
\end{table*}

\begin{figure*}
\centering
\includegraphics[width=0.99\linewidth]{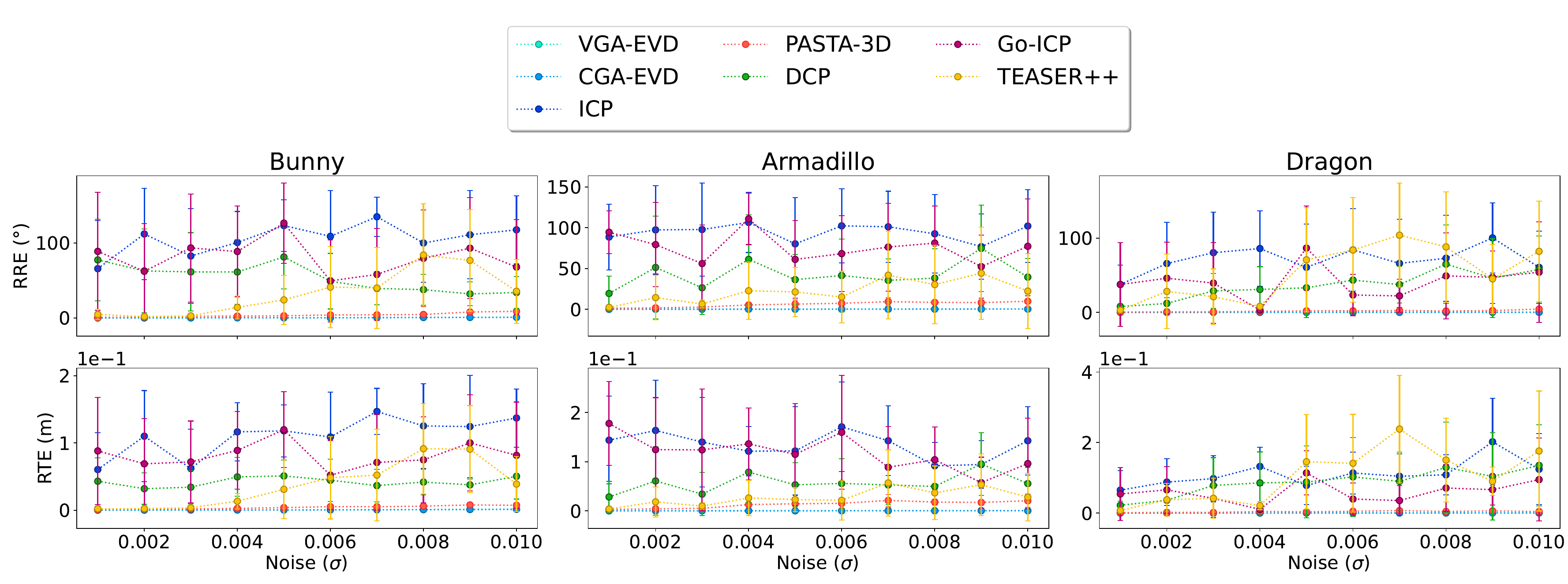}
\caption{Benchmark results for random rotation angle and translation magnitude $\|\bm{t}\| = 1$ of the \textit{\textbf{VGA-EVD}}, 
\textit{\textbf {CGA-EVD}}, \textit{\textbf{ICP}}~\protect\cite{besl1992method}, \textit{\textbf{PASTA-3D}}~\protect\cite{marchi2023sharp}, \textit{\textbf{DCP}}~\protect\cite{wang2019deep}, \textit{\textbf{Go-ICP}}~\protect\cite{yang2015go}, \textit{\textbf{TEASER++}}~\protect\cite{yang2020teaser} methods for the Bunny, Armadillo and Dragon datasets. Error in the rotation (top row), and  translation (bottom row).} \label{fig:algs:big}
\label{fig:benchmark:big:trans}
\end{figure*}

\section{Conclusion}

\label{sec:CONCLUSIONS}

We proposed a novel correspondence free multivector cloud registration in CGA. The distinctiveness of our proposal is characterized by the equivariant relationship between the eigenmultivectors between the source and target multivector clouds. This enables a robust identification of the transformation between the eigenmultivectors which in turn extends to the multivector clouds. 
We have shown how our approach is robust to high levels of noise and how it compares with similar approaches, such as the PASTA and PCA based algorithms, (\textit{i.e. \textbf{VGA-EVD}}). 
Further work will address the problem of low overlap between the two point clouds.  This will be achieved by extending the notion of eigenmultivectors to some specific region in the multivector cloud. Using this approach and by dividing the point cloud into regions could provide a robust rotation and translation equivariant descriptors (the eigenmultivectors), as well as invariant descriptors (the eigenvalues) for multivector cloud data.

\backmatter

% \bmhead{Supplementary information}

\section*{Declarations}
This work was supported by LARSyS funding
(DOI: 10.54499/LA/P/0083/2020,10.54499/UIDP/50009/2020 and 10.54499/UIDB/50009/2020) and projects,
MIA-BREAST [10.54499/2022.04485.PTDC], PT SmartRetail [PRR-C645440011-00000062], Center for Responsible AI [PRR-C645008882-0000005]

Data supporting this study are openly available from \cite{Stanford} at
\href{http://www.graphics.stanford.edu/data/3Dscanrep/}{Stanford 3D Scanning Repository}.
\bigskip
%\begin{flushleft}%

%Editorial Policies for:
%
%\bigskip\noindent
%Springer journals and proceedings: %\url{https://www.springer.com/gp/editorial-policies}
%
%\bigskip\noindent
%Nature Portfolio journals: %\url{https://www.nature.com/nature-research/editorial-%policies}
%
%\bigskip\noindent
%\textit{Scientific Reports}: %\url{https://www.nature.com/srep/journal-%policies/editorial-policies}
%
%\bigskip\noindent
%BMC journals: %\url{https://www.biomedcentral.com/getpublished/editoria%l-policies}
%\end{flushleft}

\onecolumn

\begin{appendices}

\newcommand{\Gpq}{\mathcal{G}_{p,q}}
\newcommand{\refe}{{\mathrm{ref}}}
% \newpage
\textbf{\LARGE{Introduction to the Appendix}}\mbox{}\\

This appendix is structured as follows:

\begin{itemize}

%\item Section~\ref{sec:results:datasets} illustrates the point cloud datasets with several noise ratios and the corresponding primitives. Also, it shows the registration results and performs a comparative experimental evaluation including:  
%$(i)$ ICP~\cite{besl1992method}, $(ii)$
%DCP~\cite{wang2019deep}, $(iii)$ GO-ICP~\cite{yang2015go}, $(iv)$ TEASER++\cite{yang2020teaser}, and $(v)$ PASTA~\cite{marchi2023sharp},  (Sec.~\ref{sec:RESULTS}
%and~\ref{sec:RESULTS-jacinto} of the paper).

\item Section~\ref{sec:GA} introduces Geometric Algebra (GA), with core concepts, specifying the notation which is used throughout the paper, (Sec.~\ref{sec:EST-R-T} of the paper).

\item Section~\ref{sec:diff:mv} details how the multivectors differentiation is computed and introduces differentiation as a limit definition. Also, it  describes some derivatives that are further used in Section~\ref{sec:opt:appendix} (optimization), (Sec.~\ref{sec:EST-R-T} of the paper).
% , and Sections~\ref{sec:lin:trans} and~\ref{sec:multi:trans} (linear and multilinear transformations, respectively).

% \item Section~\ref{sec:lin:trans} addresses linear and differential transformations, introducing orthogonal transformation from the point of view of GA. It also explains how linear transformations are extended through outermorphisms to general multivectors.

\item Section~\ref{sec:CGA} presents Conformal Geometric Algebra (CGA) by introducing the conformal mapping. It describes 3D translations and rotation in CGA and how to decompose a multivector in CGA into its constituent parts (the coefficients), concretizing them analytically, (Secs.~\ref{sec:COEF:approach}, ~\ref{sec:REL-COEF} ~\ref{sec:EST-R-T}, ~\ref{sec:RESULTS}, ~\ref{sec:RESULTS-jacinto} of the paper).

\item Section~\ref{sec:multi:trans} defines multilinear and differential multilinear transformations that are used to define the symmetry of these family of functions. It describes the eigendecomposition of symmetric multilinear functions, (Sec.~\ref{sec:gen:approach} of the paper).

% \item Section~\ref{sec:versor} provides the definition for versors, ({\it i.e.} rotators, translators and motors). It explains the multilinear spaces in which each type of versor resides and how projection is expressed to those multilinear spaces.

\item Section~\ref{sec:opt:appendix} introduces the Lagrangian with respect to multivector constraints, {\it i.e.} the multivector Lagrange Multiplier. Solutions for the optimal rotor and the optimal translation problems are provided, (Sec.~\ref{sec:EST-R-T} of the paper).

% \item Section~\ref{eq:rot:only:approach} describes a different approach to tackle the registration problem. Concretely, it takes the vectors plus quadvectors in CGA and project them back to VGA. It shows how to determine rotations and translations in VGA, either using  $(i)$ a correspondence free approach (Sec.~\ref{eq:rot:eig:rot:only:PCA} which is equivalent to PCA), or $(ii)$ with correspondences (see Sec.~\ref{sec:mv:corrs:VGA}).

% \item Section~\ref{sec:eig:decomp:alg} explains how the decomposition of a multilinear functions can be computed using traditional methods. We explain the dual relationship between eigenmultivectors, how we can solve for the grades separately and how to compute the matrix of a multilinear transformation.
\end{itemize}

\iffalse
\item Section~\ref{sec:dorst} emphasizes the advantage of our approach regarding Valkenburg and L. Dorst work~\cite{Valkenburg_Dorst_2011}.  In particular, it addresses the need of ~\cite{Valkenburg_Dorst_2011}  having the constrain on the registered multivectors. This contrasts with our approach as it does not need to impose any constraints on the multivectors. 
\fi

%\section{Results and Datasets}
%\label{sec:results:datasets}

% \todo[inline]{} 

% \mbox{}\vfill
% \raggedbottom 
%\vspace*{\fill}

%\clearpage
\newpage
\section{Geometric Algebra}\label{sec:GA}

The contents described in the section are mainly based on the book by David Hestenes~\cite{hestenes_clifford_1984}. Our goal is to provide some background and theoretical cornerstones in GA used in this paper, contributing for the selfcontainess of the document. Also note that, this section provides some definitions that will be used in Sec.~\ref{sec:CGA}.

Let $\mathcal{A}_{p,q}$ be a $(p+q)$-dimensional vector space over the field of real numbers $\mathbb{R}$, with canonical vector basis
\begin{equation}
\{\underbrace{\bm{e}_1,\bm{e}_2,\dots,\bm{e}_p}_{\bm{e}_i^2=+1},\underbrace{{\bm{e}}_{p+1},{\bm{e}}_{p+2},\dots,{\bm{e}}_{p+q}}_{\bm{e}_j^2=-1}\}
\end{equation}
Also, let the product be defined in $\mathcal{A}_{p,q}$ according to 

% \begin{subequations}
\begin{align}
\bm{e}_i\bm{e}_j + \bm{e}_i\bm{e}_j &= 2\eta_{ij}
% \munderbar{\bm{e}}_i\munderbar{\bm{e}}_j + \munderbar{\bm{e}}_i\munderbar{\bm{e}}_j &= -2\delta_{ij}\\
% \munderbar{\bm{e}}_i\bm{e}_j + \bm{e}_j\munderbar{\bm{e}}_i &= 0
\label{eq:com-prod}
\end{align}
% \end{subequations}
where 
\begin{equation}
\eta_{ij} = 
\begin{cases}
    +1 & \text{if}\ i = j,\  1 \leqslant i \leqslant p\\
    -1 & \text{if}\ i = j,\  p+1 \leqslant i \leqslant p + q\\
    0 & \text{if}\ i \neq j
\end{cases}
\end{equation}
The non-commutative product in~\eqref{eq:com-prod} is called the geometric product, and generates the associative $2^{p+q}$ dimensional geometric algebra $\mathcal{G}_{p,q}=\mathcal{G}(\mathcal{A}_{p,q})$ over the vector space $\mathcal{A}_{p,q}$.

The geometric algebra $\Gpq$ is a $2^{p+q}$ dimensional real linear vector space with canonical basis
\begin{equation}
\{\bm{e}_J\ :\ J\subseteq \{ 1,2,\dots,p+q\}\}\label{eq:GA:canonical:basis}
\end{equation}
where 
\begin{equation}
\bm{e}_J \equiv \bm{e}_{j_1j_2\cdots j_k}\equiv \bm{e}_{j_1}\bm{e}_{j_2}\cdots\!\ \bm{e}_{j_k},\quad\text{for}\ 1\leqslant j_1<j_2<\cdots<j_k \leqslant p+q\label{eq:basis:multivectors:Gpq}
\end{equation}
and $\bm{e}_\emptyset = 1$. Elements of the basis $\{\bm{e}_J\}$ are called basis blades.

A basis blade $\bm{e}_{J}$ which can be written as the geometric product of $k$ basis vectors is said to be of grade $k$. This means that the geometric algebra $\Gpq$ can be decomposed into linear subspaces of different grade, that is,

\begin{equation}
\mathcal{G}_{p,q} = \bigoplus_{k=0}^{p+q} \mathcal{G}^k_{p,q}.
\end{equation}
\begin{sloppypar}
\noindent where the $k$-grade subspace is denoted by $\mathcal{G}^k_{p,q}$. Note that the scalars ${\mathbb{R}\equiv\Gpq^0}$ and the vectors ${\mathcal{A}_{p,q}\equiv\Gpq^1}$ are both subspaces of the geometric algebra $\Gpq$.  Geometric Algebra can also be decomposed into even grade linear subspaces $\mathcal{G}^+_{p,q}$ and odd grade linear subspaces $\mathcal{G}^-_{p,q}$, {\it i.e.}
\end{sloppypar}
\begin{equation}
\mathcal{G}_{p,q} = \mathcal{G}^+_{p,q}\oplus\mathcal{G}^-_{p,q}
\end{equation}
where the even graded subspace $\mathcal{G}^+_{p,q}$ is on itself a subalgebra of $\mathcal{G}_{p,q}$. We can call it the {\it even subalgebra of} $\mathcal{G}_{p,q}$. A general element of the geometric algebra $\mathcal{G}_{p,q}$ is called a multivector. 
% For any $\bm{A},\bm{B},\bm{C}\in\mathcal{G}_{p,q}$ we understand that addition of multivectors is associative
% \begin{equation*}
% (\bm{A} + \bm{B}) + \bm{C} \equiv  \bm{A} + (\bm{B} + \bm{C})
% \end{equation*}
% and commutative
% \begin{equation*}
% \bm{A} + \bm{B} \equiv  \bm{B} + \bm{A}
% \end{equation*}
% The additive identity $0$ satisfies
% \begin{equation*}
% \bm{A} + 0 = \bm{A}
% \end{equation*}
% The additive inverse of $\bm{A}$ is $-\bm{A}$, that is
% \begin{equation*}
% \bm{A} + (-\bm{A})\equiv 0
% \end{equation*}
% % For $\alpha,\beta\in \mathbb{R}$, scalar multiplication is associative
% % \begin{equation*}
% % \alpha\beta\bm{A}\equiv \alpha(\beta \bm{a})
% % \end{equation*}
% % and distributive over multivector and scalar addition
% % \begin{equation*}
% % \alpha (\bm{A} + \bm{B}) \equiv \alpha\bm{A}+\alpha\bm{B},\quad (\alpha + \beta)\bm{A}\equiv \alpha \bm{A}+\beta\bm{B}
% % \end{equation*}
% $\mathcal{G}_{p,q}$ is closed under the geometric product
% \begin{equation*}
% \bm{AB} \in \mathcal{G}_{p,q}
% \end{equation*}
% Moreover, the geometric product is associative
% \begin{equation*}
% (\bm{AB})\bm{C}\equiv \bm{A}(\bm{BC})
% \end{equation*}
% and distributive
% \begin{equation*}
% \bm{A}(\bm{B}+\bm{C}) \equiv \bm{AB} + \bm{AC}
% \end{equation*}
We assume that any multivector $\bm{X}$ can be written as the sum 
\begin{equation}
\bm{X} \equiv \sum_{k=0}^{p+q} \langle \bm{X}\rangle_k\label{eq:sum:grades}
\end{equation}
where $\langle \bm{X} \rangle_k$ is the grade projection operator defined as
\begin{subequations}
\begin{equation}
\langle \cdot \rangle_k\ :\ \mathcal{G}_{p,q}\mapsto \mathcal{G}^k_{p,q}\ :\ \bm{X}\mapsto \langle \bm{X} \rangle_k
\end{equation}
We drop the subscript when $k=0$,  {\it i.e.} when we do the projection to the scalar part  $\langle \bm{X} \rangle_0\equiv\langle \bm{X} \rangle\in \mathbb{R}$.  
The grade projection operator is distributive
\begin{equation}
\langle \bm{X} + \bm{Y}\rangle_k = \langle \bm{X}\rangle_k + \langle\bm{Y}\rangle_k,\quad \forall \bm{X}, \bm{Y}\in \Gpq
\end{equation}
It is associative and commutative in terms of scalar multiplication
\begin{equation}
\alpha\langle \bm{X}\rangle_k = \langle \alpha \bm{X}\rangle_k = \langle \bm{X}\rangle_k\alpha,\quad \text{if }\ \alpha=\langle\alpha\rangle\ \text{and }\ \forall\bm{X}\in \Gpq
\end{equation}

A multivector $\bm{A}$ is said to be a $k$-vector if it is an element of the $k$-grade subspace $\mathcal{G}_{p,q}^k$. Then if $\bm{A}\equiv \langle\bm{A}\rangle_k$ we can call $\bm{A}$ a multivector of grade $k$ or a $k$-vector. We will often write bivector to refer to $2$-vectors, trivector for $3$-vectors and vector to refer to $1$-vectors. 
We will often employ the notation $\langle\cdot\rangle_{k_1,k_2,\dots,k_s}$ to refer to grade projection to multiple grades simultaneously, that is
\begin{equation}
    \langle\bm{X}\rangle_{k_1,k_2,\dots,k_s} = \langle\bm{X}\rangle_{k_1} + \langle\bm{X}\rangle_{k_2} + \cdots + \langle\bm{X}\rangle_{k_s} \label{eq:multiple:grade:proj}
\end{equation}
\end{subequations}

% \todo[inline,color=red]{Should I write this as a definition?}

The reversion operation ${}^\dagger$ reorders the factor in a product. It can be defined by  changing the order of the basis vectors of a given multivector. For a basis blade $\bm{e}_J$ it is defined as
\begin{equation}
(\bm{e}_{j_1}\bm{e}_{j_2}\cdots\!\ \bm{e}_{j_k})^\dagger \equiv \bm{e}_{j_k} \bm{e}_{j_{k-1}}\cdots\!\ \bm{e}_{j_1}
\end{equation}

From the definition above, for $\bm{A},\bm{B}\in\Gpq$ and $\bm{a}\in\mathcal{A}_{p,q}$, we can find how the reversion operation enjoys the properties
\begin{subequations}
\begin{align}
(\bm{AB})^\dagger &= \bm{B}^\dagger \bm{A}^\dagger\label{eq:dagger:prop:1}\\
(\bm{A}+\bm{B})^\dagger &= \bm{A}^\dagger + \bm{B}^\dagger\\
\langle\bm{A}^\dagger\rangle &= \langle\bm{A}\rangle\\
\bm{a}^\dagger &= \bm{a} \quad \text{where} \ \bm{a}=\langle\bm{a}\rangle_1
\end{align}
\end{subequations}
it follows that the reverse of a product of vectors is
\begin{equation}
(\bm{a}_1\bm{a}_2\cdots\bm{a}_k)^\dagger= \bm{a}_k\bm{a}_{k-1}\cdots\bm{a}_1 \label{eq:dagger:product:of:vectors}
\end{equation}
where $\bm{a}_i\in\mathcal{A}_{p,q}$.

Let $\bm{A}_k = \bm{a}_1\bm{a}_2\cdots\bm{a}_k$ and assume that the vectors $\bm{a}_i$ anti-commute, that is $\bm{a}_i\bm{a}_j = -\bm{a}_j\bm{a}_i$ for $i\neq j$, then reordering the vectors on the right hand side of (\ref{eq:dagger:product:of:vectors}) we easily prove that 
\begin{equation}
\bm{A}_{k}^\dagger \equiv (-1)^{k(k-1)/2}\bm{A}_{ k} \label{eq:dagger:contas}
\end{equation}
Then, the reversion operator acts on a multivector $\bm{X}$ as
\begin{equation}
\bm{X}^\dagger = \sum_{i=0}^{p+q} (-1)^{i(i-1)/2}\langle\bm{X}\rangle_i
\end{equation}

We define the \textit{inner product} of multivector of unique grade by
\begin{subequations}
\begin{align}
\bm{A}_r\cdot\bm{B}_s &\equiv \langle\bm{A}_r\bm{B}_s\rangle_{|r-s|},\quad \text{for}\ r,s>0\\
\bm{A}_r\cdot\bm{B}_s &\equiv 0, \quad \text{if}\ r = 0\ \text{or}\ s = 0
\end{align}
The inner product for arbitrary multivectors is then defined by
\begin{align}
\bm{A}\cdot\bm{B} \equiv \sum_{i=0}^{n} \langle\bm{A}\rangle_i\cdot\bm{B} \equiv \sum_{j=0}^{n} \langle\bm{B}\rangle_j\cdot\bm{A} \equiv \sum_{i=0}^{n}\sum_{j=0}^{n} \langle\bm{A}\rangle_i\cdot\langle\bm{B}\rangle_j
\end{align}
\end{subequations}
with $n=p+q$. In a similar way, we define the \textit{outer product} of multivectors of unique grade by
\begin{subequations}
\begin{equation}
\bm{A}_r\wedge\bm{B}_s \equiv \langle\bm{A}_r\bm{B}_s\rangle_{r+s}
\end{equation}
The \textit{outer product} for arbitrary multivectors is then defined by
\begin{align}
\bm{A}\wedge\bm{B} \equiv \sum_{i=0}^{n} \langle\bm{A}\rangle_i\wedge\bm{B} \equiv \sum_{j=0}^{n} \langle\bm{B}\rangle_j\wedge\bm{A} \equiv \sum_{i=0}^{n}\sum_{j=0}^{n} \langle\bm{A}\rangle_i\wedge\langle\bm{B}\rangle_j
\end{align}
\end{subequations}

\textbf{The \textit{scalar product}} is defined as the scalar part of the geometric product of two arbitrary multivectors $\bm{A},\bm{B}$ and is defined as
\begin{subequations}
\begin{equation}
\bm{A}*\bm{B} = \langle\bm{AB}\rangle
\end{equation}
We can show that the scalar product has the following  properties
\begin{align}
\bm{A}^\dagger *\bm{B}^\dagger &= \bm{A}*\bm{B}\quad \Leftrightarrow\quad \langle\bm{A}^\dagger \bm{B}^\dagger\rangle = \langle\bm{A}\bm{B}\rangle \label{eq:dagger:scalar:product} \\
\bm{A}*\bm{B} &= \bm{B}*\bm{A} 
\end{align}
that is, we can change the order inside the scalar product. Moreover, inside of the scalar grade projection we have the commutative property 
\begin{equation}
\langle \bm{ABC}\rangle = \langle \bm{CAB}\rangle = \langle \bm{BCA}\rangle \label{eq:commutation:property}
\end{equation}
As a particular case of (\ref{eq:dagger:scalar:product}),  we also have  
\begin{equation}
\bm{A}^\dagger * \bm{B} = \bm{A}*\bm{B}^\dagger\label{eq:A:dagger:star:B}
\end{equation}

The scalar product between two arbitrary multivectors can be expanded in terms of its graded components, that is
\begin{align}
\bm{A}*\bm{B} \equiv \sum_{i=0}^{n} \langle\bm{A}\rangle_i*\bm{B} \equiv \langle \bm{A}\rangle\langle\bm{B}\rangle + \sum_{i=1}^{n} \langle\bm{A}\rangle_i\cdot\langle\bm{B}\rangle_i
\end{align}
\end{subequations}
The inner product between two $k$-vectors $\bm{A}_k$ and $\bm{B}_k$, for $k\neq 0$ can be expressed with the scalar product,
\begin{equation}
\bm{A}_k\cdot\bm{B}_k = \langle \bm{A}_k\bm{B}_k \rangle = \bm{A}_k*\bm{B}_k \label{eq:scalar:inner}
\end{equation}
The scalar product of two multivectors of different grade is zero, that is,
\begin{equation}
\langle\langle \bm{A}\rangle_k \langle\bm{B}\rangle_r\rangle = \langle \bm{A}\rangle_k *\langle\bm{B}\rangle_r = 0\quad\text{if}\ r\neq k\label{eq:scalar:diff:grade}
\end{equation}

We call a multivector $\bm{X}$ ``simple'' when it satisfies 
\begin{equation}
\bm{XX}^\dagger = \langle \bm{XX}^\dagger\rangle 
\end{equation}
For simple multivectors,  the inverse can be computed as
\begin{equation}
\bm{X}^{-1} = \frac{\bm{X}^\dagger}{\bm{X}\bm{X}^\dagger} = \frac{\bm{X}^\dagger}{\langle \bm{XX}^\dagger\rangle} \label{eq:inverse:simple:mlvectors}
\end{equation}
when $\bm{XX}^\dagger\neq 0$. 

The norm of a multivector $\bm{X}$ is defined as the absolute value of the scalar product of $\bm{X}$ with its reverse, that is
\begin{equation}
|\bm{X}|^2 = |\langle \bm{XX}^\dagger\rangle| \label{norm:positive:negative:sigs}
\end{equation}
The absolute value in the right hand side of (\ref{norm:positive:negative:sigs}) is important since in pseudo-Euclidean spaces, the scalar product $\langle \bm{XX}^\dagger\rangle$ can have negative values. 

To define orthogonal transformations, we will use  the following  norm squared definition of a multivector as 
\begin{equation}
\|\bm{X}\|^2 = \langle \bm{XX}^\dagger\rangle  \label{norm:negative:negative:sigs}
\end{equation}
In contrast to (\ref{norm:positive:negative:sigs}), this norm can be negative.

% Opposing to , this norm can 
% this norm contrary to (\ref{norm:positive:negative:sigs}) can be negative. 

\begin{definition}
We call an element $\bm{A}_k$ a \textit{$k$-blade} if it is a $k$-vector which can be written as the product of $k$ anticommuting vectors $\bm{a}_1,\bm{a}_2,\dots,\bm{a}_k\in \mathcal{A}_{p,q}$, that is
\begin{subequations}
\begin{equation}
\bm{A}_k = \bm{a}_1\bm{a}_2\cdots\bm{a}_k,
\end{equation}
where
\begin{equation}
\bm{a}_j\bm{a}_i = -\bm{a}_i\bm{a}_j\label{eq:anticommute:k:blade}
\end{equation}
for $i,j=1,2,\dots,k$, and $j\neq i$. Furthermore, a $k$-blade is simple thus it squares to a scalar quantity, that is
\begin{equation}
\bm{A}_k^2 \equiv \langle \bm{A}_k^2\rangle = (-1)^{k(k-1)}\langle \bm{A}_k\bm{A}_k^\dagger\rangle = (-1)^{k(k-1)}\|\bm{A}_k\|^2
\end{equation}
we can always write a \textit{$k$-blade} as the wedge product of $k$ linearly independent vectors $\bm{b}_k$ then
\begin{equation}
\bm{A}_k = \bm{b}_1\wedge\bm{b}_2\wedge\cdots\wedge\bm{b}_k
\end{equation}
\end{subequations}
Note how we used~\eqref{eq:dagger:contas} to write $\bm{A}_k^2$ in terms of the norm squared of $\bm{A}_k$.
\end{definition}

We can express the inner and outer product between a vector $\bm{a}$ and a $k$-vector $\bm{A}_k$ as
\begin{subequations}
\begin{align}
\bm{a}\cdot\bm{A}_k &= \tfrac{1}{2}(\bm{a}\bm{A}_k - (-1)^k\bm{A}_k\bm{a})\label{eq:inner:geo:prod}\\
\bm{a}\wedge\bm{A}_k &= \tfrac{1}{2}(\bm{a}\bm{A}_k + (-1)^k\bm{A}_k\bm{a})
\end{align}
\label{eq:inner:outer:geo}
We can also write the geometric product with respect to the inner and outer product, concretely let $\bm{A}$ be a multivector and $\bm{a}$ a vector, then
\begin{align}
\bm{A}\bm{a} &= \bm{A}\cdot\bm{a} + \bm{A}\wedge\bm{a}\\
\bm{aA} &= \bm{a}\cdot\bm{A} + \bm{a}\wedge\bm{A} \label{eq:geo:inner:outer:prods}
\end{align}
\end{subequations}

\begin{subequations}
The outer and the inner product satisfy the reordering rules
\begin{align}
\bm{A}_i\cdot\bm{B}_j &= (-1)^{i(j-1)}\bm{B}_j\cdot\bm{A}_i\label{eq:reorder:rules:inner}\\
\bm{A}_i\wedge\bm{B}_j &= (-1)^{ij}\bm{B}_j\wedge\bm{A}_i\label{eq:reorder:rules:wedge}
\end{align}
\end{subequations}

\textbf{{Projections}} to flat spaces can be defined via an inner product, we define a projection to an invertible $s$-blade $\bm{A}$ as
\begin{equation}
P_{\bm{A}}(\bm{X}_k) = 
\begin{cases}
\bm{X}_k & \text{if}\ k = 0\\
\bm{X}_k\cdot\bm{A}\bm{A}^{-1} & \text{if}\ 0 < k \leqslant s\\
0 & \text{otherwise}
\end{cases}
\label{eq:projection:def}
\end{equation}
projections satisfy the outermorphism property
\begin{equation}
P_{\bm{A}}(\bm{X}\wedge\bm{Y}) = P_{\bm{A}}(\bm{X})\wedge P_{\bm{A}}(\bm{Y})
\end{equation}
See proof in~\cite{hestenes_clifford_1984} (Section 1.2).
\textbf{{The Unit Pseudoscalar}} $\bm{I}\in\mathcal{G}_{p,q}^{p+q}$ of a geometric algebra $\mathcal{G}_{p,q}$ is the element of greatest grade that has magnitude one, $|\bm{I}|=1$. Concretely, we can define it from the basis  vectors
\begin{subequations}
\begin{equation}
\bm{I} = \bm{e}_1\bm{e}_2\cdots\!\ \bm{e}_{n} = \bm{e}_1\wedge \bm{e}_2\wedge\cdots\wedge\bm{e}_{n}
\end{equation}
\begin{equation}
\bm{I} \equiv \langle \bm{I}\rangle_{n} \equiv \langle \bm{I}\rangle_{p+q},\quad |\bm{I}| = 1
\end{equation}
\label{eq:unit:pseudoscalar}
\end{subequations}

\begin{fornextpaper}
{
\subsection{Positive Norm in Pseudo-Euclidean spaces}
In this section we provide a construction of a norm under a newly defined operator. The operation $\bm{X}^+$ is almost equivalent to the reverse operation, except it acts on the basis vectors, thus having the properties
\begin{subequations}
\begin{align}
(\bm{X}\bm{Y})^+ = \bm{Y}^+\bm{X}^+\\
\bm{e}_i^+ = \bm{e}_i\ \text{if}\ i \leqslant p\\
\bm{e}_i^+ = -\bm{e}_i\ \text{if}\ i > p
\end{align}
\end{subequations}
where $\bm{X}\in\mathcal{G}_{p,q}$. We shall call this operator the plus operator. 
\begin{definition}
\label{def:the:plus:norm}
Under this operator we define a positive definite norm as
\begin{equation}
\| \bm{X}\|^2_+ \equiv \langle \bm{XX}^+\rangle 
\end{equation}
\end{definition}
the inner product under this operator is symmetric thus satisfying
\begin{equation}
\bm{X}*\bm{Y}^+ = \bm{X}^+*\bm{Y}\label{eq:synmmetric:plus:inner:prod}
\end{equation}
it commutes with the dagger operation
\begin{equation}
(\bm{X}^+)^\dagger = \bm{X}^{+\dagger} = \bm{X}^{\dagger +} = (\bm{X}^\dagger)^+
\end{equation}
Furthermore we have 
\begin{equation}
\|\bm{X} + \bm{Y}\|^2_+ = \|\bm{X}\|^2_+ + \|\bm{Y}\|^2_+ + 2 \langle \bm{XY}^+\rangle = \|\bm{X}\|^2_+ + \|\bm{Y}\|^2_+ + 2 \langle \bm{YX}^+\rangle\label{eq:norm:plus:expanded}
\end{equation}

}
\end{fornextpaper}

\subsection{Identities}
In this section we provide short proofs for some identities used. Consider the relations (\ref{eq:inner:outer:geo}), and the definition (\ref{norm:negative:negative:sigs}). Let $\bm{A}\equiv \sum_i \bm{A}_i\equiv \sum_i \langle \bm{A}_i\rangle_i$ and $\bm{a}\equiv\langle\bm{a}\rangle_1$, now we show that
\begin{subequations}
\begin{equation}
\begin{split}
\|\bm{a}\cdot\bm{A}\|^2 &= \sum_i \|\bm{a}\cdot\bm{A}_i\|^2 = \sum_i \tfrac{1}{4} \|\bm{aA}_i - (-1)^i\bm{A}_i\bm{a}\|^2 =  \sum_i\tfrac{1}{4}\left(\bm{aA}_i - (-1)^i\bm{A}_i\bm{a}  \right)\left(\bm{A}_i^\dagger\bm{a} - (-1)^i\bm{a}\bm{A}_i^\dagger  \right)\\
& =\sum_i \tfrac{1}{2}\bm{a}^2 \|\bm{A}_i\|^2 - \tfrac{1}{2}(-1)^i \langle \bm{A}_i\bm{aA}_i^\dagger \bm{a}\rangle = \tfrac{1}{2}\bm{a}^2\|\bm{A}\| - \tfrac{1}{2}\sum_i (-1)^i \langle \bm{A}_i\bm{aA}_i^\dagger \bm{a}\rangle
\end{split}
\end{equation}

\begin{equation}
\begin{split}
\|\bm{a}\wedge\bm{A}\|^2 &= \sum_i \|\bm{a}\wedge\bm{A}_i\|^2 = \sum_i \tfrac{1}{4} \|\bm{aA}_i + (-1)^i\bm{A}_i\bm{a}\|^2 =  \sum_i\tfrac{1}{4}\left(\bm{aA}_i + (-1)^i\bm{A}_i\bm{a}  \right)\left(\bm{A}_i^\dagger\bm{a} + (-1)^i\bm{a}\bm{A}_i^\dagger  \right)\\
& =\sum_i \tfrac{1}{2}\bm{a}^2 \|\bm{A}_i\|^2 + \tfrac{1}{2}(-1)^i \langle \bm{A}_i\bm{aA}_i^\dagger \bm{a}\rangle = \tfrac{1}{2}\bm{a}^2\|\bm{A}\| + \tfrac{1}{2}\sum_i (-1)^i \langle \bm{A}_i\bm{aA}_i^\dagger \bm{a}\rangle
\end{split}
\end{equation}
By which it is straightforward to show that  
\begin{equation}
\|\bm{a}\wedge\bm{A}\|^2 + \|\bm{a}\cdot\bm{A}\|^2 = \|\bm{aA}\|^2 = \bm{a}^2\|\bm{A}\|^2 \label{eq:sum:square:inner:outer:geo}
\end{equation}
\end{subequations}

\begin{subequations}
Let $\bm{A}_k$ be a $k$-vector, $\bm{B}_{k-1}$ a $(k-1)$-vector and $\bm{B}_{k+1}$ a $(k+1)$-vector then using (\ref{eq:dagger:contas}) and (\ref{eq:inner:geo:prod}), we obtain the following equalities
\begin{equation}
\begin{split}
\langle \bm{a}\cdot\bm{A}_k\bm{B}_{k-1}\rangle &= \tfrac{1}{2}\langle \bm{aA}_k\bm{B}_{k-1}\rangle -\tfrac{1}{2}(-1)^k\langle \bm{A}_k\bm{a}\bm{B}_{k-1}\rangle\\
                                               &= \tfrac{1}{2}\langle \bm{aA}_k\bm{B}_{k-1}\rangle -\tfrac{1}{2}(-1)^k\langle \bm{A}_k^\dagger\bm{B}_{k-1}^\dagger\bm{a}\rangle\\
                                               & = \langle \bm{aA}_k\bm{B}_{k-1}\rangle = \bm{a}\cdot \langle \bm{A}_k\bm{B}_{k-1}\rangle_1
\end{split}
\end{equation}

\begin{equation}
\begin{split}
\langle \bm{a}\wedge\bm{A}_k\bm{B}_{k-1}\rangle &= \tfrac{1}{2}\langle \bm{aA}_k\bm{B}_{k+1}\rangle +\tfrac{1}{2}(-1)^k\langle \bm{A}_k\bm{a}\bm{B}_{k+1}\rangle\\
                                               &= \tfrac{1}{2}\langle \bm{aA}_k\bm{B}_{k+1}\rangle +\tfrac{1}{2}(-1)^k\langle \bm{A}_k^\dagger\bm{B}_{k+1}^\dagger\bm{a}\rangle\\
                                               & = \langle \bm{aA}_k\bm{B}_{k+1}\rangle = \bm{a}\cdot \langle \bm{A}_k\bm{B}_{k+1}\rangle_1
\end{split}
\end{equation}
\label{eq:scalar:wedge:1:k:k+1/k-1}
\end{subequations}

Let $\bm{A}\equiv \langle \bm{A}\rangle_r$ be an invertible $r$-blade, then 
\begin{equation}
\bm{x}\cdot\bm{A}\bm{A}^{-1} = \tfrac{1}{2}\left(\bm{xA} -(-1)^r\bm{Ax}\right)\bm{A}^{-1}  = \tfrac{1}{2}(\bm{xAA}^{-1} - (-1)^r\bm{AxA}^{-1}) = \tfrac{1}{2}\bm{x} - \tfrac{1}{2}(-1)^r \bm{AxA}^{-1} \label{eq:proj:geo:prod}
\end{equation}
Any vector $\bm{x}$ can be expressed as the component in $\bm{A}$ and as the components orthogonal to $\bm{A}$, having
\begin{equation}
\bm{x} = \bm{xAA}^{-1} = \bm{x}\cdot\bm{AA}^{-1} + \bm{x}\wedge\bm{AA}^{-1} = P_{\bm{A}}(\bm{x}) + P^\perp_{\bm{A}}(\bm{x})
\end{equation}
where $P_{\bm{A}}(\bm{x}) = \bm{x}\cdot\bm{AA}^{-1}$ and $P^\perp_{\bm{A}}(\bm{x}) = \bm{x}\wedge\bm{AA}^{-1}$.

Let $\bm{X}$ and $\bm{Y}$ be general multivectors, then we have
\begin{equation}
P_{\bm{A}}(\bm{XY}) = \bm{X}P_{\bm{A}}(\bm{Y}),\quad\quad \text{if}\ \bm{X} = P_{\bm{A}}(\bm{X})
\end{equation}

\begin{fornextpaper}
{{\bf\it Inverse of multivectors}}, let $\bm{X}$ be a blade,  then we have
\begin{equation}
(1+\bm{X})^{-1} = \frac{1}{1+\bm{X}} = \frac{1-\bm{X}}{(1+\bm{X})(1-\bm{X})} = \frac{1-\bm{X}}{1-\bm{X}^2}\label{eq:inverse:1+X}
\end{equation}
\end{fornextpaper}

\section{Differentiation by Multivectors}\label{sec:diff:mv}
The geometric derivative can be defined with respect to the directional derivative as
\begin{subequations}
\begin{equation}
\partial = \sum_{i=1}^n \bm{a}^i\bm{a}_i\cdot\partial 
\end{equation}
where the vectors $\bm{a}_i$ are any set of vectors that span all $\mathcal{A}_{p,q}$, and satisfy the following reciprocity relation
\begin{equation}
\bm{a}_i\cdot\bm{a}^j = \delta_{ij} 
\end{equation}
for $i,j=1,2,\dots,n$. Where
\begin{equation}
\delta_{ij} = 
\begin{cases}
1 & \text{if}\ i\neq j\\
0 & \text{otherwise}
\end{cases}
\end{equation}
and where we define the directional derivative via the limit definition
\begin{equation}
\bm{a}\cdot\partial F(\bm{x}) = \lim_{\tau\rightarrow 0}\frac{F(\bm{x}+\tau \bm{a}) - F(\bm{x})}{\tau}\label{eq:dir:der:vectors}
\end{equation}
\end{subequations}

The chain rule succintly satisfies
\begin{subequations}
\begin{equation}
\partial_{\bm{x}}f(g(\bm{x})) = \bar{g}(\partial_{\bm{x}'})f(\bm{x}') \label{eq:chain:rule}
\end{equation}
where 
\begin{equation}
\bar{g}(\bm{a}) = \partial_{\bm{x}} g(\bm{x})\cdot\bm{a}\\
\end{equation}
and 
\begin{equation}
\bar{g}(\partial_{\bm{x}'}) = \partial_{\bm{x}} g(\bm{x})\cdot\partial_{\bm{x}'}
\end{equation}
\end{subequations}

Differentiation with respect to multivectors has to be defined via a multivector basis then
\begin{equation}
\partial_{\bm{X}} = \sum_{J}\bm{e}^J\bm{e}_J*\partial_{\bm{X}}
\end{equation}
where $\bm{e}_J$ is the basis for the geometric algebra $\mathcal{G}_{p,q}$ defined in~\eqref{eq:basis:multivectors:Gpq}. The directional derivative $\bm{A}*\partial_{\bm{X}}$, which is a generalization of~\eqref{eq:dir:der:vectors}, is defined with respect to a limit, having 
\begin{equation}
\bm{A}*\partial F(\bm{X}) = \lim_{\tau\rightarrow 0}\frac{F(\bm{A}+\tau \bm{A}) - F(\bm{A})}{\tau}\label{eq:dir:der:multivectors}
\end{equation}

Differentiating scalar functions of multivector variable can be employed by understanding some properties of the derivative
\begin{subequations}
\begin{align}
\dot{\partial}_{\bm{X}}\langle\bm{A}(\bm{X})\dot{\bm{X}} \rangle &= \dot{\partial}_{\bm{X}}\langle\dot{\bm{X}}\bm{A}(\bm{X}) \rangle = \dot{\partial}_{\bm{X}} \dot{\bm{X}}*\bm{A}(\bm{X})  = \bm{A}(\bm{X})\\
{\partial}_{\bm{X}} \phi(\bm{X}^\dagger) &= {\partial}_{\bm{X}^\dagger} \phi(\bm{X}) = {\partial}_{\bm{X}}^\dagger \phi(\bm{X}) = ({\partial}_{\bm{X}} \phi(\bm{X}))^\dagger\\
\partial_{ \bm{X}} \phi(\langle\bm{X}\rangle_k) &= \partial_{\langle \bm{X}\rangle_k} \phi(\bm{X}) = \langle \partial_{\bm{X}}\rangle_k \phi(\bm{X}) = \langle \partial_{\bm{X}} \phi(\bm{X}) \rangle_k \\
\partial_{\bm{X}}\phi(P(\bm{X})) &= P(\partial_{\bm{X}})\phi(\bm{X}) = P(\partial_{\bm{X}}\phi(\bm{X})) = P(\partial_{\bm{X}}\phi(P(\bm{X}))), \quad \text{if}\ \bm{X} = P(\bm{X})\\
\partial_{\bm{X}} \langle f(\bm{X})g(\bm{X})\rangle_k &=  \dot{\partial}_{\bm{X}} \langle \dot{f}(\bm{X})g(\bm{X})\rangle_k + \dot{\partial}_{\bm{X}} \langle {f}(\bm{X})\dot{g}(\bm{X})\rangle_k\label{eq:prod:rule:grade:proj}\\
\partial_{\bm{X}} \langle \bm{XAX^\dagger\bm{B}}\rangle &= \dot{\partial}_{\bm{X}} \langle \dot{\bm{X}}\bm{AX}^\dagger\bm{B}\rangle + \dot{\partial}_{\bm{X}} \langle {\bm{X}}\bm{A}\dot{\bm{X}}^\dagger\bm{B}\rangle = \bm{AX}^\dagger\bm{B} + \bm{A}^\dagger\bm{X}^\dagger\bm{B}^\dagger
\end{align}
\label{eq:derivatives:properties}
\end{subequations}
where the dot indicates the element in the product which is to be differentiated. 
Some directional derivatives of multivectors can be succinctly expressed as
\begin{equation}
\bm{Y}*\partial_{\bm{X}} \bm{AXB} = \bm{A}\bm{Y}*\partial_{\bm{X}} \bm{XB} = \bm{AYB} \label{eq:dir:deriv}
\end{equation}

% \subsection{Differentiation of Scalar Fields}
% Scalar functions of a multivector variable.
\begin{fornextpaper}
\section{Linear Transformations}
\label{sec:lin:trans}
% We express linear transformations via an munderbar and objects as bold. For example to express a rotation we may write
% \begin{equation}
% \munderbar{R}(\bm{x}) = \bm{RxR}^\dagger
% \end{equation}
% while the munderbar denotes the differential the overbar is used to denote the adjoint. 

Given a vector function of a vector variable $f:\mathcal{A}_{p,q}\mapsto \mathcal{A}_{p,q}$ we define the differential and the adjoint transformations as
\begin{subequations}
\begin{align}
\munderbar{f}(\bm{x}) \equiv \bm{x}\cdot\partial_{\bm{z}} f(\bm{z}) \\
\bar{f}(\bm{x}) \equiv \partial_{\bm{z}} \bm{x}\cdot f(\bm{z}) \label{eq:adjoint:computation}
\end{align}
\end{subequations}
We can relate the adjoint to the transpose of the Jacobian matrix of a function, while the adjoint is related to the Jacobian matrix.
When composing multiple functions we find it convenient to superimpose the concatenation notation, that is, we have the equivalent notation
\begin{equation}
fg(\bm{x}) \equiv f(g(\bm{x})), \quad\quad fg \equiv f\circ g
\end{equation}

We call a transformation $f$ a linear transformation when $f:\mathcal{A}_{p,q}\mapsto \mathcal{A}_{p,q}$ is linear on its arguments and is vector valued of a vector variable. We reserve the word multilinear transformation for a transformation between multivectors.

\begin{definition}
A \textit{symmetric transformation} $f$ is a transformation for which the differential $\bar{f}$ is equal to the adjoint $\munderbar{f}$, that is 
\begin{equation*}
\bar{f}(\bm{x}) = \munderbar{f}(\bm{x})
\end{equation*}
A \textit{skew-symmetric transformation} $f$ is a transformation for which the differential $\bar{f}$ is equal to minus the adjoint $\munderbar{f}$, that is 
\begin{equation*}
\bar{f}(\bm{x}) = -\munderbar{f}(\bm{x})
\end{equation*}
\end{definition}
\end{fornextpaper}
\begin{fornextpaper}
\begin{theorem}
\label{theo:proj:symetry}
The projection operation is symmetric. The projection is defined as 
\begin{equation}
P(\bm{x}) = \bm{x}\cdot\bm{A}\bm{A}^{-1}
\end{equation}
with $\bm{A}\equiv\langle\bm{A}\rangle_k$ a $k$-vector. Then this theorem implies that 
\begin{equation}
\bar{P}(\bm{x}) = P(\bm{x})
\end{equation}
\end{theorem}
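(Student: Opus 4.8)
The plan is to exploit the standard characterization of the adjoint. From the definition $\bar{f}(\bm{x}) \equiv \partial_{\bm{z}}\,\bm{x}\cdot f(\bm{z})$ in~(\ref{eq:adjoint:computation}), letting $\bm{y}\cdot\partial_{\bm{z}}$ act as a directional derivative on the scalar $\bm{x}\cdot f(\bm{z})$ and using linearity yields the defining relation $\bm{y}\cdot\bar{f}(\bm{x}) = \bm{x}\cdot f(\bm{y})$. Since the bilinear form on $\mathcal{A}_{p,q}$ is nondegenerate, proving $\bar{P}(\bm{x}) = P(\bm{x})$ for all $\bm{x}$ is equivalent to proving the scalar identity $\bm{y}\cdot P(\bm{x}) = \bm{x}\cdot P(\bm{y})$ for all vectors $\bm{x},\bm{y}$.

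First I would rewrite the projection in geometric-product form using~(\ref{eq:proj:geo:prod}), namely $P(\bm{x}) = \tfrac{1}{2}\bm{x} - \tfrac{1}{2}(-1)^r\bm{A}\bm{x}\bm{A}^{-1}$ with $r$ the grade of the blade $\bm{A}$. Since $\bm{A}\bm{x}\bm{A}^{-1}$ is again grade one (blade conjugation preserves grade), $P(\bm{x})$ is a vector and $\bm{y}\cdot P(\bm{x}) = \langle \bm{y}P(\bm{x})\rangle$ by~(\ref{eq:scalar:inner}). Expanding gives $\bm{y}\cdot P(\bm{x}) = \tfrac{1}{2}\bm{y}\cdot\bm{x} - \tfrac{1}{2}(-1)^r\langle \bm{y}\bm{A}\bm{x}\bm{A}^{-1}\rangle$, and because $\bm{y}\cdot\bm{x}$ is symmetric in $\bm{x},\bm{y}$ the whole identity reduces to the single claim
\[
\langle \bm{y}\bm{A}\bm{x}\bm{A}^{-1}\rangle = \langle \bm{x}\bm{A}\bm{y}\bm{A}^{-1}\rangle.
\]

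To establish this I would take the reverse of the left-hand scalar, which leaves it unchanged since $\langle \bm{M}\rangle = \langle \bm{M}^\dagger\rangle$: using~(\ref{eq:dagger:prop:1}), $\bm{x}^\dagger=\bm{x}$, $\bm{y}^\dagger=\bm{y}$, and $(\bm{A}^{-1})^\dagger = (\bm{A}^\dagger)^{-1}$, one gets $\langle \bm{y}\bm{A}\bm{x}\bm{A}^{-1}\rangle = \langle (\bm{A}^\dagger)^{-1}\bm{x}\bm{A}^\dagger\bm{y}\rangle$. The key structural fact is that $\bm{A}$ is a blade, so by~(\ref{eq:dagger:contas}) $\bm{A}^\dagger = (-1)^{r(r-1)/2}\bm{A} = \pm\bm{A}$; the two signs appear as $(\bm{A}^\dagger)^{-1}\cdots\bm{A}^\dagger$ and hence square to $+1$, giving $\langle \bm{A}^{-1}\bm{x}\bm{A}\bm{y}\rangle$. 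Finally the cyclic property of the scalar part~(\ref{eq:commutation:property}) rearranges this to $\langle \bm{x}\bm{A}\bm{y}\bm{A}^{-1}\rangle$, which is exactly the right-hand side, completing the proof.

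The only delicate points — and where I would be most careful — are the bookkeeping of the reversion of $\bm{A}^{-1}$ together with the $(-1)^{r(r-1)/2}$ signs, and the justification that $\bm{A}\bm{x}\bm{A}^{-1}$ is purely grade one so that the scalar product may replace the inner product throughout. Neither is genuinely hard: the sign factor is $\pm1$ and enters squared, and the grade preservation of blade conjugation is just the versor/reflection property. Thus the entire content of the theorem is concentrated in the single scalar identity above, whose verification is two applications of reversion and cyclicity.
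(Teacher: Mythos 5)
Your proof is correct and follows essentially the same route as the paper's: both expand $P$ through the geometric product via (\ref{eq:proj:geo:prod}), reduce the claim to the symmetry of the cross term $\langle \bm{y}\bm{A}\bm{x}\bm{A}^{-1}\rangle$ in $\bm{x},\bm{y}$, and close the argument using cyclicity of the scalar part (\ref{eq:commutation:property}) together with the fact that a blade's reverse (hence its inverse) is a scalar multiple of itself. The only cosmetic difference is that you phrase the adjoint through the bilinear identity $\bm{y}\cdot\bar{P}(\bm{x})=\bm{x}\cdot P(\bm{y})$ and use reversion where the paper instead applies a second cyclic shift followed by $\bm{A}^{-1}\bm{x}\bm{A}=\bm{A}\bm{x}\bm{A}^{-1}$, which amounts to the same sign bookkeeping.
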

\begin{proof}
To show how the above theorem holds we use (\ref{eq:adjoint:computation}), (\ref{eq:inner:geo:prod}), (\ref{eq:commutation:property}) and (\ref{eq:inverse:simple:mlvectors}) to show that
\begin{equation*}
\begin{split}
\bar{P}(\bm{x}) &= \partial_{\bm{z}} \bm{x}\cdot P(\bm{z}) = \partial_{\bm{z}} \langle \bm{xz}\cdot\bm{AA}^{-1}\rangle = \tfrac{1}{2}\partial_{\bm{z}}\langle\bm{xzA} - (-1)^k\bm{xAzA}^{-1}\rangle\\
&= \tfrac{1}{2}\partial \langle\bm{xz}\rangle -(-1)^k\tfrac{1}{2}\partial_{\bm{z}}\langle \bm{xAzA}^{-1}\rangle = \tfrac{1}{2}\partial\bm{z}\cdot\bm{x} - (-1)^k\tfrac{1}{2}\partial_{\bm{z}}\bm{z}\cdot(\bm{A}^{-1}\bm{xA})\\
& = \tfrac{1}{2}(\bm{x} - (-1)^k\bm{A}^{-1}\bm{xA}) = \tfrac{1}{2}(\bm{xA} - (-1)^k\bm{Ax})\bm{A}^{-1} = \bm{x}\cdot\bm{A}\bm{A}^{-1}
\end{split}
\end{equation*}
\end{proof}

\begin{theorem}
In $\mathcal{G}_{p,q}$ any symmetric transformation $f$ can have the spectral decomposition

\begin{equation}
f(\bm{x}) =\sum_{k=1}^m \alpha_k \bm{A}_k\bm{x}\bm{A}_k^\dagger
\end{equation}
where $\bm{A}_i\bm{A}_j = \bm{A}_i\wedge \bm{A}_j$, for $i\neq j$ are orthogonal blades of arbitrary grade. Furthermore by using (\ref{eq:inner:geo:prod}) we can express the spectral decomposition as 
\begin{equation}
f(\bm{x}) =\sum_{k=1}^m \beta_k \bm{x}\cdot\bm{A}_k\bm{A}_k^\dagger
\end{equation}
for some scalars $\beta_k$.
% \todo[inline,color=red]{- Relate the scalars of both decompositions

% - When considering null basis elements we cannot define the projection.}
\end{theorem}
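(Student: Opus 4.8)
The plan is to reduce the statement to the classical spectral theorem and then translate the resulting eigenspace decomposition into the language of orthogonal blades. Since $f$ is linear its differential coincides with itself, $\munderbar{f}=f$, so the hypothesis $\bar{f}=\munderbar{f}$ says exactly that $f$ is self-adjoint for the bilinear form of $\mathcal{A}_{p,q}$, i.e. $\bm{x}\cdot f(\bm{y})=f(\bm{x})\cdot\bm{y}$ for all vectors. First I would invoke the spectral theorem to obtain an orthogonal decomposition $\mathcal{A}_{p,q}=V_1\oplus\cdots\oplus V_m$ into eigenspaces on which $f$ acts as multiplication by the distinct eigenvalues $\beta_1,\dots,\beta_m$. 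The only linear-algebra input needed is that eigenspaces for distinct eigenvalues are mutually orthogonal: if $f(\bm{u})=\beta_i\bm{u}$ and $f(\bm{v})=\beta_j\bm{v}$, self-adjointness gives $(\beta_i-\beta_j)\,\bm{u}\cdot\bm{v}=f(\bm{u})\cdot\bm{v}-\bm{u}\cdot f(\bm{v})=0$, hence $\bm{u}\cdot\bm{v}=0$ whenever $\beta_i\neq\beta_j$.

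Next I would encode each eigenspace by a blade. Taking the geometric product of an orthogonal basis of $V_k$ yields a blade $\bm{A}_k=\langle\bm{A}_k\rangle_{r_k}$ of grade $r_k=\dim V_k$ whose subspace is exactly $V_k$; this is where the ``arbitrary grade'' enters, through repeated eigenvalues. Because the subspaces are mutually orthogonal, each vector factor of $\bm{A}_i$ annihilates every factor of $\bm{A}_j$ under $\cdot$ for $i\neq j$, and since $\bm{a}\bm{b}=\bm{a}\wedge\bm{b}$ for orthogonal vectors all contraction terms drop out, giving $\bm{A}_i\bm{A}_j=\bm{A}_i\wedge\bm{A}_j$, the asserted orthogonality of the blades. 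Each $\bm{A}_k$ is simple, hence invertible with $\bm{A}_k^{-1}=\bm{A}_k^\dagger/\|\bm{A}_k\|^2$ by \eqref{eq:inverse:simple:mlvectors}, so the orthogonal projection onto $V_k$ is $P_k(\bm{x})=\bm{x}\cdot\bm{A}_k\bm{A}_k^{-1}$ as in \eqref{eq:projection:def}. Since the $V_k$ span the space the projections are complete, $\sum_k P_k=\mathrm{id}$, and because $f$ is the scalar $\beta_k$ on each $V_k$ we obtain
\begin{equation}
f(\bm{x})=\sum_{k=1}^{m}\beta_k P_k(\bm{x})=\sum_{k=1}^{m}\beta_k\,\bm{x}\cdot\bm{A}_k\bm{A}_k^{-1}=\sum_{k=1}^{m}\frac{\beta_k}{\|\bm{A}_k\|^2}\,\bm{x}\cdot\bm{A}_k\bm{A}_k^\dagger,
\end{equation}
which is the projection form after renaming the coefficients.

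To reach the sandwich form I would use \eqref{eq:proj:geo:prod}, which rearranged reads $\bm{A}_k\bm{x}\bm{A}_k^\dagger=(-1)^{r_k}\|\bm{A}_k\|^2\bigl(\bm{x}-2P_k(\bm{x})\bigr)$, so every sandwich is a scaled reflection across $V_k$. Writing $f=\sum_k\beta_k P_k$ and matching coefficients against the complete family $\{P_k\}$ (using $\sum_k P_k=\mathrm{id}$ to absorb the residual multiple of the identity) determines scalars $\alpha_k$ with $f(\bm{x})=\sum_k\alpha_k\bm{A}_k\bm{x}\bm{A}_k^\dagger$; one finds explicitly $\alpha_k=(-1)^{r_k}(S-\beta_k)/(2\|\bm{A}_k\|^2)$ with $S=(\sum_j\beta_j)/(m-2)$ for $m\neq 2$, the case $m=2$ being a harmless edge case handled directly. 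This coefficient bookkeeping is the only real computation and is routine once the eigenblades are in hand.

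The genuine obstacle is the spectral theorem itself in indefinite signature. For $q=0$, and more to the point for the positive-definite product on $\mathcal{G}_3$ used in Theorem~\ref{theo:symmetric:multi} and Theorem~\ref{theo:rot:opt}, a self-adjoint $f$ is automatically diagonalizable with real eigenvalues and nondegenerate, mutually orthogonal eigenspaces, so every $\bm{A}_k$ is invertible and the argument closes. When $q>0$ this can fail: a self-adjoint map may have complex eigenvalues, nontrivial Jordan blocks, or an eigenspace on which the bilinear form degenerates, in which case the corresponding $\bm{A}_k$ is null and $\bm{A}_k^{-1}$ does not exist. I would therefore state the result under the hypothesis that $f$ is real-diagonalizable with nondegenerate eigenspaces, which is automatic in the definite setting relevant here, and flag the indefinite-signature pathologies as the reason a fully general statement needs more care.
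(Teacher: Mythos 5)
The paper never actually proves this statement: the theorem sits in a block of material deferred to a follow-up paper and is stated without proof, so there is no in-paper argument to compare yours against. Judged on its own, your route --- classical spectral theorem, one blade per eigenspace, projections $P_k(\bm{x})=\bm{x}\cdot\bm{A}_k\bm{A}_k^{-1}$ via \eqref{eq:projection:def}, and then \eqref{eq:proj:geo:prod} to convert projections into sandwiches --- is the natural one, and the coefficient bookkeeping $\alpha_k=(-1)^{r_k}(S-\beta_k)/(2\|\bm{A}_k\|^2)$ checks out. Your main caveat is also well taken and is in fact a correction to the theorem as stated: for $q>0$ a self-adjoint map on $\mathcal{A}_{p,q}$ need not be real-diagonalizable and its eigenspaces can be null, so ``any symmetric transformation'' is too strong and some nondegeneracy hypothesis is genuinely required.

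Two small points deserve more care than you give them. First, the $m=2$ case is not ``harmless'': with exactly two blades the matching equations force $c_2-c_1=\beta_1$ and $c_1-c_2=\beta_2$, hence $\beta_1+\beta_2=0$, so a two-term sandwich sum of mutually orthogonal blades can only represent trace-balanced maps. The fix for $n=p+q\geq 3$ is to note that some eigenspace then has dimension $\geq 2$ and can be split into two orthogonal sub-blades with the same eigenvalue, restoring $m\geq 3$ and solvability (and $m=1$ is handled by the pseudoscalar, since $\bm{I}\bm{x}\bm{I}^\dagger\propto\bm{x}$); you should say this rather than ``handled directly,'' because in $\mathcal{G}_2$ with two distinct eigenvalues not summing to zero the sandwich form genuinely fails, which is a second (small) sense in which the theorem as written is not literally true. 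Second, completeness $\sum_k P_k=\mathrm{id}$ is exactly the diagonalizability-plus-nondegeneracy hypothesis you add at the end, so it would be cleaner to state that hypothesis up front rather than invoke ``the spectral theorem'' unconditionally and only qualify it afterwards.
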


\begin{theorem}
The following linear function is symmetric 
\begin{align}
g(\bm{x}) &= \sum_{k=1}^m \beta_k \langle \bm{B}_k\bm{x}\bm{B}_k\rangle_1 \label{eq:sym:func:grade:one:proj}
\end{align}
where $\bm{B}_k\in \mathcal{G}_{p,q}$ and $\beta_k$  are scalars.
\end{theorem}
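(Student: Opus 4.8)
The plan is to establish symmetry through the standard characterization that a linear map $g$ on $\mathcal{A}_{p,q}$ is symmetric precisely when the bilinear form $\bm{x}\cdot g(\bm{y})$ is symmetric, i.e. $\bm{x}\cdot g(\bm{y}) = \bm{y}\cdot g(\bm{x})$ for all vectors $\bm{x},\bm{y}$. To connect this to the definition $\bar{g}=\munderbar{g}$, I would first note that since $g$ is already linear its differential is itself, $\munderbar{g}=g$, and then compute the adjoint from its definition~\eqref{eq:adjoint:computation}: contracting $\bar{g}(\bm{x}) = \partial_{\bm{z}}\,\bm{x}\cdot g(\bm{z})$ with an arbitrary vector $\bm{y}$ and using that $\bm{x}\cdot g(\bm{z})$ is linear in $\bm{z}$ gives $\bm{y}\cdot\bar{g}(\bm{x}) = \bm{x}\cdot g(\bm{y})$. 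Hence $\bar{g}=\munderbar{g}=g$ is equivalent to symmetry of the form $\bm{x}\cdot g(\bm{y})$.

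Next I would evaluate $\bm{x}\cdot g(\bm{y})$ directly. Each term $\langle \bm{B}_k\bm{y}\bm{B}_k\rangle_1$ is a vector, so $g(\bm{y})$ is a vector and $\bm{x}\cdot g(\bm{y}) = \langle \bm{x}\,g(\bm{y})\rangle$. Because $\bm{x}$ is of grade one, the scalar part of $\bm{x}$ times any multivector only depends on that multivector's grade-one component, by~\eqref{eq:scalar:diff:grade}; this lets me drop the inner projection:
\begin{equation*}
\bm{x}\cdot g(\bm{y}) = \sum_{k=1}^m \beta_k\,\langle \bm{x}\,\langle \bm{B}_k\bm{y}\bm{B}_k\rangle_1\rangle = \sum_{k=1}^m \beta_k\,\langle \bm{x}\bm{B}_k\bm{y}\bm{B}_k\rangle .
\end{equation*}

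The crux is then a single application of the cyclic invariance of the scalar part~\eqref{eq:commutation:property}: grouping the product $\bm{x}\bm{B}_k$ as one factor and cycling gives $\langle \bm{x}\bm{B}_k\bm{y}\bm{B}_k\rangle = \langle (\bm{x}\bm{B}_k)\,\bm{y}\,\bm{B}_k\rangle = \langle \bm{y}\,\bm{B}_k\,(\bm{x}\bm{B}_k)\rangle = \langle \bm{y}\bm{B}_k\bm{x}\bm{B}_k\rangle$. Summing over $k$ yields $\bm{x}\cdot g(\bm{y}) = \bm{y}\cdot g(\bm{x})$, which is exactly the symmetry condition. I expect there is no genuine obstacle here: the only points requiring care are the reduction to the bilinear form and the justification for removing the inner grade-one projection, after which the cyclic property of the scalar part does all the work in one step.
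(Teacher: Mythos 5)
Your proposal is correct and follows essentially the same route as the paper: the paper likewise reduces to the scalar identity $\bm{a}\cdot\langle\bm{A}\rangle_1=\langle\bm{aA}\rangle$ to drop the grade-one projection and then applies the cyclic property \eqref{eq:commutation:property} to swap $\bm{x}$ and $\bm{z}$ inside the scalar part. The only cosmetic difference is that the paper carries out the adjoint computation $\bar{g}(\bm{z})=\partial_{\bm{z}}\,\bm{x}\cdot g(\bm{z})$ explicitly, whereas you argue via the equivalent symmetry of the bilinear form $\bm{x}\cdot g(\bm{y})$.
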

\begin{proof}
To show that $g$ is symmetric we start by pointing out that because of (\ref{eq:scalar:inner}) we have $\bm{a}\cdot\langle\bm{A}\rangle_1 = \langle\bm{aA}\rangle$ then also using (\ref{eq:commutation:property}) we can easily show that 
\begin{align*}
\bar{g}(\bm{z}) &= \partial_{\bm{z}} \bm{x}\cdot g(\bm{z}) = \partial_{\bm{z}}\sum_{k=1}^m \beta_k\bm{x}\cdot \langle \bm{B}_k\bm{zB}_k\rangle_1= \partial_{\bm{z}}\sum_{k=1}^m \langle\bm{x} \bm{B}_k\bm{zB}_k\rangle \\
&= \partial_{\bm{z}}\sum_{k=1}^m \langle\bm{zB}_k\bm{x} \bm{B}_k\rangle = \partial_{\bm{z}}\sum_{k=1}^m \beta_k\bm{z}\cdot \langle \bm{B}_k\bm{xB}_k\rangle_1 = \sum_{k=1}^m \beta_k \langle \bm{B}_k\bm{xB}_k\rangle_1
\end{align*}
\end{proof}

\begin{theorem}[Decomposition of bivectors and skew symmetric transformations]
\label{theo:skew:decomp}
Let $\bm{B} = \bm{B}_1 + \bm{B}_2 + \cdots + \bm{B}_s$ be a bivector expressed as the sum of orthogonal $2$-blades. A bivector $\bm{B}$ can be decomposed into a set of orthogonal $2$-blades by finding the decomposition of 
\begin{equation}
f(\bm{x}) = \bm{x}\cdot\bm{B} \label{eq:skew:symmetric}
\end{equation}
furthermore the decomposition of $f$ can be obtained by forming the symmetric transformation 
\begin{equation}
F(\bm{x}) = \bar{f}\munderbar{f}(\bm{x}) = (\bm{x}\cdot\bm{B})\cdot\bm{B}^\dagger\label{eq:symmetric:from:skew}
\end{equation}
and finding the eigenvalues and eigenvectors of $F$. The decomposition of (\ref{eq:skew:symmetric}) is only equivalent to the decomposition of (\ref{eq:symmetric:from:skew}) if the eigenvalues of $F$ associated with each $\bm{B}_i$ are unique.
\end{theorem}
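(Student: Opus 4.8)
The plan is to replace the skew map $f$ of \eqref{eq:skew:symmetric} by the symmetric map $F$ of \eqref{eq:symmetric:from:skew}, solve the resulting real eigenproblem, and then locate precisely where the correspondence between the invariant planes of $f$ and the eigenspaces of $F$ fails.

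\textbf{Step 1 (skew-symmetry of $f$ and the form of $F$).} Since $f(\bm{x})=\bm{x}\cdot\bm{B}$ is linear, its differential coincides with itself, $\munderbar{f}=f$. Writing $\bm{z}\cdot\bm{B}=\tfrac12(\bm{z}\bm{B}-\bm{B}\bm{z})$ by \eqref{eq:inner:geo:prod} and using the cyclic invariance of the scalar part \eqref{eq:commutation:property}, I would show $\bm{x}\cdot(\bm{z}\cdot\bm{B})=(\bm{x}\wedge\bm{z})\cdot\bm{B}$, which is antisymmetric under $\bm{x}\leftrightarrow\bm{z}$. By the defining relation $\bm{z}\cdot\bar{f}(\bm{x})=\bm{x}\cdot f(\bm{z})$ of the adjoint, this antisymmetry is exactly $\bar{f}=-f$, so $f$ is skew-symmetric. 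Because \eqref{eq:dagger:contas} gives $\bm{B}^\dagger=-\bm{B}$ for a bivector, we get $\bar{f}(\bm{y})=-\bm{y}\cdot\bm{B}=\bm{y}\cdot\bm{B}^\dagger$, and hence $F(\bm{x})=\bar{f}\munderbar{f}(\bm{x})=(\bm{x}\cdot\bm{B})\cdot\bm{B}^\dagger$, as in \eqref{eq:symmetric:from:skew}. As a composite $\bar{f}f$, the map $F$ is symmetric, so it admits a real eigendecomposition; this is precisely why $F$, and not $f$ itself, is diagonalised.

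\textbf{Step 2 (invariant planes and eigenvalues).} Let $V_i$ denote the $2$-plane carried by the blade $\bm{B}_i$. Because the $\bm{B}_i$ are mutually orthogonal, any $\bm{x}\in V_i$ is orthogonal to both factors of every $\bm{B}_j$ with $j\neq i$, so $\bm{x}\cdot\bm{B}_j=0$ and $f(\bm{x})=\bm{x}\cdot\bm{B}_i\in V_i$. Moreover, for $\bm{x}\in V_i$ one has $\bm{x}\wedge\bm{B}_i=0$, so $\bm{x}\cdot\bm{B}_i=\bm{x}\bm{B}_i$ and therefore $F(\bm{x})=\bm{x}\bm{B}_i\bm{B}_i^\dagger=\|\bm{B}_i\|^2\,\bm{x}$, since $\bm{B}_i\bm{B}_i^\dagger$ is the scalar $\|\bm{B}_i\|^2$ for a blade. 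Vectors orthogonal to every $V_i$ lie in $\ker f=\ker F$. Thus each $V_i$ is a two-dimensional eigenspace of $F$ with eigenvalue $\lambda_i=\|\bm{B}_i\|^2$, and the orthogonal complement is the zero eigenspace. Diagonalising the symmetric $F$ therefore returns the planes $V_i$, from each of which $\bm{B}_i$ is recovered as its unit $2$-blade scaled by $\lambda_i$; comparing with the spectral form of a symmetric transformation stated above makes the reconstruction explicit.

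\textbf{Step 3 (the degeneracy obstruction).} When the $\lambda_i$ are pairwise distinct, the eigenspaces of $F$ are exactly the individual planes $V_i$ together with the kernel, so the eigendecomposition of $F$ reproduces the blade decomposition of $\bm{B}$ obtained from $f$, and the two are equivalent. If instead $\lambda_i=\lambda_j$ for some $i\neq j$, then $V_i\oplus V_j$ becomes a single four-dimensional eigenspace on which $F$ acts as $\lambda_i$ times the identity; every pair of orthogonal $2$-planes spanning this space is then an equally valid eigenbasis of $F$, so $F$ cannot single out $V_i$ and $V_j$. Consequently the decomposition read off from $F$ need not coincide with that determined by $\bm{B}$ (the familiar isoclinic ambiguity), which is exactly the stated requirement that the eigenvalues of $F$ attached to each $\bm{B}_i$ be unique. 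I expect this degenerate case, together with the pseudo-Euclidean subtleties of null or negative-norm blades in Step 2, to be the main obstacle: one must argue carefully that equal eigenvalues genuinely destroy the uniqueness of the plane decomposition rather than merely complicate its extraction.
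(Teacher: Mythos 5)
The paper does not actually prove this theorem: the statement lives inside an excluded \texttt{fornextpaper} block and its \texttt{proof} environment is left completely empty, so there is no authorial argument to compare yours against. Judged on its own, your three-step argument is essentially correct where it applies. Step~1 (the identity $\bm{x}\cdot(\bm{z}\cdot\bm{B})=(\bm{x}\wedge\bm{z})\cdot\bm{B}$ giving $\bar{f}=-f$, hence $F=\bar{f}\munderbar{f}=(\,\cdot\,\cdot\bm{B})\cdot\bm{B}^\dagger$ symmetric) and Step~2 (each plane $V_i$ of $\bm{B}_i$ is $f$-invariant because orthogonality of the blades kills the cross terms, and $F|_{V_i}=\|\bm{B}_i\|^2\,\mathrm{id}$ since $\bm{x}\wedge\bm{B}_i=0$ forces $\bm{x}\cdot\bm{B}_i=\bm{x}\bm{B}_i$ and $\bm{B}_i\bm{B}_i^\dagger$ is scalar) are sound, and Step~3 correctly identifies repeated eigenvalues as the precise failure mode, which is all the theorem claims.

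The two caveats you flag at the end are, however, genuine gaps rather than cosmetic ones, and in this paper's setting ($\mathcal{G}_{4,1}$) they are not edge cases. First, your appeal to ``symmetric, hence real eigendecomposition'' uses the Euclidean spectral theorem; under the indefinite metric of $\mathcal{A}_{p+1,q+1}$ a self-adjoint map need not be diagonalizable over $\mathbb{R}$, so the existence of the eigenbasis of $F$ needs a separate argument (or a restriction of signature). Second, if some $\bm{B}_i$ is a null $2$-blade (e.g.\ $\bm{e}_\infty\wedge\bm{t}$, which generates exactly the parabolic rotations/translations the paper uses), then $\|\bm{B}_i\|^2=0$ and $F|_{V_i}$ is nilpotent rather than zero; its eigenvalue $0$ coincides with that of $\ker f$, so the eigenspaces of $F$ can no longer isolate $V_i$ even when the nonzero eigenvalues are all distinct. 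A complete proof must either exclude null summands explicitly or strengthen the uniqueness hypothesis to cover the kernel, and the paper's own (toberemoved) discussion of parabolic rotations confirms this is exactly where the method breaks. So: correct skeleton, but the degenerate cases you deferred are the actual mathematical content still owed.
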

\begin{proof}[Proof of Theorem \ref{theo:skew:decomp}]

\end{proof}
\end{fornextpaper}

\begin{fornextpaper}
\subsection{Orthogonal Transformations}
\begin{subequations}
\begin{definition}
A linear transformation $U=U(\bm{x})$ of $\mathcal{A}_{p,q}$ into itself, is said to be an orthogonal transformation of $\mathcal{A}_{p,q}$ if 
% In a $\mathcal{G}_{p,q}$ geometric algebra we define an orthogonal transformation as a linear transformation that preserves the norm squared (\ref{norm:negative:negative:sigs}). If we let $U\in O(p,q)$ be an orthogonal transformation then the following always holds
\begin{equation}
U(\bm{x})\cdot U(\bm{y}) = \bm{x}\cdot\bm{y} \label{eq:definition:orthogonal:transformation}
\end{equation}
for each $\bm{x}$ and $\bm{y}$ in $\mathcal{A}_{p,q}$. The group of all orthogonal transformations of  $\mathcal{A}_{p,q}$ is called the orthogonal group of $\mathcal{A}_{p,q}$ and denoted by $O(\mathcal{A}_{p,q})$ or more briefly, by $O(p,q)$.
\end{definition}
In particular orthogonal transformations are magnitude preserving, that is
\begin{equation}
\|\munderbar{U}(\bm{x})\|^2 = \|\bm{x}\|^2 \label{eq:ortho:norm:square}
\end{equation}
and they satisfy 
\begin{equation}
 {U}(\bm{x})\cdot\bm{y} = \bm{x}\cdot \bar{U}(\bm{y}) \label{eq:ortho:scalar:prod}
\end{equation}
Furthermore, the inverse of an orthogonal transformation is its transpose
\begin{equation}
{U}^{-1}(\bm{x}) = \bar{U}(\bm{x})
\end{equation}
\end{subequations}
Any orthogonal transformation can be expressed as the composition of at most $n=p+q$ reflections of invertible vectors $\bm{u}_i\in\mathcal{A}_{p,q}$, with $|\bm{u}_i| \neq 0$, $i=1,2,\dots,s$ where $s\leqslant n$, as such
\begin{equation}
U(\bm{x}) = (-1)^s\bm{u}_1\bm{u}_2\cdots\bm{u}_s\bm{x}\bm{u}_s^{-1}\cdots\bm{u}_{2}^{-1}\bm{u}_1^{-1}\label{eq:reflections:orthotransformation}
\end{equation}
Let $\bm{U}=\bm{u}_1\bm{u}_2\cdots\bm{u}_s$ then we 
can always write an orthogonal transformation as the following ternary product (aka sandwich product)
\begin{subequations}
\begin{equation}
U(\bm{x}) = (-1)^s\bm{UxU}^{-1}\label{eq:decomposition:UxU}
\end{equation}
where $\bm{UU}^{-1} = 1$.  
% The sandwich product of a multivector $\bm{A}$ with a multivector $\bm{X}$ can be defined as the following ternary product
% \begin{equation}
% \bm{AXA}^{-1}\label{eq:sandwich:product}
% \end{equation}

Note that the inverse of $\bm{U}$ can be written with respect to a scalar and the reverse operation, then 
\begin{equation}
\bm{U}^{-1} = \frac{\bm{U}^\dagger}{\|\bm{U}\|^2}
\end{equation}
The adjoint of~\eqref{eq:decomposition:UxU} can be determined using (\ref{eq:adjoint:computation}), (\ref{eq:commutation:property}) and (\ref{eq:scalar:inner}), it is expressed as
\begin{equation}
\bar{U}(\bm{x}) = (-1)^s\bm{U}^{-1}\bm{xU}
\end{equation}
\end{subequations}

Note that (\ref{eq:decomposition:UxU}) is easily shown to satisfy (\ref{eq:ortho:scalar:prod}), replacing the above into (\ref{eq:ortho:scalar:prod}) and using (\ref{eq:commutation:property}) we find that 
\begin{equation}
{U}(\bm{x})\cdot\bm{y} = \langle (-1)^s\bm{UxU}^{-1} \bm{y}\rangle  = \langle \bm{x} (-1)^s\bm{U}^{-1} \bm{yU}\rangle =  \bm{x}\cdot \bar{U}(\bm{y})
\end{equation}

\begin{definition}
We call a multivector $\bm{U}$ a versor if it can be factored into a product of invertible vectors. The multivector 
\begin{equation}
    \bm{U} = \bm{u}_1\bm{u}_2\cdots\bm{u}_m
\end{equation}
is a versor when $|\bm{u}_i| \neq 0$, for $i=1,2,\cdots,m$
\end{definition}

\begin{definition}
A simple reflection along some invertible vector $\bm{u}$, $|\bm{u}| \neq 0$, is the simplest kind of orthogonal transformation 
% where the element acting on $\bm{x}$ is a vector. Let $\bm{u}\in\mathcal{A}_{p,q}$ be unitary, that is $|\bm{u}| = 1$, then when $U$ is a reflection we write it as
\begin{equation}
U(\bm{x}) = -\bm{uxu}^{-1}\label{eq:reflection}
\end{equation}
\end{definition}
A reflection is an ``involutory'' transformation, that is, its inverse equals to itself. Being an involuntary and orthogonal transformation means that it is also symmetric. Expanding the geometric product in terms of the outer and inner products, the reflection can be written as
\begin{equation}
U(\bm{x}) = \bm{x}\wedge\bm{uu}^{-1} - \bm{x}\cdot \bm{uu}^{-1}
\end{equation}
making it clear that the eigenblades of $U$ are $\bm{Iu}$ and $\bm{u}$ with eigenvalues $+1$ and $-1$, respectively.
% \begin{theorem}

% Furthermore we have from (\ref{eq:decomposition:UxU}) that $\bm{U} = \bm{u}_1\bm{u}_2\cdots\bm{u}_m$.
% The form (\ref{eq:reflections:orthotransformation}) of representing orthogonal transformations is not unique. When $m$ is even the determinant of $U$ is one and we call $U$ a special orthogonal transformation $U\in SO(p,q)$, when $m$ is odd then the determinant of $U$ is $-1$. 
% \end{theorem}

\begin{definition}
\label{def:unitary:multivector}
A unitary multivector $\bm{X}$ is a multivector of unit magnitude that is, if $\bm{X}$ is unitary then
\begin{equation}
    |\bm{X}| = 1
\end{equation}
\end{definition}
By definition (\ref{def:unitary:multivector}) and recalling that a versor satisfies $\bm{UU}^\dagger = \langle \bm{UU}^\dagger\rangle = \|\bm{U}\|^2$, then a unitary versor is a versor that satisfies $\bm{UU}^\dagger = \pm 1$.

\begin{definition}
A versor which can be expressed as the product of two invertible vectors will be called a simple rotor, and the corresponding rotation will be called a simple rotation. 
We may also refer to simple rotors in the context of the spectral decomposition as `complex' eigenvalues.
\end{definition}
Consider a simple rotor 
\begin{equation}
\bm{R} = \bm{uv} = \bm{u}\cdot\bm{v} + \bm{u}\wedge\bm{v}
\end{equation}

% \begin{subequations}
% \begin{align}
% \bm{R} &= e^{\theta\bm{B}} = \cos\theta + \bm{B}\sin\theta,\ \text{if}\ \bm{B}^2 = -1\\
% \bm{R} &= e^{\theta\bm{B}} = \cosh\theta + \bm{B}\sinh\theta,\ \text{if}\ \bm{B}^2 = 1\\
% \bm{R} &= e^{\theta\bm{B}} = 1 + \theta\bm{B},\ \text{if}\ \bm{B}^2 = 0\\
% \end{align}
% \end{subequations}
The rotor above can be written in exponential form $\bm{R}=\rho e^{\theta\bm{B}}$. Note that depending on the type of bivector $\bm{B}$, we have different types of rotations. For elliptic, hyperbolic and parabolic rotations respectively, we have the special forms for the exponential 
\begin{equation}
e^{\theta\bm{B}} = 
\begin{cases}
    \cos\theta + \bm{B}\sin\theta, & \text{if}\ \bm{B}^2 = -1\\
    \cosh\theta + \bm{B}\sinh\theta, & \text{if}\ \bm{B}^2 = 1\\
    1 + \theta\bm{B}, & \text{if}\ \bm{B}^2 = 0
\end{cases}
\end{equation}

\begin{lemma}
% \todo{Need to prove this}
\label{lemma:rotations:projections}
Let $\bm{R} = e^{\theta\bm{B}/2}$ be a simple elliptic or hyperbolic rotor, and let $R(\bm{x}) = \bm{RxR}^\dagger$ be its corresponding simple rotation. We can write a rotation in terms of the components in the plane of rotation and the components orthogonal to the plane of rotation, then
\begin{equation}
R(\bm{x}) = RP(\bm{x}) + P_\perp(\bm{x})
\end{equation}
with $P(\bm{x}) = \bm{x}\cdot\bm{B}\bm{B}^{-1}$ and $P_\perp(\bm{x}) = \bm{x}\wedge\bm{B}\bm{B}^{-1}$
Furthermore the projected component $P(\bm{x}) = \bm{x}\cdot\bm{B}\bm{B}^\dagger$ commutes with $\bm{R}$, thus we can also write a rotation as
\begin{equation}
\bm{RxR}^\dagger = \bm{R}^2P(\bm{x}) + P_\perp(\bm{x})\label{eq:proj:rej:rotation}
\end{equation}
\end{lemma}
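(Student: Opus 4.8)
The plan is to split the argument $\bm{x}$ into its component in the plane of $\bm{B}$ and its component orthogonal to that plane, and then to show that $\bm{R}$ acts as a genuine rotation on the former while fixing the latter. First I would write $\bm{x} = P(\bm{x}) + P_\perp(\bm{x})$ with $P(\bm{x}) = \bm{x}\cdot\bm{B}\bm{B}^{-1}$ and $P_\perp(\bm{x}) = \bm{x}\wedge\bm{B}\bm{B}^{-1}$; this is exactly the decomposition recorded after \eqref{eq:proj:geo:prod}, since $\bm{x} = \bm{x}\bm{B}\bm{B}^{-1} = (\bm{x}\cdot\bm{B} + \bm{x}\wedge\bm{B})\bm{B}^{-1}$ by \eqref{eq:geo:inner:outer:prods}.

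The key algebraic step is to establish the (anti)commutation of these two pieces with $\bm{B}$. Using \eqref{eq:proj:geo:prod} with $r=2$ gives $P(\bm{x}) = \tfrac{1}{2}(\bm{x} - \bm{B}\bm{x}\bm{B}^{-1})$, and since a $2$-blade squares to a scalar (so $\bm{B}^2\bm{x}\bm{B}^{-1} = \bm{x}\bm{B}$), multiplying on the left by $\bm{B}$ yields $\bm{B}P(\bm{x}) = \tfrac{1}{2}(\bm{B}\bm{x} - \bm{x}\bm{B}) = -P(\bm{x})\bm{B}$, i.e. the in-plane part anticommutes with $\bm{B}$. The same manipulation applied to $P_\perp(\bm{x}) = \tfrac{1}{2}(\bm{x} + \bm{B}\bm{x}\bm{B}^{-1})$ gives $\bm{B}P_\perp(\bm{x}) = P_\perp(\bm{x})\bm{B}$, so the orthogonal part commutes with $\bm{B}$.

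Next I would promote these relations from $\bm{B}$ to $\bm{R} = e^{\theta\bm{B}/2}$. Since $\bm{B}^\dagger = -\bm{B}$ for a bivector, one has $\bm{R}^\dagger = e^{-\theta\bm{B}/2}$. Expanding in the power series (or using the explicit elliptic/hyperbolic forms quoted just before the lemma), anticommutation of $P(\bm{x})$ with $\bm{B}$ converts each factor $\bm{B}^n$ into a sign and gives $P(\bm{x})\bm{R}^\dagger = \bm{R}P(\bm{x})$, while commutation of $P_\perp(\bm{x})$ with $\bm{B}$ gives $\bm{R}P_\perp(\bm{x})\bm{R}^\dagger = P_\perp(\bm{x})\bm{R}\bm{R}^\dagger = P_\perp(\bm{x})$, using $\bm{R}\bm{R}^\dagger = 1$.

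Finally I would assemble the result. By linearity, $R(\bm{x}) = \bm{R}P(\bm{x})\bm{R}^\dagger + \bm{R}P_\perp(\bm{x})\bm{R}^\dagger$; the second term equals $P_\perp(\bm{x})$ by the previous step, and the first is $R(P(\bm{x}))$, which already yields the first identity $R(\bm{x}) = RP(\bm{x}) + P_\perp(\bm{x})$. Substituting $P(\bm{x})\bm{R}^\dagger = \bm{R}P(\bm{x})$ gives $\bm{R}P(\bm{x})\bm{R}^\dagger = \bm{R}^2P(\bm{x})$, hence $\bm{R}\bm{x}\bm{R}^\dagger = \bm{R}^2 P(\bm{x}) + P_\perp(\bm{x})$, which is \eqref{eq:proj:rej:rotation}. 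The main obstacle I anticipate is purely bookkeeping: handling the sign from $\bm{B}^\dagger = -\bm{B}$ consistently and reconciling the $\bm{B}^{-1}$ appearing in $P$ with the $\bm{B}^\dagger$ written in the ``furthermore'' clause (they agree up to normalization of $\bm{B}$). The commutation route through \eqref{eq:proj:geo:prod} is what lets the argument avoid treating the elliptic case $\bm{B}^2=-1$ and the hyperbolic case $\bm{B}^2=1$ separately.
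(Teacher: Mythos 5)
Your proposal is correct and follows essentially the same route as the paper's proof: decompose $\bm{x}$ into $P(\bm{x})+P_\perp(\bm{x})$, show that the in-plane part anticommutes with $\bm{B}$ while the orthogonal part commutes, and then lift these relations to the rotor to obtain $\bm{R}P(\bm{x})=P(\bm{x})\bm{R}^\dagger$ and $\bm{R}P_\perp(\bm{x})\bm{R}^\dagger=P_\perp(\bm{x})$. The only cosmetic difference is that you derive the (anti)commutation via the geometric-product form of the projection in \eqref{eq:proj:geo:prod}, whereas the paper uses the reordering rules \eqref{eq:reorder:rules:inner} and \eqref{eq:reorder:rules:wedge}; both are equivalent one-line computations, and your remark that the $\bm{B}^\dagger$ in the ``furthermore'' clause should read $\bm{B}^{-1}$ up to normalization is a fair catch.
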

\begin{proof}
Let $\bm{x} = P(\bm{x}) + P_{\perp}(\bm{x})$. Then using \eqref{eq:reorder:rules:inner} and \eqref{eq:reorder:rules:wedge} we show that 
\begin{subequations}
\begin{align}
\bm{B}P(\bm{x}) &= \bm{B}(\bm{x}\cdot\bm{B}\bm{B}^{-1}) = \bm{B}\bm{B}^{-1}\bm{B}\cdot\bm{x} = \bm{B}\cdot\bm{x} = -\bm{x}\cdot\bm{B} = -\bm{x}\cdot\bm{BB}^{-1}\bm{B} =- P(\bm{x})\bm{B}\\
\bm{B}P_{\perp}(\bm{x}) &= \bm{B}(\bm{x}\wedge\bm{B}\bm{B}^{-1}) = \bm{B}\bm{B}^{-1}\bm{B}\wedge\bm{x} = \bm{B}\wedge\bm{x} = \bm{x}\wedge\bm{B} = \bm{x}\wedge\bm{BB}^{-1}\bm{B} = P_\perp(\bm{x})\bm{B}
\end{align}
then we have 
\begin{align}
\bm{R}P(\bm{x}) &= (\alpha + \beta\bm{B})P(\bm{x}) = P(\bm{x})(\alpha - \beta\bm{B}) = P(\bm{x})\bm{R}^\dagger\\
\bm{R}P_\perp(\bm{x}) &= (\alpha + \beta\bm{B})P_\perp(\bm{x}) = P_\perp(\bm{x})(\alpha + \beta\bm{B}) = P_\perp(\bm{x})\bm{R}
\end{align}
and
\begin{align}
\bm{R}P(\bm{x})\bm{R}^\dagger &= \bm{RR}P(\bm{x}) = \bm{R}^2P(\bm{x})\\
\bm{R}P_\perp(\bm{x})\bm{R}^\dagger &= \bm{RR}^\dagger P_\perp(\bm{x}) = P_\perp(\bm{x})
\end{align}
\end{subequations}
\end{proof}
\begin{lemma}
\label{lemma:rotation:rotor:from:eigenvector}
Let $R(\bm{x}) = \bm{RxR}^\dagger$ be a simple rotation, then we can always find a vector $\bm{a}$ such that 
\begin{equation}
    R(\bm{a})\bm{a}^{-1} = \bm{R}^2\label{eq:R(a)a-1:Rsq}
\end{equation}
\end{lemma}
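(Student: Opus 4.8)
The plan is to reduce the claim to the projection/rejection decomposition already established in Lemma~\ref{lemma:rotations:projections}. That lemma provides, for a simple elliptic or hyperbolic rotor $\bm{R}=e^{\theta\bm{B}/2}$, the identity \eqref{eq:proj:rej:rotation}, namely $\bm{RxR}^\dagger = \bm{R}^2 P(\bm{x}) + P_\perp(\bm{x})$, where $P(\bm{x})=\bm{x}\cdot\bm{B}\bm{B}^{-1}$ projects onto the plane of the $2$-blade $\bm{B}$ and $P_\perp(\bm{x})=\bm{x}\wedge\bm{B}\bm{B}^{-1}$ is the orthogonal rejection. The idea is simply to feed into the rotation a vector lying entirely in that plane, so that the rejection term vanishes and only the $\bm{R}^2$ factor survives.

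Concretely, I would choose $\bm{a}$ to be any invertible vector contained in the plane spanned by $\bm{B}$, that is, a nonzero vector with $\bm{a}\wedge\bm{B}=0$ and $\bm{a}^2\neq 0$. For such $\bm{a}$ the rejection is $P_\perp(\bm{a})=\bm{a}\wedge\bm{B}\,\bm{B}^{-1}=0$, whence $\bm{a}=P(\bm{a})+P_\perp(\bm{a})=P(\bm{a})$. Substituting into \eqref{eq:proj:rej:rotation} then gives $R(\bm{a})=\bm{R}^2 P(\bm{a})=\bm{R}^2\bm{a}$. Since $\bm{a}$ was taken invertible, right-multiplication by $\bm{a}^{-1}$ yields $R(\bm{a})\bm{a}^{-1}=\bm{R}^2\bm{a}\bm{a}^{-1}=\bm{R}^2$, which is exactly \eqref{eq:R(a)a-1:Rsq}.

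The one point that requires care, and the main obstacle, is guaranteeing the existence of an invertible vector in the plane of $\bm{B}$. In the elliptic case $\bm{B}^2=-1$ the plane is definite, so every nonzero vector is invertible and the choice is free. In the hyperbolic case $\bm{B}^2=1$ the plane carries a split signature and contains two null directions; here I would argue that the complement of these finitely many null lines is still nonempty, so a non-null (hence invertible) $\bm{a}$ can nonetheless be selected. This confines the argument to the elliptic and hyperbolic simple rotors covered by Lemma~\ref{lemma:rotations:projections}; the degenerate parabolic case $\bm{B}^2=0$ falls outside this decomposition and would have to be treated separately.
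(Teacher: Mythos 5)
Your proof is correct and follows essentially the same route as the paper: both pick a vector $\bm{a}$ with $\bm{a}=P(\bm{a})$ in the plane of $\bm{B}$, apply \eqref{eq:proj:rej:rotation} to get $R(\bm{a})=\bm{R}^2\bm{a}$, and right-multiply by $\bm{a}^{-1}$. Your extra care about the existence of an invertible vector in the hyperbolic (split-signature) plane is a welcome refinement the paper omits, and like the paper you correctly flag that the parabolic case falls outside this argument.
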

\begin{proof}
Let $\bm{a}$ be some vector such that $\bm{a}=P(\bm{a})$, then from {\it Lemma}~\ref{lemma:rotations:projections} and using~\eqref{eq:proj:rej:rotation} we have
\begin{equation}
R(\bm{a}) = \bm{R}^2P(\bm{a}) + P_\perp(\bm{a}) = \bm{R}^2\bm{a}
\end{equation}
multiplying on the right by $\bm{a}^{-1}$ we find that~\eqref{eq:R(a)a-1:Rsq} holds.
\end{proof}

Even thought we proved {\it Lemma} \ref{lemma:rotation:rotor:from:eigenvector} only for simple elliptical and hyperbollic rotations through {\it Lemma}~\ref{lemma:rotations:projections}, {\it Lemma} \ref{lemma:rotation:rotor:from:eigenvector} also holds true for simple parabolic rotations. 
\end{fornextpaper}
\begin{toberemoved}
{
% \color{red}\todo[color=red]{All of this can be in the future eliminated or just transformed to something else}
\begin{theorem}
Any orthogonal transformation which also includes parabolic simple rotations can be written in the form
\begin{equation}
U(\bm{x}) = \sum_{k=1}^m \bm{R}_kP_k(\bm{x})\bm{R}_k^\dagger
\end{equation}
where $\bm{R}_k$ is a simple rotor. And $P_k$ is the projection to some space.
\end{theorem}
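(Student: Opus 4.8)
The plan is to reduce a general (proper) orthogonal transformation to a superposition of simple rotations on mutually orthogonal invariant subspaces, and then read off the stated formula from how each simple rotor acts on its own plane. First I would use the versor representation \eqref{eq:decomposition:UxU}, writing $U(\bm{x}) = (-1)^s\bm{U}\bm{x}\bm{U}^{-1}$ with $\bm{U} = \bm{u}_1\bm{u}_2\cdots\bm{u}_s$; for a rotor in the identity component this is $\bm{U} = e^{\bm{B}/2}$ for some bivector generator $\bm{B}$, and the improper case is recovered by composing with a single simple reflection \eqref{eq:reflection}.

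The central step is the splitting of the generator $\bm{B} = \bm{B}_1 + \bm{B}_2 + \cdots + \bm{B}_m$ into commuting, mutually orthogonal simple bivectors, which is exactly the eigenblade decomposition of the skew-symmetric map $\bm{x}\mapsto\bm{x}\cdot\bm{B}$ furnished by Theorem \ref{theo:skew:decomp}. Since the blades $\bm{B}_k$ commute and their planes are orthogonal, the exponential factorises as $e^{\bm{B}/2} = \prod_{k}\bm{R}_k$ with each $\bm{R}_k = e^{\bm{B}_k/2}$ a simple rotor. Writing $P_k(\bm{x}) = \bm{x}\cdot\bm{B}_k\bm{B}_k^{-1}$ as in \eqref{eq:projection:def} and letting $P_0$ project onto the common orthogonal complement of all the planes, Lemma \ref{lemma:rotations:projections} (through \eqref{eq:proj:rej:rotation}) shows that $\bm{R}_k(\cdot)\bm{R}_k^\dagger$ rotates its own plane and fixes everything orthogonal to it. Summing over the orthogonal decomposition $\bm{x} = P_0(\bm{x}) + \sum_k P_k(\bm{x})$ then collapses $\bm{U}\bm{x}\bm{U}^\dagger$ to $\sum_k \bm{R}_k P_k(\bm{x})\bm{R}_k^\dagger$, with the fixed part absorbed by the trivial rotor $\bm{R}_0 = 1$.

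The hard part will be the parabolic (null) rotations, which is precisely why the statement singles them out. When a block $\bm{B}_k$ is null, so that $\bm{B}_k^2 = 0$, the induced inner product on its plane is degenerate; the eigenblade/spectral machinery behind Theorem \ref{theo:skew:decomp} then no longer yields a metrically non-degenerate orthogonal splitting, and equivalently the symmetric operator $\tfrac{1}{2}(U + U^{-1})$ fails to be diagonalisable. To handle this I would lean on Lemma \ref{lemma:rotation:rotor:from:eigenvector}, which is asserted to hold for parabolic rotations as well: it produces a vector $\bm{a}$ with $R(\bm{a})\bm{a}^{-1} = \bm{R}^2$, isolating the invariant null direction that pins down the parabolic factor $\bm{R}_k = 1 + \tfrac{1}{2}\bm{B}_k$ together with its projection. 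What then remains is the routine but delicate bookkeeping of verifying that the $P_k$ onto these degenerate planes are still idempotents summing to the identity, and that each $\bm{R}_k$ commutes through the projections of the complementary subspaces.

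Finally I would flag the scope of the claim: every summand $\bm{R}_k P_k(\bm{x})\bm{R}_k^\dagger$ is a proper rotation of its subspace, so the right-hand side always has unit determinant on each block and hence represents $SO(p,q)$ elements; a genuine reflection with $\det = -1$ requires an additional reflecting factor and is not literally of this form, which is consistent with the draft status of the statement and with the paper's focus on $SO(4,1)$.
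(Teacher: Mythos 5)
First, a point of reference: the paper itself contains no proof of this statement -- it sits in a draft block accompanied only by an exploratory computation on parabolic rotations -- so there is nothing to compare your argument against line by line. Your treatment of the elliptic and hyperbolic blocks is sound and matches what that computation suggests: split the generator into commuting orthogonal $2$-blades, factor the rotor accordingly, and use the identity $\bm{R}_kP_k(\bm{x})\bm{R}_k^\dagger = \bm{R}_k^2P_k(\bm{x})$ on each non-degenerate eigenplane while the remaining rotors act trivially there.

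The gap is in the parabolic case, which is the entire content of the theorem. You set $P_k(\bm{x}) = \bm{x}\cdot\bm{B}_k\bm{B}_k^{-1}$ and defer to ``routine but delicate bookkeeping'' the claim that these $P_k$ are idempotents resolving the identity even on degenerate planes. They are not, and cannot be: the projection \eqref{eq:projection:def} is defined only for an \emph{invertible} blade, and a parabolic block has $\bm{B}_k^2 = 0$, so $\bm{B}_k^{-1}$ does not exist. More fundamentally, the degenerate plane admits no orthogonal complement that resolves the identity: writing $\bm{B}_k = \bm{u}\wedge\bm{v}$ with $\bm{u}^2=0$, the reciprocal vector $\bm{b}$ with $\bm{b}\cdot\bm{u}=1$ and $\bm{b}\cdot\bm{v}=0$ is mapped to $\bm{b} - \bm{v} - \tfrac{1}{2}\bm{u}\bm{v}^2$; it is displaced both within the plane and along the null direction and cannot be split into a component in $\bm{B}_k$ plus a component orthogonal to $\bm{B}_k$. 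The lemma producing a vector $\bm{a}$ with $R(\bm{a})\bm{a}^{-1}=\bm{R}^2$ recovers the parabolic \emph{rotor} but supplies no projection operator, so invoking it does not close the argument. The repair, which the paper's own fragment points to, is to abandon any projection onto the null plane: let $P$ be the projection onto the (invertible) blade spanned by the non-degenerate eigenplanes and $P_\perp$ its complement, and write
\begin{equation*}
U(\bm{x}) = \bm{R}_{\mathrm{par}}\,P_\perp(\bm{x})\,\bm{R}_{\mathrm{par}}^\dagger \;+\; \sum_{k}\bm{R}_k P_k(\bm{x})\bm{R}_k^\dagger ,
\end{equation*}
so the parabolic rotor acts by a full sandwich on the whole complement of the non-degenerate planes (containing both its own degenerate plane and the genuinely fixed directions); that complement is what ``the projection to some space'' must mean for that summand. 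Without this reinterpretation your decomposition $\bm{x} = P_0(\bm{x}) + \sum_k P_k(\bm{x})$ fails whenever a parabolic block is present. Your closing remark about reflections is correct but tangential.
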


Assume that $\bm{R}_1$ is a parabolic simple rotor, and $\bm{R}_2$ is either an elliptic or hyperbolic rotor, also assume they commute, then we can write
\begin{equation}
\begin{split}
\bm{R}_1\bm{R}_2\bm{x}\bm{R}_2^\dagger\bm{R}_1^\dagger &= \bm{R}_1P_\perp(\bm{x})\bm{R}_1^\dagger + \bm{R}_1\bm{R}_2P(\bm{x})\bm{R}_2^\dagger\bm{R}_1^\dagger\\
&= \bm{R}_1P_\perp(\bm{x})\bm{R}_1^\dagger + \bm{R}_2^2P(\bm{x})
\end{split}
\end{equation}
where $P$ is the projection to the plane of rotation of $\bm{R}_2$.

\paragraph{Parabolic Rotations}
% \todo[color=red,author=Francisco ]{No contexto rotacoes parabolicas nao e possivel determinar uma decomposição espetral no entanto e possivel determinar o rotor/comple eigenvalue a partir de eigenvectors nao nulos.}
The case of parabolic rotations cannot be included in Lemma \ref{lemma:rotations:projections} since projection to null bivectors are not well defined.

Let $\bm{R} = e^{\bm{B}/2} = 1 + \bm{B}/2$ with $\bm{B}=\bm{uv}=\bm{u}\wedge\bm{v}$ where we consider $\bm{u}^2=0$ and $\bm{v}^2\neq 0$ and let ${R=R(\bm{x}) = \bm{RxR}^\dagger}$ be the simple rotation associated with $\bm{R}$. For each vector $\bm{u}$ and $\bm{v}$ we have a clear distinction on how a rotation $R$ acts on those two elements. When $\bm{B}$ is a null vector we can express a rotation ${R}$ as
\begin{equation}
\begin{split}
R(\bm{x}) &= \left(1+\tfrac{1}{2}\bm{B}\right)\bm{x}\left(1-\tfrac{1}{2}\bm{B}\right) = \bm{x} - \tfrac{1}{4}\bm{BxB} + \tfrac{1}{2}(\bm{Bx} - \bm{xB})\\
          &= \bm{x} - \tfrac{1}{2}\bm{x}\cdot\bm{BB}^\dagger - \bm{x}\cdot\bm{B}
\end{split}
\end{equation}
where we used 
\begin{subequations}
\begin{align}
\tfrac{1}{2}(\bm{Bx} - \bm{xB}) &= \bm{B}\cdot\bm{x} = -\bm{x}\cdot\bm{B}\\
\bm{x}\cdot\bm{BB}^\dagger &=-\bm{x}\cdot\bm{BB} = -\tfrac{1}{2}(\bm{xB} - \bm{Bx})\bm{B} = -\tfrac{1}{2}(\bm{xB}^2 - \bm{BxB}) =  \tfrac{1}{2}\bm{BxB}
\end{align}
\end{subequations}
We use (...) to show that 
\begin{subequations}
\begin{align}
\bm{x}\cdot\bm{B} &= \bm{x}\cdot \bm{uv} - \bm{x}\cdot\bm{vu}\\
\bm{x}\cdot\bm{B}\bm{B}^\dagger &= \bm{x}\cdot\bm{uu}\bm{v}^2 + \bm{x}\cdot\bm{vv}\bm{u}^2 = \bm{x}\cdot\bm{uu}\bm{v}^2
\end{align}
\end{subequations}
Note how $\bm{x}\cdot\bm{B}\bm{B}^\dagger$ is non-zero only when $\bm{x}$ has a component which is a scalar multiple of the reciprocal vector to $\bm{u}$. Then
\begin{equation}
R(\bm{x}) = \bm{x} - \tfrac{1}{2}\bm{x}\cdot\bm{uu}\bm{v}^2 - \bm{x}\cdot\bm{uv} + \bm{x}\cdot\bm{vu}
\end{equation}
Now we can finally compute what a rotation does to $\bm{u}$ and $\bm{v}$
\begin{subequations}
\begin{align}
R(\bm{u}) &= \bm{u} - \tfrac{1}{2}\bm{u}\cdot\bm{uu}\bm{v}^2 - \bm{u}\cdot\bm{uv} + \bm{u}\cdot\bm{vu} = \bm{u}\\
R(\bm{v}) &= \bm{v} - \tfrac{1}{2}\bm{v}\cdot\bm{uu}\bm{v}^2 - \bm{v}\cdot\bm{uv} + \bm{v}\cdot\bm{vu} = \bm{v} + \bm{v}^2\bm{u}
\end{align}
We can readily see that $\bm{u}$ behaves as a component orthogonal to the plane of rotation, that is even though it is in the plane of rotation it satisfies $R(\bm{u}) = \bm{u}$ for this particular vector we have $\bm{RuR}^\dagger = \bm{R}^2\bm{u} = \bm{u}$ which shows how it is collinear to $\bm{B}$ yet under rotations it behaves like a perpendicular vector. Let $\bm{b}$ be a vector which satisfies $\bm{b}\cdot\bm{u} = 1$ and $\bm{b}\cdot\bm{v} = 0$ then
\begin{equation}
R(\bm{b}) = \bm{b} - \tfrac{1}{2}\bm{b}\cdot\bm{uu}\bm{v}^2 - \bm{b}\cdot\bm{uv} + \bm{b}\cdot\bm{vu} = \bm{b} - \bm{v} - \tfrac{1}{2}\bm{u}\bm{v}^2 
\end{equation}
\end{subequations}
$\bm{b}$ is neither a point in the plane of rotation or a point orthogonal to the plane of rotation, yet since $\bm{B}$ is null we cannot factor $\bm{b}$ into components orthogonal to $\bm{B}$ and in $\bm{B}$.
\begin{equation}
R(\bm{u}) = \bm{R}^2\bm{u} = \bm{u}\label{eq:rot:of:null}
\end{equation}

Recall that a vector $\bm{a}$ in the plane of rotation $\bm{B}$ satisfies
\begin{equation}
\bm{a}\wedge \bm{B}\quad \Leftrightarrow \quad \bm{RaR}^\dagger = \bm{R}^2\bm{a}
\end{equation}
$\bm{a}$ can of course be written as a linear combination of $\bm{u}$ and $\bm{v}$. Then let $\bm{a} = \alpha \bm{u} + \beta\bm{v}$. Then any non-null vector $\bm{a}$ can be used to determine the rotor $\bm{R}$ via 
\begin{equation}
\bm{R}^2 = R(\bm{a})\bm{a}^{-1}
\end{equation}
consider (\ref{eq:rot:of:null}) then a rotation of $\bm{a}$ will simplify to
\begin{equation}
R(\bm{a}) = \alpha\bm{u} + \beta\bm{R}^2\bm{v}
\end{equation}
Note that the inverse of $\bm{a}$ is given by $\bm{a}^{-1} = \bm{a}/\bm{a}^2 = \bm{a}/(\alpha\bm{u} + \beta\bm{v})^2 =\bm{a}(\beta\bm{v})^{-2}$ thus
\begin{equation}
\begin{split}
R(\bm{a})\bm{a}^{-1} &= (\alpha\bm{u} + \beta\bm{R}^2\bm{v})(\alpha \bm{u} + \beta\bm{v})(\beta\bm{v})^{-2} \\
                     &= (\alpha^2\bm{u}^2 + \beta^2\bm{R}^2\bm{v}^2 + \alpha\beta\bm{uv} + \alpha\beta\bm{R}^2\bm{vu})(\beta\bm{v})^{-2}\\
                     &= (\beta^2\bm{R}^2\bm{v}^2 + \alpha\beta\bm{uv} + \alpha\beta\bm{vu})(\beta\bm{v})^{-2}\\
                     &= \beta^2\bm{R}^2\bm{v}^2(\beta\bm{v})^{-2}\\
                     &= R(\bm{v})\bm{v}^{-1}
\end{split}
\end{equation}
where we used 
\begin{equation}
\bm{R}^2\bm{vu} = -\bm{R}^2\bm{uv} = -\bm{uv} = \bm{vu}
\end{equation}
Thus the component of $\bm{a}$ in $\bm{u}$ does not contribute for the determination of the rotor $\bm{R}^2$.

In elliptic and hyperbolic rotations we can study the eigenvectors of rotations via the decomposition of the symmetric part. But for parabolic rotations using the symmetric part will not help finding the rotor since the eigenvectors are not unique. Let the symmetric part of $R$ be given by 
\begin{equation}
R_+(\bm{x}) = \tfrac{1}{2}(\munderbar{R}(\bm{x}) + \bar{R}(\bm{x})) = \bm{x} - \tfrac{1}{2}\bm{x}\cdot\bm{BB}^\dagger
\end{equation}
We now compute what is $R_+$ of $\bm{v}$, $\bm{u}$ and $\bm{b}$
\begin{subequations}
\begin{align}
R_+(\bm{v}) &= \bm{v}\\
R_+(\bm{u}) &= \bm{u}\\
R_+(\bm{b}) &= \bm{b} - \bm{u}\bm{v}^2
\end{align}
\end{subequations}

The skew-symmetric part can be computed as 
\begin{equation}
R_-(\bm{x}) = \tfrac{1}{2}(\munderbar{R}(\bm{x}) - \bar{R}(\bm{x})) = \bm{x} - \tfrac{1}{2}\bm{x}\cdot\bm{BB}^\dagger = -\bm{x}\cdot\bm{B}
\end{equation}
then 
\begin{subequations}
\begin{align}
R_-(\bm{v}) & = \bm{v}^2\bm{u}\\
R_-(\bm{u}) &= 0\\
R_-(\bm{b}) &= \bm{b} - \bm{v}
\end{align}
\end{subequations}

I'm trying to determine the versor $\bm{V}$ from an orthogonal transformation $H$ in 3D CGA. If $H$ does not include a parabolic rotation then I can use tools from linear algebra to determine the versor $\bm{V}$. The algorithm can determine the eigenvalues of $H_+$ but it cannot use the eigenvectors  of $H_+$ to find the eigendecomposition of $H$.
}
% \todo[color=blue!30]{When $H$ does not include parabolic rotations is the approach that I am taking be able to find the versor $\bm{V}$ of $H$!!!!???? }
\end{toberemoved}

\begin{fornextpaper}
\begin{theorem}
In $\mathcal{G}_{p,q}$ an orthogonal transformation $U$, which does not include parabolic rotations, can be expressed via the spectral decomposition
\begin{equation}
U(\bm{x}) = \sum_{k=1}^m \bm{\lambda}_k P_k(\bm{x}) \label{eq:spectral:decomp:geo:algebra}
\end{equation}
where 
\begin{equation}
P_k(\bm{x}) = \bm{x}\cdot\bm{A}_k\bm{A}_k^{-1}, \label{eq:proj:spectral:theorem}
\end{equation}
$\bm{\lambda}_k = \langle \bm{\lambda}_k\rangle + \langle \bm{\lambda}_k\rangle_2 = e^{\theta_k\bm{B}_k}$, with $\bm{B}_k$ an orthogonal 2-blade, that is, it satisfies $\bm{B}_i\bm{B}_j = \bm{B}_i\wedge\bm{B}_j$. The $\bm{\lambda}_k$'s are unitary thus $\bm{\lambda}_k\bm{\lambda}_k^\dagger=1$. The $\bm{A}_k$'s are either vectors or 2-blades, they are orthogonal thus satisfying $\bm{A}_i\bm{A}_j = \bm{A}_i\wedge\bm{A}_j $. When the $\bm{A}_k$ is a vector then the $\bm{\lambda}_k$ must be either equal to one or to minus one. When the $\bm{A}_k$'s are bivectors they relate with the $\bm{B}_k$'s as 
\begin{equation}
\bm{B}_k\bm{A}_k = \bm{B}_k\cdot\bm{A}_k \label{eq:Aks:Bks:ortho:relation}
\end{equation}
The spectral decomposition is unique!
\end{theorem}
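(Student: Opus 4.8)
The plan is to reduce the orthogonal transformation $U$ to a family of mutually orthogonal invariant subspaces of $\mathcal{A}_{p,q}$ --- lines and planes --- on each of which $U$ acts in one of the two admissible ways (multiplication by $\pm1$ on a line, or by a fixed rotor $\bm{\lambda}_k$ on a plane), and then to reassemble these local actions into the single sum (\ref{eq:spectral:decomp:geo:algebra}). The projection operators $P_k$ of (\ref{eq:proj:spectral:theorem}) are exactly the objects that restrict an arbitrary $\bm{x}$ to those invariant subspaces, so once the subspaces are identified the decomposition writes itself.

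First I would split the differential into its symmetric and skew-symmetric parts, $\munderbar{U} = U_+ + U_-$ with $U_+ = \tfrac{1}{2}(\munderbar{U}+\bar{U})$ and $U_- = \tfrac{1}{2}(\munderbar{U}-\bar{U})$. Orthogonality gives $\bar{U} = \munderbar{U}^{-1}$, so $U_+ = \tfrac{1}{2}(\munderbar{U}+\munderbar{U}^{-1})$ is genuinely symmetric and admits a real spectral decomposition into orthogonal eigenspaces by the spectral theorem for symmetric transformations stated above. The skew part corresponds to a bivector $\bm{\Omega}$ via $U_-(\bm{x}) = \bm{x}\cdot\bm{\Omega}$, which by \emph{Theorem}~\ref{theo:skew:decomp} decomposes into mutually orthogonal $2$-blades $\bm{\Omega} = \sum_k \bm{B}_k$; these are the candidate planes of rotation. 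Crucially, excluding parabolic rotations guarantees each $\bm{B}_k$ is non-null, $\bm{B}_k^2 \ne 0$, so that the projection $P_k$ onto the corresponding plane is well defined (projection to a null blade being undefined).

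Next I would show the eigenspace splitting of $U_+$ and the $2$-blade splitting of $U_-$ refine to a single orthogonal decomposition of $\mathcal{A}_{p,q}$. A one-dimensional eigenspace of $U_+$ with eigenvalue $\pm1$ forces $U(\bm{a}) = \pm\bm{a}$, giving a vector $\bm{A}_k = \bm{a}$ with $\bm{\lambda}_k = \pm1$. A two-dimensional eigenspace, carrying an eigenvalue $\cos\theta_k$ (elliptic, $\bm{B}_k^2=-1$) or $\cosh\theta_k$ (hyperbolic, $\bm{B}_k^2=+1$), is an invariant plane $\bm{A}_k$ on which, by \emph{Lemma}~\ref{lemma:rotations:projections} and \emph{Lemma}~\ref{lemma:rotation:rotor:from:eigenvector}, $U$ restricts to a simple rotation acting as the constant $\bm{\lambda}_k = \bm{R}_k^2 = e^{\theta_k\bm{B}_k}$ on the projected part and as the identity on the complement --- precisely $\bm{\lambda}_k P_k(\bm{x})$. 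Because the generating bivector $\bm{B}_k$ is coplanar with $\bm{A}_k$, their outer product vanishes and $\bm{B}_k\bm{A}_k = \bm{B}_k\cdot\bm{A}_k$, which is (\ref{eq:Aks:Bks:ortho:relation}); orthogonality of the eigenspaces yields $\bm{A}_i\bm{A}_j = \bm{A}_i\wedge\bm{A}_j$ for $i\ne j$. Summing the local actions over all blocks produces (\ref{eq:spectral:decomp:geo:algebra}).

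For uniqueness I would combine the uniqueness of the eigendecomposition of the symmetric $U_+$ with the uniqueness clause of \emph{Theorem}~\ref{theo:skew:decomp}: once the invariant subspaces are pinned down by distinct eigenvalues, the projections $P_k$ and the factors $\bm{\lambda}_k$ are forced. The step I expect to be the main obstacle is the pseudo-Euclidean signature. In an indefinite metric a symmetric operator need not be diagonalizable over $\mathbb{R}$, so I must verify that $U_+$ arising from a \emph{non-parabolic} orthogonal $U$ has only the admissible eigenstructure (real eigenvalues, with null eigenvectors of a boost still lying in a genuine hyperbolic eigen-plane of $U_+$), and that repeated eigenvalues --- where the plane decomposition is only determined up to the internal freedom of a degenerate eigenspace, exactly the situation \emph{Theorem}~\ref{theo:skew:decomp} flags --- are absorbed by grouping without spoiling the sum. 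Showing that the parabolic case is \emph{precisely} the obstruction to diagonalizability, and that its exclusion restores the clean block structure, is the crux of the argument.
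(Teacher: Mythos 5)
The paper states this theorem without proof (it sits in material deferred to a follow-up paper and is followed immediately by a definition, not a proof), so there is no official argument to compare against; I can only assess your outline on its own terms. Your architecture --- split $\munderbar{U}$ into the symmetric part $U_+=\tfrac{1}{2}(\munderbar{U}+\bar{U})$ and the skew part $U_-$, identify the invariant lines and planes from the eigenspaces of $U_+$ together with the orthogonal $2$-blade decomposition of the bivector generating $U_-$, and read off $\bm{\lambda}_k$ on each block --- is the natural one and matches the auxiliary machinery the authors assemble around this statement. Two steps you assert but should make explicit: $U_+$ and $U_-$ commute because both are polynomials in $\munderbar{U}$ (using $\bar{U}=\munderbar{U}^{-1}$), which is what lets the two decompositions refine to a common orthogonal splitting; and on a one-dimensional invariant subspace spanned by a non-null $\bm{a}$, invariance plus preservation of $\bm{a}^2$ forces $U(\bm{a})=\pm\bm{a}$ directly.

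The genuine gap is the step you flag but do not carry out: in an indefinite metric the real spectral theorem for symmetric operators fails, so the claim that $U_+$ decomposes into real eigenvalues with one- and two-dimensional \emph{non-degenerate} (hence projectable) eigenblocks is exactly what must be proved, and ``non-parabolic'' is the hypothesis that has to do the work. Concretely, on the eigenspace of $U_+$ with eigenvalue $+1$ the relation $\munderbar{U}+\munderbar{U}^{-1}=2I$ gives $(\munderbar{U}-I)^2=0$, so $\munderbar{U}$ is unipotent there and non-parabolicity is precisely what forces $\munderbar{U}=I$; an analogous argument is needed for eigenvalue $-1$, and one must rule out isolated null eigendirections (which would make $\bm{A}_k^{-1}$ and hence $P_k$ undefined) by showing that null eigenvectors of a non-parabolic $U$ always pair into non-degenerate hyperbolic planes. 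Without these verifications the proposal is an outline rather than a proof. Separately, your argument only yields uniqueness under a distinct-eigenvalue assumption, and the unconditional claim ``the spectral decomposition is unique'' is in fact false as stated: for an isoclinic rotation (equal angles in two orthogonal planes) the eigenvalue $\cos\theta$ of $U_+$ has a four-dimensional eigenspace and the splitting into two invariant planes is not unique --- exactly the degeneracy the paper's own skew-symmetric decomposition theorem warns about. You should either add a distinctness hypothesis or weaken the uniqueness clause rather than try to prove it.
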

\begin{definition}
An eigenvector $\bm{a}$ of an orthogonal transformation $U$ is a vector which for some $k$ satisfies
\begin{equation}
P_k(\bm{a}) = \bm{a}
\end{equation}
where $P_k$ is defined by (\ref{eq:proj:spectral:theorem}).
\end{definition}

If $\bm{a}$ is an eigenvector of $U$ then we can determine the corresponding eigenvalue $\bm{\lambda}$ via
\begin{equation}
\bm{\lambda} = U(\bm{a})\bm{a}^{-1}
\end{equation}
\begin{theorem}
If the scalar part of the `complex' eigenvalues of $U$ are unique then the eigenvectors of an orthogonal transformation $U$ can be determined by computing the eigenvectors of the symmetric part of $U$. The symmetric part is defined as
\begin{equation}
U_+(\bm{x}) = \tfrac{1}{2}(\munderbar{U}(\bm{x}) + \bar{U}(\bm{x})) \label{eq:symmetric:part:ortho:theo}
\end{equation}
\end{theorem}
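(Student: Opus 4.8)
The plan is to read everything off from the spectral decomposition \eqref{eq:spectral:decomp:geo:algebra} of the orthogonal transformation, $U(\bm{x}) = \sum_{k=1}^m \bm{\lambda}_k P_k(\bm{x})$, where the $P_k$ are the mutually orthogonal and complete projections \eqref{eq:proj:spectral:theorem} and the $\bm{\lambda}_k = e^{\theta_k\bm{B}_k}$ are unitary eigenvalues. First I would compute the adjoint. Because $U$ is orthogonal it satisfies $\bar{U} = U^{-1}$, and since $U$ acts on the plane of $\bm{B}_k$ as a rotation by $\bm{\lambda}_k$, its inverse acts on that same plane by $\bm{\lambda}_k^{-1} = \bm{\lambda}_k^\dagger$ (unitarity). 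Combined with the symmetry of each projection (Theorem~\ref{theo:proj:symetry}), this gives $\bar{U}(\bm{x}) = \sum_{k=1}^m \bm{\lambda}_k^\dagger P_k(\bm{x})$.

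Next I would substitute into the definition of the symmetric part, using that for a linear map $\munderbar{U} = U$. The bivector parts cancel and only the scalar parts survive:
\[
U_+(\bm{x}) = \tfrac{1}{2}\bigl(U(\bm{x}) + \bar{U}(\bm{x})\bigr) = \tfrac{1}{2}\sum_{k=1}^m \bigl(\bm{\lambda}_k + \bm{\lambda}_k^\dagger\bigr)P_k(\bm{x}) = \sum_{k=1}^m \langle\bm{\lambda}_k\rangle P_k(\bm{x}),
\]
where $\tfrac{1}{2}(\bm{\lambda}_k + \bm{\lambda}_k^\dagger) = \langle\bm{\lambda}_k\rangle$ because reversion flips the sign of the $2$-blade part in both the elliptic ($\bm{B}_k^2=-1$) and hyperbolic ($\bm{B}_k^2=+1$) cases, the vector case $\bm{\lambda}_k=\pm1$ being immediate. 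This exhibits $U_+$ as a symmetric linear transformation whose spectral decomposition has the very same eigenprojections $P_k$ but now with the real eigenvalues $\mu_k \equiv \langle\bm{\lambda}_k\rangle$.

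The last step is to match eigenvectors. In the forward direction this is immediate: if $\bm{a}$ is an eigenvector of $U$, meaning $P_k(\bm{a}) = \bm{a}$ for some $k$, then $U_+(\bm{a}) = \mu_k\bm{a}$, so $\bm{a}$ is an eigenvector of $U_+$. For the converse I would invoke the standard spectral fact, together with $P_iP_j = \delta_{ij}P_i$ and $\sum_k P_k = \mathrm{id}$ (which follow from orthogonality of the blades $\bm{A}_k$), that the $\mu$-eigenspace of the symmetric map $U_+$ is precisely $\bigoplus_{k:\,\mu_k=\mu}\mathrm{im}\,P_k$.

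Here is where the hypothesis enters, and I expect it to be the crux of the argument. If two indices shared the same scalar part, $\mu_i=\mu_j$ with $i\neq j$, then a mixed vector $P_i(\bm{a})+P_j(\bm{a})$ would be a $U_+$-eigenvector that satisfies $P_k(\cdot)=(\cdot)$ for no single $k$, hence would fail to be an eigenvector of $U$; the eigendecomposition of $U_+$ would then fail to recover that of $U$. Assuming the scalar parts $\langle\bm{\lambda}_k\rangle$ are all distinct collapses each eigenspace of $U_+$ onto a single $\mathrm{im}\,P_k$, so the eigenvectors of $U_+$ are exactly the eigenvectors of $U$, which is the claim. The only care needed is the bookkeeping that distinguishes the elliptic, hyperbolic and $\pm1$ vector cases when identifying $\langle\bm{\lambda}_k\rangle$, but each reduces to the same scalar-part extraction.
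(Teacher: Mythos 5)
Your proof is correct, and its skeleton matches the paper's: both of you reduce everything to showing $\bar{U}(\bm{x})=\sum_{k}\bm{\lambda}_k^\dagger P_k(\bm{x})$ and then reading off $U_+(\bm{x})=\sum_k\langle\bm{\lambda}_k\rangle P_k(\bm{x})$ from $\tfrac{1}{2}(\bm{\lambda}_k+\bm{\lambda}_k^\dagger)=\langle\bm{\lambda}_k\rangle$. Where you differ is in how the adjoint formula is obtained and in what happens afterwards. The paper derives $\bar{U}$ by brute force from the derivative definition of the adjoint, $\bar{U}(\bm{x})=\partial_{\bm{z}}\,\bm{x}\cdot U(\bm{z})$, which requires the chain rule, the symmetry of each projection $P_k$, and the commutation relation between $\bm{B}_k$ and $\bm{A}_k$ to move $\langle\bm{\lambda}_k\rangle_2$ past $P_k(\bm{x})$ with a sign flip; your route via $\bar{U}=U^{-1}$ and unitarity $\bm{\lambda}_k^{-1}=\bm{\lambda}_k^\dagger$ is shorter and more conceptual, at the price of implicitly assuming that each $\bm{\lambda}_k P_k$ preserves $\mathrm{im}\,P_k$ so that the inverse really is $\sum_k\bm{\lambda}_k^{-1}P_k$ (true, but worth a line given the paper proves the analogous fact the hard way). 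The more substantive difference is at the end: the paper's proof stops once it has the decomposition $U_+=\sum_k\alpha_kP_k$ and never actually uses, or even mentions, the hypothesis that the scalar parts $\langle\bm{\lambda}_k\rangle$ are distinct. Your final paragraph — identifying the $\mu$-eigenspace of $U_+$ with $\bigoplus_{k:\mu_k=\mu}\mathrm{im}\,P_k$ and observing that distinctness of the $\mu_k$ is exactly what collapses each eigenspace onto a single $\mathrm{im}\,P_k$ — is the step that turns the decomposition into the stated claim, and it is missing from the paper's argument. So your write-up is not only correct but strictly more complete than the source.
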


\begin{proof}
First we show that 
\begin{equation}
\bar{U}(\bm{x}) = \sum_{k=1}^m \bm{\lambda}_k^\dagger P_k(\bm{x}) \label{eq:adj:ortho:transformation}
\end{equation}
then we can show that the symmetric part of $U$ (\ref{eq:symmetric:part:ortho:theo}) has the decomposition
\begin{equation}
U_+(\bm{x}) = \sum_{k=1}^m \alpha_k P_k(\bm{x}) \label{eq:symmtric:plus:of:ortho}
\end{equation}
where $P_k$ is defined by (\ref{eq:proj:spectral:theorem}) and $\alpha_k=\langle\bm{\lambda}_k\rangle$.

To show how we obtain (\ref{eq:adj:ortho:transformation}) we determine the adjoint with (\ref{eq:adjoint:computation}) thus
\begin{align*}
\bar{U}(\bm{x}) &= \partial_{\bm{z}} \sum_{k=1}^m \bm{x}\cdot(\bm{\lambda}_k P_k(\bm{z})) =  \sum_{k=1}^m \partial_{\bm{z}} \langle\bm{x}\bm{\lambda}_kP_k(\bm{z})\rangle = \sum_{k=1}^m \bar{P}_k(\langle\bm{x}\bm{\lambda}_k\rangle_1) = \sum_{k=1}^m \bm{\lambda}_k^\dagger P_k(\bm{x})
\end{align*}

The second to last step can be shown by using the chain rule (\ref{eq:chain:rule}) on $P_k$, and realizing that because of theorem \ref{theo:proj:symetry} we have $\bar{P}_k = \munderbar{P}_k$, then we can readily see that 
\begin{align*}
\partial_{\bm{z}} \langle \bm{x} \bm{\lambda}_kP_k(\bm{z})\rangle &= P_k(\partial_{\bm{z}'}) \langle \bm{x} \bm{\lambda}_k\bm{z}'\rangle = P_k(\partial_{\bm{z}'} \langle \bm{x} \bm{\lambda}_k\bm{z}'\rangle) \\
&= P_k\left( \partial_{\bm{z}'} \bm{z}'\cdot \langle \bm{x} \bm{\lambda}_k \rangle_1 \right)
% &= P_k(\partial_{\bm{z}'}  \bm{z}'\cdot\langle\bm{\lambda}_k\bm{x}\rangle_1) 
= P_k(\langle \bm{x}\bm{\lambda}_k\rangle_1)
\end{align*}
\todo[inline,color=green]{Still need to prove that $(\bm{x}\cdot\bm{A})\cdot\bm{A} = (\bm{x}\cdot\bm{A})\bm{A}$}
The last step we show by using the relation (\ref{eq:Aks:Bks:ortho:relation}) then we have $P_k(\bm{x}\cdot \bm{B}_k) = P_k(\bm{x})\cdot \bm{B}_k = -\bm{B}_k\cdot P_k(\bm{x}) = -\bm{B}_kP_k(\bm{x})$ and noting that $\langle\bm{x}\bm{\lambda}_k\rangle_1 = \bm{x}\langle \bm{\lambda}_k\rangle +  \bm{x}\cdot \langle \bm{\lambda}_k\rangle_2 = \bm{x}\langle \bm{\lambda}_k\rangle + \bm{x} \cdot \bm{B}_k \sin\theta_k$, we can get the result
\begin{equation*}
    P_k(\langle \bm{x}\bm{\lambda}_k\rangle_1) = P_k(\bm{x})\langle \bm{\lambda}_k\rangle  - \langle\bm{\lambda}_k\rangle_2 P_k(\bm{x}) = \bm{\lambda}_k^\dagger P_k(\bm{x})
\end{equation*}

To show how to get (\ref{eq:symmtric:plus:of:ortho}) we use (\ref{eq:dagger:contas}), then it is enough to show that 
\begin{equation*}
\tfrac{1}{2}\left(\bm{\lambda}_k + \bm{\lambda}_k^\dagger\right) =
\tfrac{1}{2}\left(\langle\bm{\lambda}_k\rangle + \langle \bm{\lambda}_k\rangle_2 + \langle\bm{\lambda}_k\rangle^\dagger + \langle \bm{\lambda}_k\rangle_2^\dagger \right)
= \tfrac{1}{2}\left(\langle\bm{\lambda}_k\rangle + \langle \bm{\lambda}_k\rangle_2 + \langle\bm{\lambda}_k\rangle - \langle \bm{\lambda}_k\rangle_2 \right) = 
\langle\bm{\lambda}_k\rangle
\end{equation*}
\end{proof}

% \begin{theorem}
% Any orthogonal transformation in $\mathcal{A}_{p,q}$ can be written as the sum
% \begin{equation}
% U(\bm{x}) = \sum_{i=1}^m e^{\theta_i\bm{B}_i}\bm{x}\cdot\bm{B}_i\bm{B}_i^{-1} + \alpha \bm{x}\cdot\bm{a}\bm{a}^{-1}
% \end{equation}
% where $\alpha = \pm1$. $m=\lfloor(p+q)/2\rfloor$
% \end{theorem}

\begin{theorem}
We can use the eigenvectors of $U_+$ (\ref{eq:symmtric:plus:of:ortho}) to determine the canonical form (\ref{eq:decomposition:UxU}) of $U$. Furthermore the rotor $\bm{U}$ can be determined via the square root of the eigenvalues $\bm{\lambda}_i$ of $U$. Since the `complex' eigenvalues of $U$ commute we can find $\bm{U}$ with
\begin{equation}
    \bm{U} = \sqrt{\bm{\lambda}_1}\sqrt{\bm{\lambda}_2}\cdots\sqrt{\bm{\lambda}_m}
\end{equation}
\end{theorem}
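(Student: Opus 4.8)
The plan is to build on the two decompositions already in hand: the spectral form $U(\bm{x}) = \sum_{k=1}^m \bm{\lambda}_k P_k(\bm{x})$ and the fact that its symmetric part satisfies $U_+(\bm{x}) = \sum_{k=1}^m \alpha_k P_k(\bm{x})$ with $\alpha_k = \langle\bm{\lambda}_k\rangle$ and $P_k(\bm{x}) = \bm{x}\cdot\bm{A}_k\bm{A}_k^{-1}$. Since the scalar parts $\alpha_k$ are assumed unique, the eigenspaces of the ordinary symmetric linear map $U_+$ coincide exactly with the invariant blades $\bm{A}_k$ defining the projections $P_k$. The first step is therefore to take a real eigendecomposition of $U_+$ (which exists because $U_+$ is symmetric) and read off the orthogonal blades $\bm{A}_k$, each of which is either a vector or a $2$-blade.

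Next, for each invariant subspace I would select an eigenvector $\bm{a}_k$ lying in $\bm{A}_k$ and recover the \emph{full} eigenvalue through $\bm{\lambda}_k = U(\bm{a}_k)\bm{a}_k^{-1}$, which restores the bivector part that $U_+$ discards. Each such $\bm{\lambda}_k$ is a unitary simple rotor $e^{\theta_k\bm{B}_k}$ (or $\pm1$ on a vector eigenspace), and from its exponential form I read off the oriented plane $\bm{B}_k$ and the half-angle rotor $\sqrt{\bm{\lambda}_k} = e^{\theta_k\bm{B}_k/2}$, using the elliptic and hyperbolic exponential formulas quoted earlier.

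The third step is to establish that the half-angle rotors mutually commute, so that the product $\bm{U} = \sqrt{\bm{\lambda}_1}\cdots\sqrt{\bm{\lambda}_m}$ is well defined independently of the ordering. Because the planes are orthogonal we have $\bm{B}_i\bm{B}_j = \bm{B}_i\wedge\bm{B}_j$, and the reordering rule \eqref{eq:reorder:rules:wedge} with $i=j=2$ gives $\bm{B}_i\wedge\bm{B}_j = \bm{B}_j\wedge\bm{B}_i$; hence $\bm{B}_i\bm{B}_j = \bm{B}_j\bm{B}_i$ and the exponentials commute. The relation $\bm{UU}^\dagger = 1$ then follows since each factor is unitary, confirming $\bm{U}$ is a rotor and so $(-1)^s = 1$.

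Finally I would verify that this $\bm{U}$ regenerates $U$, thereby producing the canonical form \eqref{eq:decomposition:UxU}. Applying Lemma~\ref{lemma:rotations:projections} factor by factor, $\sqrt{\bm{\lambda}_k}$ acts as $\bm{\lambda}_k$ on its own plane $\bm{B}_k$ via $\sqrt{\bm{\lambda}_k}^{\,2} = \bm{\lambda}_k$ and as the identity on everything orthogonal to $\bm{B}_k$; since the planes are mutually orthogonal, conjugating $\bm{x}$ by the full product reproduces $\sum_k \bm{\lambda}_k P_k(\bm{x}) = U(\bm{x})$. I expect this last verification to be the main obstacle: one must show that the sandwich product of the composite rotor decouples cleanly across \emph{all} the orthogonal eigenspaces simultaneously — that each non-acting factor passes through the projection onto a foreign plane untouched — and one must carefully accommodate the vector eigenspaces where $\bm{\lambda}_k = \pm1$, pairing any $-1$ reflections into $\pi$-rotations so that $\bm{U}$ remains an even versor.
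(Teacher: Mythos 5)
The paper states this theorem with no proof at all, so there is nothing of the authors' to compare against; your proposal is, in effect, the proof the surrounding lemmas are set up to deliver, and it is sound. Every step you outline is supported by results proved just above the statement: the eigenspaces of $U_+$ recover the orthogonal invariant blades $\bm{A}_k$ because the scalar parts $\alpha_k = \langle \bm{\lambda}_k\rangle$ are assumed distinct; the full eigenvalue is recovered as $\bm{\lambda}_k = U(\bm{a}_k)\bm{a}_k^{-1}$ by the lemma giving $R(\bm{a})\bm{a}^{-1} = \bm{R}^2$ (choose $\bm{a}_k$ non-null, which is always possible in a non-degenerate plane); and the commutation of the half-angle rotors follows from $\bm{B}_i\bm{B}_j = \bm{B}_i\wedge\bm{B}_j = \bm{B}_j\wedge\bm{B}_i$ exactly as you argue. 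The step you single out as the main obstacle is in fact already closed by the projection lemma: for $j\neq k$ every vector of the plane $\bm{B}_j$ lies in the orthogonal complement of $\bm{B}_k$ and therefore commutes with $\bm{R}_k$, and $\bm{\lambda}_j$ commutes with $\bm{R}_k$ because the planes are orthogonal bivectors, so the nested sandwich collapses plane by plane to $\sum_k \bm{\lambda}_k P_k(\bm{x})$ with the residual subspace fixed.

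Two caveats should be promoted from remarks in your proof to hypotheses of the theorem. First, the argument inherits the assumptions of the two preceding results: no parabolic factors (projection onto a null $2$-blade is undefined, so the spectral form itself is unavailable) and distinct scalar parts $\alpha_k$ (two $\pi$-rotations, or a $\pi$-rotation together with a $-1$ vector eigenvalue, all give $\alpha = -1$ and the eigenspaces of $U_+$ no longer separate the planes). Second, calling $\bm{U}$ a rotor is only correct for special orthogonal $U$: with an odd number of $-1$ vector eigenvalues no even versor can reproduce $U$ and the $(-1)^s$ factor of the canonical form is unavoidable, while even in the paired case the simple versor built from two $-1$ eigenvectors of mixed signature satisfies $\bm{V}\bm{V}^\dagger = -1$, so the canonical form must be written with $\bm{U}^{-1}$ rather than $\bm{U}^\dagger$. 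You flag both issues, which is the right instinct; they are restrictions on the statement, not defects in your argument.
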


\paragraph{The non-uniqueness of the decomposition}
We consider the case of odd geometric algebras, in this algebras the unit pseudoscalar $\bm{I}\equiv \langle \bm{I}\rangle_n$, with $n$ odd, commutes with all multivectors ({\it i.e} ${\bm{IA}_r = \bm{I}\cdot\bm{A}_r = (-1)^{(n-1)r}\bm{A}_r\bm{I} = \bm{A}_r\bm{I}}$). This property of the pseudoscalar enables us to turn reflections into rotations just by noting that we can write 
\begin{equation}
(\bm{IU})\bm{x}(\bm{IU})^\dagger = \bm{UIxI}^\dagger\bm{U}^\dagger = \bm{II}^\dagger \bm{UxU}^\dagger = \|\bm{I}\|^2 \bm{UxU}^\dagger
\end{equation}
Thus an orthogonal transformation by $\bm{IU}$ is equivalent to an orthogonal transformation by $\bm{U}$ times some sign $\|\bm{I}\|^2$.
% \end{fornextpaper}

% \begin{fornextpaper}

Any orthogonal transformation can be expressed as the composition of a symmetric orthogonal transformation with a special orthogonal transformation, in particular we can write an orthogonal transformation $H$ as 
\begin{equation}
U(\bm{x}) = (-1)^r \bm{AUxU}^{-1}\bm{A}^{-1}\label{eq:special:ortho:ortho:symmetric}
\end{equation}
where $\bm{A}\equiv\langle\bm{A}\rangle_r$ is an $r$-blade and $\bm{U}=\bm{U}_1\bm{U}_2\cdots\!\ \bm{U}_s$ is an even versor with $\bm{U}_i=\alpha_i + \bm{B}_i$, with $\bm{U}_i\bm{U}_i^\dagger =\pm1$ and the $\bm{B}_i$'s are $2$-blades. The $\bm{B}_i$'s along with $\bm{A}$ form a set of orthogonal blades that expresses the space where the rotations are taken place, 
\begin{subequations}
\begin{align}
\bm{B}_i\bm{B}_j &= \bm{B}_i\wedge\bm{B}_j = \bm{B}_j \bm{B}_i \\
\bm{B}_i\bm{A} &= \bm{B}_i\wedge\bm{A} = \bm{A}\bm{B}_i
\end{align}
\end{subequations}
where we used (\ref{eq:reorder:rules:wedge}) to revert the order of the products. Concretely we can also show that 
${\bm{AB}_1\bm{B}_2\cdots\bm{B}_s = \bm{A}\wedge\bm{B}_1\wedge\bm{B}_2\wedge\cdots\wedge\bm{B}_s}$. Let $P_{\bm{A}}(\bm{x})$ be the projection to $\bm{A}$ and $P_{\bm{A}}^\perp(\bm{x})$ be the projection to the orthogonal space to $\bm{A}$ then we define
\begin{subequations}
\begin{align}
P(\bm{x}) &= P_{\bm{A}}^\perp(\bm{x}) = P_{\bm{B}}(\bm{x}) = \bm{x}\cdot\bm{B}\bm{B}^{-1}\\
P_\perp(\bm{x}) &= P_{\bm{A}}(\bm{x}) = P_{\bm{B}}^\perp(\bm{x}) = \bm{x}\wedge\bm{B}\bm{B}^{-1}
\end{align}
\end{subequations}
with $\bm{B}=\bm{B}_1\bm{B}_2\cdots\bm{B}_s$. Then we can express $H$ via 
\begin{equation}
\begin{split}
U(\bm{x}) &= (-1)^r \left(\bm{A}P_\perp(\bm{x})\bm{A}^{-1} + \bm{U}P(\bm{x})\bm{U}^{-1}\right)\\
          &= \bm{UxU}^{-1} - 2 P_\perp(\bm{x})
\end{split}
\end{equation}
where we have used (\ref{eq:proj:geo:prod}) to transform the product $\bm{AxA}$ into the sum of a projection with the identity transformation.
\begin{theorem}
Let $U$ be given by (\ref{eq:special:ortho:ortho:symmetric}) then the following holds
\begin{equation}
-\tfrac{1}{2}\partial\wedge U \equiv -\tfrac{1}{2}\partial_{\bm{x}} \wedge U(\bm{x}) =  \bm{B}_1 + \bm{B}_2 + \cdots + \bm{B}_s
\end{equation}
% where $\beta_i \bm{B}_i = \left \langle e^{\theta_i\bm{B}_i}\right\rangle_2=\sinh(\theta_i \bm{B}_i)$.
\end{theorem}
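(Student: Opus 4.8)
The plan is to evaluate the geometric curl $\partial_{\bm{x}}\wedge U(\bm{x})$ by exploiting two structural facts: that the curl of a \emph{linear} transformation is a constant bivector obtained by contracting against a reciprocal basis, and that this bivector depends only on the skew-symmetric part of the transformation. First I would use the directional-derivative definition of $\partial_{\bm{x}}$ together with the linearity of $U$ to write
\begin{equation*}
\partial_{\bm{x}}\wedge U(\bm{x}) = \sum_{i} \bm{a}^i\wedge U(\bm{a}_i),
\end{equation*}
where $\{\bm{a}_i\},\{\bm{a}^i\}$ are reciprocal bases of $\mathcal{A}_{p,q}$; this is immediate from $\bm{a}_i\cdot\partial\,U(\bm{x}) = U(\bm{a}_i)$ for linear $U$.

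Next I would split $U$ into symmetric and skew parts, $U_\pm(\bm{x}) = \tfrac{1}{2}\bigl(U(\bm{x})\pm\bar{U}(\bm{x})\bigr)$, and show $\partial_{\bm{x}}\wedge U_+(\bm{x}) = 0$. Indeed, writing a symmetric transformation as a sum of rank-one pieces $\bm{x}\mapsto(\bm{x}\cdot\bm{p})\bm{p}$ and using $\sum_i(\bm{a}_i\cdot\bm{p})\,\bm{a}^i=\bm{p}$ produces a contribution $\bm{p}\wedge\bm{p}=0$. Hence $\partial_{\bm{x}}\wedge U = \partial_{\bm{x}}\wedge U_-$, so only the skew part survives. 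I would then invoke the canonical form $U(\bm{x}) = \bm{UxU}^{-1} - 2P_\perp(\bm{x})$ established just before the statement; since $P_\perp$ is a projection and projections are symmetric (compare the symmetry of $P_{\bm{A}}$), the term $-2P_\perp$ is annihilated by the curl and the problem reduces to computing $\partial_{\bm{x}}\wedge(\bm{UxU}^{-1})$.

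The core computation is then local to each rotation plane. Because $\bm{U}=\bm{U}_1\cdots\bm{U}_s$ with the factors $\bm{U}_i=\alpha_i+\bm{B}_i$ supported on mutually orthogonal $2$-blades $\bm{B}_i$, the rotation $\bm{UxU}^{-1}$ decomposes as a direct sum of commuting planar rotations, and its skew part restricts on the $i$-th plane to a map of the form $\bm{x}\mapsto\bm{x}\cdot\bm{C}_i$ for a bivector $\bm{C}_i$ collinear with $\bm{B}_i$. The final ingredient is the identity $\partial_{\bm{x}}\wedge(\bm{x}\cdot\bm{C})=2\bm{C}$ for any constant bivector $\bm{C}$, which I would verify by the same reciprocal-basis contraction as in the first step. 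Summing over $i$ and applying the prefactor $-\tfrac{1}{2}$ then assembles the answer $-\tfrac{1}{2}\partial_{\bm{x}}\wedge U(\bm{x}) = -\sum_i\bm{C}_i$, which the theorem identifies with $\bm{B}_1+\cdots+\bm{B}_s$.

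The main obstacle is precisely the bookkeeping in this last identification: one must pin down the exact bivector $\bm{C}_i$ produced by the skew part of $\bm{UxU}^{-1}$ on each plane and verify that $-\bm{C}_i$ collapses to $\langle\bm{U}_i\rangle_2=\bm{B}_i$. This is delicate because the rotation induced by $\bm{U}_i$ carries a half-angle/double-angle relation to the rotor itself, so keeping the scalar coefficient consistent (and checking that the elliptic, hyperbolic, and parabolic cases all reduce to the same clean expression) is where the argument must be handled most carefully. Tracking the reciprocal basis in indefinite signature, so that $\partial_{\bm{x}}\wedge(\bm{x}\cdot\bm{C})=2\bm{C}$ holds with the correct signs, is the technical crux; by contrast, the reduction to the skew part via the vanishing of the symmetric curl is routine.
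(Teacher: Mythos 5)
The paper states this theorem without a proof, so the only question is whether your argument closes on its own. Its skeleton is sound: the curl $\partial_{\bm{x}}\wedge U(\bm{x})=\sum_i\bm{a}^i\wedge U(\bm{a}_i)$ annihilates the symmetric part of any linear map (contracting $\partial\wedge f$ with $\bm{b}\wedge\bm{c}$ gives $\bm{c}\cdot f(\bm{b})-\bm{b}\cdot f(\bm{c})$, which vanishes for symmetric $f$ --- this route is cleaner and signature-safe, whereas your rank-one spectral decomposition need not exist for a symmetric map in a pseudo-Euclidean space); the reflection factor $(-1)^r\bm{A}(\cdot)\bm{A}^{-1}$ and the term $-2P_\perp$ are symmetric and drop out; the skew part of a product of commuting simple rotations is a sum of contractions $\bm{x}\mapsto\bm{x}\cdot\bm{C}_k$ supported on the orthogonal planes; and $\partial_{\bm{x}}\wedge(\bm{x}\cdot\bm{C})=2\bm{C}$ does hold in any signature.

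The genuine gap is exactly the step you defer as ``delicate bookkeeping'': when the bivector $\bm{C}_k$ is actually pinned down, the identity as stated does not come out. For a single normalized simple rotor $\bm{U}_1=\alpha_1+\bm{B}_1$ one has $U(\bm{x})=\bm{U}_1\bm{x}\bm{U}_1^\dagger=\alpha_1^2\bm{x}-\bm{B}_1\bm{x}\bm{B}_1+2\alpha_1\bm{B}_1\cdot\bm{x}$, whose skew part is $2\alpha_1\bm{B}_1\cdot\bm{x}=-2\alpha_1\,\bm{x}\cdot\bm{B}_1$; hence $-\tfrac{1}{2}\partial_{\bm{x}}\wedge U(\bm{x})=2\alpha_1\bm{B}_1=\langle\bm{U}_1^2\rangle_2$, and the same computation on each commuting plane yields $\sum_k 2\alpha_k\bm{B}_k=\sum_k\langle\bm{U}_k^2\rangle_2$, not $\sum_k\bm{B}_k$. (Sanity check: for the elliptic rotor $\bm{U}_1=e^{\theta\hat{\bm{B}}}$ one has $\bm{B}_1=\hat{\bm{B}}\sin\theta$, while the sandwich product rotates by $2\theta$ and the curl returns $\hat{\bm{B}}\sin 2\theta=2\cos\theta\,\bm{B}_1$.) The factor $2\alpha_k$ is forced by the very half-angle/double-angle relation you flag; it does not cancel and equals $1$ only when $\alpha_k=\tfrac{1}{2}$. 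So either the theorem must be restated with $\langle\bm{U}_k^2\rangle_2$ in place of $\bm{B}_k$ (which is all the versor-extraction algorithm downstream actually needs, since only the planes, not their scales, matter), or the proof cannot terminate at the claimed right-hand side. As written, your proposal asserts rather than establishes the one identification on which the whole statement hinges.
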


\paragraph{Decomposition of a bivector into {\it two} distinct bivectors}\mbox{}\\
Consider a bivector $\bm{B}$ that can decomposed into only two orthogonal $2$-blades, that is
\begin{equation}
\bm{B} = \bm{B}_1 + \bm{B}_2
\end{equation}
with $\bm{B}_1\bm{B}_2 = \bm{B}_1\wedge\bm{B}_2 = \bm{B}_2\wedge\bm{B}_1$. The blades $\bm{B}_1$ and $\bm{B}_2$ can be null bivectors $\bm{B}_i^2=0$, negative bivectors $\bm{B}_i^2 < 0$ or positive bivectors $\bm{B}_i^2 > 0$. We aim to find the $2$-blades in all cases. We start by considering that ${\bm{B}_1^2\neq \bm{B}_2^2 \neq 0}$, in such a case we refer to [hestenes CA2GC, chapter 3.4], then the solution is computed as
\begin{subequations}
\begin{align}
\lambda_i &= -\tfrac{1}{2}\left( \|\bm{B}\|^2\pm \sqrt{\|\bm{B}\|^4 - \|\bm{B}\wedge\bm{B}\|^2} \right)\label{eq:lambdai:decomp:biv}\\
\bm{B}_i &= \frac{\bm{B}}{1+\tfrac{1}{2}\lambda_i^{-1}\bm{B}\wedge\bm{B}} = \bm{B}\left(1-\tfrac{1}{2}\lambda_i^{-1}\bm{B}\wedge\bm{B}\right)\left( 1-\tfrac{1}{4}\lambda_i^{-2}\|\bm{B}\wedge\bm{B}\|^2\right)^{-1}
\end{align}
\end{subequations}
where we used (\ref{eq:inverse:1+X}) to compute the inverse of $1+\tfrac{1}{2}\lambda_1^{-1}\bm{B}\wedge\bm{B}$. The other bivector can be computed as $\bm{B}_2 = \bm{B} - \bm{B}_1$, or by choosing a different sign in (\ref{eq:lambdai:decomp:biv}). 

For the particular case when $\bm{B}_1^2=0$ we have 
\begin{equation}
\bm{B}_1 = -\tfrac{1}{2}\frac{\bm{BB}\wedge\bm{B}}{\|\bm{B}\|^2}
\end{equation}
To prove the above we start by computing
\begin{subequations}
\begin{equation}
\begin{split}
\bm{B}\bm{B}\wedge\bm{B} &= \bm{B}(\bm{B}_1 + \bm{B}_2)\wedge(\bm{B}_1+\bm{B}_2) = 2\bm{B}\bm{B}_1\wedge\bm{B}_2 = 2\bm{BB}_1\bm{B}_2 = 2(\bm{B}_1 + \bm{B}_2)\bm{B}_1\bm{B}_2 \\
&=2\bm{B}_1^2\bm{B}_2 + 2\bm{B}_2^2\bm{B}_1 = 2\bm{B}_2^2 \bm{B}_1 = -2\|\bm{B}_2\|^2\bm{B}_1
\end{split}
\end{equation}
then notice that 
\begin{equation}
\|\bm{B}\|^2 = \|\bm{B}_1 + \bm{B}_2\|^2 = \|\bm{B}_1\|^2 + \|\bm{B}_2\|^2 = \|\bm{B}_2\|^2
\end{equation}
\end{subequations}

\begin{theorem}
\label{eq:theo:ortho}
The function 
\begin{equation}
H(\bm{x}) = \sum_{i=1}^5\bm{x}\cdot\bm{p}_i\bm{q}_i^{-1}
\end{equation}
is orthogonal when the $\bm{p}_i$'s and the $\bm{q}_i$'s form an orthogonal set of vectors and when $\bm{p}_i^2 = \bm{q}_i^2$.
\end{theorem}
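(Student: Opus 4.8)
The plan is to verify the defining property of an orthogonal transformation directly, namely that $H(\bm{x})\cdot H(\bm{y}) = \bm{x}\cdot\bm{y}$ for all $\bm{x},\bm{y}\in\mathcal{A}_{4,1}$, using the two hypotheses (mutual orthogonality within each family and $\bm{p}_i^2 = \bm{q}_i^2$). Since the space is five-dimensional and the sum runs over exactly five indices, both families are candidate orthogonal bases, which is what makes a resolution-of-identity argument available. First I would record that each $\bm{q}_i$ is a vector, hence a simple multivector with $\bm{q}_i^\dagger = \bm{q}_i$, so by~\eqref{eq:inverse:simple:mlvectors} its inverse is $\bm{q}_i^{-1} = \bm{q}_i/\bm{q}_i^2$; the inverses are well defined precisely because $\bm{q}_i^2 = \bm{p}_i^2 \neq 0$. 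As $\bm{x}\cdot\bm{p}_i$ is a scalar, this lets me write $H(\bm{x}) = \sum_{i=1}^5 (\bm{x}\cdot\bm{p}_i)\,\bm{q}_i^{-1}$.

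Next I would form the scalar product and collapse it using orthogonality of the $\bm{q}_i$:
\begin{equation*}
H(\bm{x})\cdot H(\bm{y}) = \sum_{i,j} (\bm{x}\cdot\bm{p}_i)(\bm{y}\cdot\bm{p}_j)\,(\bm{q}_i^{-1}\cdot\bm{q}_j^{-1}).
\end{equation*}
Because $\bm{q}_i^{-1}$ is parallel to $\bm{q}_i$, one has $\bm{q}_i^{-1}\cdot\bm{q}_j^{-1} = (\bm{q}_i\cdot\bm{q}_j)/(\bm{q}_i^2\bm{q}_j^2) = \delta_{ij}/\bm{q}_i^2$, so the double sum reduces to $\sum_i (\bm{x}\cdot\bm{p}_i)(\bm{y}\cdot\bm{p}_i)/\bm{q}_i^2$. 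Substituting $\bm{q}_i^2 = \bm{p}_i^2$ then gives $H(\bm{x})\cdot H(\bm{y}) = \sum_i (\bm{x}\cdot\bm{p}_i)(\bm{y}\cdot\bm{p}_i)/\bm{p}_i^2$.

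Finally I would recognize this as $\bm{x}\cdot\bm{y}$. The five mutually orthogonal, non-null vectors $\bm{p}_1,\dots,\bm{p}_5$ are linearly independent, hence a basis of $\mathcal{A}_{4,1}$, and the resolution of the identity reads $\bm{x} = \sum_i (\bm{x}\cdot\bm{p}_i)\bm{p}_i^{-1} = \sum_i (\bm{x}\cdot\bm{p}_i)\bm{p}_i/\bm{p}_i^2$. Contracting with $\bm{y}$ yields $\bm{x}\cdot\bm{y} = \sum_i (\bm{x}\cdot\bm{p}_i)(\bm{p}_i\cdot\bm{y})/\bm{p}_i^2$, which is exactly the expression just obtained, establishing $H\in O(4,1)$.

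The step I would treat most carefully — the main obstacle in the pseudo-Euclidean setting — is the completeness claim underlying the resolution of identity: one cannot take for granted that an orthogonal family spans. Here, however, non-degeneracy is forced ($\bm{p}_i^2 = \bm{q}_i^2 \neq 0$, since the $\bm{q}_i^{-1}$ exist), and a set of non-null mutually orthogonal vectors is automatically linearly independent, so five such vectors exhaust the five-dimensional space. Everything else is an elementary bilinear computation, and no issue of signature arises, because the orthogonality relations and the equalities $\bm{p}_i^2 = \bm{q}_i^2$ hold regardless of whether each square is $+1$ or $-1$ times a positive scalar.
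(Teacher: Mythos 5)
Your proof is correct and takes essentially the same approach as the paper's: both verify $H(\bm{x})\cdot H(\bm{y}) = \bm{x}\cdot\bm{y}$ directly, collapse the double sum via orthogonality of the $\bm{q}_i$ together with $\bm{p}_i^2 = \bm{q}_i^2$ (the paper moves the inverse onto $\bm{p}_i$ and uses $\bm{q}_i\cdot\bm{q}_j^{-1}=\delta_{ij}$, you equivalently use $\bm{q}_i^{-1}\cdot\bm{q}_j^{-1}=\delta_{ij}/\bm{q}_i^2$), and conclude with the resolution of identity $\sum_i \bm{x}\cdot\bm{p}_i\,\bm{p}_i^{-1} = \bm{x}$. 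The only difference is that you make explicit the spanning argument for the $\bm{p}_i$ in the pseudo-Euclidean signature (non-null mutually orthogonal vectors are linearly independent), which the paper leaves implicit in its final step.
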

\begin{proof}
We start by proving that (\ref{eq:H:eig:relations:CGA}) is indeed an orthogonal transformation, we use the definition (\ref{eq:definition:orthogonal:transformation}), we recall that the eigenvectors forms a set of orthonormal vectors, that is
\begin{subequations}
\doubleequation{\bm{p}_i\cdot\bm{p}_j^{-1} = \delta_{ij},}{\bm{q}_i\cdot\bm{q}_j^{-1} = \delta_{ij}}
\label{eq:reciprocal:P:Q}
\end{subequations}
and that $\bm{p}_i^2 = \bm{q}_i^2$. Then compute
\begin{equation}
\begin{split}
H(\bm{x}) \cdot H(\bm{y}) &= \left (\sum_{i=1}^5 \bm{x}\cdot\bm{p}_i\bm{q}_i^{-1} \right)\cdot\left (\sum_{j=1}^5 \bm{x}\cdot\bm{p}_j\bm{q}_j^{-1} \right)\\
                          &= \left (\sum_{i=1}^5 \bm{x}\cdot\bm{p}_i^{-1}\bm{q}_i \right)\cdot\left (\sum_{j=1}^5 \bm{x}\cdot\bm{p}_j\bm{q}_j^{-1} \right)\\
                          &= \sum_{i=1}^5 \bm{x}\cdot\bm{p}_i^{-1}\bm{y}\cdot\bm{p}_i\\
                          &= \left (\sum_{i=1}^5 \bm{x}\cdot\bm{p}_i\bm{p}_i^{-1} \right)\cdot\left (\sum_{j=1}^5 \bm{x}\cdot\bm{p}_j\bm{p}_j^{-1} \right)\\
                          &= \bm{x}\cdot\bm{y}
\end{split}
\end{equation}
\end{proof}
\end{fornextpaper}

% \printProofs[theoHTR]
\begin{fornextpaper}
\section{Outermorphisms}
\label{sec:outermorphisms}
% \todo[inline,color=yellow,author=Francisco]{We can extend the concept of orthogonal transformations to general multivectors by understanding the outermorphism and innermorphism properties of orthogonal transformations}

The linear transformation $f=f(\bm{x})$ induces a linear mapping $\munderbar{f}(\bm{X})$ of every multivector $\bm{X}$ in $\mathcal{G}_{p,q}$, which we call the {\it outermorphism} of $f$, or the {\it differential outermorphism} of $f$ when we want to distinguish it from the adjoint outermorphism. The outermorphism $\munderbar{f}$ satisfies
\begin{subequations}
\begin{align}
\munderbar{f}(\langle\bm{X}\rangle) &= \langle\bm{X}\rangle\\
\munderbar{f}(\bm{x}) &= f(\bm{x})\\
\munderbar{f}(\bm{X}\wedge\bm{Y}) &= \munderbar{f}(\bm{X})\wedge \munderbar{f}(\bm{Y})\\
\munderbar{f}(\bm{X}+\bm{Y}) &= \munderbar{f}(\bm{X})+ \munderbar{f}(\bm{Y})
\end{align}
\end{subequations}
for all vectors $\bm{x}$ and all multivectors $\bm{X}$ and $\bm{Y}$. The adjoint $\bar{f}$ outermorphism of $f$ is defined as the outermorphism of 
%\eqref{eq:adjoint:computation}
. Note how in particular the following equalities hold
\begin{align}
\munderbar{f}(\bm{X}) &= \munderbar{f}\left( \sum_{k=0}^n \langle \bm{X}\rangle_k \right) = \sum_{k=0}^n \munderbar{f}(\langle \bm{X}\rangle_k)\\
\munderbar{f}(\bm{a}_1\wedge\bm{a}_2\wedge\cdots\wedge\bm{a}_k) &= \munderbar{f}(\bm{a}_1)\wedge \munderbar{f}(\bm{a}_2)\wedge \cdots\wedge \munderbar{f}(\bm{a}_k)
\end{align}
where $\bm{a}_1,\bm{a}_2,\dots,\bm{a}_k\in\mathcal{A}_{p,q}$.

The outermorphism of an orthogonal transformation $U$ satisfies
\begin{subequations}
\begin{equation}
\langle\munderbar{U}(\bm{X}) \munderbar{U}(\bm{Y})\rangle = \langle\bm{X}\bm{Y} \rangle\label{eq:definition:orthogonal:outermorphism:transformation}
\end{equation}
and can be written as
\begin{equation}
\munderbar{U}(\bm{A}_k) \equiv (-1)^{ks}\bm{UA}_k\bm{U}^{-1}
\end{equation}
where $\bm{U} = \bm{u}_1\bm{u}_2\cdots\!\ \bm{u}_s$.
\end{subequations}

\end{fornextpaper}

\section{Conformal Geometric Algebra}
\label{sec:CGA}
Conformal Geometric Algebra is the extension of a vector in $\mathcal{A}_{p,q}$ space by adding two extra dimensions. To understand how points in $\mathcal{A}_{p,q}$ relate with points in $\mathcal{A}_{p+1,q+1}$, we define the conformal mapping:
\begin{definition}
The conformal mapping ${c}:\mathcal{A}_{p,q}\mapsto \mathcal{A}_{p+1,q+1}$ is defined as the operation that takes a point in $\mathcal{A}_{p,q}$ and transforms it as:
\begin{subequations}
\begin{equation}
    c(\bm{x}) = \bm{e}_o + \bm{x} + \tfrac{1}{2}\bm{x}^2\bm{e}_{\infty}\label{eq:the:conformal:mapping}
\end{equation}
To define $\bm{e}_o$ and $\bm{e}_{\infty}$ we first introduce the extra two dimensions via two orthogonal vectors $\bm{e}_+$ and $\bm{e}_-$ that satisfy $\bm{e}_+^2 = 1,\  \bm{e}_-^2 = -1,\ \bm{e}_+\cdot \bm{e}_- = 0 $ and $\bm{e}_i\cdot\bm{e}_+ = \bm{e}_i\cdot \bm{e}_- = 0$ for $i=1,2,\dots,p+q$ then 
\begin{equation}
\bm{e}_o = \frac{\bm{e}_- + \bm{e}_+}{\sqrt{2}}, \quad \bm{e}_{\infty} = \frac{\bm{e}_- - \bm{e}_+}{\sqrt{2}}
\end{equation}
\label{eq:eo:einf:conf:mapping:CGA}
\end{subequations}
\end{definition}
The inner product of two points $\bm{x},\bm{y}\in\mathcal{A}_{p,q}$ extended through the conformal mapping (\ref{eq:the:conformal:mapping}) relates to their Euclidean distance as
\begin{subequations}
\begin{equation}
    c(\bm{x})\cdot c(\bm{y}) = -\frac{1}{2}\| \bm{x} - \bm{y}\|^2
\end{equation}
Thus, it is clear that the conformal mapping (\ref{eq:the:conformal:mapping}) maps a point $\bm{x}\in\mathcal{A}_{p,q}$ to a null point in $\mathcal{A}_{p+1,q+1}$, that is
\begin{equation}
    \|c(\bm{x})\|^2 = 0
\end{equation}
Note that the distance between the two conformal points $c(\bm{x})$ and $c(\bm{y})$ is equal to the Euclidean distance between $\bm{x}$ and $\bm{y}$, that is
\begin{equation}
    \|c(\bm{x}) - c(\bm{y})\|^2 = \|c(\bm{x})\|^2 + \|c(\bm{y})\|^2 -2 c(\bm{x})\cdot c(\bm{y}) = \| \bm{x} - \bm{y}\|^2
\end{equation}
\end{subequations}
We will denote the pseudoscalar of $\mathcal{G}_{p,q}$ by $\bm{I}$ and the pseudoscalar of $\mathcal{G}_{p+1,q+1}$ by $\bm{i}$. They relate via
\begin{equation}
\bm{i} \equiv \bm{I}\bm{e}_o\wedge\bm{e}_\infty = \bm{I}\bm{e}_+\bm{e}_-
\end{equation}
In conformal geometric algebra translations are expressed via the rotor $\bm{T} = 1 + \tfrac{1}{2}\bm{e}_\infty \bm{t}$. The translation can be written as
\begin{subequations}
\begin{equation}
T(\bm{x}) = \bm{Tx}\bm{T}^\dagger\label{eq:def:translation}
\end{equation}
We call $\bm{T}$ a translator. The translator $\bm{T}$ is unitary since it satisfies $\bm{TT}^\dagger = 1$. A translation applied to an extended point (\ref{eq:the:conformal:mapping}) yields
\begin{equation}
    T({c}(\bm{x})) = \bm{T}{c}(\bm{x})\bm{T}^\dagger = {c}(\bm{x} + \bm{t})
\end{equation}
Rotation $R$ in the extended conformal space can be expressed exactly in the same manner as in the Euclidean space since
\begin{equation}
R(c(\bm{x})) = \bm{R}c(\bm{x})\bm{R} = c(\bm{RxR}^\dagger) = c(R(\bm{x}))
\end{equation}
\label{eq:rot:trans:emb:relations}
\end{subequations}

The coefficients $\mathdutchbcal{P}_1,\mathdutchbcal{P}_2,\dots,\mathdutchbcal{P}_4\in\mathcal{G}_{p,q}$ of $\bm{P}\in\mathcal{G}_{p+1,q+1}$ can be understood by expressing the multivector $\bm{P}$ via the basis vectors $\bm{e}_\infty$ and $\bm{e}_o$ as 
\begin{equation}
\begin{split}
\bm{P} &\equiv \bm{e}_o\mathdutchbcal{P}_1 + \bm{e}_\infty\mathdutchbcal{P}_2 + \bm{e}_o\wedge \bm{e}_{\infty} \mathdutchbcal{P}_3 + \mathdutchbcal{P}_4  \\
        & = \bm{e}_o\wedge\mathdutchbcal{P}_1 + \bm{e}_\infty\wedge\mathdutchbcal{P}_2 + \bm{e}_o\wedge \bm{e}_{\infty} \wedge\mathdutchbcal{P}_3 + \mathdutchbcal{P}_4
\end{split}
\label{eq:coefficients:of:P}
\end{equation}
To determine the coefficients of a multivector $\bm{X}\in\mathcal{G}_{p+1,q+1}$ we can use the following set of operations
\begin{subequations}
\begin{align}
    \mathdutchcal{C}_1(\bm{X}) &\equiv -P_{\bm{I}}(\bm{e}_\infty \cdot\bm{X})\\
    \mathdutchcal{C}_2(\bm{X}) &\equiv -P_{\bm{I}}(\bm{e}_o\cdot\bm{X})\\
    \mathdutchcal{C}_3(\bm{X}) &\equiv P_{\bm{I}}((\bm{e}_o\wedge\bm{e}_{\infty})\cdot\bm{X})\\
    \mathdutchcal{C}_4(\bm{X}) &\equiv P_{\bm{I}}(\bm{X})
\end{align}
\label{The:coefficients:functions}
\end{subequations}
$\mathdutchcal{C}_i(\bm{X})$ extracts the $i$-th coefficient of the multivector $\bm{X}$. Note that $P_{\bm{I}}$ is given by (\ref{eq:projection:def}) with $\bm{I}$ the unit pseudoscalar of $\mathcal{G}_{p,q}$.

\iffalse
\begin{theorem}
Any vector $\bm{u}\in\mathcal{A}_{p+1,q+1}$, such that $\bm{u}\cdot\bm{e}_\infty\neq 0$, can be written as a scalar multiple of the sum of a conformal point with the point at infinity, then
\begin{subequations}
\begin{equation}
\bm{u} = \alpha(c(\bm{a}) - \rho \bm{e}_\infty)\label{eq:u:from:conformal:mapping}
\end{equation}
where $\bm{a}\in\mathcal{A}_{p,q}$, furthermore the magnitude squared of $\bm{u}$ is given by
\begin{equation}
\|\bm{u}\|^2 = \bm{u}^2 = \alpha^2 \rho
\end{equation}
A reflection via $\bm{u}$, with $|\bm{u}|\neq0$, can be simplified to
\begin{equation}
U(\bm{x}) = \bm{uxu}^{-1} = \rho^{-1}{(c(\bm{a}) - \rho \bm{e}_\infty)\bm{x}(c(\bm{a}) - \rho \bm{e}_\infty)}
\end{equation}

When $\bm{u}\cdot\bm{e}_\infty= 0$ then we may express $\bm{u}$ as 
\begin{equation}
\bm{u} = \bm{a} - \rho\bm{e}_\infty \label{eq:u:from:point:at:infinity}
\end{equation}
which has magnitude equal to the magnitude of $\bm{a}$, that is $\bm{u}^2 = \bm{a}^2$.
\end{subequations}
\end{theorem}
\fi

\begin{definition}
The distance $d^2$ between two multivectors $\bm{P}$ and $\bm{Q}$ in $\mathcal{G}_{p+1,q+1}$ is defined as 
\begin{equation}
d^2(\bm{P},\bm{Q}) \equiv  \|\mathdutchcal{C}_3(\bm{P}) - \mathdutchcal{C}_3(\bm{Q})\|^2 + \|\mathdutchcal{C}_4(\bm{P}) - \mathdutchcal{C}_4(\bm{Q})\|^2\label{eq:def:dist:squared:CGA:translation}
\end{equation}
where $\mathdutchcal{C}_i(\bm{X})$ is defined by (\ref{The:coefficients:functions})
\end{definition}

\printProofs[theo:coefficients]
\printProofs[theo:exact:trans]

\begin{fornextpaper}
{
\begin{theoremEnd}{theorem}
\label{theo:coeffs:plus:norm}
Let $p=3$ and $q=0$, consider $\bm{P}$ given by~\eqref{eq:coefficients:of:P:main:body}, then the following holds
\begin{subequations}
\begin{equation}
\begin{split}
\bm{P}^{+\textup{\dag}} &= (\bm{e}_o\mathdutchbcal{P}_1 + \bm{e}_\infty\mathdutchbcal{P}_2 + \bm{e}_o\wedge \bm{e}_{\infty} \mathdutchbcal{P}_3 + \mathdutchbcal{P}_4)^{+\dag} \\
&= -\bm{e}_\infty\mathdutchbcal{P}_1 + -\bm{e}_o\mathdutchbcal{P}_2 - \bm{e}_o\wedge \bm{e}_{\infty} \mathdutchbcal{P}_3 + \mathdutchbcal{P}_4  \\
\end{split}
\label{eq:dagger:plus:of:P}
\end{equation}
furthermore the plus norm can be written with respect to the coefficients of $\bm{P}$ as
\begin{equation}
\|\bm{P}\|^2_+ = \sum_{k=1}^4 \|\mathdutchcal{C}_k(\bm{P})\|^2 = \sum_{k=1}^4 \|\mathdutchbcal{P}_k\|^2
\end{equation}
\end{subequations}
\end{theoremEnd}}
\end{fornextpaper}

\subsection{Extending Euclidean points to CGA }\label{sec:ext:euc:CGA}

Having described the general methods, which can be applied to multivector clouds in CGA, we also propose an approach that can deal with 3D points in Vanilla Geometric Algebra (VGA). Concretely, we consider two point clouds in VGA and extend it to CGA. We then express how the CGA vectors relate with the VGA vectors. We consider the noisy relationship between point clouds in VGA, then understand how that noisy relationship extends to CGA.

Given two point clouds $\bm{x}_i'\in\mathcal{A}_3$ and $\bm{y}_i'\in \mathcal{A}_3$, for $i=1,2,\dots,\ell$, we aim to find the rigid transformation that best aligns both point clouds consisting of a rotator $\bm{R}\in \mathcal{R}$ and a vector $\bm{t}\in\mathcal{A}_3$. Assuming
\begin{equation}
    \bm{y}_{i}' = {R}(\bm{x}_{j_i}') + \bm{t} + \bm{n}_i \label{eq:euclidean:points}
\end{equation}
where $\bm{n}_i$ is assumed to be a measurement noise modeled by a Gaussian distribution. Also, we assume that $j_i$ is some unknown index vector which does not have repeated indices, making it not possible for one-to-many correspondences to exist. In this context of the registration problem, we set $\bm{X}_i = \bm{x}_i$ and $\bm{Y}_i=\bm{y}_i$ in (\ref{eq:covariance:functions:general}) then we can proceed via the method described in Section \ref{sec:gen:approach}.

\begin{definition}
We define the conformal points $\bm{x}_i\in\mathcal{A}_{4,1}$ and $\bm{y}_i\in\mathcal{A}_{4,1}$ as the points extended from the Euclidean points $\bm{x}_i'\in \mathcal{A}_3$ and $\bm{y}_i'\in \mathcal{A}_3$ via the conformal mapping (\ref{eq:the:conformal:mapping}), that is
\begin{subequations}
\begin{align}
    \bm{x}_i = c(\bm{x}_i') &=  \bm{e}_o + \bm{x}_i' + \tfrac{1}{2}(\bm{x}_i')^2\bm{e}_\infty\\
    \bm{y}_i = c(\bm{y}_i') &=  \bm{e}_o + \bm{y}_i' + \tfrac{1}{2}(\bm{y}_i')^2\bm{e}_\infty
\end{align}
\label{eq:conformal:points}
\end{subequations}
\end{definition}

\begin{theorem}
\label{theo:noise:cga:embedding}
When the Euclidean points relate via (\ref{eq:euclidean:points}) then the conformal points (\ref{eq:conformal:points}) will relate via
\begin{subequations}
\begin{equation}
\bm{y}_i = TR(\bm{x}_{j_i}) + \bm{r}_i\label{eq:q:p:noisy:trans:rot}
\end{equation}
where $\bm{r}_i$ is the conformal noise given by
\begin{equation}
\bm{r}_i = \bm{n}_i + \tfrac{1}{2}\left(\bm{n}_i^2 + 2\bm{n}_i\cdot \left(R(\bm{x}_{j_i}') + \bm{t}\right)\right)\bm{e}_\infty
\end{equation}
\end{subequations}
Furthermore, the noise from the view point of $\bm{x}_{j_i}$ is given by
\begin{subequations}
\begin{equation}
\bm{w}_i = \bar{R}\bar{T}(\bm{r}_i) = \bar{R}(\bm{n}_i) + \tfrac{1}{2}(\bm{n}_i^2 + 2\bar{R}(\bm{n}_i)\cdot\bm{x}_{j_i}')\bm{e}_\infty\label{eq:the:other:conformal:noise}
\end{equation}
and
\begin{equation}
\bm{x}_{j_i} = \bar{R}\bar{T}(\bm{y}_i) -\bm{w}_i
\end{equation}
\end{subequations}
When $\bm{n}_i$ is assumed to be correlated Gaussian noise then the $\bm{w}_i$ has a Generalized chi-squared distribution. When $\bm{n}_i$ is Gaussian but uncorrelated then the probability distribution does not depend on the rotation $R$.
\end{theorem}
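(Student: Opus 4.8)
The plan is to treat the three assertions in turn, the first two by direct computation in $\mathcal{G}_{4,1}$ and the last by a distributional argument. First I would isolate the conformal residual $\bm{r}_i = \bm{y}_i - TR(\bm{x}_{j_i})$. By the composite relations in~\eqref{eq:rot:trans:emb:relations}, the motor $\bm{U}=\bm{TR}$ acts on a conformal point as $TR(\bm{x}_{j_i}) = c(R(\bm{x}_{j_i}')+\bm{t})$, so both $\bm{y}_i=c(\bm{y}_i')$ and $TR(\bm{x}_{j_i})$ are images of Euclidean points under $c$. Writing $\bm{z}_i \equiv R(\bm{x}_{j_i}')+\bm{t}$ and using $\bm{y}_i'=\bm{z}_i+\bm{n}_i$ from~\eqref{eq:euclidean:points}, the $\bm{e}_o$-parts cancel, the grade-one parts give $\bm{y}_i'-\bm{z}_i=\bm{n}_i$, and the $\bm{e}_\infty$-parts require expanding $(\bm{y}_i')^2-\bm{z}_i^2$. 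Since for vectors $\bm{z}_i\bm{n}_i+\bm{n}_i\bm{z}_i=2\bm{z}_i\cdot\bm{n}_i$, this difference is $\bm{n}_i^2+2\bm{n}_i\cdot\bm{z}_i$, which yields the stated $\bm{r}_i$ after substituting $\bm{z}_i=R(\bm{x}_{j_i}')+\bm{t}$. This step is purely mechanical.

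\textbf{The residual $\bm{w}_i$ seen from $\bm{x}_{j_i}$.} Since $TR$ is orthogonal, its adjoint outermorphism $\bar{R}\bar{T}$ is its inverse, so applying the linear map $\bar{R}\bar{T}$ to $\bm{y}_i=TR(\bm{x}_{j_i})+\bm{r}_i$ gives $\bar{R}\bar{T}(\bm{y}_i)=\bm{x}_{j_i}+\bm{w}_i$ with $\bm{w}_i=\bar{R}\bar{T}(\bm{r}_i)$, which is the last displayed identity. To obtain the explicit form I would first apply $\bar{T}$ using~\eqref{eq:trans:T(A):with:proof} with $\bm{t}\mapsto-\bm{t}$ (the adjoint, i.e.\ inverse, translation), namely $\bar{T}(\bm{A})=\bm{A}-\bm{e}_\infty\,\bm{t}\cdot\bm{A}$. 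The vector part $\bm{n}_i$ then picks up $-(\bm{t}\cdot\bm{n}_i)\bm{e}_\infty$, while the $\bm{e}_\infty$-scalar part is left invariant because $\bm{t}\cdot\bm{e}_\infty=0$; the crucial cancellation is $2\bm{n}_i\cdot\bm{z}_i-2\bm{t}\cdot\bm{n}_i=2\bm{n}_i\cdot R(\bm{x}_{j_i}')$. Applying $\bar{R}$ next fixes $\bm{e}_\infty$ and scalars and, by invariance of the scalar product under the rotation, turns $\bm{n}_i\cdot R(\bm{x}_{j_i}')$ into $\bar{R}(\bm{n}_i)\cdot\bm{x}_{j_i}'$ and $\bm{n}_i$ into $\bar{R}(\bm{n}_i)$, delivering~\eqref{eq:the:other:conformal:noise}.

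\textbf{The distribution of $\bm{w}_i$.} Set $\bm{m}_i\equiv\bar{R}(\bm{n}_i)$. Because $\bar{R}$ is a fixed linear map, $\bm{m}_i$ is Gaussian, and the $\bm{e}_\infty$-coefficient of $\bm{w}_i$ is $\tfrac12\bigl(|\bm{m}_i|^2+2\bm{m}_i\cdot\bm{x}_{j_i}'\bigr)$, a quadratic-plus-linear form in a Gaussian vector; such a form is by definition distributed as a generalized chi-squared, giving the first statistical claim. For the second claim I would invoke that an isotropic (uncorrelated, equal-variance) Gaussian is invariant under every orthogonal transformation, so $\bm{m}_i=\bar{R}(\bm{n}_i)$ has the same law as $\bm{n}_i$ for every $\bm{R}$; consequently the joint law of $\bigl(\bm{m}_i,\,|\bm{m}_i|^2,\,\bm{m}_i\cdot\bm{x}_{j_i}'\bigr)$, and hence of $\bm{w}_i$, does not depend on $R$.

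The routine algebra of the first two parts is not the difficulty; the care is needed in the last part, in pinning down what \emph{generalized chi-squared} means for the mixed-grade object $\bm{w}_i$ (it is really a statement about its scalar/$\bm{e}_\infty$ coefficient jointly with the Gaussian vector part) and in justifying the rotation-invariance cleanly from the orthogonal invariance of the isotropic Gaussian rather than from a coordinate computation.
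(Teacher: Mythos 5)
Your proposal is correct and follows essentially the same route as the paper's proof: both establish $\bm{r}_i$ by expanding the conformal embedding of $R(\bm{x}_{j_i}')+\bm{t}+\bm{n}_i$ and identifying $c(R(\bm{x}_{j_i}')+\bm{t})=TR(\bm{x}_{j_i})$, both obtain $\bm{w}_i$ by applying $\bar{T}$ (with the cancellation $\bm{n}_i\cdot(R(\bm{x}_{j_i}')+\bm{t})-\bm{t}\cdot\bm{n}_i=\bm{n}_i\cdot R(\bm{x}_{j_i}')$) followed by $\bar{R}$, and both argue the distributional claims via the substitution $\bm{z}_i=\bar{R}(\bm{n}_i)$ and the Gaussian-plus-quadratic-form characterization of the generalized chi-squared. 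The only cosmetic difference is that you compute $\bm{r}_i$ as a difference of two conformal points rather than expanding the square of the sum directly.
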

\begin{proof}
Recall~\eqref{eq:euclidean:points} and~\eqref{eq:conformal:points} then
\begin{equation}
\begin{split}
c(\bm{y}_i') &= c\left(\bm{Rx}_{j_i}'\bm{R}^\dagger + \bm{t} + \bm{n}_i\right)  \\
&=\bm{e}_o + \bm{Rx}_{j_i}'\bm{R}^\dagger + \bm{t} + \bm{n}_i + \tfrac{1}{2}\left(\bm{Rx}_{j_i}'\bm{R}^\dagger + \bm{t} + \bm{n}_i\right)^2\bm{e}_\infty\\
&= \bm{e}_o + \bm{Rx}_{j_i}'\bm{R}^\dagger + \bm{t} + \tfrac{1}{2}\left(\bm{Rx}_{j_i}'\bm{R}^\dagger + \bm{t}\right)^2\bm{e}_\infty + \bm{n}_i + \bm{n}_i\cdot\left(\bm{Rx}_{j_i}'\bm{R}^\dagger + \bm{t}\right)\bm{e}_\infty + \tfrac{1}{2}\bm{n}_i^2\bm{e}_\infty \\
&=c\left(\bm{Rx}_{j_i}'\bm{R}^\dagger + \bm{t} \right) + \bm{n}_i + \bm{n}_i\cdot\left(\bm{Rx}_{j_i}'\bm{R}^\dagger + \bm{t}\right)\bm{e}_\infty + \tfrac{1}{2}\bm{n}_i^2\bm{e}_\infty
\end{split}
\end{equation}
recalling how translations and rotations relate from the embedding space to the embedded space by~\eqref{eq:rot:trans:emb:relations}, we can easily find that $c\left(\bm{Rx}_{j_i}'\bm{R}^\dagger + \bm{t} \right) = TRc(\bm{x}_{j_i}')=TR(\bm{x}_{j_i})$. By also setting $\bm{r}_i = \bm{n}_i + \bm{n}_i\cdot\left(\bm{Rx}_{j_i}'\bm{R}^\dagger + \bm{t}\right)\bm{e}_\infty + \tfrac{1}{2}\bm{n}_i^2\bm{e}_\infty $ we arrive at~\eqref{eq:q:p:noisy:trans:rot}.
To show how~\eqref{eq:the:other:conformal:noise} holds we start by taking $\bar{T}$ to $\bm{r}_i$ thus
\begin{equation}
\begin{split}
\bar{T}(\bm{r}_i) &= \bar{T}(\bm{n}_i) + \tfrac{1}{2}\left(\bm{n}_i^2 + 2\bm{n}_i\cdot \left(R(\bm{x}_{j_i}') + \bm{t}\right)\right)\bar{T}(\bm{e}_\infty)\\
&= \bm{n}_i - \bm{t}\cdot\bm{n}_i\bm{e}_\infty + \tfrac{1}{2}\left(\bm{n}_i^2 + 2\bm{n}_i\cdot \left(R(\bm{x}_{j_i}') + \bm{t}\right)\right)\bm{e}_\infty\\
& = \bm{n}_i + \tfrac{1}{2}\left(\bm{n}_i^2 + 2\bm{n}_i\cdot R(\bm{x}_{j_i}')\right)\bm{e}_\infty\\
& = \bm{n}_i + \tfrac{1}{2}\left(\left(\bar{R}(\bm{n}_i)\right)^2 + 2\bar{R}(\bm{n}_i)\cdot \bm{x}_{j_i}'\right)\bm{e}_\infty
\end{split}
\end{equation}
then applying a rotation $\bar{R}$ to the above result we obtain
\begin{equation}
\begin{split}
\bar{R}\bar{T}(\bm{r}_i) &= \bar{R}(\bm{n}_i) + \tfrac{1}{2}\left(\left(\bar{R}(\bm{n}_i)\right)^2 + 2\bar{R}(\bm{n}_i)\cdot \bm{x}_{j_i}'\right)\bar{R}(\bm{e}_\infty)\\
&=\bar{R}(\bm{n}_i) + \tfrac{1}{2}\left(\left(\bar{R}(\bm{n}_i)\right)^2 + 2\bar{R}(\bm{n}_i)\cdot \bm{x}_{j_i}'\right)\bm{e}_\infty
\end{split}
\end{equation}
note how we can replace $\bm{z}_i = \bar{R}(\bm{n}_i)$ to get
\begin{equation}
\bar{R}\bar{T}(\bm{r}_i) = \bm{z}_i + \tfrac{1}{2}\left(\bm{z}_i^2 + 2\bm{z}_i\cdot \bm{x}_{j_i}'\right)\bm{e}_\infty
\end{equation}
If $\bm{n}_i$ is uncorrelated Gaussian distributed then $\bm{z}_i$ will have the same distribution as $\bm{n}_i$ that does not depend on the rotation $R$. 

Now with respect to the probability distribution of $\bm{r}_i$ first note that the terms $\bm{n}_i$ and $\bm{n}_i\cdot \left(R(\bm{x}_{j_i}') + \bm{t}\right)$ have a Gaussian probability distribution, while the term $\bm{n}_i^2$ has a noncentral chi-squared distribution. 

Note that a generalized chi squared distribution is obtained by the sum of a random variable with a Gaussian probability distribution, that is the terms $\bm{n}_i$ and $\bm{n}_i\cdot \left(R(\bm{x}_{j_i}') + \bm{t}\right)$, and a random variable with a noncentral chi-squared distribution, which means that the overall noise $\bm{r}_i$ must have a generalized chi-squared distribution. However the component of $\bm{r}_i$ in $\mathcal{A}_{p,q}$, that is $\bm{n}_i$, will continue to have a Gaussian probability distribution.

\end{proof}

% {\color{blue} Nota 1 : Estava à espera duma Proof em G.13}
% {\color{blue} Nota 2 : Um de dois métodos ?}

\section{Multilinear Transformations}
\label{sec:multi:trans}
\begin{definition}
A multilinear transformation $F:\mathcal{G}_{p,q}\mapsto \mathcal{G}_{p,q}$ is a transformation which is linear in its arguments and maps multivectors into multivectors. $F$ is linear when it is equal to its own differential, then given the following definition for the differential $\munderbar{F}$ of $F$ 
\begin{subequations}
\begin{equation}
\munderbar{F}(\bm{X}) = \bm{X}*\partial_{\bm{Y}} F(\bm{Y}) = \bm{X}^\dagger *\partial_{\bm{Y}}^\dagger F(\bm{Y})
\end{equation}
We state that $F$ is linear when 
\begin{equation}
\munderbar{F}(\bm{X}) \equiv F(\bm{X})
\end{equation}
The adjoint of multilinear transformations follows a similar definition to the adjoint of linear transformations, it is defined as
\begin{equation}
\bar{F}(\bm{X}) = \partial_{\bm{Y}}^\dagger \bm{X}^\dagger * \munderbar{F}(\bm{Y}) \label{eq:adjoint:def}
\end{equation}
implying that 
\begin{equation}
\munderbar{F}(\bm{X})*\bm{Y}^\dagger = \bm{X}^\dagger * \bar{F}(\bm{Y}) \label{eq:implication:adjoint}
\end{equation}
\end{subequations}
\end{definition}

Using (\ref{eq:adjoint:def}) and the rules of the derivatives (\ref{eq:dir:deriv}), we show how we obtain (\ref{eq:implication:adjoint})
\begin{equation}
\begin{split}
\bar{F}(\bm{X})*\bm{Y}^\dagger &= (\partial_{\bm{Z}}^\dagger \bm{X}^\dagger * \munderbar{F}(\bm{Z}))*\bm{Y}^\dagger = \partial_{\bm{Z}}^\dagger *\bm{Y}^\dagger \bm{X}^\dagger * \munderbar{F}(\bm{Z})\\
&= \partial_{\bm{Z}} *\bm{Y} \bm{X}^\dagger * \munderbar{F}(\bm{Z}) = \bm{X}^\dagger * \munderbar{F}\left(\bm{Y}*\partial_{\bm{Z}} \bm{Z}\right)\\
& = \bm{X}^\dagger * \munderbar{F}\left(\bm{Y}\right)
\end{split}
\end{equation}

\begin{theorem}
\label{theo:symmetric:multi}
The multilinear function 
\begin{equation}
F(\bm{X}) = \sum_{k=1}^m  \bm{A}_k^{\dagger}\bm{X}\bm{B}_k + \bm{A}_k\bm{X}\bm{B}_k^\dagger
\end{equation}
is symmetric for any $\bm{A}_k,\bm{B}_k\in\mathcal{G}_{p,q}$.
\end{theorem}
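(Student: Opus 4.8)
The plan is to prove that $F$ coincides with its own adjoint $\bar F$; since $F$ is linear it is its own differential ($\munderbar F = F$), so $\bar F = F$ is exactly the assertion that $F$ is symmetric. Rather than evaluating $\bar F$ from the derivative definition \eqref{eq:adjoint:def}, I would exploit the characterising relation \eqref{eq:implication:adjoint}, which for $\munderbar F = F$ reads $F(\bm X)*\bm Y^\dagger = \bm X^\dagger*\bar F(\bm Y)$. It then suffices to establish the self-adjointness identity
\begin{equation*}
F(\bm X)*\bm Y^\dagger = \bm X^\dagger * F(\bm Y)\qquad\text{for all }\bm X,\bm Y\in\mathcal G_{p,q},
\end{equation*}
for then $\bm X^\dagger*\bar F(\bm Y) = \bm X^\dagger*F(\bm Y)$ for every $\bm X$, and non-degeneracy of the scalar product yields $\bar F(\bm Y)=F(\bm Y)$.

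First I would expand the left-hand side using $\bm A*\bm B=\langle\bm{AB}\rangle$, obtaining $\sum_k \langle \bm A_k^\dagger\bm X\bm B_k\bm Y^\dagger\rangle + \langle \bm A_k\bm X\bm B_k^\dagger\bm Y^\dagger\rangle$. The key manipulation on each summand is to reverse inside the scalar projection via $\langle\bm Z\rangle=\langle\bm Z^\dagger\rangle$ together with $(\bm{ABCD})^\dagger=\bm D^\dagger\bm C^\dagger\bm B^\dagger\bm A^\dagger$ from \eqref{eq:dagger:prop:1}, and then cyclically permute using \eqref{eq:commutation:property}. For the first summand this gives $\langle\bm A_k^\dagger\bm X\bm B_k\bm Y^\dagger\rangle=\langle\bm Y\bm B_k^\dagger\bm X^\dagger\bm A_k\rangle=\langle\bm X^\dagger\bm A_k\bm Y\bm B_k^\dagger\rangle$, and the second summand becomes $\langle\bm X^\dagger\bm A_k^\dagger\bm Y\bm B_k\rangle$. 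Reversion has swapped $\bm A_k\leftrightarrow\bm A_k^\dagger$ and $\bm B_k\leftrightarrow\bm B_k^\dagger$, so the two families of terms exchange roles; re-summing reproduces $\bm X^\dagger*\sum_k(\bm A_k\bm Y\bm B_k^\dagger+\bm A_k^\dagger\bm Y\bm B_k)=\bm X^\dagger*F(\bm Y)$. It is precisely the symmetric pairing $\bm A_k^\dagger\bm X\bm B_k + \bm A_k\bm X\bm B_k^\dagger$ in the definition of $F$ that makes this swap close up.

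The only delicate point — and the main, if modest, obstacle — is the final non-degeneracy step: concluding $\bar F=F$ from $\bm X^\dagger*\bar F(\bm Y)=\bm X^\dagger*F(\bm Y)$ for all $\bm X$. I would justify this by noting that the bilinear form $(\bm A,\bm B)\mapsto\bm A*\bm B=\langle\bm{AB}\rangle$ is non-degenerate on $\mathcal G_{p,q}$, since on the canonical blade basis $\langle\bm e_J\bm e_K\rangle$ is nonzero for exactly one partner $\bm e_K$ of each $\bm e_J$; hence a multivector annihilated by $\bm X^\dagger*(\cdot)$ for all $\bm X$ must vanish. Everything else is the routine bookkeeping of reversions and cyclic permutations outlined above, so no further structural idea is needed.
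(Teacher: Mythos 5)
Your proposal is correct and follows essentially the same route as the paper: both establish the self-adjointness identity $F(\bm{X})*\bm{Y}^\dagger = \bm{X}^\dagger * F(\bm{Y})$ by reversing under the scalar grade projection and cyclically permuting, exploiting the symmetric pairing of $\bm{A}_k^\dagger(\cdot)\bm{B}_k$ with $\bm{A}_k(\cdot)\bm{B}_k^\dagger$. The only addition is your explicit non-degeneracy argument for concluding $\bar{F}=F$, a step the paper leaves implicit via \eqref{eq:implication:adjoint}; it is a sound and welcome refinement but not a different proof.
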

\begin{proof}
Considering the relation~\eqref{eq:implication:adjoint}, we then compute 
\begin{equation}
\begin{split}
F(\bm{X})*\bm{Y}^\dagger &= \sum_{k=1}^m\langle \bm{A}_k^{\dagger}\bm{X}\bm{B}_k\bm{Y}^\dagger + \bm{A}_k\bm{X}\bm{B}_k^\dagger\bm{Y}^\dagger\rangle\\
&=\sum_{k=1}^m \langle (\bm{A}_k^{\dagger}\bm{X}\bm{B}_k)^\dagger\bm{Y} + (\bm{A}_k\bm{X}\bm{B}_k^\dagger)^\dagger\bm{Y}\rangle\\
&= \sum_{k=1}^m \langle \bm{B}_k^\dagger\bm{X}^\dagger\bm{A}_k\bm{Y} + \bm{B}_k\bm{X}^\dagger\bm{A}_k^\dagger\bm{Y}\rangle\\
&= \sum_{k=1}^m\langle \bm{X}^\dagger\bm{A}_k\bm{Y}\bm{B}_k^\dagger + \bm{X}^\dagger\bm{A}_k^\dagger\bm{Y}\bm{B}_k\rangle\\
&= \bm{X}^\dagger*F(\bm{Y})
\end{split}
\end{equation}
where we used~\eqref{eq:A:dagger:star:B}, ~\eqref{eq:commutation:property} and~\eqref{eq:dagger:prop:1}.
\end{proof}
\begin{definition}
\label{def:eigenmultivector}
An eigenmultivector $\bm{X}$ of a multilinear function $F$ is a multivector that satisfies 
\begin{equation}
F(\bm{X}) = \lambda \bm{X}\label{eq:eig:FX:lambdaX}
\end{equation}
where $\lambda$ is a scalar.
\end{definition}
A function $F$ which can be expressed via its eigenmultivectors, has a special decomposition. In particular symmetric multilinear functions can always be expressed via the theorem that follows (see Theo.~\ref{theo:multiliner:spectral:decomp}). Note that symmetric multilinear functions satisfy $\bar{F}(\bm{X}) = \munderbar{F}(\bm{X})$. 
\begin{theorem}
\label{theo:multiliner:spectral:decomp}
Let $F$ be a symmetric multilinear function, let $\lambda_1,\lambda_2,\dots,\lambda_m$ be its unique eigenvalues, let $\bm{P}_1,\bm{P}_2,\dots,\bm{P}_m$ be its eigenmultivectors and let $\bm{P}^1,\bm{P}^2,\dots,\bm{P}^m$ be the reciprocal multivectors.
Then, we can write the multilinear function $F$ in the following form
\begin{subequations}
\begin{equation}
F(\bm{X}) = \sum_{k=1}^m \lambda_k \langle \bm{X}\bm{P}^k\rangle \bm{P}_k\label{eq:spectral:decomp}
\end{equation}
with $\langle \bm{P}_i\bm{P}^j \rangle = \delta_{ij}$. The $\bm{P}_i$'s satisfy the eigenvalue equation (\ref{eq:eig:FX:lambdaX}), that is
\begin{equation}
F(\bm{P}_i) = \lambda_i\bm{P}_i
\end{equation}
\end{subequations}
\end{theorem}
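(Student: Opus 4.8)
The plan is to treat $F$ as an ordinary linear endomorphism of the $2^{p+q}$-dimensional real vector space $\mathcal{G}_{p,q}$, equipped with the nondegenerate symmetric scalar product $\bm{X}*\bm{Y}=\langle\bm{XY}\rangle$, and to prove \eqref{eq:spectral:decomp} in three moves: show that the eigenmultivectors form a basis, introduce the reciprocal basis, and expand an arbitrary $\bm{X}$ before applying $F$ term by term. The essential observation is that the stated formula is nothing but the resolution of the identity in the eigenbasis followed by linearity, so the analytic content lies entirely in the completeness of the eigenmultivectors.

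First I would record that eigenmultivectors associated with distinct eigenvalues are automatically linearly independent, a purely linear-algebraic fact using only linearity of $F$ and Definition \ref{def:eigenmultivector}, with no reference to the scalar product. The role of symmetry (Theorem \ref{theo:symmetric:multi}, equation \eqref{eq:implication:adjoint}) is to guarantee, via the spectral theorem, that $F$ is diagonalizable with real eigenvalues, so that these independent eigenmultivectors actually exhaust $\mathcal{G}_{p,q}$; under the hypothesis of $m$ distinct eigenvalues this makes $\bm{P}_1,\dots,\bm{P}_m$ a basis. Symmetry also yields, as a byproduct, the orthogonality $\langle\bm{P}_i\bm{P}_j\rangle=0$ for $\lambda_i\neq\lambda_j$, from $\lambda_i\langle\bm{P}_i\bm{P}_j\rangle=\langle F(\bm{P}_i)\bm{P}_j\rangle=\langle\bm{P}_i F(\bm{P}_j)\rangle=\lambda_j\langle\bm{P}_i\bm{P}_j\rangle$, the middle step using \eqref{eq:implication:adjoint} together with the cyclicity \eqref{eq:commutation:property}; however, the decomposition formula itself will need only the basis property, not orthogonality.

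Next I would invoke nondegeneracy of the scalar product to define the reciprocal basis $\bm{P}^1,\dots,\bm{P}^m$ by the conditions $\langle\bm{P}_i\bm{P}^j\rangle=\delta_{ij}$; this linear system is uniquely solvable precisely because the Gram matrix of a basis under a nondegenerate form is invertible. Pairing an arbitrary $\bm{X}=\sum_j c_j\bm{P}_j$ against $\bm{P}^k$ gives $c_k=\langle\bm{X}\bm{P}^k\rangle$, so the resolution $\bm{X}=\sum_k\langle\bm{X}\bm{P}^k\rangle\bm{P}_k$ holds for every $\bm{X}$. Applying the linear map $F$ and using the eigenvalue equation \eqref{eq:eig:FX:lambdaX} then yields $F(\bm{X})=\sum_k\langle\bm{X}\bm{P}^k\rangle F(\bm{P}_k)=\sum_k\lambda_k\langle\bm{X}\bm{P}^k\rangle\bm{P}_k$, which is exactly \eqref{eq:spectral:decomp} together with the stated reciprocity.

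The main obstacle is completeness, i.e.\ that the eigenmultivectors genuinely span $\mathcal{G}_{p,q}$. In a Euclidean signature ($q=0$) the scalar product $\langle\bm{XY}^\dagger\rangle$ is positive definite and the classical spectral theorem applies directly, exactly as invoked for $\mathcal{G}_3$ in the proof of Theorem \ref{theo:rot:opt}. In the genuinely pseudo-Euclidean case relevant to $\mathcal{G}_{4,1}$ the form is indefinite and a self-adjoint operator need not be diagonalizable over $\mathbb{R}$; here the hypothesis that $F$ possesses $m$ distinct real eigenvalues is what carries the argument, since distinct eigenvalues already force independent eigenmultivectors, and once they exhaust the dimension they automatically constitute the required basis. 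I would therefore phrase the completeness step so that it rests on this distinct-spectrum hypothesis rather than on definiteness of the scalar product, making the remaining reciprocal-basis computation routine.
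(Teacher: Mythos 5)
The paper states Theorem~\ref{theo:multiliner:spectral:decomp} without any proof, so there is nothing to compare your argument against; it has to stand on its own, and it essentially does. Your three moves --- linear independence of eigenmultivectors for distinct eigenvalues, existence and uniqueness of the reciprocal multivectors $\bm{P}^k$ from nondegeneracy of the scalar product $\langle\bm{XY}\rangle$ on $\mathcal{G}_{p,q}$, and the expansion $\bm{X}=\sum_k\langle\bm{X}\bm{P}^k\rangle\bm{P}_k$ followed by linearity and \eqref{eq:eig:FX:lambdaX} --- are exactly what is needed, and you are right that the only nontrivial content is completeness of the eigenmultivectors. You are also right to be suspicious of deducing completeness from symmetry alone: the paper's notion of symmetry (equation \eqref{eq:implication:adjoint} with $\bar F=\munderbar F$) is self-adjointness with respect to the form $\langle\bm{XY}^\dagger\rangle$, which is indefinite in $\mathcal{G}_{4,1}$, and a self-adjoint operator for an indefinite form can have complex eigenvalues or Jordan blocks. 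Reading the hypothesis ``unique eigenvalues'' as ``a full set of $m=\dim\mathcal{G}_{p,q}$ distinct real eigenvalues'' and resting completeness on that, rather than on a spectral theorem, is the honest way to make the statement true as written; the paper itself only ever invokes real diagonalizability via the definite product $\langle\bm{AB}^\dagger\rangle$ in the Euclidean subcase (proof of Theorem~\ref{theo:rot:opt}).

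One small caution on your ``byproduct'' orthogonality computation: the identity $\langle F(\bm{P}_i)\bm{P}_j\rangle=\langle\bm{P}_iF(\bm{P}_j)\rangle$ does not follow directly from \eqref{eq:implication:adjoint}, which gives $F(\bm{P}_i)*\bm{P}_j^\dagger=\bm{P}_i^\dagger*F(\bm{P}_j)$; the reversion-decorated form $\langle\bm{XY}^\dagger\rangle$ and the undaggered form $\langle\bm{XY}\rangle$ used in the reciprocity $\langle\bm{P}_i\bm{P}^j\rangle=\delta_{ij}$ are different bilinear forms, and orthogonality holds for the former, not automatically for the latter. Since you explicitly discard orthogonality and base the decomposition only on the basis property and nondegeneracy of $\langle\bm{XY}\rangle$, this does not damage the proof, but the remark as phrased is not quite right and should either be dropped or restated for the daggered form.
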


\begin{lemma}
\label{lemma:eigendecomp:G=UFU}
Assume that the following relationship holds $G(\bm{Z}) = \munderbar{U}F\bar{U}(\bm{Z})$. Let $F$ have the spectral decomposition as in Theorem \ref{theo:multiliner:spectral:decomp}, then $G$ has the spectral decomposition
\begin{subequations}
\begin{equation}
G(\bm{Z}) = \sum_{k=1}^m \lambda_k \langle \bm{Z}\bm{Q}^k\rangle \bm{Q}_k
\end{equation}
with
\dequations{\bm{Q}_k= s_k\munderbar{U}(\bm{P}_k) = s_k\bm{UP}_k\bm{U}^\dagger}{\bm{Q}^k= s_k^{-1}\munderbar{U}(\bm{P}^k) = s_k^{-1}\bm{UP}^k\bm{U}^\dagger}{eq:ref:Q:s:U(P)}
where $s_k\in\mathbb{R}$.
\end{subequations}
\end{lemma}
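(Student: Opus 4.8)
The plan is to substitute the spectral form of $F$ from Theorem~\ref{theo:multiliner:spectral:decomp} directly into the hypothesis $G(\bm{Z}) = \munderbar{U}F\bar{U}(\bm{Z})$ and to push the outermorphism $\munderbar{U}$ through the scalar coefficients. First I would record that, because $\bm{U}\bm{U}^\dagger = 1$, the differential and adjoint outermorphisms act as $\munderbar{U}(\bm{Z}) = \bm{UZU}^\dagger$ and $\bar{U}(\bm{Z}) = \bm{U}^\dagger\bm{ZU}$, so that $\munderbar{U}\bar{U} = \bar{U}\munderbar{U} = \mathrm{id}$. Applying $\bar{U}$, then $F$, then the linear map $\munderbar{U}$ (and using that each $\langle \bar{U}(\bm{Z})\bm{P}^k\rangle$ is a scalar) gives
\begin{equation*}
G(\bm{Z}) = \munderbar{U}\Big(\sum_{k=1}^m \lambda_k \langle \bar{U}(\bm{Z})\bm{P}^k\rangle \bm{P}_k\Big) = \sum_{k=1}^m \lambda_k \langle \bar{U}(\bm{Z})\bm{P}^k\rangle \munderbar{U}(\bm{P}_k).
\end{equation*}

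The key step is to convert the scalar coefficient into the reciprocal form demanded by the conclusion. Using $\bar{U}(\bm{Z}) = \bm{U}^\dagger\bm{ZU}$ together with the cyclic invariance of the scalar part~\eqref{eq:commutation:property},
\begin{equation*}
\langle \bar{U}(\bm{Z})\bm{P}^k\rangle = \langle \bm{U}^\dagger\bm{ZU}\bm{P}^k\rangle = \langle \bm{ZU}\bm{P}^k\bm{U}^\dagger\rangle = \langle \bm{Z}\,\munderbar{U}(\bm{P}^k)\rangle,
\end{equation*}
so that $G(\bm{Z}) = \sum_{k} \lambda_k \langle \bm{Z}\,\munderbar{U}(\bm{P}^k)\rangle \munderbar{U}(\bm{P}_k)$. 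I would then insert $1 = s_k^{-1}s_k$ into each summand and regroup, defining $\bm{Q}_k = s_k\munderbar{U}(\bm{P}_k)$ and $\bm{Q}^k = s_k^{-1}\munderbar{U}(\bm{P}^k)$; the scalars cancel term by term and produce exactly the claimed decomposition with the $\bm{Q}_k,\bm{Q}^k$ of~\eqref{eq:ref:Q:s:U(P)}.

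Finally I would verify internal consistency. The orthogonality $\langle \munderbar{U}(\bm{A})\munderbar{U}(\bm{B})\rangle = \langle \bm{AB}\rangle$ follows immediately from $\bm{U}^\dagger\bm{U}=1$ and~\eqref{eq:commutation:property}, whence $\langle \bm{Q}_i\bm{Q}^j\rangle = s_i s_j^{-1}\langle \munderbar{U}(\bm{P}_i)\munderbar{U}(\bm{P}^j)\rangle = s_i s_j^{-1}\langle \bm{P}_i\bm{P}^j\rangle = \delta_{ij}$, confirming these are genuine reciprocal multivectors; the eigen-relation is then read off, or obtained directly from $G(\munderbar{U}(\bm{P}_i)) = \munderbar{U}F\bar{U}\munderbar{U}(\bm{P}_i) = \munderbar{U}F(\bm{P}_i) = \lambda_i\munderbar{U}(\bm{P}_i)$, so that $F$ and $G$ share the eigenvalues $\lambda_k$. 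The only delicate point — and the place I expect to have to argue carefully — is the legitimacy of the free scalar $s_k$: it encodes the fact that an eigenmultivector is fixed only up to scale, which is justified precisely by the assumed uniqueness of the eigenvalues $\lambda_k$ (one-dimensional eigenspaces), and the compensating factor $s_k^{-1}$ on $\bm{Q}^k$ is then forced by the normalization $\langle \bm{Q}_i\bm{Q}^j\rangle = \delta_{ij}$. I would take care that nothing in the argument secretly requires $\munderbar{U}$ to preserve more structure than $\bm{UU}^\dagger=1$ and cyclicity already supply.
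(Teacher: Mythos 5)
Your proposal is correct and follows essentially the same route as the paper's own proof: substitute the spectral decomposition of $F$, use the cyclic invariance of the scalar part to rewrite $\langle \bm{U}^\dagger\bm{ZU}\bm{P}^k\rangle$ as $\langle \bm{Z}\,\munderbar{U}(\bm{P}^k)\rangle$, insert the compensating scalars $s_k s_k^{-1}$, and verify the reciprocity $\langle\bm{Q}_i\bm{Q}^j\rangle=\delta_{ij}$. Your version is in fact slightly more complete, since it retains the eigenvalues $\lambda_k$ throughout (which the paper's displayed computation inadvertently drops) and explicitly checks the eigen-relation $G(\munderbar{U}(\bm{P}_i))=\lambda_i\munderbar{U}(\bm{P}_i)$.
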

\begin{proof}
Given $G(\bm{Z}) = \munderbar{U}F\bar{U}(\bm{Z})$ and assuming that $F$ can be expressed as in (\ref{eq:spectral:decomp}) we have that 
\begin{equation}
\begin{split}
G(\bm{Z}) &= \sum_{k=1}^m \langle \bm{U}^\dagger \bm{ZUP}^k\rangle \bm{UP}_k\bm{U}^\dagger \\ 
&=\sum_{k=1}^m \langle  \bm{ZUP}^k\bm{U}^\dagger\rangle \bm{UP}_k\bm{U}^\dagger \\
&= \sum_{k=1}^m \langle \bm{ZQ}^k\rangle \bm{Q}_k
\end{split}
\end{equation}
with $\bm{Q}_k= s_k\munderbar{U}(\bm{P}_k) = s_k\bm{UP}_k\bm{U}^\dagger$ and $\bm{Q}^k= s_k^{-1}\munderbar{U}(\bm{P}^k) = \bm{UP}^k\bm{U}^\dagger$, with $s_k\in\mathbb{R}$. To conclude we also show that the reciprocity between the eigenmultivectors $\bm{Q}_k$ still holds 
\begin{equation}
\langle \bm{Q}_i\bm{Q}^j \rangle = \langle s_k\bm{UP}_i\bm{U}^\dagger s_k^{-1}\bm{UP}^j\bm{U}^\dagger\rangle = \langle \bm{P}_i\bm{P}^j\rangle = \delta_{ij}
\end{equation}
\end{proof}

\printProofs[cor:scaled:relation]

\begin{theorem}
\label{theo:sign:est:vectors}
Assume that $\bm{X}_i$ and $\bm{Y}_i$ are vectors. Let
\begin{subequations}
\begin{align}
\bm{P}_{\refe} &= (1+\bm{i})(1 + \bar{\bm{X}})\wedge \bm{e}_\infty\\
\bm{Q}_{\refe} &= (1+\bm{i})(1 + \bar{\bm{Y}})\wedge \bm{e}_\infty
\end{align}
\label{eq:refes:from:vectors}
\end{subequations}
where 
% \begin{subequations}
\dequations{\bar{\bm{X}} = \tfrac{1}{\ell}\sum_{i=1}^\ell \bm{X}_i,}{\bar{\bm{Y}} = \tfrac{1}{\ell}\sum_{i=1}^\ell \bm{Y}_i}{eq:mean:multiclouds}
% \label{eq:mean:multiclouds}
% \end{subequations}
and the multivectors $\bm{X}_i$ and $\bm{Y}_i$ satisfy the relationship~\eqref{eq:model:Y=TR(X)+N}, with $\bm{N}_i=0$ then
\begin{equation}
\bm{Q}_{\refe}  = \munderbar{U}(\bm{P}_{\refe})
\end{equation}
\end{theorem}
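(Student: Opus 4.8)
The plan is to reduce the whole statement to the single representation $\munderbar{U}(\bm{Z}) = \bm{U}\bm{Z}\bm{U}^\dagger$ together with the linearity and outermorphism structure of $\munderbar{U}$, after first checking that the mean vector transforms covariantly. First I would invoke the noise-free hypothesis $\bm{N}_i=0$, so that \eqref{eq:model:Y=TR(X)+N} reads $\bm{Y}_i = \munderbar{T}\munderbar{R}(\bm{X}_{j_i}) = \munderbar{U}(\bm{X}_{j_i})$ with $\bm{U}=\bm{TR}$. Since $j_i$ is a permutation of $\{1,\dots,\ell\}$ (the assignment is unique, with no repeated indices), summing over $i$ is the same as summing over $j_i$, so by linearity of $\munderbar{U}$,
\[
\bar{\bm{Y}} = \tfrac{1}{\ell}\sum_{i=1}^\ell \munderbar{U}(\bm{X}_{j_i}) = \munderbar{U}\!\left(\tfrac{1}{\ell}\sum_{i=1}^\ell \bm{X}_i\right) = \munderbar{U}(\bar{\bm{X}}) = \bm{U}\bar{\bm{X}}\bm{U}^\dagger .
\]
This is exactly the step that makes the reference construction correspondence free: $\bm{P}_{\refe}$ and $\bm{Q}_{\refe}$ depend only on the means, which are related by $\munderbar{U}$ irrespective of the unknown permutation $j_i$.

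Next I would record the three elementary invariances that the argument needs. The scalar is fixed, $\munderbar{U}(1)=\bm{U}\bm{U}^\dagger=1$. The point at infinity is fixed, $\munderbar{U}(\bm{e}_\infty)=U(\bm{e}_\infty)=\bm{e}_\infty$, since $T(\bm{e}_\infty)=\bm{e}_\infty$ (shown in the proof of Theorem~\ref{theo:coefficients}) and $R(\bm{e}_\infty)=\bm{e}_\infty$ (Section~\ref{sec:CGA}), so that $U(\bm{e}_\infty)=TR(\bm{e}_\infty)=\bm{e}_\infty$ via \eqref{eq:U_linear}. The only mildly delicate point is the pseudoscalar invariance $\munderbar{U}(\bm{i})=\bm{i}$: here I would note that $\bm{i}$ is the grade-$5$ pseudoscalar of $\mathcal{G}_{4,1}$, and since $n=p+q=5$ is odd the reordering rule gives $\bm{i}\bm{A}_r=(-1)^{r(n-1)}\bm{A}_r\bm{i}=\bm{A}_r\bm{i}$, i.e. $\bm{i}$ commutes with every element; as $\bm{U}=\bm{TR}$ is an even versor with $\bm{U}\bm{U}^\dagger=1$, we get $\munderbar{U}(\bm{i})=\bm{U}\bm{i}\bm{U}^\dagger=\bm{i}\,\bm{U}\bm{U}^\dagger=\bm{i}$. (Equivalently, $\bm{U}\in SO(4,1)$ has unit determinant, so its outermorphism fixes the top grade.)

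Finally I would assemble everything. Writing $\bm{P}_{\refe}=(1+\bm{i})\,(1+\bar{\bm{X}})\wedge\bm{e}_\infty$ and using $\munderbar{U}(\bm{Z})=\bm{U}\bm{Z}\bm{U}^\dagger$ with $\bm{U}^\dagger\bm{U}=1$ to split the geometric product, followed by the outermorphism property $\munderbar{U}(\bm{A}\wedge\bm{B})=\munderbar{U}(\bm{A})\wedge\munderbar{U}(\bm{B})$,
\[
\munderbar{U}(\bm{P}_{\refe}) = \munderbar{U}(1+\bm{i})\,\munderbar{U}\big((1+\bar{\bm{X}})\wedge\bm{e}_\infty\big) = (1+\bm{i})\,(1+\bar{\bm{Y}})\wedge\bm{e}_\infty = \bm{Q}_{\refe},
\]
where the middle equality substitutes $\munderbar{U}(1)=1$, $\munderbar{U}(\bm{i})=\bm{i}$, $\munderbar{U}(\bm{e}_\infty)=\bm{e}_\infty$, and $\munderbar{U}(\bar{\bm{X}})=\bar{\bm{Y}}$ from the first two steps. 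I expect no genuine obstacle; the only places requiring care are the pseudoscalar invariance $\munderbar{U}(\bm{i})=\bm{i}$, which is precisely where the special (determinant $+1$) and odd-dimensional nature of the conformal algebra is used, and the bookkeeping that $\munderbar{U}$ simultaneously respects the geometric product (through $\bm{U}^\dagger\bm{U}=1$) and the wedge (through the outermorphism property).
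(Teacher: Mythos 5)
Your proof is correct and follows essentially the same route as the paper's: establish $\bar{\bm{Y}}=\munderbar{U}(\bar{\bm{X}})$ by linearity, note the invariance of $1$, $\bm{e}_\infty$ and $\bm{i}$ under $\munderbar{U}$, and then distribute $\munderbar{U}$ over the geometric and wedge products using $\bm{U}\bm{U}^\dagger=1$. Your added care about the permutation $j_i$ and the explicit commutation argument for $\munderbar{U}(\bm{i})=\bm{i}$ are welcome elaborations of steps the paper treats implicitly, but they do not change the argument.
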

\begin{proof}
Considering \eqref{eq:model:Y=TR(X)+N}, it is straightforward to show that 
\begin{equation}
\bar{\bm{Y}} = \tfrac{1}{\ell}\sum_{i=1}^\ell \bm{Y}_i = \tfrac{1}{\ell}\sum_{i=1}^\ell \munderbar{U}(\bm{X}_{i}) = \tfrac{1}{\ell}\sum_{i=1}^\ell = \munderbar{U}(\bar{\bm{X}})
\end{equation}
Recall that the point at infinity and the unit pseudoscalar of $\mathcal{G}_{p+1,q+1}$ are invariant to rotation and translations,  that is,  $U(\bm{e}_\infty) = TR(\bm{e}_\infty) = \bm{e}_\infty$ and $\munderbar{U}(\bm{i}) = \bm{UiU}^\dagger = \bm{iUU}^\dagger = \bm{i}$. Then by the outermorphism properties of orthogonal transformations we have
\begin{equation}
\begin{split}
\bm{Q}_{\refe} &= (1+\bm{i})(1 + \bar{\bm{Y}})\wedge \bm{e}_\infty = (\bm{UU}^\dagger+\bm{UiU}^\dagger)(\bm{UU}^\dagger + \bm{U}\bar{\bm{X}}\bm{U}^\dagger)\wedge (\bm{Ue}_\infty\bm{U}^\dagger)  \\
&=\bm{U}\left( (1+\bm{i})(1 + \bar{\bm{Y}})\wedge \bm{e}_\infty \right)\bm{U}^\dagger = \munderbar{U}(\bm{P}_{\refe})
\end{split}
\end{equation}
\end{proof}

\begin{definition}[Multilinear Space]
\label{def:multi:space}
A multilinear space $\mathbb{S}$ is a linear space that spans some multivector basis. Let $\bm{A}_1,\bm{A}_2,\dots,\bm{A}_m\in\mathcal{G}_{p,q}$ be a basis for the multilinear space $\mathbb{S}$, then the projection to the multilinear space $\mathbb{S}$ can be expressed as
\begin{equation}
P_{\mathbb{S}}(\bm{X}) = \sum_{k=1}^m \langle\bm{XA}^k\rangle \bm{A}_k
\end{equation}
where $\bm{A}^k$ are the reciprocals of $\bm{A}_k$, that is, $\bm{A}_i*\bm{A}^j = \delta_{ij}$.
\end{definition}

\begin{fornextpaper}
\section{Versor Theory}\label{sec:versor}
We start by defining the group of versors, in {\it Definition}~\ref{def:versor}, which include elements that act as rotations and translations.
{\it Definition} \ref{def:rotator} refers to the space of versors which act as a rotations in $\mathcal{G}_{p,q}$, {\it Definition} \ref{def:translator} refers to the space of versors which act as translations in $\mathcal{G}_{p,q}$ and {\it Definition} \ref{def:motor} refers to the space of versors which act as rigid transformations in $\mathcal{G}_{p,q}$.

\begin{definition}[Versor]
\label{def:versor}
A versor $\bm{V}$ is a multivector such that the transformation $U(\bm{X}) = \bm{VXV}^\dagger\neq 0$ for all $0\neq \bm{X}\in\mathcal{G}_{p,q}$ and is grade preserving (see {\it Def.} \ref{def:grade:pres}). We denote the group of versors as 
\begin{equation}
\mathcal{V}_{p,q} \equiv \{ 0\neq \bm{X}_r\in\mathcal{G}^r_{p,q},\bm{V}\in\mathcal{G}_{p,q}\ |\ \bm{VX}_r\bm{V}^\dagger\equiv \langle \bm{VX}_r\bm{V}^\dagger\rangle_r\neq 0,\ \text{for}\ r=1,2,\dots,p+q \}
\end{equation}
\end{definition}

To best understand versors and their properties we give a general outline, which is described by the following bullet points
\begin{outline}
\1 A versor $\bm{V}$ is a multivector that can be written as the product of invertible vectors $\bm{V} = \bm{v}_1\bm{v}_2\cdots\bm{v}_k$, with $|\bm{v}_i|\neq 0$.
\1 A versor has the property that $\bm{VV}^\dagger=\langle \alpha\rangle\neq 0$, thus $\bm{V}^{-1} = \frac{\bm{V}^\dagger}{\|\bm{V}\|^2} = \frac{\bm{V}^\dagger}{\bm{VV}^\dagger}$.
\1 The sandwich product of a multivector $\bm{X}=\sum_r\bm{X}_r = \sum_r\langle\bm{X}_r\rangle_r$ with a versor is always grade preserving, {\it i.e.} $\bm{VX}_r\bm{V}^\dagger\equiv\langle \bm{VX}_r\bm{V}^\dagger\rangle_r$.
\1 The simplest type of versor is a vector $\bm{V} = \bm{v}\equiv \langle \bm{v}\rangle_1$.
\1 An even versor can always be written as the product of commuting simple versors, $\bm{V} = \bm{V}_1\bm{V}_2\cdots\bm{V}_s$, where the $\bm{V}_i$ are simple versors and $\bm{V}_i\bm{V}_j = \bm{V}_j\bm{V}_i$, for $i,j=1,\dots s$. 
\1 A simple versor $\bm{V}$ is a special type of versor which can be written as the product of two invertible vectors $\bm{V}=\bm{v}_1\bm{v}_2$ and as the exponential $\bm{V} = \rho \exp(\theta\bm{B})$, where $\bm{B}$ is a $2$-blade.   
\1 An invertible blade $\bm{V} =\bm{v}_1\wedge\bm{v}_2\wedge\cdots\wedge\bm{v}_k$ is a special kind of versor.
\1 Simple versors can describe {\it elliptic}, {\it hyperbolic} and {\it parabolic} simple rotations {\it i.e.} 
\begin{equation}
\bm{V} = \exp(\theta\bm{B}) = 
\begin{cases}
\cos\theta + \bm{B}\sin\theta &\text{if}\ \ \|\bm{B}\|^2 = 1\quad\ \ \text{(elliptical)}\\
\cosh\theta + \bm{B}\sinh\theta &\text{if}\ \ \|\bm{B}\|^2 = -1\quad \text{(hyperbolic)}\\
    1 + \bm{B}\theta &\text{if}\ \  \|\bm{B}\|^2 = 0\quad\ \ \text{(parabolic)}\\
\end{cases}
\end{equation}
\1 Versors describe orthogonal transformations ({\it i.e.} $O(p,q)$ transformations), via the sandwich product $R(\bm{x})=(-1)^k\bm{VxV}^{-1}$ (recall that an orthogonal transformation satisfies $\langle R(\bm{x})\cdot R(\bm{y}) = \bm{x}\cdot\bm{y}$) 
\end{outline}\mbox{}\\

\begin{definition}[Grade Preserving Transformation]
\label{def:grade:pres}
    A grade preserving transformation $F$ is a transformation such that 
    \begin{equation}
    F(\bm{X}_r) \equiv \langle F(\bm{X}_r)\rangle_r,\ \text{for } r=1,2,\dots,n
    \end{equation}
    where $\bm{X}_r\equiv\langle\bm{X}_r\rangle_r$, for each $r=1,2,\dots,n$.
\end{definition}

\begin{definition}[k-Versor]
A $k$-versor is a multivector which can be written as the product of at least $k$ vectors and we denote it by ${\mathcal{V}^k_{p,q} \equiv \{ \bm{V}\in\mathcal{G}_{p+1,q+1}\ |\ \bm{V} = \bm{v}_1\bm{v}_2\cdots\bm{v}_k,\ \text{with}\ \bm{v}_i\in\mathcal{A}_{p,q},\ \bm{v}_i^2\neq 0 \}} $
\end{definition}

% \begin{definition}[Versor]
% \label{def:versor}
% Versors are all multivectors which can be expressed as the product of vectors. The group of versors is denoted by $\mathcal{V}_{p,q} \equiv \mathcal{V}^1_{p,q}\oplus \mathcal{V}^2_{p,q}\oplus \cdots\oplus \mathcal{V}^n_{p,q}$ 
% \end{definition}

\begin{definition}[Even Versor]
\label{def:even:versor}
Versors are all multivectors which can be expressed as the product of an even number of vectors. The group of versors is denoted by $\mathcal{V}^+_{p,q} \equiv \mathcal{V}^2_{p,q}\oplus \mathcal{V}^4_{p,q}\oplus \cdots\oplus \mathcal{V}^s_{p,q}$. Where $s=n$ if $n$ is even and $s=n-1$ if $n$ is odd.
\end{definition}

\begin{definition}[Rotator]
\label{def:rotator}
The group of rotators is denoted by $\mathcal{R}\equiv\{ \bm{R}\in\mathcal{V}^+_{p,q}\ \ |\ \bm{RR}^\dagger = 1 \}$
\end{definition}
\begin{definition}[Translator]
\label{def:translator}
The group of translators is denoted $\mathcal{T}\equiv \{ \bm{t} \in\mathcal{A}_{p,q}\ |\ \bm{T} = 1 + \tfrac{1}{2}\bm{e}_\infty\bm{t} \}$
\end{definition}
\begin{definition}[Motor]
\label{def:motor}
The group of motors is denoted $\mathcal{M} \equiv \{ \bm{M} = \bm{TR}  \ |\ \bm{R} \in\mathcal{R} \ \text{and}\ \bm{T}\in\mathcal{T} \}$.
\end{definition}

% Consider an orthogonal transformation $H$ given by 
% \begin{equation}
%     H(\bm{x}) = \bm{VxV}^\dagger
% \end{equation}

\begin{theorem}
Let $p=3$ and $q=0$ then the projection to the group of rotators $\mathcal{R}$ given by {\it Definition} \ref{def:rotator} is given by the following equation
\begin{equation}
P_{\mathcal{R}}(\bm{X}) = \frac{\langle P_{\bm{I}}(\bm{X})\rangle_{0,2}}{\|\langle P_{\bm{I}}(\bm{X})\rangle_{0,2}\|}
\end{equation}
\end{theorem}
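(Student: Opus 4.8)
The plan is to read $P_{\mathcal{R}}$ as the nearest-point (metric) projection, i.e. $P_{\mathcal{R}}(\bm{X})$ is the element $\bm{R}\in\mathcal{R}$ minimizing the norm $\|\bm{X}-\bm{R}\|^2 = \langle(\bm{X}-\bm{R})(\bm{X}-\bm{R})^\dagger\rangle$ of (\ref{norm:negative:negative:sigs}), and to reduce this constrained optimization to an orthogonal projection onto a linear subspace followed by a normalization onto a sphere. The first task is to characterize $\mathcal{R}$ concretely. With $p=3$, $q=0$ any even multivector is $\bm{R} = \langle\bm{R}\rangle + \langle\bm{R}\rangle_2$, and since $\langle\bm{R}\rangle_2^\dagger = -\langle\bm{R}\rangle_2$ while a bivector squares to minus its squared norm, a short computation gives $\bm{R}\bm{R}^\dagger = \langle\bm{R}\rangle^2 + \|\langle\bm{R}\rangle_2\|^2 = \|\bm{R}\|^2$. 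Hence the rotor constraint $\bm{R}\bm{R}^\dagger=1$ is equivalent to $\|\bm{R}\|=1$, so $\mathcal{R}$ is contained in the unit sphere of the four-dimensional even subalgebra $\mathcal{G}_3^+$. Conversely, every unit even element is of the form $\cos\theta + \sin\theta\,\bm{B}$ with $\bm{B}$ a unit bivector ($\bm{B}^2=-1$), which is exactly the elliptic exponential $\exp(\theta\bm{B})$ and therefore a genuine rotor; thus $\mathcal{R}$ \emph{is} the whole unit sphere of $\mathcal{G}_3^+$, and on this subalgebra the scalar product $\langle\bm{A}\bm{B}^\dagger\rangle$ is positive definite.

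Next I would identify $\langle P_{\bm{I}}(\bm{X})\rangle_{0,2}$ as the orthogonal projection of $\bm{X}$ onto $\operatorname{span}\mathcal{R} = \mathcal{G}_3^+$. The operator $P_{\bm{I}}$ retains precisely the blades lying in $\mathcal{G}_3$, and each such blade is scalar-orthogonal to every blade carrying a factor of $\bm{e}_o$ or $\bm{e}_\infty$; the subsequent projection $\langle\cdot\rangle_{0,2}$ discards grades $1$ and $3$, which are scalar-orthogonal to the grades $0,2$ of any $\bm{M}\in\mathcal{G}_3^+$ by (\ref{eq:scalar:diff:grade}). Writing $\bm{X}_\parallel \equiv \langle P_{\bm{I}}(\bm{X})\rangle_{0,2}$ and $\bm{X}_\perp \equiv \bm{X}-\bm{X}_\parallel$, these two orthogonality facts together give $\langle \bm{X}_\perp \bm{M}^\dagger\rangle = 0$ for every $\bm{M}\in\mathcal{G}_3^+$, which is the defining property of the orthogonal projection.

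With the decomposition in hand I would expand the cost: since the cross term vanishes, $\|\bm{X}-\bm{R}\|^2 = \|\bm{X}_\perp\|^2 + \|\bm{X}_\parallel-\bm{R}\|^2$, where the first summand is constant in $\bm{R}$. It therefore suffices to minimize the second term over $\bm{R}\in\mathcal{R}$. Using $\|\bm{R}\|^2 = \langle\bm{R}\bm{R}^\dagger\rangle = 1$ we obtain $\|\bm{X}_\parallel-\bm{R}\|^2 = \|\bm{X}_\parallel\|^2 - 2\langle \bm{X}_\parallel\bm{R}^\dagger\rangle + 1$, so minimizing amounts to maximizing $\langle\bm{X}_\parallel\bm{R}^\dagger\rangle$ on the sphere. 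Because the inner product is positive definite on $\mathcal{G}_3^+$, the Cauchy--Schwarz inequality yields $\langle\bm{X}_\parallel\bm{R}^\dagger\rangle \le \|\bm{X}_\parallel\|\,\|\bm{R}\| = \|\bm{X}_\parallel\|$, with equality exactly when $\bm{R} = \bm{X}_\parallel/\|\bm{X}_\parallel\|$. Substituting back $\bm{X}_\parallel = \langle P_{\bm{I}}(\bm{X})\rangle_{0,2}$ gives the asserted formula.

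The main obstacle is in the first two steps rather than the final optimization. Establishing that $\mathcal{R}$ is the \emph{full} unit sphere of $\mathcal{G}_3^+$ (the surjectivity of the bivector-exponential parametrization) is essential, since otherwise the normalized element could fall outside $\mathcal{R}$; and verifying the orthogonality in Step~2 cannot rely on naive signature arguments because $\bm{e}_o,\bm{e}_\infty$ are null, so it must be carried out through the scalar-product grade identities, being careful that the (possibly negative) constant $\|\bm{X}_\perp\|^2$ never participates in the minimization over $\bm{R}$.
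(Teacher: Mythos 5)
The paper states this theorem without any accompanying proof (it sits in a deferred block of the appendix and no \verb|proof| environment follows it), so there is nothing to compare your argument against; on its own terms your proof is correct and complete. The two load-bearing steps are handled properly: the identity $\bm{R}\bm{R}^\dagger=\alpha^2-\bm{B}^2=\alpha^2+\|\bm{B}\|^2$ for $\bm{R}=\alpha+\bm{B}\in\mathcal{G}_3^+$ (which matches the computation the paper does elsewhere, in the proof of Theorem~\ref{theo:versor:R3}) together with the surjectivity of $\theta\mapsto e^{\theta\bm{B}}$ shows $\mathcal{R}$ is exactly the unit sphere of $\mathcal{G}_3^+$, and the orthogonality $\langle\bm{X}_\perp\bm{M}^\dagger\rangle=0$ for $\bm{M}\in\mathcal{G}_3^+$ holds because $\bm{e}_o\bm{A}=\bm{e}_o\wedge\bm{A}$ and $\bm{e}_\infty\bm{A}=\bm{e}_\infty\wedge\bm{A}$ for $\bm{A}\in\mathcal{G}_3$ have no scalar part, while grades $1,3$ are killed by \eqref{eq:scalar:diff:grade}; from there the reduction to Cauchy--Schwarz on the positive-definite restriction of $\langle\cdot\,(\cdot)^\dagger\rangle$ to $\mathcal{G}_3^+$ is immediate. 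The only caveat worth recording is that the paper never defines $P_{\mathcal{R}}$, so your reading of it as the nearest-point map for the (indefinite) quadratic form \eqref{norm:negative:negative:sigs} is an assumption rather than a given; it is the natural one, and your observation that the possibly negative constant $\|\bm{X}_\perp\|^2$ drops out of the minimization is exactly what makes the argument survive the indefiniteness. You might also state explicitly that the formula is undefined (and the projection genuinely ill-posed) when $\langle P_{\bm{I}}(\bm{X})\rangle_{0,2}=0$.
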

% \begin{fornextpaper}
\begin{theorem}
\label{theo:motor:multilinear:basis}
When $\bm{M} = \bm{TR}$ is a motor $\mathcal{M}$ then we can determine the translation vector $\bm{t}\in\mathcal{A}_{p,q}$ and the rotator $\bm{R}\in\mathcal{R}$ as
\begin{equation}
\bm{R} = P_{\bm{I}}(\bm{M}),\quad \bm{t} = -2\bm{e}_o\cdot\bm{M}\bm{R}^\dagger\label{eq:trans:from:motor}
\end{equation}
\end{theorem}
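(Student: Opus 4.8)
The plan is to exploit the decomposition $\bm{M}=\bm{TR}=\bm{R}+\tfrac{1}{2}\bm{e}_\infty\bm{tR}$, obtained by substituting the translator $\bm{T}=1+\tfrac{1}{2}\bm{e}_\infty\bm{t}$ from Definition~\ref{def:translator}. The key structural observation is that a rotator $\bm{R}\in\mathcal{R}$ fixes both $\bm{e}_o$ and $\bm{e}_\infty$ and is built solely from Euclidean bivectors, so $\bm{R}$ lies in the subalgebra $\mathcal{G}_{p,q}$ generated by $\bm{e}_1,\dots,\bm{e}_{p+q}$. Since $\bm{t}$ is Euclidean and $\bm{R}$ is an even element of $\mathcal{G}_{p,q}$, the product $\bm{tR}$ also stays in $\mathcal{G}_{p,q}$, while left-multiplying by $\bm{e}_\infty$ produces only blades carrying the factor $\bm{e}_\infty$ (equivalently $\bm{e}_+$ or $\bm{e}_-$).

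First I would establish that $P_{\bm{I}}$, defined in \eqref{eq:projection:def} with $\bm{I}$ the Euclidean pseudoscalar, is exactly the projection onto $\mathcal{G}_{p,q}$: for a blade $\bm{X}_k$ the contraction $\bm{X}_k\cdot\bm{I}$ vanishes unless $\bm{X}_k$ is a sub-blade of $\bm{I}$, so any blade carrying a factor of $\bm{e}_+$ or $\bm{e}_-$ is annihilated while purely Euclidean blades are preserved. Applying this to $\bm{M}=\bm{R}+\tfrac{1}{2}\bm{e}_\infty\bm{tR}$ gives $P_{\bm{I}}(\bm{R})=\bm{R}$ and $P_{\bm{I}}(\tfrac{1}{2}\bm{e}_\infty\bm{tR})=0$, hence $\bm{R}=P_{\bm{I}}(\bm{M})$, which is the first claim.

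For the translation I would use $\bm{RR}^\dagger=1$ (Definition~\ref{def:rotator}) to compute $\bm{MR}^\dagger=\bm{RR}^\dagger+\tfrac{1}{2}\bm{e}_\infty\bm{tRR}^\dagger=1+\tfrac{1}{2}\bm{e}_\infty\bm{t}=\bm{T}$, recovering the translator. Contracting with $\bm{e}_o$, the scalar term drops out (a vector dotted with a scalar is zero by the inner-product definition), and since $\bm{e}_\infty\cdot\bm{t}=0$ the surviving term is the bivector $\tfrac{1}{2}\bm{e}_\infty\wedge\bm{t}$. The vector--bivector contraction identity, together with $\bm{e}_o\cdot\bm{t}=0$ and the short computation $\bm{e}_o\cdot\bm{e}_\infty=-1$ following from \eqref{eq:eo:einf:conf:mapping:CGA}, then yields $\bm{e}_o\cdot\bm{MR}^\dagger=-\tfrac{1}{2}\bm{t}$, so $\bm{t}=-2\bm{e}_o\cdot\bm{MR}^\dagger$.

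The main obstacle is the first claim: making rigorous that $P_{\bm{I}}$ discards precisely the $\bm{e}_\infty$-bearing part and keeps $\bm{R}$ intact. This rests on two facts I would state with care --- that a CGA rotator acts trivially on $\bm{e}_o$ and $\bm{e}_\infty$ and hence lives in the Euclidean subalgebra $\mathcal{G}_{p,q}$, and that the inner-product projection onto a blade vanishes on any blade not contained in that blade's subspace. Once these are in place, both formulas follow by direct, grade-wise computation, using only the inner-product rules (e.g.\ \eqref{eq:inner:geo:prod}) already established in the Appendix.
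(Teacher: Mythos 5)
Your proposal is correct and follows essentially the same route as the paper's proof: expand $\bm{M}=\bm{R}+\tfrac{1}{2}\bm{e}_\infty\bm{t}\bm{R}$, note that $P_{\bm{I}}$ kills every $\bm{e}_\infty$-bearing blade and fixes the Euclidean rotor, and recover $\bm{t}$ from a contraction with $\bm{e}_o$. The only (harmless) difference is the order of operations in the second step --- you form $\bm{M}\bm{R}^\dagger=\bm{T}$ first and then contract, whereas the paper computes $\bm{e}_o\cdot\bm{M}=-\tfrac{1}{2}\bm{t}\bm{R}$ first and then right-multiplies by $\bm{R}^\dagger$; since $\bm{R}\bm{R}^\dagger=1$ both readings of $-2\bm{e}_o\cdot\bm{M}\bm{R}^\dagger$ evaluate to the same vector $\bm{t}$.
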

\begin{proof}
We write $\bm{M} = \bm{TR} = (1+\tfrac{1}{2}\bm{e}_{\infty}\bm{t})\bm{R} = \bm{R} + \tfrac{1}{2}\bm{e}_{\infty}\bm{t}\bm{R}$, then it is easy to verify that 
\begin{equation}
P_{\bm{I}}(\bm{M}) = P_{\bm{I}}(\bm{R}) + \tfrac{1}{2}P_{\bm{I}}(\bm{e}_{\infty}\bm{t}\bm{R}) = \bm{R}
\end{equation}
Let $\bm{R} = \alpha + \bm{B}$. Then taking the inner product with $\bm{e}_o$ we get 
\begin{equation}
\begin{split}
\bm{e}_o\cdot \bm{M} &= \tfrac{1}{2}\bm{e}_o\cdot(\bm{e}_{\infty}\bm{tR}) = \tfrac{1}{2}\bm{e}_o \cdot (\bm{e}_{\infty}\alpha) + \tfrac{1}{2}\bm{e}_o \cdot (\bm{e}_{\infty}\bm{B})\\
                     &= -\tfrac{1}{2}\bm{t}\alpha - \tfrac{1}{2}\bm{tB} = -\tfrac{1}{2}\bm{tR}
\end{split}
\end{equation}
Multiplying on the left by $-2\bm{R}^\dagger$ we get to the same expression as in the right hand side of (\ref{eq:trans:from:motor}).
\end{proof}

\begin{theorem}
\label{theo:motor:basis:3D:GA}
\begin{subequations}
Let $p=3$ and $q=0$ then the multivector basis for the motors is given by
\begin{equation}
\mathbb{M} = \{1,\bm{Ie}_1,\bm{Ie}_2,\bm{Ie}_3,\bm{e}_\infty\bm{e}_1,\bm{e}_\infty\bm{e}_2,\bm{e}_\infty\bm{e}_3,\bm{I}\bm{e}_\infty \} \label{eq:multi:basis:motor}
\end{equation}
furthermore the motor $\bm{M}\in\mathcal{M}$ can be written in the form 
\begin{equation}
\bm{M} = \alpha + \bm{B} + \bm{e}_\infty(\bm{a} + \beta\bm{I})\label{eq:m:linear:manifold:U:TR}
\end{equation}
where $\alpha$ and $\beta$ are scalars, $\bm{B}$ is a bivector and $\bm{a}$ a vector. Projections to $\mathbb{M}$ can be expressed as
\begin{equation}
P_{\mathbb{M}}(\bm{X}) = \langle P_{\bm{I}}(\bm{X})\rangle_{0,2} - \langle\bm{e}_\infty P_{\bm{I}}(\bm{e}_o\cdot \bm{X})\rangle_{2,4}
\label{eq:projection:motor:multilinearspace}
\end{equation}
thus the space of motors can be succinctly described by 
\begin{equation}
\bm{M}\in\mathcal{M}\ \Leftrightarrow\ \bm{M}\in\mathbb{M}\ \ \text{and}\ \ \bm{MM}^\dagger = 1 
\end{equation}
furthermore we have the equivalent relationship
\begin{equation}
\bm{M}\in\mathbb{M}\ \Leftrightarrow\ \bm{M} = P_{\mathbb{M}}(\bm{M})
\end{equation}
\end{subequations}

\end{theorem}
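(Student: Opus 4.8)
The plan is to prove the four assertions in turn: derive the linear form \eqref{eq:m:linear:manifold:U:TR} and the basis \eqref{eq:multi:basis:motor} by direct expansion of $\bm{M}=\bm{TR}$, verify the projection \eqref{eq:projection:motor:multilinearspace} by testing it on basis blades, and then establish the characterization $\bm{M}\in\mathcal{M}\Leftrightarrow\bm{M}\in\mathbb{M}$ and $\bm{MM}^\dagger=1$ by an explicit factorization argument, from which the last equivalence follows. First I would expand the product. Since $p=3$, a rotator $\bm{R}\in\mathcal{R}\subset\mathcal{G}_3^+$ has the form $\bm{R}=\alpha+\bm{B}$ with $\alpha=\langle\bm{R}\rangle$ a scalar and $\bm{B}=\langle\bm{R}\rangle_2$ a Euclidean bivector, and $\bm{T}=1+\tfrac12\bm{e}_\infty\bm{t}$. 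Then
\begin{equation*}
\bm{M}=\bm{R}+\tfrac12\bm{e}_\infty\bm{t}\bm{R},\qquad \bm{t}\bm{R}=(\alpha\bm{t}+\bm{t}\cdot\bm{B})+\bm{t}\wedge\bm{B},
\end{equation*}
where $\alpha\bm{t}+\bm{t}\cdot\bm{B}$ is a vector $\bm{a}'$ and $\bm{t}\wedge\bm{B}=\beta'\bm{I}$ a trivector; setting $\bm{a}=\tfrac12\bm{a}'$, $\beta=\tfrac12\beta'$ yields \eqref{eq:m:linear:manifold:U:TR}. Because every Euclidean bivector is a combination of $\bm{Ie}_1,\bm{Ie}_2,\bm{Ie}_3$ and $\bm{e}_\infty\bm{I}=-\bm{I}\bm{e}_\infty$, this places every motor in $\operatorname{span}\mathbb{M}$; the eight listed blades are distinct basis blades of $\mathcal{G}_{4,1}$, hence independent, and each is realised by a suitable $\alpha,\bm{B},\bm{t}$, so $\mathbb{M}$ is a basis for the linear hull of $\mathcal{M}$.

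Second, for \eqref{eq:projection:motor:multilinearspace} I would check that $P_{\mathbb{M}}$ acts as the identity on each element of $\mathbb{M}$ and annihilates every complementary basis blade, which by linearity identifies it with the coordinate projection onto $\operatorname{span}\mathbb{M}$. The term $\langle P_{\bm{I}}(\bm{X})\rangle_{0,2}$ recovers the scalar-plus-Euclidean-bivector part $\alpha+\bm{B}$, since $P_{\bm{I}}$ kills every blade carrying an $\bm{e}_o$ or $\bm{e}_\infty$ factor. For the second term I would use $\bm{e}_o\cdot\bm{e}_\infty=-1$, $\bm{e}_o^2=\bm{e}_\infty^2=0$ together with $\bm{e}_o\cdot(\bm{e}_\infty\wedge\bm{a})=-\bm{a}$ and $\bm{e}_o\cdot(\bm{e}_\infty\wedge\bm{I})=-\bm{I}$ to obtain $\bm{e}_o\cdot\bm{M}=-(\bm{a}+\beta\bm{I})$, whence $-\langle\bm{e}_\infty P_{\bm{I}}(\bm{e}_o\cdot\bm{M})\rangle_{2,4}=\bm{e}_\infty(\bm{a}+\beta\bm{I})$. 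The grade restriction to $\{2,4\}$ is exactly what discards the spurious grade-$3$ pieces produced on blades such as $\bm{e}_\infty\wedge\bm{e}_1\wedge\bm{e}_2$, so that these are correctly sent to zero; this grade bookkeeping is the only delicate point in this step.

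Third, I would prove the characterization. The inclusion $\mathcal{M}\subseteq\{\bm{M}\in\mathbb{M}:\bm{MM}^\dagger=1\}$ is immediate, since \eqref{eq:m:linear:manifold:U:TR} gives $\bm{M}\in\mathbb{M}$ and $\bm{MM}^\dagger=\bm{T}(\bm{RR}^\dagger)\bm{T}^\dagger=\bm{TT}^\dagger=1$. For the converse, which is \textbf{the main obstacle}, I would write $\bm{M}=\bm{R}+\bm{e}_\infty\bm{c}$ with $\bm{R}=\alpha+\bm{B}$ and $\bm{c}=\bm{a}+\beta\bm{I}$ and compute
\begin{equation*}
\bm{MM}^\dagger=\bm{RR}^\dagger+\bm{e}_\infty\bm{c}\bm{R}^\dagger+\bm{R}\bm{c}^\dagger\bm{e}_\infty,
\end{equation*}
the term $\bm{e}_\infty\bm{c}\bm{c}^\dagger\bm{e}_\infty$ vanishing because $\bm{e}_\infty\langle\bm{A}\rangle_k=(-1)^k\langle\bm{A}\rangle_k\bm{e}_\infty$ (from \eqref{eq:reorder:rules:wedge}) and $\bm{e}_\infty^2=0$. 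The $\bm{e}_\infty$-free part isolates $\bm{RR}^\dagger=1$, so $\bm{R}=P_{\bm{I}}(\bm{M})\in\mathcal{R}$ is automatically a rotator. Setting $\bm{D}=\bm{c}\bm{R}^\dagger$, an odd multivector, so that $\bm{R}\bm{c}^\dagger=\bm{D}^\dagger$ and $\bm{e}_\infty\bm{D}=-\bm{D}\bm{e}_\infty$, the $\bm{e}_\infty$-part collapses to $(\bm{D}^\dagger-\bm{D})\bm{e}_\infty=-2\langle\bm{D}\rangle_3\bm{e}_\infty$, and $\bm{MM}^\dagger=1$ forces $\langle\bm{D}\rangle_3=0$. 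Hence $\bm{D}=\langle\bm{D}\rangle_1$ is a pure vector, so $\bm{T}:=\bm{M}\bm{R}^\dagger=1+\bm{e}_\infty\bm{c}\bm{R}^\dagger=1+\tfrac12\bm{e}_\infty\bm{t}$ with $\bm{t}=2\langle\bm{c}\bm{R}^\dagger\rangle_1$ is a genuine translator, and $\bm{M}=\bm{T}\bm{R}\in\mathcal{M}$; the extraction formulas here coincide with those of Theorem \ref{theo:motor:multilinear:basis}.

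Finally, the equivalence $\bm{M}\in\mathbb{M}\Leftrightarrow\bm{M}=P_{\mathbb{M}}(\bm{M})$ is just the statement that $\mathbb{M}$ is the fixed-point set of the projection established in the second step, so it follows at once. The decisive part of the whole argument is the converse of the characterization: the vanishing of the grade-$3$ component $\langle\bm{c}\bm{R}^\dagger\rangle_3$, read off from the $\bm{e}_\infty$-part of $\bm{MM}^\dagger$, is precisely what upgrades a normalized element of the linear space $\mathbb{M}$ into a factorizable motor, and I expect the bookkeeping of which grades survive multiplication by $\bm{e}_\infty$ to be where care is most needed.
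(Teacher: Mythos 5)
Your proposal is correct, and it actually proves more than the paper does. The paper's own proof consists only of your first two steps: it expands $\bm{M}=\bm{TR}=\bm{R}+\tfrac{1}{2}\bm{e}_\infty\bm{t}\bm{R}$ to obtain the form $\alpha+\bm{B}+\bm{e}_\infty(\bm{a}+\beta\bm{I})$, and then computes $P_{\bm{I}}(\bm{M})=\bm{R}$ and $P_{\bm{I}}(\bm{e}_o\cdot\bm{M})=-\tfrac{1}{2}\bm{t}\bm{R}$ to confirm that the stated projection reproduces a motor; it stops there. In particular the paper never proves the converse inclusion $\{\bm{M}\in\mathbb{M}:\bm{MM}^\dagger=1\}\subseteq\mathcal{M}$, which is the substantive content of the "succinct description" of the motor space. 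Your third step supplies exactly this: writing $\bm{M}=\bm{R}+\bm{e}_\infty\bm{c}$, noting that $\bm{e}_\infty\bm{c}\bm{c}^\dagger\bm{e}_\infty=0$ and that $\bm{e}_\infty\bm{D}=\bm{e}_\infty\wedge\bm{D}$ carries no scalar part, so the $\bm{e}_\infty$-free part of $\bm{MM}^\dagger=1$ forces $\bm{RR}^\dagger=1$ (and in $\mathcal{G}_3$ every even unit-norm element is a rotor), while the $\bm{e}_\infty$-part collapses to $-2\langle\bm{c}\bm{R}^\dagger\rangle_3\bm{e}_\infty$ and forces the trivector part of $\bm{c}\bm{R}^\dagger$ to vanish, so that $\bm{M}\bm{R}^\dagger$ is a genuine translator. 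I checked this computation and it is sound; it is the same mechanism the paper implicitly relies on in its Valkenburg--Dorst discussion but never writes down. Your observation that the grade restriction $\langle\cdot\rangle_{2,4}$ in the projection is what kills blades such as $\bm{e}_\infty\wedge\bm{e}_1\wedge\bm{e}_2$ is also correct and is a detail the paper glosses over. In short: same route as the paper where the paper has a proof, plus a correct completion of the part the paper leaves unproved.
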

\begin{proof}
Let $\bm{M} = \bm{TR}$, with $\bm{T} = 1+\tfrac{1}{2}\bm{e}_\infty\bm{t}$ and $\bm{R} = \gamma + \bm{A}$, with $\bm{t}$ a vector and $\bm{A}\in\mathcal{G}_3^2$ a bivector, then
\begin{equation}
    \bm{M} = \left(1+\tfrac{1}{2}\bm{e}_\infty\bm{t}\right)(\gamma + \bm{A}) = \gamma + \bm{A} + \tfrac{1}{2}\bm{e}_{\infty}(\gamma \bm{t} + \bm{t}\cdot\bm{A} + \bm{t}\wedge\bm{A}) = \bm{R} + \tfrac{1}{2}\bm{e}_\infty\bm{tR}
\end{equation}
then compute 
\begin{align*}
P_{\bm{I}}(\bm{M}) &= P_{\bm{I}}(\bm{R}) + P_{\bm{I}}(\bm{e}_\infty)\wedge P_{\bm{I}}(\bm{tR})=\bm{R}\\
P_{\bm{I}}(\bm{e}_o\cdot\bm{M}) &= P_{\bm{I}}(\bm{e}_o\cdot\bm{R}) + P_{\bm{I}}(\bm{e}_o\cdot(\bm{e}_\infty \bm{tR})) = -P_{\bm{I}}(\bm{tR}) =-\bm{tR}
\end{align*}
\end{proof}
\end{fornextpaper}

\section{Solving the Optimal Translation and Rotation Problem}\label{sec:opt:appendix}
\begin{lemma}
The Lagrangian associated with the constraint $\bm{XX}^\dagger = 1$ is given by 
\begin{equation}
\mathcal{L}(\bm{\Lambda},\bm{X}) = \langle \bm{\Lambda}(\bm{ XX}^\dagger - 1)\rangle
\end{equation}
where $\bm{\Lambda}\in\mathcal{G}_{p,q}$ is the Lagrange multiplier.
\end{lemma}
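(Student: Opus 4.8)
The plan is to read the single multivector equation $\bm{XX}^\dagger = 1$ as the finite family of scalar constraints it actually encodes, apply the ordinary method of Lagrange multipliers to that family, and then repackage the resulting scalar multipliers into one multivector so as to recover the scalar-product form claimed in the statement.

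First I would set $\bm{C}(\bm{X}) \equiv \bm{XX}^\dagger - 1$ and expand it in the canonical basis $\{\bm{e}_J\}$ of $\Gpq$. Since a multivector vanishes if and only if each of its coefficients does, the constraint $\bm{C}(\bm{X}) = 0$ is equivalent to the system $g_J(\bm{X}) \equiv \langle \bm{C}(\bm{X})\bm{e}^J\rangle = 0$ ranging over all $J$, where $\{\bm{e}^J\}$ is the reciprocal basis with $\bm{e}_I*\bm{e}^J = \delta_{IJ}$ already used in the definition of $\partial_{\bm{X}}$. This is now a standard equality-constrained problem, so the classical construction augments the objective with one scalar multiplier per constraint, producing the term $\sum_J \lambda_J \langle \bm{C}(\bm{X})\bm{e}^J\rangle$.

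Next I would collect the multipliers into a single free multivector $\bm{\Lambda} \equiv \sum_J \lambda_J \bm{e}^J$; as the $\lambda_J$ range over $\mathbb{R}$, $\bm{\Lambda}$ ranges over all of $\Gpq$. By the scalar-multiplication property of the grade projection, the multiplier term becomes $\sum_J \lambda_J \langle \bm{C}(\bm{X})\bm{e}^J\rangle = \langle \bm{C}(\bm{X})\bm{\Lambda}\rangle = \langle \bm{\Lambda}(\bm{XX}^\dagger - 1)\rangle$, the last equality using the commutation property \eqref{eq:commutation:property}. This is exactly $\mathcal{L}(\bm{\Lambda},\bm{X})$. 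As a consistency check I would differentiate with respect to the multiplier: using the derivative rule $\partial_{\bm{X}}\langle \bm{AX}\rangle = \bm{A}$ from \eqref{eq:derivatives:properties} together with \eqref{eq:commutation:property} gives $\partial_{\bm{\Lambda}}\mathcal{L} = \bm{XX}^\dagger - 1$, so stationarity in $\bm{\Lambda}$ returns precisely the intended constraint.

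The subtlety I expect to address — and the main obstacle rather than the multiplier bookkeeping — is the sufficiency and redundancy of a general multivector multiplier. Because $(\bm{XX}^\dagger)^\dagger = \bm{XX}^\dagger$, the constraint $\bm{C}(\bm{X})$ is self-reverse and hence populates only the grades on which reversion acts as $+1$; the complementary components of $\bm{\Lambda}$ pair with identically-vanishing constraints and are harmless, which is why admitting a fully general $\bm{\Lambda}\in\Gpq$ costs nothing. The load-bearing fact is the nondegeneracy of the scalar product on $\Gpq$: it guarantees that $\langle \bm{\Lambda}\bm{C}(\bm{X})\rangle = 0$ for all $\bm{\Lambda}$ forces $\bm{C}(\bm{X}) = 0$, so the pairing genuinely separates all the scalar constraints instead of collapsing them. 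Making this separation explicit is the step I would take most care over.
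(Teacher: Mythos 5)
The paper states this lemma without any accompanying proof, so there is no in-paper argument to compare against; your proposal supplies the justification the paper leaves implicit, and it is correct. Reading $\bm{XX}^\dagger=1$ as the family of scalar constraints $\langle(\bm{XX}^\dagger-1)\bm{e}^J\rangle=0$, attaching one scalar multiplier per constraint, and resumming the multipliers into $\bm{\Lambda}=\sum_J\lambda_J\bm{e}^J$ is exactly the standard construction, and the identity $\sum_J\lambda_J\langle\bm{C}\bm{e}^J\rangle=\langle\bm{\Lambda}\bm{C}\rangle$ follows from linearity of the grade projection together with \eqref{eq:commutation:property}. You are also right to flag the two points that actually carry the argument: the existence of the reciprocal basis (equivalently, non-degeneracy of the scalar product on $\mathcal{G}_{p,q}$), which guarantees that the pairing $\langle\bm{\Lambda}\,\cdot\,\rangle$ separates all the scalar constraints rather than collapsing them; and the fact that $\bm{XX}^\dagger-1$ is self-reverse, so the components of $\bm{\Lambda}$ in grades where reversion acts as $-1$ pair with nothing and a fully general $\bm{\Lambda}\in\mathcal{G}_{p,q}$ is merely redundant, never wrong. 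That second observation is precisely the mechanism the paper exploits later when it prunes $\bm{\Lambda}$ down to a scalar for rotors in $\mathcal{G}_3$ (Theorem \ref{theo:versor:R3}), so your proof is consistent with how the lemma is subsequently used. No gaps.
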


\begin{lemma}[Multilinear Constraint]
\label{theo:multilinear:constraint}
Under the constraint $\bm{X}\in\mathbb{S}$, where $\mathbb{S}$ is a multilinear space defined in {\it Definition} \ref{def:multi:space} ,  the stationary points of $J(\bm{X})$ satisfy the equation
% When constraining a multivector to some multilinear space $\mathbb{S}$ then the solution to the optimization problem $\min_{\bm{X}} J(\bm{X})$ satisfies
\begin{subequations}
\begin{equation}
P_{\mathbb{S}}FP_{\mathbb{S}}(\bm{X}) = 0
\end{equation}
where $F(\bm{X}) = \partial_{\bm{X}}J(\bm{X})$. As a particular case of this theorem we also state that when $\bm{X}$ is restricted to some grades ${K=(k_1,k_2,\dots,k_s)}$ then the stationary points of $J$ will be the solution to the equation
\begin{equation}
\langle F(\langle \bm{X}\rangle_K)\rangle_K = 0
\end{equation}
\end{subequations}
\end{lemma}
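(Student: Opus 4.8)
The plan is to exploit the fact that $\mathbb{S}$ is a \emph{linear} subspace, so the set of feasible directions at any point $\bm{X}\in\mathbb{S}$ is just $\mathbb{S}$ itself; unlike the preceding lemma for the constraint $\bm{XX}^\dagger=1$, no Lagrange multiplier machinery is needed, since the constrained stationarity condition is simply that the gradient $F(\bm{X})=\partial_{\bm{X}}J(\bm{X})$ has no component along $\mathbb{S}$. First I would parametrize the constraint explicitly: writing $\bm{X}=\sum_{k=1}^m x_k\bm{A}_k$ with real coordinates $x_k=\bm{X}*\bm{A}^k$, the restriction of $J$ to $\mathbb{S}$ becomes an ordinary function of $x_1,\dots,x_m$. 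Using the limit definition of the directional derivative \eqref{eq:dir:der:multivectors} together with $\partial\bm{X}/\partial x_k=\bm{A}_k$, the chain rule gives $\partial J/\partial x_k=\bm{A}_k*\partial_{\bm{X}}J(\bm{X})=\bm{A}_k*F(\bm{X})$. Hence $\bm{X}$ is a stationary point of the constrained problem iff $\bm{A}_k*F(\bm{X})=0$ for every $k=1,\dots,m$.

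Second, I would translate this system of scalar equations into the claimed operator identity. Since $\{\bm{A}_k\}$ spans $\mathbb{S}$, the conditions $\bm{A}_k*F(\bm{X})=0$ for all $k$ are equivalent to $\bm{Z}*F(\bm{X})=0$ for every $\bm{Z}\in\mathbb{S}$, i.e.\ to $F(\bm{X})$ being orthogonal to $\mathbb{S}$ under the scalar product. Because the reciprocal family $\{\bm{A}^k\}$ is likewise a basis of $\mathbb{S}$, orthogonality to $\mathbb{S}$ is in turn equivalent to $F(\bm{X})*\bm{A}^k=0$ for all $k$. Feeding this into the projection formula of Definition~\ref{def:multi:space}, $P_{\mathbb{S}}(F(\bm{X}))=\sum_k\langle F(\bm{X})\bm{A}^k\rangle\bm{A}_k=\sum_k\big(F(\bm{X})*\bm{A}^k\big)\bm{A}_k$, and linear independence of the $\bm{A}_k$ shows this vanishes precisely when each coefficient does. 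Thus stationarity is equivalent to $P_{\mathbb{S}}(F(\bm{X}))=0$. Finally, since $\bm{X}\in\mathbb{S}$ gives $P_{\mathbb{S}}(\bm{X})=\bm{X}$, I may insert an inner projection for free and record the condition in the symmetric form $P_{\mathbb{S}}FP_{\mathbb{S}}(\bm{X})=0$.

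For the particular case I would instantiate $\mathbb{S}$ as the subspace of multivectors of grades $K=(k_1,\dots,k_s)$, whose canonical basis is the set of basis blades $\bm{e}_J$ of those grades. These are mutually orthogonal with $\bm{e}_J*\bm{e}_J=\pm1$, so their reciprocals are $\bm{e}^J=\pm\bm{e}_J$ and the projection $P_{\mathbb{S}}$ collapses to the multiple-grade projection $\langle\cdot\rangle_K$ of \eqref{eq:multiple:grade:proj}. Substituting $P_{\mathbb{S}}=\langle\cdot\rangle_K$ into $P_{\mathbb{S}}FP_{\mathbb{S}}(\bm{X})=0$ yields $\langle F(\langle\bm{X}\rangle_K)\rangle_K=0$, as stated.

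The main obstacle I anticipate is the bookkeeping around the two reciprocal frames: one must check that the condition obtained by differentiating in the $\bm{A}_k$ directions (scalar products against $\bm{A}_k$) matches the projection formula (which uses the reciprocals $\bm{A}^k$), and that the gradient $F(\bm{X})$ generally lies \emph{outside} $\mathbb{S}$, so the projection is genuinely needed rather than cosmetic. Both points are resolved by the single observation that $\{\bm{A}_k\}$ and $\{\bm{A}^k\}$ span the same space $\mathbb{S}$, whence ``orthogonal to every $\bm{A}_k$'', ``orthogonal to every $\bm{A}^k$'', and ``orthogonal to all of $\mathbb{S}$'' coincide; the non-degeneracy of the scalar product on $\mathbb{S}$ implied by the existence of the reciprocal frame $\bm{A}_i*\bm{A}^j=\delta_{ij}$ is exactly what legitimizes this chain of equivalences.
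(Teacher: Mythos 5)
The paper states Lemma~\ref{theo:multilinear:constraint} without any proof, so there is nothing to compare your argument against; what you have written is a correct and complete proof that fills this gap, and it is the natural one: parametrize $\mathbb{S}$ by coordinates $x_k=\bm{X}*\bm{A}^k$, identify $\partial J/\partial x_k$ with the directional derivative $\bm{A}_k*F(\bm{X})$ via \eqref{eq:dir:der:multivectors}, and observe that the vanishing of all these scalars is exactly the vanishing of $P_{\mathbb{S}}(F(\bm{X}))$, with the inner $P_{\mathbb{S}}$ inserted for free since $P_{\mathbb{S}}(\bm{X})=\bm{X}$ on $\mathbb{S}$; the grade-restricted case then follows because the basis blades of the grades in $K$ are (up to sign) their own reciprocals, so $P_{\mathbb{S}}=\langle\cdot\rangle_K$. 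The one point where your justification is slightly off is the final remark: the mere existence of a reciprocal frame with $\bm{A}_i*\bm{A}^j=\delta_{ij}$ does \emph{not} force the scalar product to be non-degenerate on $\mathbb{S}$, because Definition~\ref{def:multi:space} does not require the $\bm{A}^k$ to lie in $\mathbb{S}$ (e.g.\ the span of $\bm{e}_\infty$ in $\mathcal{G}_{4,1}$ is null, yet has the ambient reciprocal $-\bm{e}_o$). Your chain of equivalences between ``orthogonal to every $\bm{A}_k$'' and ``orthogonal to every $\bm{A}^k$'' needs $\operatorname{span}\{\bm{A}^k\}=\operatorname{span}\{\bm{A}_k\}$, which holds precisely when the reciprocals can be chosen inside $\mathbb{S}$; otherwise the stationarity condition is $\bar{P}_{\mathbb{S}}(F(\bm{X}))=0$ rather than $P_{\mathbb{S}}(F(\bm{X}))=0$. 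This caveat afflicts the lemma as stated just as much as your proof (and is harmless in the grade-restricted case actually used in Theorem~\ref{theo:rot:opt}), so it is worth recording as a hypothesis rather than treating as a defect of your argument.
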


\begin{theorem}
\label{theo:maxi:lagrange:motor}
Let $\mathbb{S}$ and $P_{\mathbb{S}}$ be defined by {\it Definition} \ref{def:multi:space}. The equation associated with the optimization problem
\begin{maxi}|l|
{\bm{X}}{ J(\bm{X}) }
{}{}
\addConstraint{\bm{XX}^\dagger = \pm 1}
\addConstraint{\bm{X}\in\mathbb{S}}{}
\label{opt:versor:manifold:estimation}
\end{maxi}
is given by the multilinear eigenvalue equation
\begin{subequations}
\begin{equation}
P_{\mathbb{S}}FP_{\mathbb{S}}(\bm{X}) = P_{\mathbb{S}}((\bm{\Lambda}+\bm{\Lambda}^\dagger) P_{\mathbb{S}}(\bm{X}))
\end{equation}
where $\bm{\Lambda}$ is the multivector Lagrange multiplier associated with the constraint ${\bm{XX}^\dagger = \pm1}$ and where ${F(\bm{X}) = \partial_{\bm{X}}^\dagger J(\bm{X})}$. Note that the solution is free to chose the sign $s=\pm 1$ for the constraint $\bm{XX}^\dagger = s$.  The constraint is necessary to restrict the quantity $\bm{XX}^\dagger$ to be a non-vanishing scalar. If $\bm{X} = P_{\mathbb{S}}(\bm{X})$ then we may write
\begin{equation}
P_{\mathbb{S}}F(\bm{X}) = P_{\mathbb{S}}((\bm{\Lambda}+\bm{\Lambda}^\dagger)\bm{X})
\end{equation}
\end{subequations}

\end{theorem}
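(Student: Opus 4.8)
The plan is to reduce the two-constraint problem to a single application of the two preceding lemmas. First I would dispatch the scalar constraint $\bm{XX}^\dagger = s$, $s=\pm1$, by the device of the Lagrangian lemma, absorbing it into an augmented objective
\[
\widetilde{J}(\bm{X}) = J(\bm{X}) - \langle \bm{\Lambda}\,(\bm{XX}^\dagger - s)\rangle,
\]
where $\bm{\Lambda}\in\mathcal{G}_{p,q}$ is the multivector Lagrange multiplier. The subspace constraint $\bm{X}\in\mathbb{S}$ I would then handle wholesale by invoking the Multilinear Constraint Lemma~\ref{theo:multilinear:constraint} (in its reverse-derivative form, which is equivalent here since $J$ is scalar-valued and $\partial_{\bm{X}}^\dagger J = (\partial_{\bm{X}} J)^\dagger$ vanishes exactly where $\partial_{\bm{X}} J$ does): the stationary points of $\widetilde{J}$ restricted to $\mathbb{S}$ satisfy $P_{\mathbb{S}}\widetilde{F}P_{\mathbb{S}}(\bm{X}) = 0$ with $\widetilde{F} = \partial_{\bm{X}}^\dagger \widetilde{J}$. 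Because $s$ enters $\widetilde{J}$ only as an additive constant, it drops out under differentiation, which is precisely why the resulting equation is insensitive to the chosen sign $s=\pm1$.

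The only genuine computation is then $\widetilde{F}$, the reverse-gradient of the constraint term. Writing $\langle \bm{\Lambda}\bm{XX}^\dagger\rangle = \langle \bm{XX}^\dagger\bm{\Lambda}\rangle$ via the cyclic property~\eqref{eq:commutation:property} and applying the product-rule identity $\partial_{\bm{X}}\langle \bm{XAX}^\dagger\bm{B}\rangle = \bm{AX}^\dagger\bm{B} + \bm{A}^\dagger\bm{X}^\dagger\bm{B}^\dagger$ from~\eqref{eq:derivatives:properties} with $\bm{A}=1$ and $\bm{B}=\bm{\Lambda}$, I obtain $\partial_{\bm{X}}\langle \bm{XX}^\dagger\bm{\Lambda}\rangle = \bm{X}^\dagger\bm{\Lambda} + \bm{X}^\dagger\bm{\Lambda}^\dagger$; reversing, using $\partial_{\bm{X}}^\dagger\phi = (\partial_{\bm{X}}\phi)^\dagger$ together with~\eqref{eq:dagger:prop:1}, collapses these into $(\bm{\Lambda}+\bm{\Lambda}^\dagger)\bm{X}$. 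Hence $\widetilde{F}(\bm{X}) = F(\bm{X}) - (\bm{\Lambda}+\bm{\Lambda}^\dagger)\bm{X}$ with $F=\partial_{\bm{X}}^\dagger J$, and substituting into $P_{\mathbb{S}}\widetilde{F}P_{\mathbb{S}}(\bm{X})=0$ and rearranging yields the claimed multilinear eigenvalue equation $P_{\mathbb{S}}FP_{\mathbb{S}}(\bm{X}) = P_{\mathbb{S}}((\bm{\Lambda}+\bm{\Lambda}^\dagger)P_{\mathbb{S}}(\bm{X}))$. Specializing to $\bm{X}=P_{\mathbb{S}}(\bm{X})$, i.e. $\bm{X}$ already lying in $\mathbb{S}$, removes the inner projector and gives the second displayed identity.

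The step I expect to require the most care is the justification packaged inside Lemma~\ref{theo:multilinear:constraint}, namely that restricting the variation to $\mathbb{S}$ is faithfully captured by sandwiching the free gradient between two copies of $P_{\mathbb{S}}$; this rests on $P_{\mathbb{S}}$ being self-adjoint under the scalar product. I would argue the latter by noting that $P_{\mathbb{S}}(\bm{X}) = \sum_k (\bm{X}*\bm{A}^k)\bm{A}_k$ is the orthogonal projection onto $\mathbb{S}$ relative to the symmetric scalar product: expanding $\bm{A}^k$ in the reciprocal basis contracts the bilinear form $P_{\mathbb{S}}(\bm{X})*\bm{Y}$ against the symmetric inverse Gram matrix, whence $P_{\mathbb{S}}(\bm{X})*\bm{Y} = \bm{X}*P_{\mathbb{S}}(\bm{Y})$. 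A secondary point worth making explicit is why a full multivector multiplier $\bm{\Lambda}$, rather than a scalar one, is needed: off the constraint surface $\bm{XX}^\dagger$ carries nonscalar grades, so $\bm{\Lambda}$ must supply components in every grade to annihilate them and pin $\bm{XX}^\dagger$ to the nonvanishing scalar $s$ — exactly the interpretation recorded in the statement of the theorem.
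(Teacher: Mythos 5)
The paper states this theorem without any proof, so there is no in-paper argument to compare against; judged on its own terms, your derivation is sound and is surely the intended one. The reduction to the augmented objective $\widetilde{J}(\bm{X}) = J(\bm{X}) - \langle\bm{\Lambda}(\bm{XX}^\dagger - s)\rangle$ followed by an appeal to Lemma~\ref{theo:multilinear:constraint} is the standard route, and your computation of the constraint gradient --- cyclic reordering, the identity $\partial_{\bm{X}}\langle\bm{XAX}^\dagger\bm{B}\rangle = \bm{AX}^\dagger\bm{B} + \bm{A}^\dagger\bm{X}^\dagger\bm{B}^\dagger$ from~\eqref{eq:derivatives:properties} with $\bm{A}=1$, $\bm{B}=\bm{\Lambda}$, then reversion --- correctly yields $(\bm{\Lambda}+\bm{\Lambda}^\dagger)\bm{X}$. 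This is exactly the manipulation the authors perform elsewhere when differentiating terms of the form $\langle\bm{\Lambda}\bm{MM}^\dagger\rangle$ in the motor setting, so your derivative term is consistent with their conventions. Your justification that $P_{\mathbb{S}}$ is self-adjoint under the scalar product, via the symmetry of the inverse Gram matrix of the basis $\bm{A}_k$, is also the right reason the sandwiched form $P_{\mathbb{S}}FP_{\mathbb{S}}$ captures the restricted variation.

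One point deserves more care than you give it. Lemma~\ref{theo:multilinear:constraint} is phrased with $F = \partial_{\bm{X}}J$, whereas the theorem uses $F = \partial_{\bm{X}}^\dagger J$, and you dismiss the discrepancy by noting that the two gradients vanish at the same points. That is true of the unprojected gradients, but the stationarity condition here is the vanishing of a \emph{projected} gradient, and $P_{\mathbb{S}}(\bm{G}^\dagger) = 0$ is equivalent to $P_{\mathbb{S}}(\bm{G}) = 0$ only when $\mathbb{S}$ is closed under reversion, in which case $P_{\mathbb{S}}$ commutes with $\dagger$ because reversion preserves the scalar product. This does hold for the motor space to which the theorem is eventually applied, since its basis elements are blades of definite grade, but it is an implicit hypothesis on $\mathbb{S}$ that neither you nor the paper makes explicit. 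This is a gap of precision rather than of substance; the argument is otherwise complete.
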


\begin{theorem}\label{theo:versor:R3}
Let $p=3$ and $q=0$ then when $\bm{X}$ is restricted to live in $\mathcal{R}$, then the Lagrange multiplier $\bm{\Lambda}$ associated with the optimization problem
\begin{equation}
\underset{\bm{X}}{\text{maximize}}\ J(\bm{X}), \quad\text{subject to}\ \bm{X}\in\mathcal{R}
\end{equation}
can be just a scalar.
\end{theorem}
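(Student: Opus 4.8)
The plan is to reduce the claim to a single structural fact about $\mathcal{G}_3$: that the norm-type product $\bm{X}\bm{X}^\dagger$ takes \emph{scalar} values on the even subalgebra $\mathcal{G}_3^+$, which is the smallest multilinear space containing the rotator group $\mathcal{R}$. Since $\mathcal{R} = \{\bm{X}\in\mathcal{V}^+_{3,0} \mid \bm{X}\bm{X}^\dagger = 1\}$ and every even versor in $\mathcal{G}_3$ lies in $\mathcal{G}_3^+$, it suffices to take the enclosing space $\mathbb{S} = \mathcal{G}_3^+$, for which $P_{\mathbb{S}}(\cdot) = \langle\cdot\rangle_{0,2}$, and to work with $\bm{X} = \langle\bm{X}\rangle + \langle\bm{X}\rangle_2$.

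First I would establish the key lemma. Writing $\bm{X} = \alpha + \bm{B}$ with $\alpha = \langle\bm{X}\rangle$ a scalar and $\bm{B} = \langle\bm{X}\rangle_2$ a bivector, and using $\bm{B}^\dagger = -\bm{B}$ from \eqref{eq:dagger:contas}, one obtains $\bm{X}\bm{X}^\dagger = (\alpha + \bm{B})(\alpha - \bm{B}) = \alpha^2 - \bm{B}^2$. The crucial observation is that every bivector in $\mathcal{G}_3$ is a $2$-blade: the term $\bm{B}\wedge\bm{B}$ would be a grade-$4$ element, but $\mathcal{G}_3$ carries no grades above $3$, so $\bm{B}\wedge\bm{B} = 0$ and $\bm{B}$ is simple. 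Hence $\bm{B}^2 = \langle\bm{B}^2\rangle = -\|\bm{B}\|^2$ is a scalar, and therefore $\bm{X}\bm{X}^\dagger = \alpha^2 + \|\bm{B}\|^2 \in \mathbb{R}$ for every $\bm{X}\in\mathcal{G}_3^+$.

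With this in hand I would return to the Lagrangian penalty term from the lemma preceding Theorem \ref{theo:maxi:lagrange:motor}, writing $\mathcal{L}(\bm{\Lambda},\bm{X}) = J(\bm{X}) + \langle\bm{\Lambda}(\bm{X}\bm{X}^\dagger - 1)\rangle$. For any feasible $\bm{X}$ the quantity $c \equiv \bm{X}\bm{X}^\dagger - 1$ is scalar, so $\langle\bm{\Lambda}(\bm{X}\bm{X}^\dagger - 1)\rangle = c\,\langle\bm{\Lambda}\rangle$, meaning only the scalar part $\langle\bm{\Lambda}\rangle$ of the multivector multiplier ever enters. Equivalently, in the stationarity equation of Theorem \ref{theo:maxi:lagrange:motor} the multiplier appears as $\bm{\Lambda}+\bm{\Lambda}^\dagger$; because the single active constraint is scalar-valued, the non-scalar grades of $\bm{\Lambda}$ are unconstrained and may be set to zero, leaving $\bm{\Lambda}+\bm{\Lambda}^\dagger = 2\langle\bm{\Lambda}\rangle$. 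Setting $\lambda \equiv \langle\bm{\Lambda}\rangle$, the stationarity condition collapses to $P_{\mathbb{S}}F(\bm{X}) = 2\lambda\bm{X}$, a genuine scalar-eigenvalue multilinear eigenproblem, which is exactly what justifies the scalar multiplier used in Theorem \ref{theo:rot:opt}.

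The main obstacle, though it is mild, is the scalar-valuedness of $\bm{X}\bm{X}^\dagger$ on $\mathcal{G}_3^+$; this is precisely where the hypothesis $p=3,\,q=0$ is essential, since in dimension four or higher the bivector part contributes a non-vanishing grade-$4$ term $\bm{B}\wedge\bm{B}$, whence $\bm{X}\bm{X}^\dagger = 1$ genuinely imposes constraints on more than one grade and forces a higher-grade multiplier. I would also note that enlarging the feasible set $\mathcal{R}$ to $\mathcal{G}_3^+$ is harmless: the multiplier argument only requires the norm to be scalar on a set containing the feasible region, which $\mathcal{G}_3^+$ supplies.
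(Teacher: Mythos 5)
Your proposal is correct and follows essentially the same route as the paper's own proof: write $\bm{X}=\alpha+\bm{B}$, observe that every bivector in $\mathcal{G}_3$ is a blade so $\bm{B}^2$ is scalar, conclude $\bm{X}\bm{X}^\dagger$ is scalar, and then use the grade-orthogonality of the scalar product to show only $\langle\bm{\Lambda}\rangle$ survives in the constraint term. Your added justification via the vanishing of the grade-$4$ part $\bm{B}\wedge\bm{B}$ and the remark on why $p=3,q=0$ is essential are fine elaborations of the same argument.
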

\begin{proof}
In $\mathcal{G}_3$ a rotator $\bm{X}\in\mathcal{R}$ belongs to the even subalgebra of $\mathcal{G}_3$, thus $\bm{X}\in\mathcal{G}_3^+=\mathcal{G}_3^0 \oplus\mathcal{G}_3^2$, then we can write $\bm{X} = \alpha + \bm{B}$, where $\alpha\in\mathbb{R}$ and where $\bm{B}\in\mathcal{G}^2_3$ is a bivector. But since $\bm{B}$ lives in $\mathcal{G}_3$, then it is always a blade, being a blade then it squares to a scalar. Thus we write
\begin{equation}
\bm{XX}^\dagger = (\alpha + \bm{B})(\alpha - \bm{B}) = \alpha^2 - \bm{B}^2 + \alpha\bm{B} - \alpha\bm{B} = \alpha^2 - \bm{B}^2 = \left\langle \alpha^2 - \bm{B}^2 \right\rangle
\end{equation}
since the quantity $\bm{XX}^\dagger$ is a scalar then writing $\bm{\Lambda}$ in its different grades parts ${\bm{\Lambda} = \bm{\Lambda}_0+\bm{\Lambda}_1 + \bm{\Lambda}_2 + \bm{\Lambda}_3}$ and considering~\eqref{eq:scalar:diff:grade} we have
\begin{equation}
\langle \bm{\Lambda XX}^\dagger\rangle = \sum_{k=0}^3 \langle\bm{\Lambda}_k \bm{ XX}^\dagger\rangle = \langle\bm{\Lambda}_0 \bm{ XX}^\dagger\rangle
\end{equation}
thus it is always valid that $\bm{\Lambda} = \langle\bm{\Lambda}\rangle = \bm{\Lambda}_0$
\end{proof}

By Theorem \ref{theo:versor:R3} for a rotor $\bm{R}\in\mathcal{G}_{3}$ the Lagrangian associated with the constraint ${\bm{RR}^\dagger = 1}$ is ${\mathcal{L}({\lambda,\bm{R}}) = \lambda\langle \bm{ UU}^\dagger - 1\rangle}$. Where ${\lambda=\langle \lambda\rangle}$.

\begin{fornextpaper}
\begin{theorem}
\label{theo:motor:constraint}
Let $\bm{U}\in \mathcal{M}$ be a multivector constrained to $\mathcal{M}$ then the Lagrange multiplier associated with the constraint $\bm{UU}^\dagger = 1$ is $\bm{\Lambda} = \lambda_1 + \lambda_2\bm{e}_o\bm{I}$ for some scalars $\lambda_1$ and $\lambda_2$, the Lagrangian is then given by 
\begin{equation}
\mathcal{L}(\bm{\Lambda},\bm{U}) = \mathcal{L}(\lambda_1,\lambda_2,\bm{U}) = \left\langle (\lambda_1 + \lambda_2\bm{e}_o\bm{I})(\bm{UU}^\dagger - 1)\right\rangle = \left\langle \bm{\Lambda}(\bm{UU}^\dagger - 1)\right\rangle\label{eq:lagrange:multipliers}
\end{equation}
\end{theorem}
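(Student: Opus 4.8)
The plan is to mirror the scalar-multiplier argument of \emph{Theorem}~\ref{theo:versor:R3}, the only difference being that the constraint function $\bm{UU}^\dagger-1$ now lives in a two-dimensional grade subspace rather than in the scalars alone. First I would parametrize the ambient linear space $\mathbb{M}$ using \emph{Theorem}~\ref{theo:motor:basis:3D:GA}, writing a generic element as $\bm{U}=\alpha+\bm{B}+\bm{e}_\infty(\bm{a}+\beta\bm{I})$ with $\alpha,\beta\in\mathbb{R}$, $\bm{B}\in\mathcal{G}_3^2$ and $\bm{a}\in\mathcal{A}_3$ (eq.~\eqref{eq:m:linear:manifold:U:TR}); note that at this stage $\bm{U}$ is \emph{not} assumed to satisfy the constraint, since the Lagrangian is formed over all of $\mathbb{M}$.

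The central computation is to determine the grades of $\bm{UU}^\dagger$. I would split $\bm{U}=\bm{V}+\bm{W}$ with $\bm{V}=\alpha+\bm{B}\in\mathcal{G}_3$ and $\bm{W}=\bm{e}_\infty\bm{C}$, $\bm{C}=\bm{a}+\beta\bm{I}\in\mathcal{G}_3$. Because $\bm{e}_\infty^2=0$ and $\bm{e}_\infty$ commutes (resp.\ anticommutes) with even (resp.\ odd) elements of $\mathcal{G}_3$, one gets $\bm{W}\bm{W}^\dagger=\bm{e}_\infty\bm{C}\bm{C}^\dagger\bm{e}_\infty=0$, while $\bm{V}\bm{V}^\dagger=\alpha^2+\|\bm{B}\|^2$ is a scalar. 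For the cross term, writing $\bm{D}=\bm{V}\bm{C}^\dagger\in\mathcal{G}_3$ gives $\bm{V}\bm{W}^\dagger+\bm{W}\bm{V}^\dagger=\bm{D}\bm{e}_\infty+\bm{e}_\infty\bm{D}^\dagger$; splitting $\bm{D}$ into even and odd parts and using \eqref{eq:dagger:contas} collapses this to $2\bm{e}_\infty\langle\bm{D}\rangle_0-2\bm{e}_\infty\langle\bm{D}\rangle_3$. A direct check shows $\langle\bm{D}\rangle_0=0$, so only the grade-three part $\langle\bm{D}\rangle_3=\mu\bm{I}$ survives, yielding
\begin{equation*}
\bm{UU}^\dagger=\left(\alpha^2+\|\bm{B}\|^2\right)-2\mu\,\bm{e}_\infty\bm{I}.
\end{equation*}
Thus $\bm{UU}^\dagger-1$ always lies in $\mathrm{span}\{1,\bm{e}_\infty\bm{I}\}$; equivalently, since $\bm{UU}^\dagger$ is both of even grade and self-reverse, its only possible grades are $0$ and $4$, and the motor structure forces the grade-four part into the single direction $\bm{e}_\infty\bm{I}$.

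Finally I would read off the admissible multiplier. In $\mathcal{L}=\langle\bm{\Lambda}(\bm{UU}^\dagger-1)\rangle$ the scalar product annihilates any component of $\bm{\Lambda}$ whose grade differs from $0$ or $4$ (eq.~\eqref{eq:scalar:diff:grade}), so only $\langle\bm{\Lambda}\rangle_0$ and $\langle\bm{\Lambda}\rangle_4$ can contribute. The scalar part is paired by $\lambda_1=\langle\bm{\Lambda}\rangle$. For the grade-four part, the null character of $\bm{e}_\infty$ gives $\langle(\bm{e}_\infty\bm{I})(\bm{e}_\infty\bm{I})\rangle=0$, whereas $\langle(\bm{e}_o\bm{I})(\bm{e}_\infty\bm{I})\rangle=\bm{e}_o\cdot\bm{e}_\infty=-1\neq0$, so the unique grade-four direction that detects the constraint is $\bm{e}_o\bm{I}$. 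Hence it suffices to take $\bm{\Lambda}=\lambda_1+\lambda_2\bm{e}_o\bm{I}$, which gives \eqref{eq:lagrange:multipliers}. The main obstacle is the grade bookkeeping of $\bm{UU}^\dagger$, in particular verifying the cancellations $\bm{W}\bm{W}^\dagger=0$ and $\langle\bm{D}\rangle_0=0$ that kill the spurious grade-one and grade-two pieces, together with the duality computation isolating $\bm{e}_o\bm{I}$ as the correct multiplier direction.
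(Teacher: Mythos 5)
Your proposal is correct and follows essentially the same route as the paper: parametrize $\bm{U}$ via \eqref{eq:m:linear:manifold:U:TR}, expand $\bm{UU}^\dagger$ to see that it lies in $\mathrm{span}\{1,\bm{e}_\infty\bm{I}\}$, and then observe that since $\bm{e}_\infty\bm{I}$ is null the pairing direction must be $\bm{e}_o\bm{I}$, giving $\langle(\bm{e}_o\bm{I})(\bm{e}_\infty\bm{I})\rangle=\bm{e}_o\cdot\bm{e}_\infty\neq 0$. The only difference is presentational: you organize the expansion through the even/odd split $\bm{U}=\bm{V}+\bm{e}_\infty\bm{C}$ and the auxiliary $\bm{D}=\bm{V}\bm{C}^\dagger$, whereas the paper multiplies the four terms out directly, arriving at the same $\bm{UU}^\dagger=\alpha^2+\|\bm{B}\|^2+2\alpha\beta\bm{e}_\infty\bm{I}-2\bm{e}_\infty\bm{B}\wedge\bm{a}$.
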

\begin{proof}
Let $\bm{U}$ be given by (\ref{eq:m:linear:manifold:U:TR}) then we compute 
\begin{equation}
\begin{split}
\bm{UU}^\dagger &= \left( \alpha + \bm{B} + \bm{e}_\infty(\bm{a} + \beta\bm{I})\right)\left( \alpha - \bm{B} + (\bm{a} - \beta\bm{I})\bm{e}_\infty\right)\\
                &= \alpha^2 + \|\bm{B}\|^2 + 2\alpha\beta \bm{e}_\infty\bm{I} - \bm{e}_\infty(\bm{Ba} + \bm{aB} - \beta\bm{BI}+\beta\bm{IB})\\
                &= \alpha^2 + \|\bm{B}\|^2 + 2\alpha\beta \bm{e}_\infty\bm{I} - 2\bm{e}_\infty \bm{B}\wedge\bm{a}
\end{split}
\end{equation}
which clearly shows that $\bm{UU}^\dagger$ only has scalar components and components scalar multiples of $\bm{e}_\infty\bm{I}$, which is a 4-blade. Let $\bm{UU}^\dagger = \omega + \gamma\bm{e}_\infty\bm{I}$ then we show that the scalar product with the Lagrange multiplier $\bm{\Lambda} = \lambda_1 + \lambda_2\bm{e}_o\bm{I}$
\begin{equation}
\begin{split}
    \langle \bm{\Lambda UU}^\dagger\rangle &= \langle (\lambda_1 + \lambda_2\bm{e}_o\bm{I})(\omega + \gamma\bm{e}_\infty\bm{I}) \rangle \\
                                           &= \langle \lambda_1\omega + \lambda_1\gamma\bm{e}_\infty\bm{I} + \omega\lambda_2\bm{e}_o\bm{I} + \lambda_2\gamma\bm{e}_o\bm{I}\bm{e}_\infty\bm{I}\rangle\\
                                           &=\langle \lambda_1\omega - \lambda_2\gamma \bm{e}_o\bm{e}_\infty\bm{I}^2\rangle\\
                                           &=\langle \lambda_1\omega + \lambda_2\gamma (\bm{e}_o\cdot\bm{e}_\infty +\bm{e}_o\wedge\bm{e}_\infty) \rangle\\
                                           &=\langle \lambda_1\omega - \lambda_2\gamma \rangle = \lambda_1\omega - \lambda_2\gamma 
\end{split}
\end{equation}
thus for each basis element of $\bm{UU}^\dagger$ we have an associated Lagrange multiplier.
\end{proof}
\end{fornextpaper}

% \begin{theorem}\label{theo:opt:rot:3D:lagrangian}
% When solving optimizations problems with respect to a rotor $\bm{R}$ that is a rotation in $\mathcal{G}_3$ there are only two considerations in order for the solution be a solution constrained to the space of rotors, the first consideration is that it is unitary, that is $\bm{RR}^\dagger$ thus we add the following lagrangian to the cost function
% \begin{equation}
% \mathcal{L}(\lambda,\bm{R}) = \lambda \langle\bm{RR}^\dagger\rangle
% \end{equation}
% the second consideration is that $\bm{R}$ must be of grade zero and two thus the derivative with respect to $\bm{R}$ must project to the grade zero and one components and when considering problems outside of $\mathcal{G}_3$ it must project to the euclidean space $\mathcal{G}_3$, thus 
% \begin{equation}
% \partial_{\bm{R}} = \langle P_{\bm{I}}(\partial_{\bm{R}})\rangle_{0,2},\quad \bm{R} = \langle P_{\bm{I}}({\bm{R}})\rangle_{0,2} = \lambda\bm{R}
% \end{equation}
% let $F(\bm{R}) = \partial_{\bm{R}}J(\bm{R})$ be a multilinear function then the solution to the optimization problem under the constraint that $\bm{R}$ is a rotor in $\mathcal{G}_3$ is given by 
% \begin{equation}
% \langle P_{\bm{I}}F(\bm{R})\rangle_{0,2} = \langle P_{\bm{I}}F(\langle P_{\bm{I}}({\bm{R}})\rangle_{0,2})\rangle_{0,2} = \lambda \langle P_{\bm{I}}({\bm{R}})\rangle_{0,2}
% \end{equation}
% \end{theorem}

\printProofs[theo:opt:rot:trans]

% \begin{theorem}
% \label{theo:optimal:rotor}
% The solution to the optimization problem 
% \begin{mini}|l|
% {\bm{R}}{\sum_{i=1}^\ell \|\bm{B}_i - \bm{RA}_i\bm{R}^\dagger\|^2}
% {}{}
% \addConstraint{\bm{RR}^\dagger =1}
% \addConstraint{\bm{R}\in\mathcal{G}_3^+}{}
% \label{opt:rotor:estimation}
% \end{mini}
% is given by the eigenmultivector $\bm{R}\in\mathcal{G}_3^+$ with the biggest eigenvalue of the multilinear function
% \begin{equation}
%    F(\bm{R}) =  \sum_{i=1}^\ell \langle \bm{B}_i^\dagger\bm{R}\bm{A}_i + \bm{B}_i\bm{R}\bm{A}_i^\dagger\rangle_{0,2}\label{eq:opt:F:R:eig:prob}
% \end{equation}

% \end{theorem}
\begin{toberemoved}
\subsection{A Grade Agnostic Approach to the Optimal Translation Problem}

For the particular case of scalar linear functions we provide an approach to compute the derivative which is agnostic to the grades of the multivectors. In particular consider for example a scalar linear function $\phi(\bm{x})$ 
% = \langle\bm{t}\cdot\bm{A}\bm{B}$ where $\bm{A}$ and $\bm{B}$ are arbitrary 
then instead of using the geometric derivative as is, we instead opt to consider a basis $\bm{e}_i$ and its reciprocal $\bm{e}^i$ to compute the derivative as
\begin{equation}
\bm{u} = \partial_{\bm{x}}\phi(\bm{x}) = \sum_{i=1}^n \bm{e}_i\bm{e}^i\cdot\partial_{\bm{x}} \phi(\bm{x}) = \sum_{i=1}^n \bm{e}_i\phi(\bm{e}^i) \label{eq:deriv:scalar:linear:func}
\end{equation}
Since $\phi$ is linear in its argument then we can move the scalar $\bm{e}^i\cdot\partial_{\bm{x}}$ close to the argument $\bm{x}$. Note how $\bm{u}$ does not depend on $\bm{x}$. Does by considering 
\begin{equation}
\phi(\bm{t}) = \sum_{i=1}^N \langle \bm{t}\cdot(\mathdutchbcal{S}_{i1} + \mathdutchbcal{Q}_{i1})(\mathdutchbcal{S}_{i3} - \mathdutchbcal{Q}_{i3})^\dagger\rangle + \langle \bm{t}\wedge(\mathdutchbcal{S}_{i1} + \mathdutchbcal{Q}_{i1})(\mathdutchbcal{S}_{i4} - \mathdutchbcal{Q}_{i4})^\dagger\rangle
\end{equation}
in (\ref{eq:cost:expanded:opt:trans}) we can use the approach described by (\ref{eq:deriv:scalar:linear:func}) to compute the derivative of $\phi$. As a programmatic approach it is rather useful, yet for understanding the structure of the solutions of the problems at hand it is rather useless. Let $\bm{u} = \partial_{\bm{t}}\phi(\bm{t})$ then the minimizer of (\ref{eq:cost:function:opt:trans}) could be simply given by 
\begin{equation}
\bm{t} = s_N^{-1}\bm{u} = s_N^{-1}\partial_{\bm{t}}\phi(\bm{t})
\end{equation}
where $s_N$ is given by (\ref{eq:sum:squares:opt:trans}).

% \begin{equation}
% J(\bm{R},\bm{T}) = \|\bm{V} - \bm{TR}\| + \alpha \left(\langle\bm{RR}^\dagger\rangle - 1\right) + \langle\bm{B}(1+\tfrac{1}{2}\bm{e}_\infty\bm{t}-\bm{T})\rangle
% \end{equation}
\iffalse
the second tries to minimize the Frobenius Distance between two transformations.
\begin{definition}
The Frobenius norm of a linear transformation $H$ is given by 
\begin{subequations}
\begin{equation}
\|H\|_{F}^2 = \sum_{i=1}^n \|H(\bm{e}_i)\|^2
\end{equation}
where $\bm{e}_i$ are some orthonormal basis vectors of $\mathcal{A}_{p,q}$. Furthermore the Frobenius norm has the property that it does not change under a change of basis, then we if we have a set of orthogonal vectors $\bm{a}_i$, we can also write that
\begin{equation}
\|H\|_{F}^2 = \sum_{i=1}^n \|H(\bm{a}_i)\|^2
\end{equation}
\end{subequations}
\end{definition}
Note how the Frobenius distance takes the form when we chose the eigenvectors of $H$ as the basis, then
\begin{equation}
\|H - U\|_F^2 = \sum_{i=1}^n \|H(\bm{a}_i) - U(\bm{a}_i)\|^2 = \sum_{i=1}^n \|\bm{\lambda}_i \bm{a}_i - \bm{TRa}_i\bm{R}^\dagger\bm{T}^\dagger\|^2
\end{equation}
finally we write the optimization problem that minimizes the Frobenius norm between $H$ and $U$ as 
% \begin{equation}
% J(\bm{R},\bm{T}) = \sum_{i=1}^n \|\bm{\lambda}_i \bm{a}_i - \bm{TRa}_i\bm{R}^\dagger\bm{T}^\dagger\|^2 +  + \alpha \left(\langle\bm{RR}^\dagger\rangle - 1\right) + \langle\bm{B}(1+\tfrac{1}{2}\bm{e}_\infty\bm{t}-\bm{T})\rangle
% \end{equation}

\begin{mini*}|l|
{\bm{R},\bm{T}}{\sum_{i=1}^n \|\bm{\lambda}_i \bm{a}_i - \bm{TRa}_i\bm{R}^\dagger\bm{T}^\dagger\|^2}
{}{}
\addConstraint{\langle\bm{RR}^\dagger\rangle =1}
\addConstraint{\bm{R}\in\mathcal{G}_3^+}{}
\addConstraint{\bm{T} = 1+\tfrac{1}{2}\bm{e}_\infty\bm{t}}
\addConstraint{\bm{t} \in\mathcal{G}_3^1}
\end{mini*}
\fi

\section{Robert Valkenburg and Leo Dorst Approach}\label{sec:dorst}

In~\cite{Valkenburg_Dorst_2011} the authors propose an approach which estimates rotations and translations from a variety of geometrical data in CGA. They define a cost measure and then proceed to show that if the multivectors to be registered are admissible normalized multivectors then the minimum, constrained to being a motor, of that cost measure has to be a motor. Note that our approach improves on that by lifting that restriction, that is the input multivectors can be of any type.

We illustrate the authors' approach in a more optimizer friendly point of view. Assume that we have $\bm{P}_1,\bm{P}_2,\dots,\bm{P}_m\in\mathcal{G}_{4,1}$ and $\bm{Q}_1,\bm{Q}_2,\dots,\bm{Q}_m\in\mathcal{G}_{4,1}$ which satisfy
\begin{equation}
\bm{Q}_i = \bm{UP}_i\bm{U}^\dagger + \bm{N}_i
\end{equation}
where $\bm{U}$ is some motor and $\bm{N}_i$ is Gaussian multivector noise. Then we define the cost measure
\begin{equation}
J(\bm{U}) = \tfrac{1}{2}\sum_{k=1}^m w_k \|\bm{UP}_k\bm{U}^\dagger - \bm{Q}_k\|^2 \label{eq:cost:measure:leo:dorst}
\end{equation}
for some scalars $w_k$. Taking the derivative with respect to $\bm{U}^\dagger$ we readily find that 
\begin{equation}
L(\bm{U}) \equiv -\tfrac{1}{2}\partial_{\bm{U}^\dagger} J(\bm{U}) = \sum_{k=1}^m w_k\left(\bm{Q}_k^\dagger\bm{U}\bm{P}_k + \bm{Q}_k\bm{U}\bm{P}_k^\dagger\right)
\end{equation}
Recall that by {\it Theorem}~\ref{theo:motor:constraint} the Lagrangian associated with the problem of minimizing~\eqref{eq:cost:measure:leo:dorst} under the constraint $\bm{U}\in\mathcal{M}$ is given by~\eqref{eq:lagrange:multipliers}. Thus the solution to the optimization problem is given by the solution to the following eigenvalue problem
\begin{equation}
P_{\mathbb{M}}L(\bm{U}) = P_{\mathbb{M}}(\bm{\Lambda U})\label{eq:generalized:ga:eig:prob}
\end{equation}
with $\bm{\Lambda} = \lambda_1 + \lambda_2 \bm{Ie}_o$. The authors of~\cite{Valkenburg_Dorst_2011} provide us with a strategy that makes solving~\eqref{eq:generalized:ga:eig:prob} much more trivial. They consider that the $\bm{P}_k$ and $\bm{Q}_k$ are admissible normalized simple multivectors, by which then they can relax the constraint $\bm{UU}^\dagger = 1$ to just $\langle\bm{UU}^\dagger\rangle = 1$, which makes $\lambda_2 = 0$. Thus equation~\ref{eq:generalized:ga:eig:prob} can be written in the form $P_{\mathbb{M}}L(\bm{U}) = \lambda\bm{U}$ instead. That is, the solution to the eigenvalue problem $P_{\mathbb{M}}L(\bm{U}) = \lambda\bm{U}$ will be the minimizer of $J$ and will live in the space of motors $\mathcal{M}$.
\end{toberemoved}
\begin{fornextpaper}
\section{The Optimal Motor Approach}
\label{sec:MOTOR}

In this section we consider vectors $\bm{p}_1,\bm{p}_2,\dots,\bm{p}_5$ and $\bm{q}_1,\bm{q}_2,\dots,\bm{q}_5$, which forms two sets of orthogonal vectors for the entire vector space $\mathcal{A}_{4,1}$. Furthermore we assume that the magnitude of the $i$'th vectors are the same, that is
\begin{subequations}
\label{eq:orthogonal:set}
\begin{equation*}
\refstepcounter{equation}\latexlabel{firsthalf}
\refstepcounter{equation}\latexlabel{secondhalf}
\bm{p}_i\cdot\bm{p}_j^{-1} = \bm{q}_i\cdot\bm{q}_j^{-1} = \delta_{ij}, \qquad \bm{p}_i^2 = \bm{q}_i^2.
\tag{\ref*{firsthalf}, \ref*{secondhalf}}
\end{equation*}
\end{subequations}
The method that we will describe, takes this sets of vectors to construct a linear function ${H(\bm{x}) = \sum_{i=1}^5 \bm{x}\cdot\bm{p}_i\bm{q}_i^{-1}}$, which by \eqref{eq:orthogonal:set}, defines an orthogonal transformation ({\it Theo.} \ref{eq:theo:ortho}). With the use geometric algebra tools we then proceed to find a decomposition of $H$ in the form ${H(\bm{x})= (-1)^r\bm{VxV}^{-1}}$ with ${\bm{V} \in\mathcal{V}^r_{4,1}}$ an $r$-versor. When considering that the $\bm{p}_i$'s and $\bm{q}_i$'s are eigenvectors of $F$ and $G$ respectively, when in a noise free environment, $H$ will be the rigid transformation that relate the multivectors $\bm{X}_i$ with the multivectors $\bm{Y}_i$. Yet when noise is not disregarded in (\ref{eq:model:Y=TR(X)+N}), then $H$ will describe a general orthogonal transformation, that will closely relate with $U$, thus $\bm{V}$ will express a general versor in $\mathcal{G}_{4,1}$, which is approximate to $\bm{U}$. We are particularly interested in finding versors which describe rigid transformations in CGA. As such we aim to determine a versor $\bm{M}$ that mostly relates to the versor $\bm{V}$ while still being a motor $\bm{M}\in\mathcal{M}$, that is a rotator  $\bm{R}\in\mathcal{R}$ times a translator $\bm{T}\in\mathcal{T}$. 

% {\todo{Add this somewhere practical} \color{orange} For estimating approximations to multivector quantities in pseudo-euclidean we need to define a norm which represents a `physical' cost measure rather then abstract what we mean by this is that, similarly to what is done with complex numbers we need to define a norm which expresses measure in a meaningful way. This cost measures is equivalent to translating all the basis elements to an euclidean algebra and computing the norm in that algebra. 
% }

Concretely, we propose to find the solution to the following optimization problem  
\begin{equation}
\underset{\bm{M}}{\text{maximize}}\ \|\bm{V} - \bm{M} \|_+^2,\quad \text{subject to}\ \bm{M}\in\mathcal{M}\label{opt:Motor:manifold:estimation}
\end{equation}
where $\mathcal{M}$ is the space of motors defined by {\it Definition} \ref{def:motor}, and $\|\cdot\|_+^2$ is defined by {\it Definition}~\ref{def:the:plus:norm}. To understand the motivation of this approach consider the following theorem, which considers the $\bm{p}_i$'s and the $\bm{q}_i$'s as eigenvectors of $F$ and $G$ respectively.

\begin{theoremEnd}[category=theoHTR,no link to proof,proof here]{theorem}
\label{theo:H:TR}
Assume that the relation (\ref{eq:model:Y=TR(X)+N}) is noise free, and let $\bm{p}_i$ and $\bm{q}_i$, $i=1,2\dots,5$ be the eigenvectors of $F$ and $G$ respectively and consider that $s_i=1$ in \eqref{eq:sign}, then the following holds
\begin{equation}
    H(\bm{x}) = \sum_{k=1}^5 \bm{x}\cdot \bm{p}_i\bm{q}_i^{-1} = TR(\bm{x}) = U(\bm{x}) \label{eq:H:eig:relations:CGA}
\end{equation}
\end{theoremEnd}
\begin{proofEnd}
From {\it Theorem}~\ref{theo:eigmvs:noise:free} and since $s_i=1$ we have $\bm{q}_i = U(\bm{p}_i)$. First note the same applies to the inverse of the vectors since $\bm{p}_i^2= \bm{q}_i^2$ then
\begin{subequations}
\begin{equation}
\bm{q}_i^{-1} = \frac{\bm{q}_i}{\bm{q}_i^2} =  \frac{U(\bm{p}_i)}{\bm{q}_i^2} = U\left(\frac{\bm{p}_i}{\bm{p}_i^2}\right) = U(\bm{p}_i^{-1})
\end{equation}
it easily follows that 
\begin{equation}
H(\bm{x}) = \sum_{k=1}^5 \bm{x}\cdot\bm{p}_i\bm{q}_i^{-1} = \sum_{k=1}^5 \bm{x}\cdot\bm{p}_i U(\bm{p}_i^{-1}) = U\left( \sum_{k=1}^5 \bm{x}\cdot\bm{p}_i \bm{p}_i^{-1}\right) = U(\bm{x})
\end{equation}
\end{subequations}
since the $\bm{p}_i$'s form an orthogonal basis for $\mathcal{A}_{4,1}$ then ${\sum_{k=1}^5 \bm{x}\cdot\bm{p}_i\bm{p}_i^{-1} = \bm{x}}$ is the identity transformation.
\end{proofEnd}

While under noise free assumptions $H$ expresses the rigid transformation between the points $\bm{X}_i$ and $\bm{Y}_i$, when we assume that noise is not ignored, then that assumption stops being valid, and that is why we consider the optimization problem \eqref{opt:Motor:manifold:estimation}. The solution to that optimization problem is given by the following theorem 
\begin{theoremEnd}[proof here,no link to proof]{theorem}[Optimal Motor]
\label{theo:optimal:motor:1}
The solution to the optimization problem \eqref{opt:Motor:manifold:estimation}
is given by
\begin{subequations}
\label{opt:motor:manifold:plus:norm}
\begin{equation}
\bm{M} = \alpha \left(\gamma \bm{W}_1 + \bm{IW}_2\right) + \frac{\bm{e}_\infty\bm{W}_2}{1+\gamma^{-1}\alpha^{-1}}
\end{equation}
with
\begin{align}
\alpha &= \pm\left( 4\beta^{-2}\|\bm{W}_2\|^4\|\bm{W}_1\|^2 + \|\bm{W}_2\|^2 - 4\beta^{-1}\|\bm{W}_2\|^2\langle \bm{W}_1\bm{W}_2^\dagger\bm{I}\rangle \right)^{-\tfrac{1}{2}}\label{eq:alpha:M:signed}\\
\gamma &= 2\beta^{-1}\|\bm{W}_2\|^2\\
\beta &= \langle \bm{I}^\dagger (\bm{W}_2\bm{W}_1^\dagger - \bm{W}_1\bm{W}_2^\dagger)\rangle
\end{align}
\end{subequations}
with $\bm{W} = P_{\mathbb{M}}(\bm{V}) = \bm{W}_1 + \bm{e}_\infty\bm{W}_2$, $\bm{W}_1 = P_{\bm{I}}(\bm{V})$ and $\bm{W}_2 = -P_{\bm{I}}(\bm{e}_o\cdot\bm{V})$. Note that the sign in~\eqref{eq:alpha:M:signed} has to be chosen in order for ${\|\bm{V} - \bm{M}\|^2_+}$ to be minimized.
\end{theoremEnd}
\begin{proofEnd}
By {\it Theorem} \ref{theo:motor:basis:3D:GA} the solution to \ref{opt:Motor:manifold:estimation} is found by determining the solution to the following optimization problem 
\begin{maxi}|l|
{\bm{M}}{\|\bm{V} - \bm{M} \|^2_+}
{}{}
\addConstraint{\bm{MM}^\dagger =1}
\addConstraint{\bm{M}\in\mathbb{M}}{}
\label{opt:Motor:manifold:estimation:optidef}
\end{maxi}
where $\mathbb{M}$ is given by \eqref{eq:multi:basis:motor}. Then by {\it Theorem}~\ref{theo:maxi:lagrange:motor} and {\it Theorem}~\ref{theo:motor:constraint} the Lagrangian of the cost function is given by 
\begin{equation}
\mathcal{L}(\lambda_1,\lambda_2,\bm{M}) = \|\bm{V} - \bm{M}\|^2_+ + \left\langle (\lambda_1 + \lambda_2\bm{e}_o\bm{I})(\bm{MM}^\dagger - 1)\right\rangle
\end{equation}

Then using~\eqref{eq:norm:plus:expanded} the derivative with respect to $\bm{M}^\dagger$ gives
\begin{equation}
\partial_{\bm{M}^\dagger} \mathcal{L}(\lambda_1,\lambda_2,\bm{M}) = 2\bm{M}^{+\dagger} + 2\bm{V}^{+\dagger} + 2(\lambda_1 + \lambda_2\bm{e}_o\bm{I})\bm{M} \label{eq:derivative:proof:theo:opt:motor}
\end{equation}
where we used
\begin{equation}
\begin{split}
\partial_{\bm{M}^\dagger} \langle \bm{\Lambda MM}^\dagger \rangle &= \left( \partial_{\bm{M}}\langle \bm{MM}^\dagger\bm{\Lambda}\rangle \right)^\dagger = \left( \partial_{\bm{M}} \bm{M} *(\bm{M}^\dagger\bm{\Lambda}) \right)^\dagger \\ 
& = \left( \dot{\partial}_{\bm{M}} \dot{\bm{M}} *(\bm{M}^\dagger\bm{\Lambda}) \right)^\dagger + \left( \dot{\partial}_{\bm{M}} \dot{\bm{M}} *(\bm{M}^\dagger\bm{\Lambda}^\dagger) \right)^\dagger\\
& = (\bm{M}^\dagger\bm{\Lambda} + \bm{M}^\dagger\bm{\Lambda}^\dagger)^\dagger = (\bm{\Lambda} + \bm{\Lambda}^\dagger) \bm{M}
\end{split}
\end{equation}
with the fact that $\bm{\Lambda}^\dagger = \bm{\Lambda}$. Also note that by using~\eqref{eq:synmmetric:plus:inner:prod} ${\partial_{\bm{M}^\dagger} \bm{M}*\bm{V}^+ = \bm{V}^{+\dagger}}$ and ${\partial_{\bm{M}^\dagger} \bm{M}*\bm{M}^+ = 2\bm{M}^{+\dagger}}$. Let
\begin{equation}
\bm{M} = P_{\mathbb{M}}(\bm{M}) = \bm{M}_1+\bm{e}_\infty\bm{M}_2\label{eq:U:P_M(U)} 
\end{equation}
with $\bm{M}_1 = \langle \bm{M}_1\rangle + \langle \bm{M}_1\rangle_2 \in\mathcal{G}_3$ and $\bm{M}_2 = \langle\bm{M}_2\rangle_1 + \langle \bm{M}_2\rangle_3 \in\mathcal{G}_3$. 
Let $F(\bm{M}) = \partial_{\bm{M}^\dagger} \mathcal{L}(\bm{\Lambda},\bm{M})$, then by {\it Theorem} \ref{theo:multilinear:constraint} and by (\ref{eq:U:P_M(U)}), to find the optimal solution for $\bm{M}\in\mathbb{M}$ we have to find the solution to the following system of equations
\begin{subequations}
\doubleequation{P_{\mathbb{M}}F(\bm{M}) = 0,}{\bm{MM}^\dagger = 1}
\label{eq:sys:eqs:opt:motor}
\end{subequations}
Next we will proceed to show that 
\begin{equation}
\begin{split}
\tfrac{1}{2} P_{\mathbb{M}}F &= P_{\mathbb{M}}(\bm{M}^{+\dagger}) + \overbrace{P_{\mathbb{M}}(\bm{V}^{+\dagger})}^{\bm{W}} + P_{\mathbb{M}}(\bm{\Lambda M})\\
&= \bm{W} + (1+\lambda_1)\bm{M}_1 + (\bm{e}_\infty\lambda_1 + \bm{I}\lambda_2)\bm{M}_2
\end{split}
\label{eq:proj:of:F}
\end{equation}
where we have defined 
\begin{equation}
\bm{W} \equiv P_{\mathbb{M}}(\bm{V}^{+\dagger}) = \bm{W}_1 + \bm{e}_\infty\bm{W}_2\label{eq:proj:of:V+:multi:basis}
\end{equation}
with $\bm{W}_1 = \langle \bm{W}_1\rangle + \langle \bm{W}_1\rangle_2 \in\mathcal{G}_3$ and $\bm{W}_2 = \langle\bm{W}_2\rangle_1 + \langle \bm{W}_2\rangle_3 \in\mathcal{G}_3$. To show how we got~\eqref{eq:proj:of:F} first note that by using~\eqref{eq:dagger:plus:of:P} we have 
\begin{subequations}
\label{eq:proj:of:M}
\begin{equation}
\bm{M}^{+\dagger} = \bm{M}_1 - \bm{e}_o\bm{M}_2\label{eq:dagger:plus:of:M}
\end{equation}
which by projecting to $\mathbb{M}$, using the algebraic definition of $P_{\mathbb{M}}$ given by (\ref{eq:projection:motor:multilinearspace}), gives
\begin{equation}
P_{\mathbb{M}}(\bm{M}^{+\dagger}) = P_{\mathbb{M}}(\bm{M}_1) - P_{\mathbb{M}}(\bm{e}_o\bm{M}_2) = \bm{M}_1
\end{equation}
where we used
\begin{equation}
P_{\mathbb{M}}(\bm{e}_o\bm{M}_2) = \underbrace{P_{\bm{I}}(\bm{e}_o)}_{0}P_{\bm{I}}(\bm{M}_2) -\bm{e}_\infty P_{\bm{I}}(\bm{e}_o\cdot(\bm{e}_o\bm{M}_2)) = -\bm{e}_\infty P_{\bm{I}}(\bm{e}_o^2\bm{M}_2) = 0
\end{equation}
\end{subequations}
Expanding $\bm{\Lambda}$ and using the algebraic definition of $P_{\mathbb{M}}$ given by (\ref{eq:projection:motor:multilinearspace}) we have
\begin{subequations}
\begin{equation}
P_{\mathbb{M}}(\bm{\Lambda M}) = P_{\mathbb{M}}\left((\lambda_1 + \lambda_2\bm{e}_o\bm{I})\bm{M}\right) = \lambda_1 \bm{M} + \lambda_2 \bm{IM}_2 
\label{eq:proj:lambda:U}
\end{equation}
where we computed
\begin{equation}
\begin{split}
P_{\mathbb{M}}(\bm{e}_o\bm{IM}) &= P_{\mathbb{M}}(\bm{e}_o\bm{I}(\bm{M}_1+\bm{e}_\infty\bm{M}_2) = P_{\mathbb{M}}(\bm{e}_o\bm{IM}_1) - P_{\mathbb{M}}(\bm{Ie}_o\bm{e}_\infty \bm{M}_2)\\
&= P_{\mathbb{M}}(\bm{e}_o\bm{IM}_1) - P_{\mathbb{M}}(\bm{Ie}_o\cdot\bm{e}_\infty \bm{M}_2) - P_{\mathbb{M}}(\bm{Ie}_o\wedge\bm{e}_\infty \bm{M}_2)\\
&= P_{\mathbb{M}}(\bm{IM}_2) = \bm{IM}_2
\end{split}
\end{equation}
\end{subequations}
We note that multivectors with $\bm{e}_o$ or $\bm{e}_o\wedge\bm{e}_\infty$ components get annihilated by the projection $P_{\mathbb{M}}$, since ${\bm{M}_1,\bm{M}_2\in\mathcal{G}_3}$.

Equations \eqref{eq:proj:lambda:U} and \eqref{eq:proj:of:M} can then be used to show how we arrive at~\eqref{eq:proj:of:F}. Equations \eqref{eq:sys:eqs:opt:motor} and~\eqref{eq:proj:of:F} tells us that the solution to the optimization problem satisfies
\begin{equation}
(\lambda_1 - 1)\bm{M}_1 + (\bm{e}_\infty\lambda_1 + \bm{I}\lambda_2)\bm{M}_2 = \bm{W}\label{eq:M:W:lambdas:0}
\end{equation}
where we absorbed the minus sign inside the Lagrange multipliers. Taking the inner product with $-\bm{e}_o$ we can immediately determine that 
\begin{equation}
\lambda_1\bm{M}_2 = -\bm{e}_o\cdot\bm{W} = \bm{W}_2\ \Leftrightarrow\ \bm{M}_2 = \lambda_1^{-1}\bm{W}_2
\end{equation}
Replacing this result back into~\eqref{eq:M:W:lambdas:0} and using~\eqref{eq:proj:of:V+:multi:basis} we find that 
\begin{equation}
\begin{split}
& (\lambda_1 - 1)\bm{M}_1 + (\bm{e}_\infty\lambda_1 + \bm{I}\lambda_2)\lambda_1^{-1}\bm{W}_2 = \bm{W}_1 + \bm{e}_\infty\bm{W}_2\ \\
\Leftrightarrow\ \ & \bm{M}_1 = (\bm{W}_1 - \lambda_2\lambda_1^{-1}\bm{IW}_2)(\lambda_1 - 1)^{-1}
\end{split}
\end{equation}
Then it follows that 
\begin{equation}
\bm{M} = \bm{M}_1 + \bm{e}_\infty\bm{M}_2 = \alpha_1 \bm{W}_1 + \alpha_2 \bm{IW}_2 + \alpha_3\bm{e}_\infty\bm{W}_2
\label{eq:M:of:alphas:W1:W2}
\end{equation}
with
\begin{subequations}
\begin{align}
\alpha_1 &= (\lambda_1 - 1)^{-1}\\
\alpha_2 &= -\lambda_2\lambda_1^{-1}(\lambda_1 - 1)^{-1}\\
\alpha_3 &= \lambda_1^{-1}
\end{align}
\label{eq:alphas:lambda}
\end{subequations}
To solve for each $\alpha_i$ we take $\bm{M}$ and then find the $\alpha_i$ such that $\bm{MM}^\dagger$ is a scalar equal to one. First note that the quantity $\alpha_1 \bm{W}_1 + \alpha_2 \bm{IW}_2$ is of grades zero and two, then 
\begin{equation}
(\alpha_1 \bm{W}_1 + \alpha_2 \bm{IW}_2)(\alpha_1 \bm{W}_1 + \alpha_2 \bm{IW}_2)^\dagger = \alpha_1^2\|\bm{W}_2\|^2+\alpha_2^2\|\bm{W}_2\|^2 - 2\alpha_1\alpha_2\langle \bm{W}_1\bm{W}_2\bm{I}\rangle 
\end{equation}
then note that the point at infinity anticommutes with $\bm{W}_2$ and commutes with $\bm{W}_1$ that is $\bm{e}_\infty\bm{W}_1=\bm{W}_1\bm{e}_\infty$ and $\bm{e}_\infty\bm{W}_2=-\bm{W}_2\bm{e}_\infty$. Then it follows that 
\begin{subequations}
\begin{equation}
\begin{split}
\bm{MM}^\dagger &= (\alpha_1 \bm{W}_1 + \alpha_2 \bm{IW}_2 + \alpha_3\bm{e}_\infty\bm{W}_2)(\alpha_1 \bm{W}_1 + \alpha_2 \bm{IW}_2 + \alpha_3\bm{e}_\infty\bm{W}_2)^\dagger \\
&=\alpha_1^2\|\bm{W}_1\|^2+\alpha_2^2\|\bm{W}_2\|^2 - 2\alpha_1\alpha_2\langle \bm{W}_1\bm{W}_2^\dagger\bm{I}\rangle +\\
& + \alpha_3\bm{e}_\infty\bm{W}_2(\alpha_1 \bm{W}_1 + \alpha_2 \bm{IW}_2)^\dagger + (\alpha_1 \bm{W}_1 + \alpha_2 \bm{IW}_2)\alpha_3\bm{W}_2^\dagger\bm{e}_\infty + \alpha_3^2\bm{e}_\infty^2 \|\bm{W}_2\|^2\\
& = \alpha_1^2\|\bm{W}_1\|^2+\alpha_2^2\|\bm{W}_2\|^2 - 2\alpha_1\alpha_2\langle \bm{W}_1\bm{W}_2^\dagger\bm{I}\rangle +\alpha_3\bm{e}_\infty \bm{I} (\alpha_1\beta -2\alpha_2\|\bm{W}_2\|^2) 
\end{split}
\label{eq:MM:dagger:scalars:alphas}
\end{equation}
where we computed
\begin{equation}
\begin{split}
&\alpha_3\bm{e}_\infty\bm{W}_2(\alpha_1 \bm{W}_1 + \alpha_2 \bm{IW}_2)^\dagger + (\alpha_1 \bm{W}_1 + \alpha_2 \bm{IW}_2)\alpha_3\bm{W}_2^\dagger\bm{e}_\infty = \\
 =\  &\alpha_3\bm{e}_\infty \left((\bm{W}_2(\alpha_1 \bm{W}_1 + \alpha_2 \bm{IW}_2)^\dagger - (\alpha_1 \bm{W}_1 + \alpha_2 \bm{IW}_2)\bm{W}_2^\dagger\right)\\
 =\  & \alpha_3\bm{e}_\infty \left( \alpha_1 (\bm{W}_2\bm{W}_1^\dagger - \bm{W}_1\bm{W}_2^\dagger) - 2\alpha_2\bm{I}\|\bm{W}_2\|^2  \right)\\
 =\ &\alpha_3\bm{e}_\infty \bm{I} (\alpha_1\beta -2\alpha_2\|\bm{W}_2\|^2)
\end{split}
\end{equation}
and replaced 
\begin{equation}
\beta\bm{I} \equiv \bm{W}_2\bm{W}_1^\dagger - \bm{W}_1\bm{W}_2^\dagger
\end{equation}
\end{subequations}
then setting~\eqref{eq:MM:dagger:scalars:alphas} equal to one we find that
\begin{subequations}
\begin{equation*}
\refstepcounter{equation}\latexlabel{eq:MM:dagger:scalar:quadvector:a}
\refstepcounter{equation}\latexlabel{eq:MM:dagger:scalar:quadvector:b}
\alpha_1^2\|\bm{W}_1\|^2+\alpha_2^2\|\bm{W}_2\|^2 - 2\alpha_1\alpha_2\langle \bm{W}_1\bm{W}_2^\dagger\bm{I}\rangle = 1, \qquad \alpha_3\bm{e}_\infty \bm{I} (\alpha_1\beta -2\alpha_2\|\bm{W}_2\|^2) = 0
\tag{\ref*{eq:MM:dagger:scalar:quadvector:a}, \ref*{eq:MM:dagger:scalar:quadvector:b}}
\end{equation*}
\label{eq:MM:dagger:scalar:quadvector}
\end{subequations}
Recalling that $\alpha_3 = \lambda_1^{-1}\neq0$ we have 
\begin{equation}
\alpha_1 = 2\beta^{-1}\alpha_2\|\bm{W}_2\|^2
\end{equation}
which by replacing in~\eqref{eq:MM:dagger:scalar:quadvector:a} and solving with respect to $\alpha_2$ we find
\begin{equation}
\alpha_2 = \pm\left( 4\beta^{-2}\|\bm{W}_2\|^4\|\bm{W}_1\|^2 + \|\bm{W}_2\|^2 - 4\beta^{-1}\|\bm{W}_2\|^2\langle \bm{W}_1\bm{W}_2^\dagger\bm{I}\rangle \right)^{-\tfrac{1}{2}}
\label{eq:alpha2:sign:soloution}
\end{equation}
To find the sign which makes the function $\|\bm{V}-\bm{M}\|_+^2$ smaller, rather than bigger we use~\eqref{eq:alphas:lambda} to find a relationship between $\alpha_1$ and $\alpha_3$ as $\alpha_3 = \frac{1}{1+\alpha_1^{-1}}$ then writing $\alpha_1$ and $\alpha_3$ with respect to $\alpha_2\equiv\alpha$ we have
\begin{subequations}
\begin{align}
\alpha_1 &= \gamma\alpha_2 = \gamma\alpha\\
\alpha_3 &= \frac{1}{1+\gamma^{-1}\alpha_2^{-1}} = \frac{1}{1+\gamma^{-1}\alpha^{-1}}
\end{align}
\end{subequations}
with $\gamma =2\beta^{-1}\|\bm{W}_2\|^2$. Replacing back into~\eqref{eq:M:of:alphas:W1:W2} we find that 
\begin{equation}
\bm{M} = \alpha \left(\gamma \bm{W}_1 + \bm{IW}_2\right) + \left(1+\gamma^{-1}\alpha^{-1}\right)^{-1}\bm{e}_\infty\bm{W}_2
\end{equation}
Then we recall~\eqref{eq:norm:plus:expanded} to write $\|\bm{V}-\bm{M}\|_+^2$ in the form
\begin{equation}
\begin{split}
\|\bm{V}-\bm{M}\|_+^2 &= \|\bm{V}\|^2_+ + \langle\bm{M}\bm{M}^+\rangle - 2\langle \bm{MV}^+\rangle \\
                    &= \|\bm{V}\|^2_+ + \left\langle\left(\alpha \left(\gamma \bm{W}_1 + \bm{IW}_2\right) + \frac{\bm{e}_\infty\bm{W}_2}{1+\gamma^{-1}\alpha^{-1}}\right)\left(\alpha \left(\gamma \bm{W}_1^\dagger + \bm{W}_2^\dagger\bm{I}\right) - \frac{\bm{W}_2^\dagger\bm{e}_o}{1+\gamma^{-1}\alpha^{-1}}\right)\right\rangle \\
                    &-2\langle \alpha \left(\gamma \bm{W}_1 + \bm{IW}_2\right)\bm{V}^+ + \left(1+\gamma^{-1}\alpha^{-1}\right)^{-1}\bm{e}_\infty\bm{W}_2\bm{V}^+\rangle\\
                    & = \|\bm{V}\|^2_+ + \alpha^2\|\gamma \bm{W}_1 + \bm{IW}_2\|^2 + \frac{\|\bm{W}_2\|^2}{\left(1+\gamma^{-1}\alpha^{-1}\right)^2} - 2\alpha\langle \left(\gamma \bm{W}_1 + \bm{IW}_2\right)\bm{V}^+\rangle -2\frac{\langle\bm{e}_\infty\bm{W}_2\bm{V}^+\rangle}{1+\gamma^{-1}\alpha^{-1}}\\
                    &= c_1 + \alpha c_2 + \alpha^2 c_3 + \frac{c_4}{\left(1+\gamma^{-1}\alpha^{-1}\right)^2} + \frac{c_5}{1+\gamma^{-1}\alpha^{-1}}
\end{split}
\end{equation}
To find the sign in~\eqref{eq:alpha2:sign:soloution} which minimizes the above we multiply it by ${\left(1+\gamma^{-1}\alpha^{-1}\right)^2=1 + \gamma^{-2}\alpha^{-2} + 2\gamma^{-1}\alpha^{-1}} > 0$ then
\begin{equation}
\begin{split}
&(1 + \gamma^{-2}\alpha^{-2} + 2\gamma^{-1}\alpha^{-1})\left(c_1 + \alpha c_2 + \alpha^2 c_3\right) + c_4 + \left(1+\gamma^{-1}\alpha^{-1}\right)c_5\\
=\ & \alpha^{-2} \gamma^{-2}c_1 + \alpha^{-1}\left( \gamma^{-1} c_5 + \gamma^{-2}c_2 + 2\gamma^{-1}c_1 \right) + 1 + c_1 + c_4+\gamma^{-2}c_3 + 2\gamma^{-1} c_2 + \alpha (c_2 + 2\gamma^{-1} c_3) + \alpha^2 c_3
\end{split}
\end{equation}
the only components which contribute with a sign are the odd powers of $\alpha$, thus we are interested in finding the sign 
which minimizes 
\begin{equation}
\alpha^{-1}\left( \gamma^{-1} c_5 + \gamma^{-2}c_2 + 2\gamma^{-1}c_1 \right)+ \alpha (c_2 + 2\gamma^{-1} c_3)
\end{equation}
The choice of the sign will depend on the value of $\beta$ and on the value of what is inside of the parenthesis of~\eqref{eq:alpha2:sign:soloution}.
\end{proofEnd}
The $\alpha_i$'s are scalars such that $\bm{M}$ is a motor that satisfies $\bm{MM}^\dagger = 1$.

To summarize, with this approach we take the eigenvectors $\bm{P}_i$ and $\bm{Q}_i$ of $F$ and $G$, form the orthogonal transformation $H$, compute the versor $\bm{V}$ of $H$, estimate the motor $\widehat{\bm{M}}$ that best aligns with $\bm{V}$ by solving the optimization problem \eqref{opt:Motor:manifold:estimation}, and then extract the rotor $\bm{R}$ and the translator $\bm{T}$ from $\widehat{\bm{M}}$. This strategy is illustrated step by step in {\it Algorithm} \ref{alg:rbm:opt:motor}.

\begin{algorithm}[H]
\caption{Rigid Transformation Estimation (Optimal Motor Approach)}
\label{alg:rbm:opt:motor}
{\bf Input Data:} The eigenvectors $\bm{p}_1,\bm{p}_2,\dots,\bm{p}_5, \bm{q}_1,\bm{q}_2,\dots,\bm{q}_5$ of $F$ and $G$ respectively\;
Define the multilinear function $H(\bm{x}) = \sum_{i=1}^5 \bm{x}\cdot\bm{p}_i\bm{q}_i^{-1}$\;
Determine the versor $\bm{V}$ associated with the linear function $H$\;
Project $\bm{V}$ to the multilinear space $\mathbb{M}$ as $\bm{W} =\bm{W}_1 + \bm{e}_\infty\bm{W}_2 = P_{\mathbb{M}}(\bm{V})$\;
Compute the versor $\widehat{\bm{M}} = \underset{\bm{M}\in\mathcal{M}}{\text{argmax}}\ \|\bm{V} - \bm{M} \|_+^2$ (See ~\eqref{opt:motor:manifold:plus:norm})\;
Extract the rotation and the translation from $\widehat{\bm{M}}$ as $\widehat{\bm{R}} = P_{\bm{I}}(\widehat{\bm{M}})$ and $\widehat{\bm{T}} = 1 - \bm{e}_\infty\bm{e}_o\cdot\widehat{\bm{M}}\widehat{\bm{R}}^\dagger$\;
\end{algorithm}

\end{fornextpaper}
\begin{fornextpaper}
\section{Rotation Only Approach}
\label{eq:rot:only:approach}
As noted in Sec.~\ref{sec:Bkg-Mot}, the multivector coefficients proposal deals with bivectors or trivectors. 
In the particular case when $\bm{P}_i$ and $\bm{Q}_i$ are either vectors or quadvectors, the rotation components will be scalar or pseudoscalar $\bm{I}$ of $\mathcal{G}_3$, respectively. This is straightforward. Assuming $\bm{P}$, $\bm{Q}$ of grade $r$, and applying $\mathdutchbcal{P}_1 = \mathdutchcal{C}_1(\bm{P})$, $\mathdutchbcal{Q}_1 = \mathdutchcal{C}_1(\bm{Q})$ (recall \eqref{eq:coefficients:CiTRP:CiRTQ}), then $\mathdutchbcal{Q}_1$, $\mathdutchbcal{P}_1$ become of grade $r-1$. In this scenario, when $r=2$ then $\mathdutchbcal{P}_1$ and $\mathdutchbcal{Q}_1$ become scalars, when $r=4$ then $\mathdutchbcal{P}_1$ and $\mathdutchbcal{Q}_1$ become psudoscalars of $\mathcal{G}_{3}$, both of which are unaffected by rotations, thus not being able to estimate the rigid transformation. Under this scenario we consider the following:

 Assume we are given a set of multivectors $\bm{X}'_i\in\mathcal{G}_{4,1}$ and $\bm{Y}_i'\in\mathcal{G}_{4,1}$ which are vectors plus quadvectors. Then this approach takes this elements and projects them into 3D euclidean space $\mathcal{G}_3$ by projecting vectors in $\mathcal{G}_{4,1}$ into vectors in $\mathcal{G}_{3}$ and quadvectors in $\mathcal{G}_{4,1}$ into bivectors in $\mathcal{G}_3$, thus

\begin{subequations}
\doubleequation{\bm{X}_i = P_{\bm{I}}(\bm{X}_i') + \bm{I}P_{\bm{I}}(\bm{iX}_i'),}{\bm{Y}_i = P_{\bm{I}}(\bm{Y}_i') + \bm{I}P_{\bm{I}}(\bm{iY}_i')}
\label{eq:proj:to:euclidean}
\end{subequations}

Note that when $\bm{X}_i'$ and $\bm{Y}_i'$ relate via a rigid transformation in $\mathcal{G}_{4,1}$ then $\bm{X}_i$ and $\bm{X}_i$ will relate as
\begin{equation}
\bm{Y}_i' = \munderbar{T}\munderbar{R}(\bm{X}_i') + \bm{N}_i\ \ \Leftrightarrow\ \ \bm{Y}_i = \bm{RX}_i\bm{R}^\dagger + \bm{\Gamma}_i\bm{t} + \bm{W}_i \label{eq:proj:euc:space}
\end{equation}
where $\bm{\Gamma}_i  = \langle\bm{\Gamma}_i\rangle + \langle\bm{\Gamma}_i\rangle_3$ and with
\begin{subequations}
\begin{equation*}
\refstepcounter{equation}\latexlabel{eq:alphai}
\refstepcounter{equation}\latexlabel{eq:betai}
\langle\bm{\Gamma}_i\rangle = 
\begin{cases}
0 & \text{if}\ \langle \bm{X}_i\rangle_1 = \langle \bm{Y}_i\rangle_1 = 0\\
1& \text{otherwise}
\end{cases}, \qquad 
\langle\bm{\Gamma}_i\rangle_3 = 
\begin{cases}
0& \text{if}\ \langle \bm{X}_i\rangle_2 = \langle \bm{Y}_i\rangle_2 = 0\\
\bm{I}& \text{otherwise}
\end{cases}
\tag{\ref*{eq:alphai}, \ref*{eq:betai}}
\end{equation*}
\end{subequations}
Note that if $\langle \bm{X}_i\rangle_1$ is zero then $\langle \bm{Y}_i\rangle_1$ must also be zero, the same applies to $\langle \bm{X}_i\rangle_2$ and $\langle \bm{Y}_i\rangle_2$.

\subsection{Determining the Rotation from the Eigendecomposition}
\label{eq:rot:eig:rot:only:PCA}
We define two functions which have the equivariant property with respect to rotations
\begin{subequations}
\label{eq:functions:rot:only}
\begin{equation*}
\refstepcounter{equation}\latexlabel{eq:functions:rot:only:x}
\refstepcounter{equation}\latexlabel{eq:functions:y}
f(\bm{z}) = \sum_{i=1}^\ell (\bm{X}_i - \bm{\Gamma}_i\bar{\bm{X}})\bm{z}(\bm{X}_i - \bm{\Gamma}_i\bar{\bm{X}})^{-1}, \qquad g(\bm{z}) = \sum_{i=1}^\ell (\bm{Y}_i - \bm{\Gamma}_i\bar{\bm{Y}})\bm{z}(\bm{Y}_i - \bm{\Gamma}_i\bar{\bm{Y}})^{-1}
\tag{\ref*{eq:functions:rot:only:x}, \ref*{eq:functions:y}}
\end{equation*}
% \doubleequation[eq:functions:rot:only:x,eq:functions:y]{f(\bm{z}) = \sum_{i=1}^\ell (\bm{X}_i - \bar{\bm{X}})\bm{z}(\bm{X}_i - \bar{\bm{X}})^{-1},}{g(\bm{z}) = \sum_{i=1}^\ell (\bm{Y}_i - \bar{\bm{Y}})\bm{z}(\bm{Y}_i - \bar{\bm{Y}})^{-1}}
where 
\doubleequation[eq:mean:VGA:multiclouds:x,eq:mean:VGA:multiclouds:y]{\bar{\bm{X}} = s_{\ell}^{-1}\sum_{k=1}^\ell \langle\bm{\Gamma}_k^\dagger\bm{X}_k\rangle_1,}{\quad\bar{\bm{Y}} = s_{\ell}^{-1}\sum_{k=1}^\ell \langle\bm{\Gamma}_k^\dagger\bm{Y}_k\rangle_1}
with $s_\ell = \sum_{k=1}^\ell \|\bm{\Gamma}_k\|^2$.
\end{subequations}
\begin{theorem}
\label{theo:RfR}
Let $\bm{Y}_i' = \munderbar{T}\munderbar{R}(\bm{X}_i')$ then $f$ and $g$ defined in~\eqref{eq:functions:rot:only} satisfy the relation
\begin{equation}
g(\bm{z}) = \munderbar{R}f\bar{R}(\bm{z})
\end{equation}
\end{theorem}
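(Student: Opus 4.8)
The plan is to reduce everything to the noise-free transformation law for the projected multivectors and then to chase the rotor through three successive reductions: the individual points, the means $\bar{\bm{X}},\bar{\bm{Y}}$, and finally the centered quantities entering $f$ and $g$. First I would record the two algebraic facts about $\bm{\Gamma}_i$ that make the argument uniform across the vector and bivector cases. Since $p=3$ is odd, the pseudoscalar $\bm{I}$ of $\mathcal{G}_3$ commutes with every multivector, so in either case ($\bm{\Gamma}_i=1$ or $\bm{\Gamma}_i=\bm{I}$) we have $\bm{\Gamma}_i\bm{R}=\bm{R}\bm{\Gamma}_i$ and likewise for $\bm{\Gamma}_i^\dagger$; moreover $\bm{\Gamma}_i^\dagger\bm{\Gamma}_i=\|\bm{\Gamma}_i\|^2$ is a nonnegative scalar (using $\bm{I}^2=-1$ and $\bm{I}^\dagger=-\bm{I}$).

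Next, starting from the noise-free version of \eqref{eq:proj:euc:space}, namely $\bm{Y}_i=\bm{R}\bm{X}_i\bm{R}^\dagger+\bm{\Gamma}_i\bm{t}$, I would compute the mean $\bar{\bm{Y}}$. Substituting into its definition and splitting the sum, the translation term gives $\sum_k\langle\bm{\Gamma}_k^\dagger\bm{\Gamma}_k\bm{t}\rangle_1=\sum_k\|\bm{\Gamma}_k\|^2\bm{t}=s_\ell\bm{t}$, while the rotation term, using that $\bm{\Gamma}_k^\dagger$ commutes with $\bm{R}$ and that a rotation is grade preserving, gives $\sum_k\bm{R}\langle\bm{\Gamma}_k^\dagger\bm{X}_k\rangle_1\bm{R}^\dagger=s_\ell\,\bm{R}\bar{\bm{X}}\bm{R}^\dagger$. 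After dividing by $s_\ell$ this yields the mean transformation law $\bar{\bm{Y}}=\bm{R}\bar{\bm{X}}\bm{R}^\dagger+\bm{t}$, that is, the mean transforms exactly as a point does.

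With both laws in hand I would form the centered quantities $\bm{Y}_i-\bm{\Gamma}_i\bar{\bm{Y}}=\bm{R}\bm{X}_i\bm{R}^\dagger+\bm{\Gamma}_i\bm{t}-\bm{\Gamma}_i(\bm{R}\bar{\bm{X}}\bm{R}^\dagger+\bm{t})$; the two $\bm{\Gamma}_i\bm{t}$ terms cancel and, commuting $\bm{\Gamma}_i$ past $\bm{R}$, the remainder factors as $\bm{R}(\bm{X}_i-\bm{\Gamma}_i\bar{\bm{X}})\bm{R}^\dagger$. Writing $\tilde{\bm{X}}_i=\bm{X}_i-\bm{\Gamma}_i\bar{\bm{X}}$ and $\tilde{\bm{Y}}_i=\bm{Y}_i-\bm{\Gamma}_i\bar{\bm{Y}}$, this says $\tilde{\bm{Y}}_i=\bm{R}\tilde{\bm{X}}_i\bm{R}^\dagger$, so the translation has been completely removed and the centered data relate by a pure rotation.

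Finally I would substitute into $g$. Using $\bm{R}^{-1}=\bm{R}^\dagger$ we have $(\bm{R}\tilde{\bm{X}}_i\bm{R}^\dagger)^{-1}=\bm{R}\tilde{\bm{X}}_i^{-1}\bm{R}^\dagger$, hence
\begin{equation*}
g(\bm{z}) = \sum_{i=1}^\ell \bm{R}\tilde{\bm{X}}_i\bm{R}^\dagger\,\bm{z}\,\bm{R}\tilde{\bm{X}}_i^{-1}\bm{R}^\dagger = \bm{R}\left(\sum_{i=1}^\ell \tilde{\bm{X}}_i\,(\bm{R}^\dagger\bm{z}\bm{R})\,\tilde{\bm{X}}_i^{-1}\right)\bm{R}^\dagger,
\end{equation*}
and recognizing $\bm{R}^\dagger\bm{z}\bm{R}=\bar{R}(\bm{z})$ inside the sum and $\bm{R}(\cdot)\bm{R}^\dagger=\munderbar{R}(\cdot)$ outside gives $g(\bm{z})=\munderbar{R}f\bar{R}(\bm{z})$, as claimed. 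The only delicate point is the uniform treatment of the $\bm{\Gamma}_i$ factors across the grade-one and grade-two cases; once the commutation $\bm{\Gamma}_i\bm{R}=\bm{R}\bm{\Gamma}_i$ and the identity $\bm{\Gamma}_i^\dagger\bm{\Gamma}_i=\|\bm{\Gamma}_i\|^2$ are established, the mean law and the final rotor-chase are routine.
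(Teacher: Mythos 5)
Your proposal is correct and follows essentially the same route as the paper's proof: first establish the mean transformation law $\bar{\bm{Y}}=\bm{R}\bar{\bm{X}}\bm{R}^\dagger+\bm{t}$, then show the centered quantities $\bm{Y}_i-\bm{\Gamma}_i\bar{\bm{Y}}$ relate to $\bm{X}_i-\bm{\Gamma}_i\bar{\bm{X}}$ by a pure rotation, and finally chase the rotor through the sum defining $g$. Your explicit preliminary remarks on the centrality of $\bm{I}$ in $\mathcal{G}_3$ and the identity $\bm{\Gamma}_i^\dagger\bm{\Gamma}_i=\|\bm{\Gamma}_i\|^2$ merely make explicit what the paper uses implicitly.
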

\begin{proof}[proof of Theorem~\ref{theo:RfR}]
Assume that $\bm{W}_i = 0$, then consider the right hand side of~\eqref{eq:proj:euc:space}, and compute 
\begin{equation}
\langle\bm{\Gamma}_k^\dagger\bm{Y}_k\rangle_1 = \left\langle \bm{\Gamma}_k^\dagger\left( \bm{RX}_k\bm{R}^\dagger + \bm{\Gamma}_k\bm{t} \right)  \right\rangle = \left\langle \bm{R}\bm{\Gamma}_k^\dagger\bm{X}_k \bm{R}^\dagger + \bm{\Gamma}_k^\dagger\bm{\Gamma}_k\bm{t}\right\rangle = \bm{R}\langle \bm{\Gamma}_k^\dagger\bm{X}_k\rangle_1\bm{R}^\dagger + \|\bm{\Gamma}_k\|^2\bm{t}
\end{equation}
then 
\begin{equation}
\begin{split}
\bar{\bm{Y}} &= s_\ell^{-1}\sum_{k=1}^\ell \langle\bm{\Gamma}_k^\dagger\bm{Y}_k\rangle_1 = s_\ell^{-1}\bm{R}\left(\sum_{k=1}^\ell\langle \bm{\Gamma}_k^\dagger\bm{X}_k\rangle_1\right)\bm{R}^\dagger + s_\ell^{-1}\bm{t}\sum_{k=1}^\ell \|\bm{\Gamma}_k\|^2\\
& = \bm{R}\bar{\bm{X}}\bm{R}^\dagger + \bm{t}
\end{split}
\end{equation}
then we show that
\begin{equation}
\begin{split}
\bm{Y}_k - \bm{\Gamma}_k\bar{\bm{Y}} &= \bm{RX}_k\bm{R}^\dagger + \bm{\Gamma}_k\bm{t} - \bm{\Gamma}_k\left(\bm{R}\bar{\bm{X}}\bm{R}^\dagger + \bm{t}\right)\\
&= \bm{RX}_k\bm{R}^\dagger - \bm{\Gamma}_k\bm{R}\bar{\bm{X}}\bm{R}^\dagger = \bm{R}\left( \bm{X}_k - \bm{\Gamma}_k\bar{\bm{X}}\right)\bm{R}^\dagger
\end{split}
\end{equation}
and we finally show that 
\begin{equation}
\begin{split}
g(\bm{z}) &= \sum_{i=1}^\ell (\bm{Y}_i - \bm{\Gamma}_i\bar{\bm{Y}})\bm{z}(\bm{Y}_i - \bm{\Gamma}_i\bar{\bm{Y}})^{-1}\\
&= \sum_{i=1}^\ell \bm{R}\left( \bm{X}_i - \bm{\Gamma}_i\bar{\bm{X}}\right)\bm{R}^\dagger \bm{z} (\bm{R}\left( \bm{X}_k - \bm{\Gamma}_k\bar{\bm{X}}\right)\bm{R}^\dagger)^{-1}\\
& = \sum_{i=1}^\ell \bm{R}\left( \bm{X}_i - \bm{\Gamma}_i\bar{\bm{X}}\right)\bm{R}^\dagger \bm{z}\bm{R} \left( \bm{X}_k - \bm{\Gamma}_k\bar{\bm{X}}\right)^{-1}\bm{R}^\dagger\\
& = \bm{R} f(\bm{R}^\dagger \bm{zR})\bm{R}^\dagger = \munderbar{R}f\bar{R}(\bm{z})
\end{split}
\end{equation}
\end{proof}
Consider that $\bm{p}_1,\bm{p}_2,\bm{p}_3$ and $\bm{q}_1,\bm{q}_2,\bm{q}_3$ are the eigenvectors of $f$ and $g$ respectively. Recall that from {\it Lemma}~\ref{lemma:eigendecomp:G=UFU} and from {\it Theorem}~\ref{theo:RfR} that
\begin{equation}
\bm{q}_i = s_i R(\bm{p}_i)\label{eq:rot:sign:rot:approach}
\end{equation}
for some scalar $s_i$, then to determine the rotation consider the following two theorems

\begin{lemma}\label{lemma:plane:rotation}
The plane of rotation $\bm{B}$ that satisfies the equation
\begin{equation}
\bm{Ra}\bm{R}^\dagger = \bm{b}
\end{equation}
must satisfy
\begin{equation}
(\bm{a} - \bm{b})\wedge\bm{B} = 0
\end{equation}
where $\bm{R} = e^{\theta\bm{B}}$.
\end{lemma}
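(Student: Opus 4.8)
The plan is to prove the statement directly by showing that the trivector $\bm{a}\wedge\bm{B}$ is left invariant by the rotation $R(\bm{x})=\bm{R}\bm{x}\bm{R}^\dagger$, whence $(\bm{a}-\bm{b})\wedge\bm{B}=\bm{a}\wedge\bm{B}-\bm{b}\wedge\bm{B}=0$ is immediate. I would deliberately avoid decomposing $\bm{a}$ into its in-plane and orthogonal parts as in Lemma~\ref{lemma:rotations:projections}, because that decomposition uses the projection $\bm{x}\cdot\bm{B}\bm{B}^{-1}$, which is undefined when $\bm{B}$ is a null (parabolic) bivector; the argument sketched here is signature-agnostic.

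First I would record the single algebraic fact that carries the proof: since the plane of rotation $\bm{B}$ is a $2$-blade, its square $\bm{B}^2=\langle\bm{B}^2\rangle$ is a scalar. Using the vector--bivector identity \eqref{eq:inner:outer:geo}, namely $\bm{a}\wedge\bm{B}=\tfrac12(\bm{a}\bm{B}+\bm{B}\bm{a})$, I would compute the commutator
\[
\bm{B}(\bm{a}\wedge\bm{B})-(\bm{a}\wedge\bm{B})\bm{B}=\tfrac12\bigl(\bm{B}^2\bm{a}-\bm{a}\bm{B}^2\bigr)=0,
\]
the last equality holding because $\bm{B}^2$ is a scalar and hence commutes with $\bm{a}$. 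Thus $\bm{B}$ commutes with $\bm{a}\wedge\bm{B}$.

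Next, writing $\bm{R}=e^{\theta\bm{B}}=\alpha+\beta\bm{B}$ for scalars $\alpha,\beta$ (the elliptic, hyperbolic and parabolic exponentials all have this shape), both $\bm{R}$ and $\bm{R}^\dagger=\alpha-\beta\bm{B}$ commute with $\bm{B}$ and, by the previous step, with $\bm{a}\wedge\bm{B}$. I would then expand $\bm{b}\wedge\bm{B}$ via \eqref{eq:inner:outer:geo} and slide the rotors past $\bm{B}$,
\[
\bm{b}\wedge\bm{B}=\tfrac12(\bm{b}\bm{B}+\bm{B}\bm{b})=\tfrac12\bm{R}(\bm{a}\bm{B}+\bm{B}\bm{a})\bm{R}^\dagger=\bm{R}(\bm{a}\wedge\bm{B})\bm{R}^\dagger .
\]
Since $\bm{a}\wedge\bm{B}$ commutes with $\bm{R}$ and $\bm{R}\bm{R}^\dagger=1$ (a direct check gives $\alpha^2-\beta^2\bm{B}^2=1$ in each of the three cases), this collapses to $\bm{b}\wedge\bm{B}=(\bm{a}\wedge\bm{B})\bm{R}\bm{R}^\dagger=\bm{a}\wedge\bm{B}$, and therefore $(\bm{a}-\bm{b})\wedge\bm{B}=0$.

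The calculation is short; what must be handled with care is the uniform treatment of the three rotation types. The elliptic and hyperbolic cases are immediate, and the only real (though mild) obstacle is the parabolic case, where $\bm{B}^2=0$ and the usual orthogonal-projection picture degenerates. The argument above sidesteps this entirely: it never inverts $\bm{B}$ and relies solely on $\bm{B}^2$ being a scalar, which remains true --- and equal to $0$ --- in the null case.
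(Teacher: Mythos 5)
Your proof is correct, but it takes a genuinely different route from the paper's. The paper proves this by invoking the decomposition $R(\bm{x})=\bm{R}^2P(\bm{x})+P_\perp(\bm{x})$ of Lemma~\ref{lemma:rotations:projections}, matching the components orthogonal to the plane of rotation on both sides to get $P_\perp(\bm{a})=P_\perp(\bm{b})$, and then unwinding $P_\perp(\bm{x})=\bm{x}\wedge\bm{B}\bm{B}^{-1}$ to reach $(\bm{a}-\bm{b})\wedge\bm{B}=0$. That argument presupposes $\bm{B}$ is invertible and also leans on an implicit uniqueness claim for the in-plane/orthogonal split. Your commutator argument --- $\bm{a}\wedge\bm{B}$ commutes with $\bm{B}$ because $\bm{B}^2$ is a scalar, hence with $\bm{R}=\alpha+\beta\bm{B}$, so $\bm{b}\wedge\bm{B}=\bm{R}(\bm{a}\wedge\bm{B})\bm{R}^\dagger=(\bm{a}\wedge\bm{B})\bm{R}\bm{R}^\dagger=\bm{a}\wedge\bm{B}$ --- never inverts $\bm{B}$ and so covers the parabolic case $\bm{B}^2=0$, which the paper's own remarks concede is excluded from Lemma~\ref{lemma:rotations:projections}. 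Each step checks out: the identity $\bm{a}\wedge\bm{B}=\tfrac12(\bm{a}\bm{B}+\bm{B}\bm{a})$ is \eqref{eq:inner:outer:geo} with $k=2$, the cancellation $\bm{B}(\bm{a}\wedge\bm{B})-(\bm{a}\wedge\bm{B})\bm{B}=\tfrac12(\bm{B}^2\bm{a}-\bm{a}\bm{B}^2)=0$ is exact, and $\bm{R}\bm{R}^\dagger=\alpha^2-\beta^2\bm{B}^2=1$ holds in all three signature cases. What the paper's approach buys is the companion identity $\bm{R}^2P(\bm{a})=P(\bm{b})$, which it needs immediately afterwards in Theorem~\ref{theo:exact:rotation} to solve for $\bm{R}^2$; your argument establishes only the orthogonality condition, so if you wanted to replace the paper's proof wholesale you would still need the projection machinery downstream. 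As a proof of the lemma as stated, yours is the cleaner and strictly more general one.
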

\begin{proof}
Let the equality $\bm{RaR}^\dagger = \bm{b}$ hold for some vector $\bm{a}$ and some vector $\bm{b}$ then using~\eqref{eq:proj:rej:rotation} we find that 
\begin{equation}
\bm{R}^2P(\bm{a}) + P_\perp(\bm{a}) = P(\bm{b}) + P_\perp(\bm{b})
\end{equation}
then since $P_\perp(\bm{a})$ and $P_\perp(\bm{b})$ are orthogonal to the plane of rotation, then they must be equal, thus
\begin{align}
P_\perp(\bm{a}) &= P_\perp(\bm{b})\label{eq:Pperp:relation}\\
\bm{R}^2P(\bm{a}) &= P(\bm{b})
\end{align}
then we note that since $P_\perp(\bm{x}) = \bm{x}\wedge\bm{BB}^{-1}$ then from~\eqref{eq:Pperp:relation} follows that 
\begin{equation}
\bm{a}\wedge\bm{BB}^{-1} = \bm{b}\wedge\bm{BB}^{-1}\ \Leftrightarrow\ \bm{a}\wedge\bm{B} = \bm{b}\wedge\bm{B}\ \Leftrightarrow\ (\bm{a} - \bm{b})\wedge\bm{B} = 0
\end{equation}
thus $\bm{a}-\bm{b}$ is in the plane of rotation $\bm{B}$ of $\bm{R}$.
\end{proof}

\begin{theorem}
    \label{theo:exact:rotation}
    Assume that the scalar $s_i$ in (\ref{eq:rot:sign:rot:approach}) is known and equal to one. Then we can find a unique rotor $\bm{R}$ via
    \begin{equation}
        \bm{R}^2 = \left(\bm{q}_1 - P_\perp(\bm{p}_1) \right)\left[P(\bm{p}_1) \right]^{-1}\label{eq:Rsq:of:p1:q1}
    \end{equation}
    where $P(\bm{x}) = \bm{x}\wedge\bm{n}\bm{n}^{-1}$ and $P_\perp(\bm{x}) = \bm{x}\cdot\bm{nn}^{-1}$ are the projection to the plane of rotation and the orthogonal projection away from the plane of rotation. $\bm{n}$ is the normal to the plane of rotation and is given by:
    \begin{equation}
    \bm{n} = \bm{I}(\bm{p}_1 - \bm{q}_1)\wedge(\bm{p}_2 - \bm{q}_2)
    \end{equation}
\end{theorem}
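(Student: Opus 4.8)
The plan is to exploit the hypothesis $s_i=1$, which by~\eqref{eq:rot:sign:rot:approach} upgrades the eigenvector relation to an exact rotation $\bm{q}_i = R(\bm{p}_i) = \bm{R}\bm{p}_i\bm{R}^\dagger$ for $i=1,2$. The argument then separates into two independent tasks: first determine the plane of rotation (equivalently its normal $\bm{n}$) purely from the data $\bm{p}_i,\bm{q}_i$, and second, once the plane is known, read off $\bm{R}^2$ from a single correspondence via the projection decomposition of a simple rotation.

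For the first task I would apply Lemma~\ref{lemma:plane:rotation} with $(\bm{a},\bm{b})=(\bm{p}_1,\bm{q}_1)$ and again with $(\bm{a},\bm{b})=(\bm{p}_2,\bm{q}_2)$. This yields $(\bm{p}_1-\bm{q}_1)\wedge\bm{B}=0$ and $(\bm{p}_2-\bm{q}_2)\wedge\bm{B}=0$, so both difference vectors lie in the plane of rotation $\bm{B}$. In $\mathcal{G}_3$ the plane of rotation is a single $2$-blade, hence provided $\bm{p}_1-\bm{q}_1$ and $\bm{p}_2-\bm{q}_2$ are linearly independent their outer product $(\bm{p}_1-\bm{q}_1)\wedge(\bm{p}_2-\bm{q}_2)$ is a nonzero bivector proportional to $\bm{B}$. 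Dualizing by the pseudoscalar, $\bm{n}=\bm{I}(\bm{p}_1-\bm{q}_1)\wedge(\bm{p}_2-\bm{q}_2)$, produces a vector normal to $\bm{B}$, which is the asserted $\bm{n}$. Because only the combinations $\bm{x}\wedge\bm{n}\,\bm{n}^{-1}$ and $\bm{x}\cdot\bm{n}\,\bm{n}^{-1}$ enter, any scalar rescaling of $\bm{n}$ is harmless, and a direct check shows $P(\bm{x})=\bm{x}\wedge\bm{n}\,\bm{n}^{-1}$ is the in-plane component while $P_\perp(\bm{x})=\bm{x}\cdot\bm{n}\,\bm{n}^{-1}$ is the component along $\bm{n}$, so these coincide with the projections of Lemma~\ref{lemma:rotations:projections}.

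For the second task I would apply the decomposition~\eqref{eq:proj:rej:rotation} to the single vector $\bm{p}_1$, giving $\bm{q}_1 = R(\bm{p}_1) = \bm{R}^2P(\bm{p}_1) + P_\perp(\bm{p}_1)$. Isolating the in-plane term yields $\bm{q}_1 - P_\perp(\bm{p}_1) = \bm{R}^2 P(\bm{p}_1)$, and right-multiplying by $[P(\bm{p}_1)]^{-1}$ gives exactly~\eqref{eq:Rsq:of:p1:q1}. The inverse exists because $P(\bm{p}_1)$ is a vector in $\mathcal{G}_3$, so $[P(\bm{p}_1)]^{-1}=P(\bm{p}_1)/\|P(\bm{p}_1)\|^2$ by~\eqref{eq:inverse:simple:mlvectors}, well defined as long as $P(\bm{p}_1)\neq 0$. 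Finally $\bm{R}$ is recovered from $\bm{R}^2$ by a square root in $\mathcal{G}_3^+$: writing $\bm{R}^2=e^{\theta\bm{B}}$ fixes $\theta$ and $\bm{B}$, hence $\bm{R}=e^{\theta\bm{B}/2}$ up to the unavoidable global sign of the rotor double cover, which is the sense in which $\bm{R}$ is unique.

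The main obstacle is handling the degenerate configurations in which the construction becomes ill-posed. The formula for $\bm{n}$ collapses when $\bm{p}_1-\bm{q}_1$ and $\bm{p}_2-\bm{q}_2$ are parallel (for instance if an eigenvector is nearly fixed by $R$), so the wedge vanishes and a third correspondence $\bm{p}_3,\bm{q}_3$ must be substituted; likewise, the extraction~\eqref{eq:Rsq:of:p1:q1} requires $P(\bm{p}_1)\neq0$, i.e.\ $\bm{p}_1$ must not lie along the axis $\bm{n}$. I would therefore state these nondegeneracy hypotheses explicitly and argue that, whenever $R$ is a genuine non-identity rotation, at least one pair of difference vectors is independent and at least one $\bm{p}_i$ has nonzero in-plane part, so the correspondences can always be relabelled to meet both requirements.
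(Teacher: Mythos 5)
Your proposal is correct and follows essentially the same route as the paper's own proof: invoke Lemma~\ref{lemma:plane:rotation} on both correspondences to pin down the plane of rotation as $(\bm{p}_1-\bm{q}_1)\wedge(\bm{p}_2-\bm{q}_2)$, dualize to obtain $\bm{n}$ and identify $P$ and $P_\perp$, then solve $\bm{q}_1=\bm{R}^2P(\bm{p}_1)+P_\perp(\bm{p}_1)$ for $\bm{R}^2$. Your explicit attention to the degenerate configurations (parallel difference vectors, $P(\bm{p}_1)=0$) is a welcome addition that the paper only gestures at.
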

\begin{proof}
\begin{subequations}
Since we are aiming to find a rotor $\bm{R}$, use~\eqref{eq:proj:rej:rotation}, then
\begin{align}
    \bm{R}\bm{p}_1\bm{R}^\dagger &= \bm{q}_1\ \Leftrightarrow\ \bm{R}^2P(\bm{p}_1) + P_\perp(\bm{p}_1) = \bm{q}_1\label{eq:Rp1:q1}\\
    \bm{R}\bm{p}_2\bm{R}^\dagger &= \bm{q}_2\ \Leftrightarrow\ \bm{R}^2P(\bm{p}_2) + P_\perp(\bm{p}_2) = \bm{q}_2
\end{align}
from Lemma~\ref{lemma:plane:rotation} we readily find that the plane of rotation $\bm{B}$ must satisfy
\begin{equation}
(\bm{p}_1 - \bm{q}_1)\wedge\bm{B} = 0 \quad\text{and}\quad (\bm{p}_2 - \bm{q}_2)\wedge\bm{B}
\end{equation}
Thus $\alpha\bm{B} = (\bm{p}_1 - \bm{q}_1)\wedge (\bm{p}_2 - \bm{q}_2)$, for some $\alpha\in\mathbb{R}$, is a valid plane of rotation as long as $\bm{p}_1 - \bm{q}_1\neq\bm{p}_2 - \bm{q}_2\neq 0$. Let 
\begin{equation}
\bm{n} = \bm{I}(\bm{p}_1 - \bm{q}_1)\wedge (\bm{p}_2 - \bm{q}_2) = \bm{IB}
\end{equation}
then 
\begin{align}
P(\bm{x}) = \bm{x}\cdot\bm{BB}^{-1} = \bm{x}\cdot(\bm{I}^\dagger\bm{n})(\bm{I}^\dagger\bm{n})^{-1} = \bm{x}\wedge\bm{nn}^{-1}\\
P_\perp(\bm{x}) = \bm{x}\wedge\bm{BB}^{-1} = \bm{x}\wedge(\bm{I}^\dagger\bm{n})(\bm{I}^\dagger\bm{n})^{-1} = \bm{x}\cdot\bm{nn}^{-1}
\end{align}
\end{subequations}
Finally solving the right hand side of~\eqref{eq:Rp1:q1} with respect to $\bm{R}^2$ gives us~\eqref{eq:Rsq:of:p1:q1}.
\end{proof}
 
\begin{theorem}
    \label{eq:rotation:spectral:decomp}
    Define the function
    \begin{equation}
        H(\bm{x}) = \sum_{i=1}^3\bm{x}\cdot\bm{p}_i\bm{q}_i^{-1}\label{eq:H(x):pi:qi}
    \end{equation}
    then $H=R$ is the rotation from $\bm{p}_i$ to $\bm{q}_i$. The rotor is determined by first finding an eigenvector $\bm{a}$ of $H_+(\bm{x}) = \frac{1}{2}(\munderbar{H}(\bm{x}) + \bar{H}(\bm{x}))$ with eigenvalue of the smallest magnitude. Then the rotor can be determined via
    \begin{equation}
        \bm{R}^2 = H(\bm{a})\bm{a}^{-1}
    \end{equation}
\end{theorem}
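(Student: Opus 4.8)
The plan is to prove the statement in two stages: first that the linear map $H$ is exactly the rotation $R$ carrying $\bm{p}_i$ to $\bm{q}_i$, and then that the prescribed eigen-decomposition of the symmetric part $H_+$ recovers $\bm{R}^2$.

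\textbf{Stage 1 ($H=R$).} This is the three-dimensional Euclidean analogue of Theorem~\ref{theo:H:TR}, and I would argue it the same way. Because the $\bm{p}_i$ are eigenvectors of the symmetric function $f$, they form an orthogonal basis of $\mathcal{A}_3$, so that $\sum_{i=1}^3 \bm{x}\cdot\bm{p}_i\,\bm{p}_i^{-1} = \bm{x}$ is the identity transformation. Taking $s_i=1$ in \eqref{eq:rot:sign:rot:approach} gives $\bm{q}_i = R(\bm{p}_i)$, and since a rotation preserves norms we have $\bm{q}_i^2 = \bm{p}_i^2$, whence $\bm{q}_i^{-1} = \bm{q}_i/\bm{q}_i^2 = R(\bm{p}_i)/\bm{p}_i^2 = R(\bm{p}_i^{-1})$. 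Substituting into the definition \eqref{eq:H(x):pi:qi} and pulling the scalars $\bm{x}\cdot\bm{p}_i$ through the linear map $R$ gives $H(\bm{x}) = R\!\left(\sum_{i=1}^3 \bm{x}\cdot\bm{p}_i\,\bm{p}_i^{-1}\right) = R(\bm{x})$.

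\textbf{Stage 2 (eigenstructure of $H_+$).} Write $\bm{R}=e^{\theta\bm{B}/2}$ with $\bm{B}$ a unit bivector ($\bm{B}^2=-1$), so that $\bm{R}^2 = \cos\theta + \bm{B}\sin\theta$, and note that the adjoint of the orthogonal map $R$ is its inverse, $\bar{R}(\bm{x}) = \bm{R}^\dagger\bm{x}\bm{R}$. Decomposing $\bm{x}=P(\bm{x})+P_\perp(\bm{x})$ into its components in and orthogonal to the plane $\bm{B}$, and applying Lemma~\ref{lemma:rotations:projections} to both $\bm{R}$ and $\bm{R}^\dagger$ (which share the same plane), I get $H_+(\bm{x}) = \tfrac12(\bm{R}^2 + \bm{R}^{\dagger 2})P(\bm{x}) + P_\perp(\bm{x}) = \cos\theta\,P(\bm{x}) + P_\perp(\bm{x})$, using $\tfrac12(\bm{R}^2+\bm{R}^{\dagger 2})=\cos\theta$. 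Thus $H_+$ is symmetric with eigenvalue $\cos\theta$ (multiplicity two) on the rotation plane and eigenvalue $1$ on the axis.

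\textbf{Stage 3 (recovery) and the main obstacle.} Since $|\cos\theta|\le 1$ with strict inequality for $\theta\in(0,\pi)$, the eigenvalue of smallest magnitude is $\cos\theta$ and its eigenvectors lie in the plane of rotation; hence any such $\bm{a}$ satisfies $\bm{a}=P(\bm{a})$, and Lemma~\ref{lemma:rotation:rotor:from:eigenvector} yields $H(\bm{a})\bm{a}^{-1}=R(\bm{a})\bm{a}^{-1}=\bm{R}^2$, which determines the rotation (equivalently $\bm{R}$ up to the irrelevant overall sign). The crux of the argument, and the step most needing care, is the justification of the smallest-magnitude selection rule: the axis eigenvector is fixed by $R$ and would return the useless value $\bm{R}^2=1$, so one must be certain the chosen $\bm{a}$ lies in the rotation plane. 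This selection is valid precisely for $\theta\in(0,\pi)$ and degenerates in the two boundary cases $\theta=0$ (no distinguished plane, $H=\mathrm{id}$) and $\theta=\pi$ (where $\cos\theta=-1$ coincides in magnitude with the axis eigenvalue $1$), which I would isolate as explicit exceptions. A minor point is that Lemma~\ref{lemma:rotations:projections} is phrased for elliptic and hyperbolic simple rotors, but in $\mathcal{G}_3$ every rotation is elliptic, so it applies verbatim.
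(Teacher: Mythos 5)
Your proof is correct and follows the same overall architecture as the paper's: establish $H=R$ from the orthogonal-basis resolution of the identity, compute the symmetric part $H_+$, and read off its spectrum. Your Stage~1 is essentially verbatim the paper's argument. In Stage~2 you take a genuinely different route to the same formula: you apply the projection decomposition of a simple rotation to both $\munderbar{H}$ and $\bar{H}$ and average, obtaining $H_+(\bm{x})=\tfrac12(\bm{R}^2+\bm{R}^{\dagger 2})P(\bm{x})+P_\perp(\bm{x})=\cos\theta\,P(\bm{x})+P_\perp(\bm{x})$ in two lines, whereas the paper expands the sandwich product $\bm{RxR}^\dagger$ algebraically in terms of the scalar and bivector parts $\alpha+\bm{A}$ of the rotor and only afterwards re-identifies the projections; your version is shorter and makes the geometric content more transparent, while the paper's makes explicit which grade-one identities are being used. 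Your Stage~3 goes beyond the paper's proof, which stops at naming the eigenblades $\bm{A}$ (eigenvalue $\cos\theta$) and $\bm{IA}$ (eigenvalue $1$) and never actually justifies the smallest-magnitude selection rule nor derives $\bm{R}^2=H(\bm{a})\bm{a}^{-1}$; you supply both via the in-plane eigenvector lemma. Your caveat that the selection rule degenerates at $\theta=0$ (no distinguished plane) and at $\theta=\pi$ (where $|\cos\theta|=1$ ties with the axis eigenvalue) is a genuine limitation of the theorem as stated that the paper does not record.
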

\begin{proof}
% An orthogonal transformation in $\mathcal{A}_3$ is a simple rotation, thus it can be written with respect to a simple rotor $\bm{U}=e^{\theta\bm{B}/2}$ as
% \begin{equation}
% H(\bm{x}) = \bm{U}\bm{x}\bm{U}^\dagger
% \end{equation}
Let $\bm{q}_i = \bm{Rp}_i\bm{R}^\dagger$ and $\bm{p}_i^2 = \bm{q}_i^2$, thus $\bm{q}_i^{-1} = \bm{Rp}_i^{-1}\bm{R}^\dagger$. Then we find from~\eqref{eq:H(x):pi:qi} that  
\begin{equation}
H(\bm{x}) = \sum_{i=1}^3 \bm{x}\cdot\bm{p}_i\bm{Rp}_i^{-1}\bm{R} = \bm{R}\left(\sum_{i=1}^3\bm{x}\cdot\bm{p}_i\bm{p}_i^{-1} \right)\bm{R}^\dagger = \bm{RxR}^\dagger
\end{equation}
where since $\bm{p}_i$ is an orthogonal basis for $\mathcal{A}_3$ we have $\sum_{i=1}^3 \bm{x}\cdot\bm{p}_i\bm{p}_i^{-1} = \bm{x}$. Let $\bm{R}=e^{\theta\bm{B}}=\alpha + \bm{A} = \cos(\theta/2) + \bm{B}\sin(\theta/2)$, then we show that 
\begin{equation}
\begin{split}
H(\bm{x})& =\bm{RxR}^\dagger = (\alpha + \bm{A})\bm{x}(\alpha - \bm{A}) =  \alpha^2\bm{x} - \bm{AxA} + \alpha(\bm{Ax} - \bm{xA})\\
&= \alpha^2 \bm{x} - (2\bm{x}\cdot\bm{AA}^\dagger -\bm{x}\|\bm{A}\|^2) + 2\alpha\bm{A}\cdot\bm{x} = (\alpha^2 + \|\bm{A}\|^2)\bm{x} - 2\bm{x}\cdot\bm{AA}^\dagger -2 \bm{x}\cdot\bm{A}\\
&= \bm{RR}^\dagger\bm{x} - 2\bm{x}\cdot\bm{AA}^\dagger -2 \bm{x}\cdot\bm{A} = \bm{x} - 2\bm{x}\cdot\bm{AA}^\dagger -2 \bm{x}\cdot\bm{A}
\end{split}
\end{equation}
where we used ${\bm{RR}^\dagger = (\alpha + \bm{A})(\alpha - \bm{A}) = \alpha^2 + \bm{AA}^\dagger = \alpha^2+ \|\bm{A}\|^2 = 1}$, ${\tfrac{1}{2}(\bm{Ax}-\bm{xA}) = \bm{A}\cdot\bm{x}}$ and ${\bm{x}\cdot\bm{AA} = \tfrac{1}{2}\left(\bm{xA} - \bm{Ax} \right)\bm{A} = \tfrac{1}{2}\bm{xA}^2 - \tfrac{1}{2}\bm{AxA}}\ \Leftrightarrow\ \bm{AxA} = -\bm{x}\|\bm{A}\|^2 + 2\bm{x}\cdot\bm{AA}^\dagger$. The adjoint of $H$ is computed as 
\begin{equation}
\bar{H}(\bm{x}) = \bm{R}^\dagger\bm{x}\bm{R} = \bm{x} - 2\bm{x}\cdot\bm{AA}^\dagger + 2\bm{x}\cdot\bm{A}
\end{equation}
thus the symmetric transformation $H_+(\bm{x}) = \frac{1}{2}(\munderbar{H}(\bm{x}) + \bar{H}(\bm{x}))$ is written as
\begin{equation}
H_+(\bm{x}) = \bm{x} - 2\bm{x}\cdot\bm{AA}^\dagger = \cos\theta\bm{x}\cdot\bm{AA}^{-1} + \bm{x}\wedge\bm{AA}^{-1}
\end{equation}
where we used $\bm{x} = \bm{x}\cdot\bm{AA}^{-1} + \bm{x}\wedge\bm{AA}^{-1}$ and
\begin{equation}
\begin{split}
    \bm{x}\cdot\bm{AA}^{-1} - 2\bm{x}\cdot\bm{AA}^\dagger &= \|\bm{A}\|^{-2}(1 - 2\|\bm{A}\|^{2})\bm{x}\cdot\bm{AA}^\dagger\\
    &= (1-2\sin^2(\theta/2))\bm{x}\cdot\bm{AA}^{-1} = \cos\theta\bm{x}\cdot\bm{AA}^{-1}
\end{split}
\end{equation}
with $\|\bm{A}\|^2 = \sin^2(\theta/2)$. Thus the eigenblades of $H_+$ are $\bm{A}$ with eigenvalue $\cos\theta$ and $\bm{IA}$ with eigenvalue one.
\end{proof}
To determine $\bm{R}$ after knowing $\bm{R}^2$ we need to estimate the square root of a rotor. When expressed via an exponential $\bm{R} = e^{\theta\bm{B}}$ we can easily compute the square root as $\bm{R} = \sqrt{\bm{R}^2} = \sqrt{e^{2\theta\bm{B}}} = e^{\theta\bm{B}}$.

\subsection{Multivector Correspondences in VGA}\label{sec:mv:corrs:VGA}
Assume that $\bm{Y}_i$ and $\bm{X}_i$ relate via~\eqref{eq:proj:euc:space} then consider the problem of minimizing the cost function
\begin{equation}
J(\bm{R},\bm{t}) = \sum_{k=1}^\ell \|\bm{RX}_k\bm{R}^\dagger + \bm{\Gamma}_k\bm{t} - \bm{Y}_k\|^2
\label{eq:VGA:cost:func}
\end{equation}
\begin{theorem}
The minimizer with respect to $\bm{t}$ of the cost function~\eqref{eq:VGA:cost:func} is given by 
\begin{equation}
\bm{t} = \bar{\bm{Y}} - \bm{R}\bar{\bm{X}}\bm{R}^\dagger\label{eq:trans:opt:sol:VGA}
\end{equation}
with $\bar{\bm{X}}$ and $\bar{\bm{Y}}$ given by~\eqref{eq:mean:VGA:multiclouds:x} and~\eqref{eq:mean:VGA:multiclouds:y} respectively. The rotation is determined by forming the multilinear function
\begin{equation}
F(\bm{R}) = \sum_{k=1}^\ell (\bm{Y}_k - \bm{\Gamma}_k\bar{\bm{Y}})^\dagger\bm{R}(\bm{X}_k - \bm{\Gamma}_k\bar{\bm{X}}) + (\bm{Y}_k - \bm{\Gamma}_k\bar{\bm{Y}})\bm{R}(\bm{X}_k - \bm{\Gamma}_k\bar{\bm{X}})^\dagger\label{eq:F:VGA:eig:prob}
\end{equation}
and then solving the eigenvalue problem
\begin{equation}
\langle F(\bm{R})\rangle_{0,2} = \lambda \bm{R}
\end{equation}
where $\bm{R}\equiv \langle\bm{R}\rangle + \langle\bm{R}\rangle_2$, is a versor. The optimal $\bm{R}$ is given as the eigenrotator of $\langle F(\bm{R})\rangle_{0,2}$ with the largest eigenvalue.
\end{theorem}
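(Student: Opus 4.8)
The plan is to minimize $J(\bm{R},\bm{t})$ in \eqref{eq:VGA:cost:func} in two stages: first solve for the optimal $\bm{t}$ with the rotor $\bm{R}$ held fixed, and then substitute this minimizer back to reduce the residual to a pure rotation estimation problem of exactly the form already handled in Theorem~\ref{theo:rot:opt}. Throughout, the two facts I would lean on are that $\bm{\Gamma}_k$ has only scalar and pseudoscalar parts of $\mathcal{G}_3$, hence commutes with the even rotor $\bm{R}$, and that the sandwich map $\bm{R}(\cdot)\bm{R}^\dagger$ is grade preserving.

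For the translation, I would expand the norm using $\|\bm{A}\|^2 = \langle\bm{A}\bm{A}^\dagger\rangle$ from \eqref{norm:negative:negative:sigs}, writing $\bm{Z}_k = \bm{RX}_k\bm{R}^\dagger - \bm{Y}_k$ so that $J = \sum_k\|\bm{\Gamma}_k\bm{t}+\bm{Z}_k\|^2$. Since $\bm{t}$ is a vector, $\bm{t}^\dagger=\bm{t}$ and $\bm{t}\bm{t}=\bm{t}^2$ is a scalar, so the quadratic term collapses to $\bm{t}^2\sum_k\|\bm{\Gamma}_k\|^2 = s_\ell\,\bm{t}^2$. Using the cyclic property \eqref{eq:commutation:property} together with the fact that grade-one elements are self-reverse, the two cross terms combine into $2\,\bm{t}\cdot\sum_k\langle\bm{\Gamma}_k^\dagger\bm{Z}_k\rangle_1$. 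Differentiating in $\bm{t}$ and setting the result to zero gives $\bm{t} = -s_\ell^{-1}\sum_k\langle\bm{\Gamma}_k^\dagger\bm{Z}_k\rangle_1$. The essential simplification, which parallels the computation in the proof of Theorem~\ref{theo:RfR}, is that $\langle\bm{\Gamma}_k^\dagger\bm{R}\bm{X}_k\bm{R}^\dagger\rangle_1 = \bm{R}\langle\bm{\Gamma}_k^\dagger\bm{X}_k\rangle_1\bm{R}^\dagger$ by commutation and grade preservation. Summing and recalling the definitions \eqref{eq:mean:VGA:multiclouds:x} and \eqref{eq:mean:VGA:multiclouds:y} then yields precisely $\bm{t} = \bar{\bm{Y}} - \bm{R}\bar{\bm{X}}\bm{R}^\dagger$.

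For the rotation, I would substitute this optimal $\bm{t}$ back into the residual. Using $\bm{\Gamma}_k\bm{R} = \bm{R}\bm{\Gamma}_k$ one obtains $\bm{\Gamma}_k\bm{t} = \bm{\Gamma}_k\bar{\bm{Y}} - \bm{R}\bm{\Gamma}_k\bar{\bm{X}}\bm{R}^\dagger$, so each residual becomes $\bm{R}(\bm{X}_k - \bm{\Gamma}_k\bar{\bm{X}})\bm{R}^\dagger - (\bm{Y}_k - \bm{\Gamma}_k\bar{\bm{Y}})$. Writing the centered quantities $\tilde{\bm{X}}_k = \bm{X}_k - \bm{\Gamma}_k\bar{\bm{X}}$ and $\tilde{\bm{Y}}_k = \bm{Y}_k - \bm{\Gamma}_k\bar{\bm{Y}}$, the objective reduces to $J = \sum_k\|\bm{R}\tilde{\bm{X}}_k\bm{R}^\dagger - \tilde{\bm{Y}}_k\|^2$. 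This is exactly the functional minimized in Theorem~\ref{theo:rot:opt} under the substitution $\bm{A}_k\mapsto\tilde{\bm{X}}_k$, $\bm{B}_k\mapsto\tilde{\bm{Y}}_k$, and the associated multilinear function there coincides with \eqref{eq:F:VGA:eig:prob}. Invoking that theorem directly delivers the eigenvalue equation $\langle F(\bm{R})\rangle_{0,2}=\lambda\bm{R}$ and the choice of the eigenrotator with the largest eigenvalue.

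The main obstacle I anticipate is the careful bookkeeping around $\bm{\Gamma}_k$: verifying that its pseudoscalar component commutes with $\bm{R}$ (which rests on $\bm{I}$ commuting with even elements of $\mathcal{G}_3$), handling the sign $\bm{I}^\dagger=-\bm{I}$ correctly in $\bm{\Gamma}_k^\dagger$, and checking that the cross terms in the expanded norm collapse cleanly into a single inner product with $\bm{t}$. Once these commutation and grade-preservation facts are secured, both the translation formula and the reduction to Theorem~\ref{theo:rot:opt} follow by routine computation.
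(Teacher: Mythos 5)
Your proposal follows essentially the same route as the paper's own proof: minimize over $\bm{t}$ first by expanding the quadratic and setting the derivative to zero, substitute the resulting $\bm{t} = \bar{\bm{Y}} - \bm{R}\bar{\bm{X}}\bm{R}^\dagger$ back to obtain the centered residuals $\bm{R}(\bm{X}_k - \bm{\Gamma}_k\bar{\bm{X}})\bm{R}^\dagger - (\bm{Y}_k - \bm{\Gamma}_k\bar{\bm{Y}})$, and then invoke Theorem~\ref{theo:rot:opt}. The only difference is that you make explicit the commutation of $\bm{\Gamma}_k$ with $\bm{R}$ and the identity $\langle\bm{\Gamma}_k^\dagger\bm{R}\bm{X}_k\bm{R}^\dagger\rangle_1 = \bm{R}\langle\bm{\Gamma}_k^\dagger\bm{X}_k\rangle_1\bm{R}^\dagger$, steps the paper leaves implicit; this is correct and no gap remains.
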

\begin{proof}
Expanding the cost function gives 
\begin{equation}
\sum_{k=1}^\ell \|\bm{RX}_k\bm{R}^\dagger - \bm{Y}_k\|^2 + \|\bm{\Gamma}_k^2\bm{t}^2 + 2\langle \bm{t\Gamma}_k^\dagger(\bm{RX}_k\bm{R}^\dagger - \bm{Y}_k)\rangle
\end{equation}
Then the derivative of the cost function with respect to $\bm{t}$ is given by
\begin{equation}
\partial_{\bm{t}} J = 2\bm{t} \sum_{k=1}^\ell \|\bm{\Gamma}_k\|^2 + 2\sum_{k=1}^\ell \langle\bm{\Gamma}_k^\dagger(\bm{RX}_k\bm{R}^\dagger - \bm{Y}_k)\rangle_1
\end{equation}
solving $\partial_{\bm{t}} J = 0$ we easily arrive at~\eqref{eq:trans:opt:sol:VGA}. Replacing~\eqref{eq:trans:opt:sol:VGA} back into the cost function we find that 
\begin{equation}
J(\bm{R}) = \|\bm{R}(\bm{X}_k - \bm{\Gamma}_k\bar{\bm{X}})\bm{R}^\dagger - \bm{Y}_k + \bm{\Gamma}_k\bar{\bm{Y}}\|^2
\end{equation}
which by {\it Theorem}~\ref{theo:rot:opt} the minimum is given as the eigenrotator with largest eigenvalue of $\langle F(\bm{R})\rangle_{0,2}$ where $F$ is given by~\eqref{eq:F:VGA:eig:prob}.
\end{proof}

\subsection{Multivector Correspondences in CGA}
% \subsection{Rotation Only Approach}
\label{eq:rot:only:approach:with:known:corr}
% \begin{theorem}
% \todo[color=red,author=Francisco]{Explica como estimar a rotacao optima em {\bf{CGA}} sabendo as correspondencias.}
Assume that $\bm{Y}_i = \munderbar{R}(\bm{X}_i) + \bm{N}_i$ for $i=1,2,\dots,\ell$, then the optimal rotation between the two sets of multivectors $\bm{X}_1,\bm{X}_2,\dots,\bm{X}_\ell$ and $\bm{Y}_1,\bm{Y}_2,\dots,\bm{Y}_\ell$ in Conformal Geometric Algebra, where we know that $\bm{X}_i$ corresponds to $\bm{Y}_i$, is the maximizer of the Lagrangian
\begin{equation}
\mathcal{L}(\bm{R}) = \sum_{j=1}^\ell \sum_{i=1}^4 \|\mathdutchcal{C}_i(\bm{Y}_j) - \mathdutchcal{C}_i(\bm{RX}_j\bm{R}^\dagger)\|^2 + \lambda \bm{RR}^\dagger
\end{equation}
then by Theorem \ref{theo:rot:opt} the optimal rotor $\bm{R}$ satisfies the eigenvalue equation
\begin{equation}
    \sum_{j=1}^\ell \sum_{i=1}^4 \langle \mathdutchcal{C}_i(\bm{Y}_j^\dagger) \bm{R}\mathdutchcal{C}_i(\bm{X}_j)\rangle_{0,2} + \langle \mathdutchcal{C}_i(\bm{Y}_j) \bm{R}\mathdutchcal{C}_i(\bm{X}_j^\dagger)\rangle_{0,2}  = \lambda \bm{R}
\end{equation}
\end{fornextpaper}
\begin{fornextpaper}
\section{Eigendecomposition Algorithm}
\label{sec:eig:decomp:alg}
Here we provide important insights regarding the Eigenmultivector Extraction described in {\it Sec.}~\ref{sec:gen:approach}. Namely, there are some nuances regarding the decomposition of multilinear functions. First and foremost when considering a geometric algebra $\mathcal{G}_{p+1,q+1}$ we will have a basis of $2^{p+q+2}$ multivectors which gets pretty large for increasing $n$. The second nuance is that we have to convert the multilinear transformation into matrix, this can be done by choosing a basis and a reciprocal basis and computing all the coefficients for each basis and reciprocal basis (see Sec.~\ref{sec:conv:matrices}). For the particular case of the multilinear functions we are considering we can simplify the problem of solving for the entire geometric algebra by relating eigenmultivectors by duality and by showing that for grade preserving multilinear transformations we can separate the problem into the different grade parts. Which means that in practice, with $p=3$ and $q=0$, instead of solving for $2^5-2$ (scalar and pseudoscalars do not matter) basis elements we only have to solve for ${5 \choose 1} =5$ basis elements for the eigenvectors and for ${5 \choose 2} =10$ basis elements for the eigenbivectors.

In the following section we show that for~\eqref{eq:covariance:functions:general:F} we can always find an eigenmultivector that is dual to another eigenmultivector.
\subsection{Dual eigenmultivectors}
\label{sec:dual:eigmvs}
Assume that $\bm{P}_k$ are eigenmultivectors of the multilinear function $F$ given by~\eqref{eq:covariance:functions:general:F}. Then,  $\bm{iP}_k$ are also eigenmultivectors of $F$, having
$F(\bm{\alpha P}_k) = \bm{\alpha P}_k$ for any $\bm{\alpha}\equiv \langle \bm{\alpha}\rangle + \langle \bm{\alpha}\rangle_{m}$, where $m=p+q+2$. This is easily shown by assuming that for some multivector $\bm{Z}$ we have $F(\bm{Z}) = \lambda\bm{Z}$, then it follows that  
\begin{equation}
F(\bm{iZ}) = \sum_{i=1}^\ell \bm{X}_i\bm{iZ}\bm{X}_i = \bm{i}\sum_{i=1}^\ell \bm{X}_i\bm{Z}\bm{X}_i = \bm{i}F(\bm{Z}) = \lambda\bm{iZ}
\end{equation}
that is $F(\bm{iZ}) = \lambda \bm{iZ}$. Then $\bm{\alpha P}_k$ is an eigenmultivector for any `complex' value $\bm{\alpha}$. 

\subsection{Grade preserving transformation}
\label{sec:grade:pres}
% One way to ensure that the decomposition is unique, assuming that $F$ is grade preserving, would be to solve the eigenvalue problem for each grade individually. 
The following theorem shows that finding the decomposition for each grade is equivalent to finding the decomposition for all grades, assuming $F$ is grade preserving. 

\begin{theorem}
The decomposition of a grade preserving multilinear function is equivalent to the decomposition of its different grade parts. Let $F$ be a grade preserving multilinear transformation, ${F(\langle \bm{Z}\rangle_k)\equiv \langle F(\langle \bm{Z}\rangle_k)\rangle_k}$, then the sum of the decompositions of $\langle F(\bm{Z}_k)\rangle_k$ where $\bm{Z}_k$ is a $k$-vector for each $k$, is equivalent to the decomposition of  $F$.
\end{theorem}
\begin{proof}
Let $\bm{Z} = \sum_k \bm{Z}_k$, with $\bm{Z}_k\equiv\langle\bm{Z}_k\rangle_k$. Assume $F$ to be grade preserving multilinear function, then
\begin{equation}
F(\bm{Z}) = F\left( \sum_{k}\bm{Z}_k\right) = \sum_k\langle F(\bm{Z}_k)\rangle_k
\end{equation}
Now recall that multivectors of different grades are orthogonal under the scalar product,  that is,  $\bm{Z}_i*\bm{Z}_j = 0$ for $i\neq j$. Then, each grade part of $F$ lives in a different orthogonal space. Also, let $\bm{P}_{kj}$ be the eigen-$k$-vectors of $F_k(\bm{Z}) = \langle F(\bm{Z})\rangle_k$, then we may write
\begin{equation}
F_k(\bm{Z}) = \sum_{i=1}^{m_k} \lambda_{ki}\bm{P}_{ki}\langle\bm{ZP}_k^i\rangle\label{eq:decomp:k:grade:part}
\end{equation}
where $\bm{P}_{ki}*\bm{P}_{k}^j = \delta_{ij}$ for $i,j=1,2,\dots,m_k$. Then the $\bm{P}_{ki}$'s are also eigenmultivectors of $F$. 
To prove this consider $F$ as the sum of the decomposition of the $F_k$, {\it i.e.}
\begin{equation}
F(\bm{Z}) = \sum_k F_k(\bm{Z}) = \sum_k F_k(\langle \bm{Z}\rangle_k) = \sum_k\sum_{i=1}^{m_k} \lambda_{ki}\bm{P}_{ki}\langle\bm{ZP}_k^i\rangle
\end{equation}
It easily follows that $F(\bm{P}_{ki}) = \lambda_{ki}\bm{P}_{ki}$ since $\bm{P}_{ki}*\bm{P}_{lj} = \delta_{ijkl}$.
\end{proof}

Consider the particular case where $p=3$ and $q=0$ then recall that vectors are dual to quadvectors and bivectors are dual to trivectors. As such by what was explained in Sec.~\ref{sec:dual:eigmvs} we can reduce the problem of solving the eigenvalue problem for the entire geometric algebra, but to only consider grades one and two. We can simplify the problem even further by recalling that $F$ and $G$ are grade preserving transformations. Which, by what was said in Sec.~\ref{sec:grade:pres} we can consider $F(\langle\bm{X}\rangle_i)$ for each grade $i$ separately. This enables us to decrease the complexity of the decomposition from $\left({5 \choose 1} + {5\choose 2}\right)^2 = 15^2 = 225$ to the complexity ${5\choose 1}^2 + {5 \choose 2}^2 = 5^2 + 10^2 = 125$ which is almost half in terms of complexity.

\subsection{Computing the Matrix of the Transformation}
\label{sec:conv:matrices}
Multiple softwares exist that enable the computation of the eigendecomposition of matrices, yet we have described decompositions of multilinear functions in Geometric Algebra. To be able to solve this kind of problems with standard software we need to translate the language of linear and multilinear transformations to matrices. We define matrices of a linear and multilinear transformations by defining a basis $\bm{A}_1,\bm{A}_2,\dots,\bm{A}_m$ with reciprocal elements $\bm{A}^1,\bm{A}^2\dots,\bm{A}^m$, thus $\bm{A}_i*\bm{A}^j = \delta_{ij}$, then assuming that $F$ is a general multilinear function we compute the matrix of $F$ as
\begin{equation}
\mathrm{F}_{ij} = F(\bm{A}_i)*\bm{A}^j
\end{equation}
then consider the orthogonal vectors $\bm{e}_1,\bm{e}_2,\dots,\bm{e}_m$ in some geometric algebra of $m$ dimension, such that ${\bm{e}_i\cdot\bm{e}_j = \delta_{ij}}$, then define the linear function
\begin{equation}
f(\bm{x}) = \sum_{i,j} \mathrm{F}_{ij}\bm{e}_j\bm{e}_i\cdot\bm{x}\label{eq:f:of:matrix:of:F}
\end{equation}
\begin{theorem}
Let $\bm{u}$ be an eigenvector of $f$ with a real eigenvalue $\lambda$, then $\bm{U} = \sum_i\bm{u}\cdot\bm{e}_i\bm{A}_i$ is an eigenmultivector of $F$ with eigenvalue $\lambda$.
\end{theorem}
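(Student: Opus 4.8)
The plan is to show that the linear map $\bm{u}\mapsto\bm{U}=\sum_i(\bm{u}\cdot\bm{e}_i)\bm{A}_i$ carries eigenvectors of $f$ to eigenmultivectors of $F$, exploiting that $f$ in \eqref{eq:f:of:matrix:of:F} is built precisely to share the coordinate matrix of $F$. First I would write $\bm{u}$ in the orthonormal frame as $\bm{u}=\sum_i u_i\bm{e}_i$ with $u_i=\bm{u}\cdot\bm{e}_i$, so that $\bm{U}=\sum_i u_i\bm{A}_i$. Evaluating $f$ on this expansion and using $\bm{e}_i\cdot\bm{e}_j=\delta_{ij}$ gives
\begin{equation*}
f(\bm{u}) = \sum_{i,j}\mathrm{F}_{ij}\,u_i\,\bm{e}_j,
\end{equation*}
so the eigenvalue equation $f(\bm{u})=\lambda\bm{u}$, read off component by component against $\bm{e}_k$, becomes the scalar system $\sum_i \mathrm{F}_{ik}u_i = \lambda u_k$ for every $k$.

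Next I would expand $F(\bm{A}_i)$ in the basis $\{\bm{A}_j\}$ using the reciprocity $\bm{A}_i*\bm{A}^j=\delta_{ij}$: any element of the span obeys $\bm{X}=\sum_j(\bm{X}*\bm{A}^j)\bm{A}_j$, hence $F(\bm{A}_i)=\sum_j(F(\bm{A}_i)*\bm{A}^j)\bm{A}_j=\sum_j\mathrm{F}_{ij}\bm{A}_j$ by the very definition of $\mathrm{F}_{ij}$. Since $F$ is multilinear (equal to its own differential, hence linear in its argument), I may then compute
\begin{equation*}
F(\bm{U}) = \sum_i u_i\,F(\bm{A}_i) = \sum_{i,j}\mathrm{F}_{ij}\,u_i\,\bm{A}_j = \sum_j\Big(\sum_i\mathrm{F}_{ij}u_i\Big)\bm{A}_j.
\end{equation*}
Substituting the scalar eigenrelation $\sum_i\mathrm{F}_{ij}u_i=\lambda u_j$ collapses this to $F(\bm{U})=\lambda\sum_j u_j\bm{A}_j=\lambda\bm{U}$, which is the claim; moreover $\bm{U}\neq 0$ because $\bm{u}\neq 0$ and $\{\bm{A}_j\}$ is linearly independent, so $\bm{U}$ is a genuine eigenmultivector.

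The argument is essentially bookkeeping, so the only substantive point to check is that $F(\bm{A}_i)$ truly lies in the span of $\{\bm{A}_j\}$, which is what makes the reciprocal expansion exact; this holds when $\{\bm{A}_j\}$ is a complete basis for $\mathcal{G}_{p,q}$, or, in the grade-preserving reduction discussed above, when $F$ preserves the subspace spanned by the chosen blades. The place to stay alert is the transpose convention: the coordinate matrix of $f$ in the frame $\{\bm{e}_i\}$ is the transpose of $\mathrm{F}$, so an eigenvector of $f$ solves $\sum_i\mathrm{F}_{ij}u_i=\lambda u_j$ rather than $\sum_j\mathrm{F}_{ij}u_j=\lambda u_i$. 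Keeping the summation index on the correct slot of $\mathrm{F}$ throughout is exactly what makes the final substitution line up, and the hypothesis that $\lambda$ is real guarantees that the components $u_i$, and hence $\bm{U}$, are real as required.
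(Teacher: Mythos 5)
Your proof is correct and follows essentially the same route as the paper's: both arguments hinge on the fact that $f$ and $F$ share the coordinate matrix $\mathrm{F}_{ij}=F(\bm{A}_i)*\bm{A}^j$, extract the scalar eigenrelation $\sum_i\mathrm{F}_{ij}u_i=\lambda u_j$ from $f(\bm{u})=\lambda\bm{u}$, and reassemble it against the basis $\{\bm{A}_j\}$ to get $F(\bm{U})=\lambda\bm{U}$. Your identification $u_i=\bm{u}\cdot\bm{e}_i=\bm{U}*\bm{A}^i$ is exactly the paper's equation relating $\bm{u}$ and $\bm{U}$, and your closing caveat about $F$ preserving the span of $\{\bm{A}_j\}$ matches the paper's implicit assumption $\bm{X}\equiv\sum_j(\bm{X}*\bm{A}^j)\bm{A}_j$.
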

\begin{proof}
First note that the following holds
\begin{equation}
\bm{u} = \sum_i \bm{U}*\bm{A}^i \bm{e}_i \label{eq:u:in:terms:of:U}
\end{equation}
replace $\bm{U} = \sum_i\bm{u}\cdot\bm{e}_i\bm{A}_i$ in the above to show that 
\begin{equation}
\sum_i \bm{U}*\bm{A}^i \bm{e}_i = \sum_i \left(\sum_j\bm{u}\cdot\bm{e}_j\bm{A}_j \right)*\bm{A}^i \bm{e}_i = \sum_{i,j} \bm{A}_j*\bm{A}^i\bm{u}\cdot\bm{e}_j\bm{e}_i  = \sum_i\bm{u}\cdot\bm{e}_i\bm{e}_i = \bm{u}
\end{equation}
where we used the reciprocity relation between the $\bm{A}_i$'s and $\bm{A}^j$'s. Now note that by using~\eqref{eq:u:in:terms:of:U} we find
\begin{equation}
f(\bm{u}) = f\left(\sum_i \bm{U}*\bm{A}^i \bm{e}_i\right) = \sum_i \bm{U}*\bm{A}^i f(\bm{e}_i) = \sum_j\sum_i \bm{U}*\bm{A}^i \mathrm{F}_{ij}\bm{e}_j\label{eq:f:expandend}
\end{equation}
where we used~\eqref{eq:f:of:matrix:of:F} to find that 
\begin{equation}
f(\bm{e}_i) = \sum_{j,k} \mathrm{F}_{jk}\bm{e}_k\bm{e}_j\cdot\bm{e}_i = \sum_k \mathrm{F}_{ik}\bm{e}_k = \sum_j F_{ij}\bm{e}_j
\end{equation}
then setting $f(\bm{u})=\lambda\bm{u} = \lambda\sum_i \bm{U}*\bm{A}^j\bm{e}_j$ in~\eqref{eq:f:expandend} we have
\begin{equation}
\begin{split}
\sum_j\sum_i \bm{U}*\bm{A}^i \mathrm{F}_{ij}\bm{e}_j &= \lambda\sum_i \bm{U}*\bm{A}^j\bm{e}_j\\
\sum_i \bm{U}*\bm{A}^i \mathrm{F}_{ij} &= \lambda \bm{U}*\bm{A}^j,\ \text{for}\ i=1,2,\dots,m\\
\sum_j\sum_i \bm{U}*\bm{A}^i \mathrm{F}_{ij}\bm{A}_j &= \lambda \sum_j\bm{U}*\bm{A}^j\bm{A}_j\\
F(\bm{U}) &= \lambda\bm{U}
\end{split}
\end{equation}
Where we used the fact that the matrix $\mathrm{F}_{ij}$ of $F$ can be used to write $F$ as
\begin{equation}
F(\bm{X}) = \sum_{i,j}  \mathrm{F}_{ij} \bm{A}_j\bm{A}^i*\bm{X}
\end{equation}
this can be verified by computing 
\begin{equation}
\begin{split}
F(\bm{X}) &= F\left( \sum_j\bm{A}_j\bm{A}^j*\bm{X} \right) = \sum_j F(\bm{A}_j) \bm{A}^j*\bm{X} = \sum_{j}\left[ \sum_i F(\bm{A}_j)*\bm{A}^i \bm{A}_i \right]\bm{A}^j*\bm{X}\\
& = \sum_{i,j} \underbrace{F(\bm{A}_i)*\bm{A}^j}_{\mathrm{F}_{ij}} \bm{A}_j\bm{A}^i*\bm{X}
\end{split}
\end{equation}
Note that we are assuming that $\bm{X}\equiv\sum_i \bm{X}*\bm{A}^j\bm{A}_j$.
\end{proof}
The above theorem showed the equivalence between solutions to multilinear eigenvalue problems and vector linear eigenvalue problems. Thus provided we have the matrix $\mathrm{F}_{ij}$ of $F$ we can determine the decomposition of $F$. In particular we will consider symmetric transformations since these have always real eigenvalues in standard algorithms. 
\end{fornextpaper}
\begin{fornextpaper}
\section{Algorithms}
\begin{algorithm}[H]
\caption{Versor Computation (Symmetric Approach)}   
\label{alg:versor:computation}
{\bf Input Data:} The linear function $H$\;
Form the symmetric function $H_+(\bm{x}) = \tfrac{1}{2}(\munderbar{H}(\bm{x}) + \bar{H}(\bm{x}))$\;
Find the eigenvectors $\bm{a}_1,\bm{a}_2,\dots,\bm{a}_5$ of $H$\;
Compute the eigenvalues of $H_+$ as $\bm{\lambda}_i = H(\bm{a}_i)\bm{a}_i^{-1}$\;
Form a blade with all the eigenvectors that have $\bm{\lambda}_{j_i} = -1$, $\bm{A} = \bm{a}_{j_1}\wedge\bm{a}_{j_2}\wedge\cdots\wedge\bm{a}_{j_s}$\;
Form a versor with all the other eigenvalues taking $\bm{V} = \sqrt{\bm{\lambda}_{k_1}}\sqrt{\bm{\lambda}_{k_2}}\cdots\sqrt{\bm{\lambda}_{k_t}}$\;
Return the product of $\bm{V}$ with $\bm{A}$, $\bm{U} = \bm{VA}$ and a sign $(-1)^{s}$\; 
\end{algorithm}

\begin{algorithm}[H]
\caption{Versor Computation (Skew-Symmetric Approach) }   
\label{alg:versor:computation:skew}
{\bf Input Data:} The linear function $H$\;
Compute the bivector $\bm{B}$ associated with the skew symmetric part of $H$, as $\bm{B} = -\tfrac{1}{2}\partial\wedge H$\ \;
% Form the skew-symmetric function $F(\bm{x}) = \bm{x}\cdot\bm{B}$ \;
Form the symmetric linear function $G(\bm{x}) = (\bm{x}\cdot\bm{B})\cdot\bm{B}^\dagger$\ \;
Find the eigenvectors $\bm{a}_1,\bm{a}_2,\dots,\bm{a}_5$ with associated eigenvalues $\alpha_1,\alpha_2,\dots,\alpha_5$ of $G$ \;
For each eigenvector with associated nonzero eigenvalue $\alpha_i$ compute the eigenvalue of $H$ as $\bm{\lambda}_i = H(\bm{a}_i)\bm{a}_i^{-1}$\;
Form the versor by taking the product of the square-roots of the 'complex` eigenvalues $\bm{V} = \sqrt{\bm{\lambda}_{k_1}}\sqrt{\bm{\lambda}_{k_2}}\cdots\sqrt{\bm{\lambda}_{k_t}}$\ \;
{\bf Return} the versor $\bm{V}$ 
\end{algorithm}

\begin{algorithm}[H]
\caption{Versor Computation (Manual Approach)}
{\bf Input Data:} The linear function $H:\mathcal{A}_{4,1}\rightarrow \mathcal{A}_{4,1}$\;
Compute the bivector $\bm{B}$ associated with the skew symmetric part of $H$, as $\bm{B} = -\tfrac{1}{2}\partial\wedge H$\ \;
Find the decomposition of $\bm{B}=\bm{B}_1 + \bm{B}_2$\;
if $\bm{B}_1^2=0$:
    then $\bm{R}_1 = 1 + (1/2)\bm{B}_1$\;
else:
    chose random vector $\bm{v}_1\in\mathcal{A}_{4,1}$\;
    Compute $\bm{u}_1 = \bm{v}_1\cdot\bm{B}_1\bm{B}_1^{-1}$\;
    Determine the rotor $\bm{R}_1 = \sqrt{H(\bm{v}_1)\bm{v}_1^{-1}}$\;

chose random vector $\bm{v}_2\in\mathcal{A}_{4,1}$\;
Compute $\bm{u}_2 = \bm{v}_2\cdot\bm{B}_2\bm{B}_2^{-1}$\;
Determine the rotor $\bm{R}_2 = \sqrt{H(\bm{v}_2)\bm{v}_2^{-1}}$\;
{\bf Return:} $\bm{U} = \bm{R}_1\bm{R}_2$
\end{algorithm}
\end{fornextpaper}

\end{appendices}

%%===========================================================================================%%
%% If you are submitting to one of the Nature Portfolio journals, using the eJP submission   %%
%% system, please include the references within the manuscript file itself. You may do this  %%
%% by copying the reference list from your .bbl file, paste it into the main manuscript .tex %%
%% file, and delete the associated \verb+\bibliography+ commands.                            %%
%%===========================================================================================%%

\bibliography{fundamentals}% common bib file
%% if required, the content of .bbl file can be included here once bbl is generated
%%\input sn-article.bbl

\end{document}